\colorlet{MyRed}{FireBrick!50!Crimson}
\colorlet{MyBlue}{DodgerBlue!75!black}
\colorlet{MyGreen}{DarkGreen!85!black}
\colorlet{MyViolet}{DarkMagenta}
\colorlet{MyLightBlue}{DodgerBlue!20}
\colorlet{MyLightGreen}{MyGreen!20}
\colorlet{PrimalColor}{MyBlue}
\colorlet{PrimalFill}{MyLightBlue}
\colorlet{DualColor}{MyRed}
\colorlet{RevColor}{MyRed}
\colorlet{LinkColor}{MediumBlue}
\newcommand{\ackperiod}{}		
\crefname{algo}{Algorithm}{Algorithms}
\crefname{assumption}{Assumption}{Assumptions}
\crefname{case}{Case}{Cases}
\theoremstyle{plain}
\newtheorem{theorem}{Theorem}		
\newtheorem{corollary}{Corollary}		
\newtheorem{lemma}{Lemma}		
\newtheorem{claim}{Claim}		
\newtheorem{fact}{Fact}		
\newtheorem*{corollary*}{Corollary}		
\theoremstyle{definition}
\newtheorem{definition}{Definition}		
\newtheorem*{definition*}{Definition}		
\newtheorem*{assumption*}{Assumptions}		
\newtheorem*{example*}{Example}		
\theoremstyle{remark}
\newtheorem{remark}{Remark}		
\newtheorem*{remark*}{Remark}		
\newcounter{proofpart}
\numberwithin{example}{section}		
\newcommand{\debug}[1]{{\color{RevColor}#1}}		
\newcommand{\newmacro}[2]{\newcommand{#1}{\debug{#2}}}		
\newcommand{\newop}[2]{\DeclareMathOperator{#1}{\debug{#2}}}		
\DeclarePairedDelimiter{\abs}{\lvert}{\rvert}		
\DeclarePairedDelimiterX{\setdef}[2]{\{}{\}}{#1:#2}		
\DeclarePairedDelimiterXPP{\exclude}[1]{\mathopen{}\setminus}{\{}{\}}{}{#1}		
\DeclareMathOperator{\dist}{dist}		
\DeclareMathOperator{\one}{\mathds{1}}		
\DeclareMathOperator{\relint}{ri}		
\newcommand{\alt}[1]{#1'}		
\newcommand{\altalt}[1]{#1''}		
\newmacro{\dd}{\:d}		
\newcommand{\eps}{\varepsilon}		
\newmacro{\const}{c}		
\newmacro{\Const}{\rho}		
\newmacro{\coefalt}{\mu}		
\NewDocumentCommand{\coef}{O{$\lambda$}}{\debug{#1}}
\newmacro{\param}{\theta}		
\newmacro{\params}{\Theta}		
\newmacro{\pexp}{p}		
\newmacro{\qexp}{q}		
\newmacro{\rexp}{r}		
\newmacro{\radius}{r}
\newmacro{\beforestart}{0}		
\newmacro{\start}{1}		
\newmacro{\afterstart}{2}		
\newmacro{\running}{\start,\afterstart,\dotsc}		
\newmacro{\halfrunning}{1,3/2,2\dotsc}		
\newmacro{\run}{t}		
\newmacro{\runalt}{s}		
\newmacro{\runaltalt}{\tau}		
\newmacro{\nRuns}{T}		
\newmacro{\runs}{\mathcal{\nRuns}}		
\newmacro{\state}{x}		
\newmacro{\statealt}{y}		
\newmacro{\statealtalt}{z}		
\newmacro{\tstart}{0}		
\newmacro{\timealt}{s}		
\newmacro{\horizon}{T}		
\newmacro{\traj}{x}		
\newmacro{\trajalt}{y}		
\newmacro{\trajaltalt}{z}		
\newmacro{\flowmap}{\Theta}		
\DeclarePairedDelimiterXPP{\flowof}[2]{\flowmap_{#1}}{(}{)}{}{#2}		
\newop{\Nash}{NE}		
\newop{\CE}{CE}		
\newop{\CCE}{CCE}		
\newop{\NI}{NI}		
\newop{\brep}{br}		
\newop{\reg}{Reg}		
\newop{\preg}{\overline{Reg}}		
\newop{\val}{val}		
\newcommand{\eq}{\sol}		
\newmacro{\play}{i}		
\newmacro{\playalt}{j}		
\newmacro{\playaltlalt}{k}		
\newmacro{\nPlayers}{N}		
\newmacro{\players}{\mathcal{\nPlayers}}		
\newmacro{\pure}{\alpha}		
\newmacro{\purealt}{\beta}		
\newmacro{\purealtalt}{\gamma}		
\newmacro{\nPures}{A}		
\newmacro{\pures}{\mathcal{\nPures}}		
\newmacro{\loss}{\ell}		
\newmacro{\pay}{u}		
\newmacro{\payv}{v}		
\newmacro{\pot}{f}		
\newmacro{\game}{\mathcal{G}}		
\newmacro{\gamefull}{\game(\players,\points,\pay)}		
\newmacro{\fingame}{\Gamma}		
\newmacro{\fingamefull}{\Gamma(\players,\pures,\pay)}		
\newmacro{\gmat}{g}		
\newmacro{\gdist}{\dist_{\gmat}}
\newmacro{\mfld}{M}		
\newmacro{\form}{\omega}		
\newmacro{\tvec}{z}		
\newmacro{\uvec}{u}		
\newmacro{\ball}{\basin}		
\newmacro{\sphere}{\mathbb{S}}		
\newmacro{\graph}{\mathcal{G}}
\newmacro{\vertices}{\mathcal{V}}
\newmacro{\edges}{\mathcal{E}}
\newmacro{\mat}{A}		
\newmacro{\matalt}{c}		
\newmacro{\hmat}{H}		
\newop{\row}{row}		
\newop{\col}{col}		
\newmacro{\ones}{\mathbf{1}}		
\newmacro{\eye}{I}		
\newmacro{\zer}{\mathbf{0}}		
\DeclarePairedDelimiterXPP{\dnorm}[1]{}{\lVert}{\rVert}{_{\ast}}{#1}		
\DeclarePairedDelimiterXPP{\onenorm}[1]{}{\lVert}{\rVert}{_{1}}{#1}		
\DeclarePairedDelimiterXPP{\twonorm}[1]{}{\lVert}{\rVert}{_{2}}{#1}		
\DeclarePairedDelimiterXPP{\supnorm}[1]{}{\lVert}{\rVert}{_{\infty}}{#1}		
\DeclarePairedDelimiterX{\braket}[2]{\langle}{\rangle}{#1,#2}		
\newmacro{\vecspace}{\mathcal{V}}		
\newmacro{\subspace}{\mathcal{W}}		
\newmacro{\coord}{i}		
\newmacro{\coordalt}{j}		
\newmacro{\coordaltalt}{k}		
\newmacro{\nCoords}{n}		
\newmacro{\dims}{\nCoords}		
\newmacro{\vdim}{\nCoords}		
\newmacro{\pvec}{z}		
\newmacro{\pvecalt}{r}		
\newmacro{\bvec}{e}		
\newmacro{\bvecs}{\mathcal{E}}		
\newmacro{\cvec}{b}     
\newmacro{\cvecalt}{d}     
\newmacro{\pspace}{\vecspace}		
\newmacro{\dspace}{\vecspace^{\ast}}		
\newmacro{\dvec}{\dpoint}		
\newmacro{\dbvec}{\eps}		
\newmacro{\dpoint}{y}		
\newmacro{\dpointalt}{\alt\dpoint}		
\newmacro{\dpointaltalt}{\altalt\dpoint}		
\newmacro{\dpoints}{\mathcal{Y}}		
\newmacro{\dstate}{Y}		
\newmacro{\dbase}{v}		
\newop{\Opt}{Opt}		
\newop{\Sol}{Sol}		
\newop{\gap}{Gap}		
\newop{\orcl}{Or}		
\newmacro{\tfun}{f}		
\newmacro{\obj}{f}		
\newmacro{\objalt}{g}		
\newmacro{\sobj}{F}		
\newmacro{\gvec}{g}		
\newmacro{\oper}{A}		
\newmacro{\vecfield}{v}		
\newcommand{\sol}[1][\point]{#1^{\ast}}		
\newmacro{\solvec}{\vecfield(\sol)}		
\newmacro{\solpay}{\eq[\payv]}		
\newmacro{\signal}{V}		
\newmacro{\step}{\gamma}		
\newmacro{\learn}{\eta}		
\newmacro{\vbound}{G}		
\newmacro{\lips}{L}		
\newmacro{\strong}{\mu}		
\newmacro{\smooth}{\beta}		
\newop{\tspace}{T}		
\newop{\tcone}{TC}		
\newop{\dcone}{\tcone^{\ast}}		
\newop{\ncone}{NC}		
\newop{\pcone}{PC}		
\newop{\hull}{\Delta}		
\newmacro{\cvx}{\mathcal{C}}		
\newmacro{\subd}{\partial}		
\newmacro{\minmax}{\mathcal{L}}		
\newmacro{\minvar}{\point_{1}}		
\newmacro{\minvaralt}{\alt\minvar}		
\newmacro{\minvars}{\points_{1}}		
\newmacro{\minsol}{\sol[\minvar]}		
\newmacro{\maxvar}{\point_{2}}		
\newmacro{\maxvaaltr}{\alt\maxvar}		
\newmacro{\maxvars}{\points_{2}}		
\newmacro{\maxsol}{\sol[\maxvar]}		
\newop{\Eucl}{\Pi}		
\newop{\logit}{\Lambda}		
\newop{\dkl}{KL}		
\newmacro{\hreg}{h}		
\newmacro{\hconj}{\hreg^{\ast}}		
\newmacro{\breg}{D}		
\newmacro{\mprox}{P}		
\newmacro{\mirror}{Q}		
\newmacro{\fench}{F}		
\newmacro{\depth}{H}		
\newmacro{\hstr}{K}		
\newmacro{\hker}{\theta}		
\newmacro{\proxdom}{\points_{\hreg}}		
\newmacro{\proxdomi}{\points_{\hreg_{\play}}}		
\newmacro{\zone}{\mathbb{D}}		
\DeclarePairedDelimiterXPP{\proxof}[2]{\mprox_{#1}}{(}{)}{}{#2}		
\newmacro{\point}{x}		
\newmacro{\pointalt}{\alt\point}		
\newmacro{\pointaltalt}{\altalt\point}		
\newmacro{\points}{\mathcal{X}}		
\newmacro{\intpoints}{\relint\points}		
\newmacro{\base}{p}		
\newmacro{\basealt}{q}		
\newmacro{\basealtalt}{u}		
\newmacro{\open}{\mathcal{U}}		
\newmacro{\closed}{\mathcal{C}}		
\newmacro{\cpt}{\mathcal{K}}		
\newmacro{\nhd}{\mathcal{U}}		
\newop{\ex}{\mathbb{E}}		
\newop{\prob}{\mathbb{P}}		
\newop{\Var}{Var}		
\newop{\simplex}{\hull}		
\DeclarePairedDelimiterXPP{\exof}[1]{\ex}{[}{]}{}{
 #1}
\DeclarePairedDelimiterXPP{\probof}[1]{\prob}{(}{)}{}{
 #1}
\DeclarePairedDelimiterXPP{\oneof}[1]{\one}{\{}{\}}{}{
 #1}
\newmacro{\sample}{\omega}		
\newmacro{\samples}{\Omega}		
\newmacro{\seed}{\theta}		
\newmacro{\seeds}{\Theta}		
\newmacro{\filter}{\mathcal{F}}		
\newmacro{\probspace}{(\samples,\filter,\prob)}		
\newmacro{\history}{\mathcal{H}}		
\newmacro{\event}{E}       
\newmacro{\eventalt}{H}       
\newmacro{\mean}{\mu}		
\newmacro{\sdev}{\sigma}		
\newmacro{\variance}{\sdev^{2}}		
\newmacro{\proper}{\tau}		
\newmacro{\error}{Z}		
\newmacro{\noise}{U}		
\newmacro{\bias}{b}		
\newmacro{\brown}{W}		
\newmacro{\serror}{\theta}		
\newmacro{\snoise}{\xi}		
\newmacro{\sbias}{\psi}		
\newmacro{\sbound}{M}		
\newmacro{\bbound}{B}		
\newmacro{\noisepar}{\sdev}		
\newmacro{\noisevar}{\variance}		
\newmacro{\pdist}{P}		
\newcommand{\fastcompile}[1]{#1}
\newcommand{\EMAIL}[1]{{\href{mailto:#1}{#1}}}
\newcommand{\ourtitle}{Chaos persists in large-scale multi-agent learning\\ despite adaptive  learning rates}
\title
[\sc Chaos persists in large-scale multi-agent learning despite adaptive  learning rates]
{\LARGE\textbf{\textsc{\ourtitle}}}
\author
[E.~V.~Vlatakis-Gkaragkounis]
{Emmanouil V. Vlatakis-Gkaragkounis$^{\S}$}
\address{$^{\S}$\,%
University of California, Berkeley.
\EMAIL{emvlatakis@berkeley.edu}
}
\author
[L.~Flokas]
{Lampros Flokas$^{\ast}$}
\address{$^{\ast}$\,%
Columbia University.
\EMAIL{lamflokas@cs.columbia.edu}
}
\author
[G.~Piliouras]
{Georgios Piliouras$^{\diamond,\star}$}
\address{$^{\diamond}$\,%
Singapore University of Technology and Design.}
\address{$^{\star}$\,%
DeepMind.
\EMAIL{georgios.piliouras@gmail.com}
}
\keywords{%
Nash equilibrium;
Li-Yorke Chaos;
Adaptive MWU;
Turbulence sets.}
\begin{document}

\allowdisplaybreaks		
\acresetall		
\maketitle

\begin{abstract}
%
%

Multi-agent learning is intrinsically harder, more unstable and unpredictable than single agent optimization. For this reason, numerous specialized heuristics and techniques have been designed towards the goal of
achieving convergence to equilibria in self-play. One such celebrated approach is the use of  dynamically adaptive learning rates. 
Although such techniques are known to allow for
improved convergence guarantees in small 
games, it has been much harder to analyze them in more relevant settings with large populations of agents. These settings are particularly hard as recent work has established that learning with fixed  rates will become chaotic given large enough populations~\cite{chotibut2019route,bielawski2021follow}.
In this work, we show that chaos persists in large population congestion games
despite using adaptive learning rates even for the ubiquitous Multiplicative Weight Updates  algorithm, even 
in the presence of only two strategies.
 At a technical level, due to the non-autonomous nature of the system, our approach goes beyond conventional period-three techniques~\cite{liyorke} by studying  fundamental properties of the dynamics including 
invariant sets, volume expansion and turbulent sets. 
We complement our theoretical insights with experiments showcasing that slight variations to system parameters lead to a wide variety of unpredictable behaviors.


\end{abstract}

{\footnotesize\tableofcontents}

\section{Introduction}
\label{sec:introduction}



Arguably one of the most thorny problems on the intersection of learning and games is the development of simple and practical algorithms that converge to Nash equilibria. The problem in its full generality is known to be intractable~\cite{daskalakis2006complexity,hart2003uncoupled}, however, recent developments in Machine Learning have lead to a revived interest in the problem even in special classes of games. Unfortunately, the additional scrutiny has only helped crystallize the severity of the problem at hand through a diverse set of non-convergence, instability    
 results 
 even in well motivated special classes of games~\cite{mertikopoulos2018cycles,bailey2019fast,flokas2020no,andrade2021learning,giannou21survival,letcher2021impossibility,bailey2018multiplicative,hsieh2021limits,bailey2020finitecolt,balduzzi2018mechanics}. Worse yet, standard online learning meta-algorithms, such as Multiplicative Weights Updates (MWU), have been shown to exhibit chaos~\cite{palaiopanos2017multiplicative,cheung2019vortices,cheung2020chaos,cheung2021chaos,bielawski2021follow,chotibut2019route}, which has be shown to be rather common in more complex games, e.g., as we increase the number of agents  ~\cite{GallaFarmer_PNAS2013,sanders2018prevalence,bielawski2021follow,chotibut2019route}.
 Given this proliferation of negative results,
  what other approaches are left to explore? 
  
 An interesting hint can be found in some of the earliest AI work on the subject. \citep{singh2000nash} established arguably one of the first non-convergence results in the area, showing that gradient dynamics do not suffice to achieve point-wise convergence even in the trivial case of two agent, two strategy games. On the positive side, when those dynamics are non-convergent, time-average convergence results can be established.  Building up on their work, \citep{bowling2002multiagent} showed that a modification of these standard dynamics where the agents can dynamically update their step-size based on payoff cues from their environment suffices to stabilize these dynamics in all $2\times 2$ games. 
 Informally, the specific heuristic, Win or Learn Fast (WoLF), has the agents increase their learning rate when they are "losing" in an effort to escape from a non-promising region of the state space. Unfortunately, WoLF requires that each agent has knowledge to a Nash equilibrium strategy for themselves, which means it can only be applied in small games. On the positive side, this first result has led to several other more similar heuristics, with promising results in small games~\cite{banerjee2003adaptive,bowling2004convergence,abdallah2008multiagent,kaisers2009evolutionary,leonardos2022exploration,bloembergen2015evolutionary}. While staying in the realm of small games, such heuristics have been shown to robustly improve the stability of other popular learning heuristics such as Proximal Policy Optimization \cite{schulman2017proximal,ratcliffe2019win}.
 On the negative side, even for slightly larger games most positive results are largely empirical in nature and effectively very little is known about the behavior of such techniques in games with many agents.
These works raise our key motivating question: 
$\\\\$

\textit{Does the simple idea of judiciously increasing the learning rate to ``escape"  barren regions of the state space scale to games with a large number of agents? If not, what type of formal instability results can be established?}
$\\\\$
{

{\bf Our approach.} We focus on  one of the most well studied class of large population games, non-atomic congestion games~\cite{roughgarden2002bad,Nisan:2007:AGT:1296179}
where the agents learn using the ubiquitous MWU update rule~\cite{Arora05themultiplicative,freund1999adaptive,Littlestone:1994:WMA:184036.184040}. Furthermore, given the recent works of \cite{chotibut2019route,bielawski2021follow}, we focus on games with exactly two strategies/routes and linear cost functions, since such settings are already sufficiently hard for dynamics. Specifically, given fixed, non-adaptive learning rates, MWU dynamics typically will bifurcate to chaotic behavior for a large enough population size. Critically, however, these previous results are based on the celebrated "Period three implies chaos" methodology pioneered by Li-Yorke \cite{liyorke}, which is only applicable in autonomous, i.e., time-invariant systems. In contrast, understanding the emergent behavior in our case will require totally different techniques. 

Our model will allow for a wide range of dynamically adaptive learning rates. In particular, instead of having only two learning rates, fast and slow, as, e.g., in \citep{bowling2002multiagent}, we will allow for a continuous range of learning rates. Moreover,
the decision to increase/decrease the learning rate will be driven by a regret-like measure~\cite{shalev2011online}. Intuitively, each agent compares the historical time-average performance of both routes available to them.  If both routes appear very similar to each other then the agents favor large learning rates so as to reach sufficiently different configurations where they can hopefully find informative payoff signals to exploit. On the other hand, if one route is significantly better than the other then the agents will 
take smaller but non-vanishing step sizes so as to take advantage of such opportunities without overcorrecting.

\textbf{Our main result.} We prove that our class of dynamics  exhibits Li-Yorke chaotic behavior, implying the existence of an uncountable set of initial conditions 
that become ``scrambled" by the dynamics (Theorem~\ref{theorem:dynamic-li-yorke}). Formally, for any two initial conditions from this set, the trajectories of the game dynamics come arbitrarily close to each other and move away from each other infinitely often.
\begin{figure}[h!]
    \centering
\fastcompile{\includegraphics[width=0.9\textwidth]}{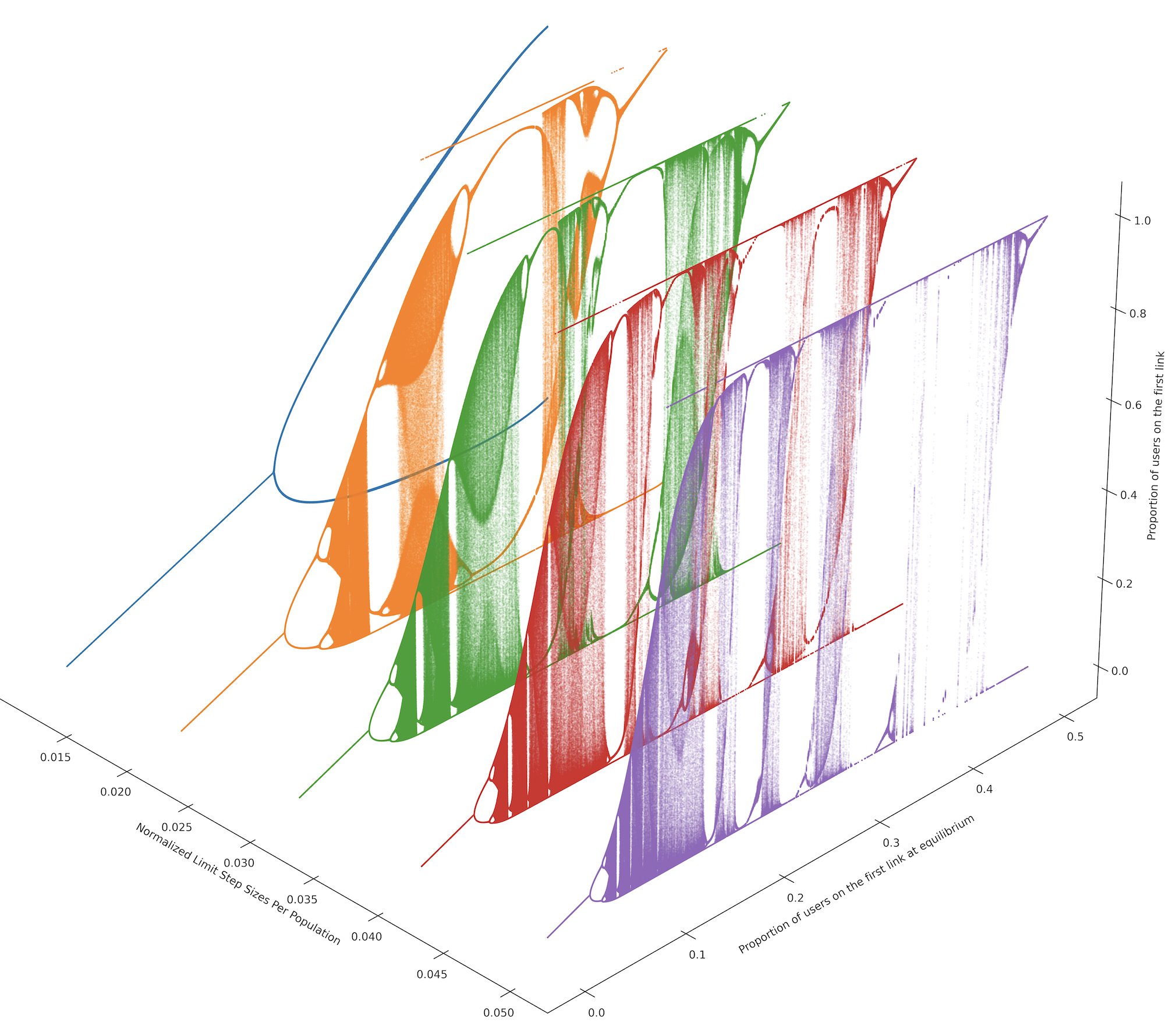}
  \caption{An assortment of bifurcation diagrams for different adaptive learning rate schemes. Larger volatility (larger limit step-size) leads to more prominently unpredictable, chaotic behavior (dense limit-sets for large fraction of the parameter space). 
  }
\end{figure}

{\bf Our techniques.} Despite the intense recent interest in developing formal arguments about chaos in game settings~\cite{bielawski2021follow,chotibut2019route,leonardos2021dynamical,piliouras2022multi}
proving chaos in non-autonomous dynamical systems is a challenging endeavor, as the departure from one-dimensional autonomous systems gives up from many established equivalences among different notions of chaos, such as volume expansion, topological entropy, and arbitrarily long periodic points.

One possible approach to addressing this challenge would be to first establish that the sequence of employed maps converges uniformly to a fixed chaotic map, and then transfer the resulting scrambled set of the limit system to the non-autonomous one.  
%

Although this property is not true for our system, it suffices to use a slightly relaxed version of it. We
 establish uniform convergence guarantees for initialization sets that are confined to the interior of the strategy space (Lemma~\ref{lemma:dynamic-strong-convergence}), while tolerating the discrepancy between the varying-step and the corresponding MWU map in the limit. To confirm the existence of an uncountable scrambled set, we propose a novel connection between the behavior of MWU maps with varying learning rates and symbolic dynamics. Essentially, our proof strategy entails assigning every binary sequence to a different initialization such that scrambled binary strings correspond to scrambled initializations of the our non-autonomous dynamical system (Proof of Theorem~\ref{theorem:dynamic-li-yorke}).

The first part of our analysis (Section~\ref{sec:fixed}) involves  enhancing previously existing results for the case of fixed learning rate dynamics. Specifically, we present an explicit construction of a \emph{perpetual set} $\mathcal{F}(a)$, i.e., a forward invariant set where our dynamical system is surjective, $f_{a}(\mathcal{F}(a))=\mathcal{F}(a)$, where $f_{a}$ represents the MWU system with fixed learning rate $a$. (Lemma~\ref{lemma:perpetual}). Next we further show that, even in the presence of agent volatility due to their high learning rates, if their initial strategies lie within the interior of the strategy space, their strategies will eventually be absorbed into the perpetual set we have constructed (Lemma~\ref{lemma:absorbption}).

In the case of fixed learning rate, we are able to quantify an even stronger volume expansion result. Specifically, we show that the image of any arbitrarily small neighborhood of the mixed equilibrium of our game will converge exactly to the perpetual set we have constructed (Theorem~\ref{thm:volume-expansion}).


Turning our attention to the adaptively dynamic learning rate (Section~\ref{sec:dynamic}), we show that thanks to $\mathcal{F}(a)$, while learning rate may vary over iterations and initializations, there exists a set $\Delta$ \textendash the closure of all perpetual sets $\mathcal{F}(a_n)$ \textendash to which our dynamics eventually will be absorbed (Lemmas~\ref{lemma:forward-dynamic},\ref{lemma:absorption-dynamic}), provided the initialization set lies in the interior of solution space. Leveraging this property, we show that despite the volatility due to agents employing potentially high learning rates, our average regret-like concepts converge for nearly all initial conditions (Lemmas \ref{ref:average},\ref{ref:uniform},\ref{ref:cesaro}). It is noteworthy that this result requires a substantial degree of additional  technicality in contrast to its analogue for the fixed rate regime \cite{chotibut2019route}. 

Before developing our machinery for the symbolic dynamics reduction in Section~\ref{sec:turbulence}, our proof strategy includes an additional step of demonstrating that our system does not collapse early in any fixed point or subspace of strategies that lacks volume expansion, as this would eliminate chaos. To achieve this, we rely on Theorem~\ref{theorem:dynamic-volume-expansion} which guarantees that almost all neighborhoods of mixed equilibrium will avoid this scenario by  volume expanding and approximately covering $\mathcal{F}(\displaystyle\lim_{n\to\infty}a_n)$.
Putting everything together our main result about chaotic limit behavior follows.

\section{Problem setup and preliminaries}
\label{sec:prelims}



\subsection{Dynamical systems \& Li Yorke chaos}
A discrete autonomous $m$-dimensional dynamical system is described by an equation $x_{n+1} = f(x_n)$ for a function $f: \mathbb{R}^m \to \mathbb{R}^m$. In contrast a discrete $m$-dimensional non-autonomous dynamical system is described by an equation $x_{n+1} = f(x_n, n)$ for a function $f: \mathbb{R}^m \times \mathbb{N} \to \mathbb{R}^m$. In both of these cases we can view the $x_n(x_0)$ iterate as a function of the initialization $x_0$. We thus term the sequence $\{x_n(x_0) \}_{n \in \mathbb{N}}$ the orbit of the dynamical system of $x_0$.

We call two initializations $p,q$ of a dynamical system scrambled if $\limsup_{n \to \infty} \| x_n(p) - x_n(q)\| >0$ and $\liminf_{n \to \infty} \| x_n(p) - x_n(q)\| = 0$. Intuitively the orbits of $p,q$ continuously alternate between moving apart and arbitrarily close respectively. A set $S$ is scrambled if every pair $p, q \in S$ is scrambled. We are now ready to define the notion of Li Yorke chaos for both autonomous and non-autonomous dynamical systems.

\begin{definition}
An autonomous/non-autonomous dynamical system is called Li Yorke chaotic if there is an uncountably infinite set $S$ that is scrambled. 
\end{definition}

A famous result for the case of one dimensional autonomous dynamical systems provides a simple and easily verified sufficient condition for chaos based on periodic orbits:
\begin{theorem}[\cite{liyorke}]\label{thm:Li-Yorke}
Let $J$ be an interval and $f: J \to J$ be a continuous map of a dynamical system. If $f$ has a period-three orbit, then it is Li-Yorke chaotic. 
\end{theorem}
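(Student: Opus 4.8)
The plan is to follow the classical argument of \cite{liyorke}: extract a combinatorial ``covering'' skeleton from the period-three orbit, turn covering relations into genuine orbits by pulling back nested compact intervals, and then hand-craft an uncountable family of itineraries with pairwise scrambled orbits. To begin, let $\{x_1,x_2,x_3\}$ be the period-three orbit, relabelled so that $x_1<x_2<x_3$. Since $f$ cyclically permutes these points, either $f(x_1)=x_2,\ f(x_2)=x_3,\ f(x_3)=x_1$ or $f(x_1)=x_3,\ f(x_3)=x_2,\ f(x_2)=x_1$, and the second case is conjugate to the first via $x\mapsto -x$ (an isometry, hence scrambled sets are preserved), so I assume the first. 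Put $A=[x_1,x_2]$ and $B=[x_2,x_3]$. Because the continuous image of an interval is an interval, $f(A)$ is an interval containing $f(x_1)=x_2$ and $f(x_2)=x_3$, hence $f(A)\supseteq B$; similarly $f(B)$ contains $f(x_2)=x_3$ and $f(x_3)=x_1$, hence $f(B)\supseteq A\cup B$. Writing $I\to J$ for $f(I)\supseteq J$, this yields the loop-with-an-ear $A\to B$, $B\to A$, $B\to B$.

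Next I would set up the pullback machinery. The elementary step is: if $g$ is continuous and $I,J$ are compact intervals with $J\subseteq g(I)$, then there is a compact subinterval $I'\subseteq I$ with $g(I')=J$ (pick preimages in $I$ of the endpoints of $J$ and trim to the innermost pair using the intermediate value theorem). Applying this repeatedly with $g=f^{\,n}$ gives the itinerary lemma: for any sequence $(J_n)_{n\ge 0}$ of compact intervals with $J_{n+1}\subseteq f(J_n)$ for all $n$, there is a point $x$ with $f^{\,n}(x)\in J_n$ for every $n$ — inductively build a nested chain $J_0=Q_0\supseteq Q_1\supseteq\cdots$ of compact intervals with $f^{\,n}(Q_n)=J_n$, and take $x\in\bigcap_nQ_n$. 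In particular, every sequence over $\{A,B\}$ that never has two consecutive $A$'s is the itinerary of some point.

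The scrambled set is then built by realising, via the itinerary lemma, an uncountable and carefully chosen family of $\{A,B\}$-itineraries $\{\tau\}$, with corresponding points $\{x_\tau\}$. The itineraries are designed so that any two distinct $\tau,\tau'$: (a) agree on arbitrarily long blocks, which — combined with quantitative control of the diameters of the corresponding pullback intervals — forces the orbits of $x_\tau$ and $x_{\tau'}$ within distance $\varepsilon$ at infinitely many times for every $\varepsilon>0$, giving $\liminf_n\|x_n(x_\tau)-x_n(x_{\tau'})\|=0$; and (b) disagree at infinitely many times, in a way refined by passing to suitable sub-intervals of $A$ and $B$ (so the disagreement is not hidden near the common point $x_2$), giving $\limsup_n\|x_n(x_\tau)-x_n(x_{\tau'})\|>0$. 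An uncountable family of itineraries with the required agreement/disagreement pattern is easy to produce (for instance from an almost-disjoint family of infinite subsets of $\mathbb{N}$), and distinct parameters force distinct orbits, so the image $\{x_\tau\}$ is an uncountable scrambled set, which is exactly Li--Yorke chaos.

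Steps one and two are routine intermediate-value-theorem bookkeeping; the main obstacle is the quantitative heart of step three — controlling the sizes of the pullback intervals finely enough that the two scrambling inequalities hold \emph{simultaneously for every pair} in an uncountable set. Two points make this delicate: membership in the fixed interval $B$ does not by itself bring two orbits together (if $f$ has a forward-invariant subinterval inside $B$, nothing contracts there), so the long common blocks must be combined with mandatory excursions that genuinely localise the orbits; and since $A$ and $B$ meet at $x_2$, a naive ``one orbit in $A$, the other in $B$'' disagreement need not separate them, so the encoding of the parameter must be both spread out (so that the ``apart'' times and the ``together'' times never interfere) and refined to sub-intervals bounded away from one another. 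This is precisely the content of the technical lemma in \cite{liyorke}, and reproducing those estimates is where the real effort lies.
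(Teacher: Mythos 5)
First, a point of comparison: the paper does not prove this statement at all --- it is imported verbatim from \cite{liyorke} as a black box --- so there is no in-paper argument to measure yours against. Judged on its own terms, your combinatorial skeleton is correct and is exactly the classical one: the relabelling of the period-three orbit, the covering relations $f(A)\supseteq B$ and $f(B)\supseteq A\cup B$, the pullback step ($J\subseteq g(I)$ yields a compact $I'\subseteq I$ with $g(I')=J$), the resulting itinerary lemma, and the reduction of the second cyclic ordering to the first by conjugating with $x\mapsto -x$ are all sound.

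The gap is that the two scrambling estimates --- which are the entire analytic content of the theorem --- are described but not established. For the $\liminf$ part you invoke ``quantitative control of the diameters of the corresponding pullback intervals'' along long common blocks, but no such control comes for free: as you yourself observe, $f$ may have a forward-invariant subinterval of $B$ on which nothing contracts, so two points whose itineraries agree over an arbitrarily long block need not approach each other at all. Li and Yorke do not obtain $\liminf=0$ from diameter decay of pullbacks; they impose extra structure on the admissible itineraries (densities of $A$-visits along the square times exceeding $3/4$, so that any two members of the family are \emph{simultaneously} in $A$ at infinitely many designated times, together with a refinement forcing the orbit into shrinking neighbourhoods of the periodic point at those times, after which both orbits shadow the periodic orbit and hence each other). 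Similarly, for the $\limsup$ part, ``one orbit in $A$, the other in $B$'' does not separate anything since $A\cap B=\{x_2\}$; the actual separation comes from the excursion $x_2\mapsto x_3\mapsto x_1$ that the $A$-visiting orbit is forced to make while the other remains in $B$, and your fix (``refined to sub-intervals bounded away from one another'') is named but never constructed. Since you explicitly defer both estimates to ``the technical lemma in \cite{liyorke}'', what you have is an accurate roadmap of the standard proof rather than a proof: the step where the real work happens is missing.
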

\subsection{Linear congestion games} In this work we consider the case of non-atomic two strategy linear congestion games. In a non-atomic congestion game, there is a continuum of agents each of which controls an infinitesimal fraction of the total flow $N$. In each iteration, a fraction of agents $x$ choose the first action and $1-x$ the second one, suffering costs $c_1(x) = \gamma N x$ and $c_2(1-x) = \delta N (1-x)$ respectively where $\gamma, \delta > 0$ are the linear cost coefficients of each action respectively. 

\subsection{Multiplicative Weights Update}
The dynamics of the MWU meta-algorithm correspond to an autonomous dynamical system. Given the common learning rate of all agents $\eta > 0$, the MWU update is as follows
\begin{align*}
    x_{n+1} = \frac{x_n e^{-\eta c_1(x_n)}}{x_n e^{-\eta c_1(x_n)} + (1-x_n) e^{-\eta c_2(1-x_n)}}
            = \frac{x_n}{x_n + (1-x_n) e^{-\eta (c_2(1-x_n) - c_1(x_n))}}
\end{align*}

We can further simplify the MWU update rule by making the following changes of variables
\begin{align*}
    a = \frac{N\eta}{\delta + \gamma} && b = \frac{\delta}{\delta + \gamma}
\end{align*}
where $a$ corresponds to a normalized learning rate that accounts for the scale of $c_1$ and $c_2$ and $(b, 1-b)$ is the Nash equilibrium of the game. The simplified update rule can be expressed via a parametric map $f$ over $a$ and $b$ that captures the whole class of games we are interested in
{\small
\begin{equation}\label{eq:transform}
    x_{n+1} = f(x_n, a, b) = \frac{x_n}{x_n + (1-x_{n}) \exp(a (x_n -b))}.
\end{equation}
}
When $a,b$  are fixed we may skip them from the notation of MWU map and use just $f(x)$.

\subsection{Dynamic Learning Rate Model}
\label{sec:dynamic_rate}
Here we extend the MWU meta-algorithm to adaptive learning rates $\eta$. The basis of our adapting dynamics is a notion of pseudo-regret that the agents suffer when choosing the first action over the second one. The definition captures the average difference of costs between the two actions weighted by the learning rate at each iteration $\eta_i$:
\begin{equation*}
    r_{n+1} = \frac{1}{n+1} \sum_{i=0}^n \eta_i  [c_1(x_i) - c_2(1-x_i)].
\end{equation*}
Based on this calculation of pseudo-regret, the agents may seek to adapt their learning rates $\eta_i$ in order to balance between exploration and exploitation. When $|r_n|$ is big then it is clear than one action dominates over the other so the agents  seek to move in the direction of improving costs but hopefully not too aggressively so as not to overshoot, akin to a gradient-like dynamics. 
In contrast, when $|r_n|$ is close to zero the average penalty of switching between actions is small since they suffer the same weighted cost in average. Thus there is an opportunity for the agents to increase their learning rate to encourage exploration while the cost differences are low.  To hedge between exploration and exploitation as above, agents can pick appropriate function $h:\mathbb{R}\to \mathbb{R}^+$ that peaks at $0$ and setting $\eta_n = h(r_n)$.

This dynamics of adapting $\eta_i$ to the pseudo-regret can be implemented as a 2-dimensional non-autonomous dynamical system with $x_n$ and $r_n$ as its state variables:
\begin{align}\label{eq:old-update}
    x_{n+1} = f\left(x_n, \frac{Ng(r_n)}{\gamma+\delta}, b \right) \quad
    r_{n+1} = \frac{nr_n + h(r_n)(c_1(x_n) - c_2(1-x_n))}{n+1}.
\end{align}
The system is non-autonomous in the 2-dimensional space as the update rule of $r_n$ depends on $n$. We can always view $d$-dimensional non-autonomous systems as autonomous systems in $d+1$ dimensions by treating $n$ as an additional dimension. However, this does not really simplify things as \cref{thm:Li-Yorke} does not apply beyond a single dimension.

For simplicity, we rewrite the dynamics in terms of the normalized learning rates $a_i$. Let $a_i$ be the normalized learning rates corresponding to $\eta_i$ based on \cref{eq:transform}. We can then rewrite $r_{n+1}$ as
\begin{equation*}
    r_{n+1} = \frac{1}{n+1} \sum_{i=0}^n a_i  [x_i-b].
\end{equation*}
Similarly, given that $\eta_i$ and $\alpha_i$ are equivalent up to game specific scale parameters, we can always find a $g$ such that $a_n = g(r_n)$ . Instead of working in terms of the cumbersome dynamical system update rule \cref{eq:old-update}, we will be be performing our analysis in terms of these simpler equations
\begin{align*}
    x_{n+1} = f(x_n, a_n, b) &&
    a_{n+1} = g\left( \frac{\sum_{i=0}^n a_i(x_i - b)}{n+1}\right).
\end{align*}
In the following sections we will make some assumptions on $g$. We will chose $g$ to be a continuous function bounded in $[a_{\min}, a_{\max}] \subset \mathbb{R}^+$. We will also assume that the initialization $a_0$ is also a constant in $[a_{\min}, a_{\max}]$.

Throughout this work we will think of $x_n$ and $a_n$ as functions of the initialization $x_0, a_0$.
Thus, the complete notation of the $n$-th iterate would be $x_n(x_0,a_0)$ and $a_{n}(x_0,a_0)$, however, when it does not hinder understanding we will drop the explicit description of all the dependencies. 


\section{Refinements in Fixed Learning rates}
\label{sec:fixed}

\begin{figure}[h!]
\centering\includegraphics[width=0.7\textwidth]{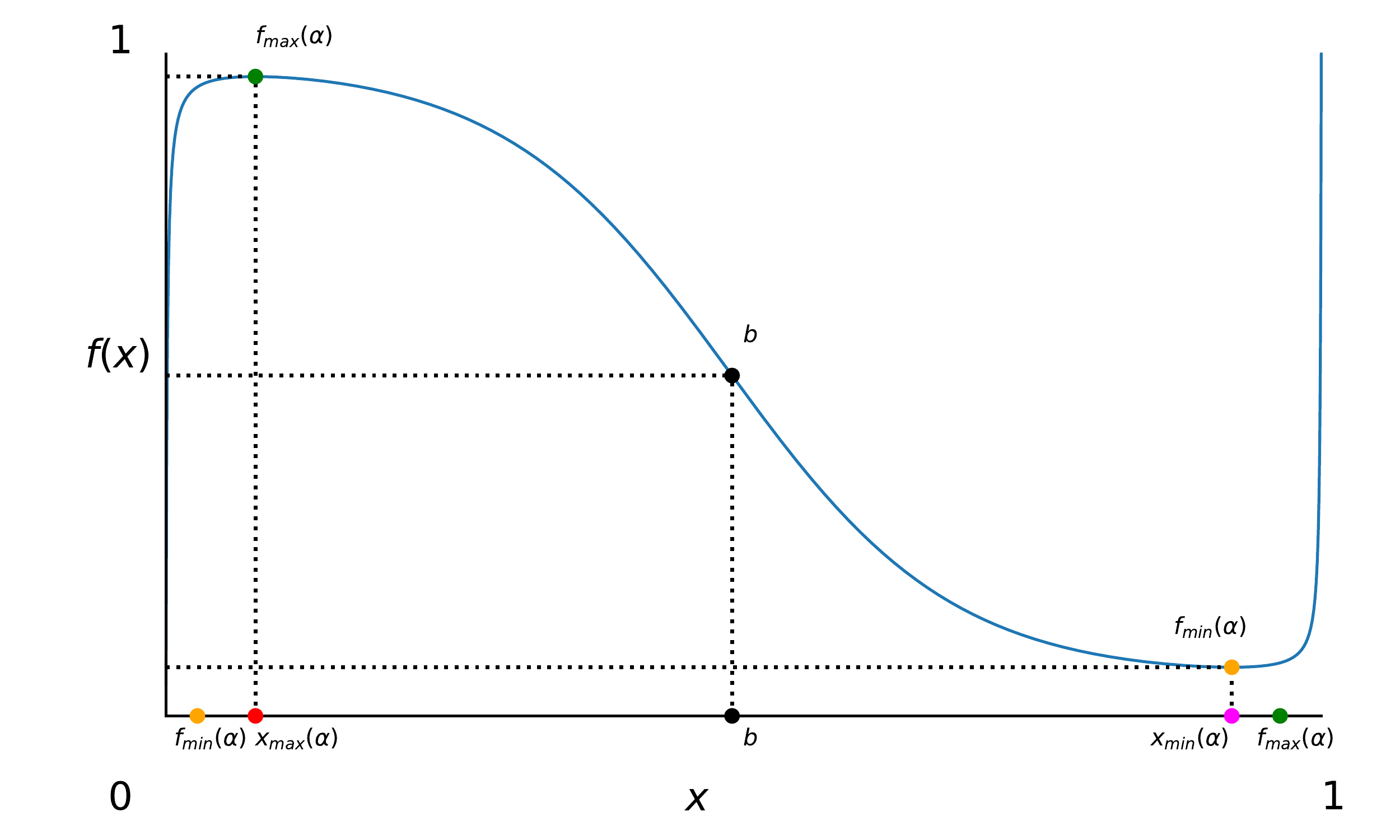} 
    \caption{The order of 
    {\color{orange}$ f_{min}(a) $},
    {\color{red}$ x_{\max}(a)$},
    {\color{black}$ b $},
    {\color{magenta}$ x_{\min}(a) $},
    {\color{OliveGreen}$ f_{max}(a)$} in $x$-axis.
    }
    \label{fig:my_label}
\end{figure}
We start our analysis by providing a refined analysis in the regime of fixed learning rate as in  \cite{chotibut2019route}. In the following we will examine conditions of such systems under the assumption of $a>4$. Let us study the local minima and maxima of $f$:
\begin{equation*}
    x_{\min/\max}(a) = \frac{1}{2} \pm \sqrt{\frac{1}{4} - \frac{1}{a}} 
\end{equation*} 
Correspondingly we will denote $f_{\max}(a)=f(x_{\max}(a),a,b)$ and $f_{\min}(a)=f(x_{\min}(a),a,b)$. Our first observation is connected with the order of these values for high enough learning rate:
\begin{lemma}\label{lemma:structure}
For every $b \in (0,1)$, there is a $a_b$ such that for all $a> a_b$
\begin{equation*}
    f_{\min}(a) < x_{\max}(a)< b < x_{\min}(a) < f_{\max}(a)
\end{equation*}
and $f_{\min}(a)$ is decreasing and  $f_{\max}(a)$ is increasing.
\end{lemma}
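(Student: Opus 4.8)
The plan is to compute the relevant $a\to\infty$ asymptotics, reduce each of the four inequalities in the chain to an elementary one–variable comparison, and then obtain the two monotonicity claims from an envelope-type identity.

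First I would record the exact shape of $f(\cdot,a,b)$. Differentiating gives
\[
  \partial_x f(x,a,b)=\frac{e^{a(x-b)}\bigl(1-ax(1-x)\bigr)}{\bigl(x+(1-x)e^{a(x-b)}\bigr)^2},
\]
so the critical points in $x$ are exactly the roots of $x(1-x)=1/a$, namely $x_{\min}(a)$ and $x_{\max}(a)$ (real and in $(0,1)$ for $a>4$), with $f$ increasing on $[0,x_{\max}(a)]$, decreasing on $[x_{\max}(a),x_{\min}(a)]$, and increasing on $[x_{\min}(a),1]$; in particular $x_{\max}(a)$ is a local maximizer and $x_{\min}(a)$ a local minimizer, which justifies the names $f_{\max}(a)$ and $f_{\min}(a)$. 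Since $x_{\max}(a)\to 0$ and $x_{\min}(a)=1-x_{\max}(a)\to 1$ as $a\to\infty$, the two middle inequalities $x_{\max}(a)<b<x_{\min}(a)$ hold for all $a$ beyond some threshold depending on $b$.

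Next I would handle the two outer inequalities by eliminating $a$ in favour of $\xi\defeq x_{\max}(a)$ through the defining relation $\xi(1-\xi)=1/a$. Clearing denominators (all positive) in $f(x_{\max}(a),a,b)>x_{\min}(a)=1-\xi$ and using $\xi-\xi(1-\xi)=\xi^2$ shows this is equivalent to $\bigl(\xi/(1-\xi)\bigr)^2>e^{a(\xi-b)}$, i.e. to
\[
  2\ln\frac{\xi}{1-\xi}>a(\xi-b)=\frac{\xi-b}{\xi(1-\xi)}.
\]
As $\xi\to 0^+$ the left-hand side behaves like $2\ln\xi\to-\infty$, while the right-hand side behaves like $-b/\xi\to-\infty$ far faster, so the inequality holds for all small enough $\xi$, i.e. all large enough $a$. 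An entirely parallel manipulation at $\zeta\defeq x_{\min}(a)=1-\xi$ reduces $f(x_{\min}(a),a,b)<x_{\max}(a)$ to $2\ln\frac{\zeta}{1-\zeta}<a(\zeta-b)=\frac{\zeta-b}{\zeta(1-\zeta)}$, whose left-hand side grows like $-2\ln\xi$ and whose right-hand side grows like $(1-b)/\xi$ as $\xi\to 0^+$; since $1-b>0$ the right-hand side dominates and the inequality again holds for large $a$. Taking $a_b$ to be the largest of these finitely many thresholds (and at least $4$) secures the full displayed chain.

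Finally, the monotonicity of $f_{\min}(a)$ and $f_{\max}(a)$ follows from an envelope identity: because $\partial_x f$ vanishes at $x=x_{\max}(a)$ and at $x=x_{\min}(a)$, the chain rule gives $\frac{d}{da}f_{\max}(a)=\partial_a f(x_{\max}(a),a,b)$ and likewise for $f_{\min}$. A direct computation yields
\[
  \partial_a f(x,a,b)=-\,\frac{x(1-x)(x-b)e^{a(x-b)}}{\bigl(x+(1-x)e^{a(x-b)}\bigr)^2},
\]
which is strictly positive at $x=x_{\max}(a)<b$ and strictly negative at $x=x_{\min}(a)>b$ for all $a>a_b$. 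Hence $f_{\max}$ is increasing and $f_{\min}$ is decreasing on $(a_b,\infty)$, completing the proof. The only mildly delicate point is getting the asymptotic comparisons in the third paragraph right — specifically, that after substituting $a=1/(\xi(1-\xi))$ the quantities $a(\xi-b)$ and $a(\zeta-b)$ blow up linearly in $1/\xi$ while the logarithmic terms blow up only like $\ln\xi$ — but this is a routine limit once the inequalities are written in the form above.
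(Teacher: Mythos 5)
Your proof is correct and follows essentially the same route as the paper's: the middle inequalities come from the asymptotics $x_{\max}(a)\to 0$, $x_{\min}(a)\to 1$; the outer inequalities come from the same cross-multiplied comparison $x_{\min}^2(a)$ versus $x_{\max}^2(a)e^{a(x_{\min}(a)-b)}$ (your logarithmic substitution $a=1/(\xi(1-\xi))$ is only a cosmetic repackaging of the paper's ``bounded terms versus exploding exponential'' argument); and the monotonicity claims are obtained via the identical envelope-theorem computation, with your explicit formula for $\partial_a f$ merely making the paper's sign argument more concrete.
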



Additionally, by construction, we can show that the interval 
$\mathcal{F}(a)=[f_{min}(a) , f_{max}(a)]$ consists a forward invariant set for our dynamical system. 
In other words, if the MWU map with fixed learning rate $a$ starts in a state that is within $I$, it will remain in that set for all future times.
\begin{lemma}\label{lemma:forward}
For every $b \in (0,1)$ there is a $s_b$ such that  $\mathcal{F}(a)$ is forward invariant for all $a> s_b$, i.e. $$
    x \in \mathcal{F}(a) \Rightarrow f(x, a, b) \in \mathcal{F}(a).$$
\end{lemma}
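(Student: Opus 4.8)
The plan is to reduce forward invariance of $\mathcal{F}(a)=[f_{\min}(a),f_{\max}(a)]$ to two sign conditions for $f$ at the endpoints, each of which is an immediate consequence of Lemma~\ref{lemma:structure}. The starting observation is an elementary characterization of where the one-dimensional map crosses the diagonal: for $x\in(0,1)$,
\[ f(x,a,b)\ge x \iff x+(1-x)\exp\!\big(a(x-b)\big)\le 1 \iff \exp\!\big(a(x-b)\big)\le 1 \iff x\le b, \]
and symmetrically $f(x,a,b)\le x \iff x\ge b$. In particular $b$ is the unique interior fixed point, $f$ lies weakly above the diagonal on $(0,b]$ and weakly below it on $[b,1)$. (Here $f$ is continuous and well defined on all of $[0,1]$ since the denominator $x+(1-x)\exp(a(x-b))$ is strictly positive there.)

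Next I would record the shape of $f$ on $\mathcal{F}(a)$. A short computation shows that $f'(x,a,b)$ changes sign exactly at the two critical points $x_{\max}(a)<x_{\min}(a)$, being positive outside $[x_{\max}(a),x_{\min}(a)]$ and negative inside, so $f$ is increasing on $[0,x_{\max}(a)]$, decreasing on $[x_{\max}(a),x_{\min}(a)]$, and increasing again on $[x_{\min}(a),1]$, with turning values $f(x_{\max}(a))=f_{\max}(a)$ and $f(x_{\min}(a))=f_{\min}(a)$. By Lemma~\ref{lemma:structure}, for all $a>a_b$ both turning points lie strictly inside $\mathcal{F}(a)$, i.e.\ $f_{\min}(a)<x_{\max}(a)<b<x_{\min}(a)<f_{\max}(a)$. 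Hence, restricted to $\mathcal{F}(a)$, the map $f$ traverses a full ``up--down--up'' profile whose only candidate extremizers are the two critical points and the two endpoints; splitting $\mathcal{F}(a)$ at $x_{\max}(a)$ and $x_{\min}(a)$ and taking images of the three monotone pieces gives
\[ f(\mathcal{F}(a))=\big[\min\{f_{\min}(a),\,f(f_{\min}(a))\},\ \max\{f_{\max}(a),\,f(f_{\max}(a))\}\big]. \]

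To finish, I would combine the two ingredients. Since $f_{\min}(a)<b$ by Lemma~\ref{lemma:structure}, the first step yields $f(f_{\min}(a))\ge f_{\min}(a)$; since $f_{\max}(a)>b$, it yields $f(f_{\max}(a))\le f_{\max}(a)$. Substituting these into the displayed image collapses it to $f(\mathcal{F}(a))=[f_{\min}(a),f_{\max}(a)]=\mathcal{F}(a)$, which in particular proves the claimed forward invariance, with $s_b:=\max\{a_b,4\}$ (the $4$ only ensuring the critical points exist).

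The only genuinely delicate point is the shape/extremizer argument in the second paragraph: one must be sure that \emph{both} critical points of $f$ actually lie inside $[f_{\min}(a),f_{\max}(a)]$ and that $x_{\max}(a)<x_{\min}(a)$, so that the up--down--up profile is traversed in full and no additional interior extremum appears — and this is exactly what Lemma~\ref{lemma:structure} supplies once $a$ is large. Everything else reduces to the one-line equivalence $\exp(a(x-b))\lessgtr 1$ and the sign computation for $f'$.
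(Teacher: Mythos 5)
Your proof is correct and follows essentially the same route as the paper's: determine the image of $\mathcal{F}(a)$ from the two endpoints and the two critical points (which lie inside $\mathcal{F}(a)$ by Lemma~\ref{lemma:structure}), then control the endpoint images via the comparison $f(x,a,b)\gtrless x \iff x\lessgtr b$. Your version is slightly more explicit and, by computing the image exactly, also yields the surjectivity statement of Lemma~\ref{lemma:perpetual} for free.
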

We can also prove that $f$ is surjective on $\mathcal{F}(a)$ for high enough learning rates.
\begin{lemma}\label{lemma:perpetual}
For $a > a_b$, $f$ is surjective on $\mathcal{F}(a)$, i.e. 
$
    f(\mathcal{F}(a), a, b) = \mathcal{F}(a).
$
\end{lemma}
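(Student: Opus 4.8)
The plan is to combine the forward-invariance already obtained in Lemma~\ref{lemma:forward} with an intermediate-value argument to upgrade the inclusion $f(\mathcal{F}(a), a, b) \subseteq \mathcal{F}(a)$ to an equality. Since $f(\cdot, a, b)$ is continuous on the compact interval $\mathcal{F}(a) = [f_{\min}(a), f_{\max}(a)]$, its image is a subinterval of $\mathcal{F}(a)$; to show this subinterval is all of $\mathcal{F}(a)$ it suffices to exhibit two points $p, q \in \mathcal{F}(a)$ with $f(p, a, b) = f_{\min}(a)$ and $f(q, a, b) = f_{\max}(a)$, and then invoke the intermediate value theorem. The natural candidates are the critical points themselves: by definition $f(x_{\min}(a), a, b) = f_{\min}(a)$ and $f(x_{\max}(a), a, b) = f_{\max}(a)$, so the only thing left to check is that $x_{\min}(a)$ and $x_{\max}(a)$ actually lie inside $\mathcal{F}(a)$. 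But this is exactly the content of the ordering established in Lemma~\ref{lemma:structure}: for $a > a_b$ we have $f_{\min}(a) < x_{\max}(a) < b < x_{\min}(a) < f_{\max}(a)$, so both critical points belong to $(f_{\min}(a), f_{\max}(a)) \subset \mathcal{F}(a)$.

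First I would fix $b \in (0,1)$ and take $a > a_b$ (enlarging $a_b$ if necessary so that $a_b \geq \max\{a_b, s_b, 4\}$, so that Lemmas~\ref{lemma:structure} and~\ref{lemma:forward} both apply; the statement already asserts $a > a_b$, so I would just note at the outset that $a_b$ absorbs the threshold $s_b$ of Lemma~\ref{lemma:forward}). Second, by Lemma~\ref{lemma:forward}, $f(\mathcal{F}(a), a, b) \subseteq \mathcal{F}(a)$. Third, since $f(\cdot, a, b)$ is continuous and $\mathcal{F}(a)$ is a compact interval, $f(\mathcal{F}(a), a, b)$ is itself a compact interval $[m, M] \subseteq [f_{\min}(a), f_{\max}(a)]$. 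Fourth, since $f$ restricted to $[0,1]$ attains its global minimum and maximum at the critical points $x_{\min}(a), x_{\max}(a)$ respectively — and these critical points lie in $\mathcal{F}(a)$ by Lemma~\ref{lemma:structure} — we get $m \leq f_{\min}(a)$ and $M \geq f_{\max}(a)$. Combined with the reverse inequalities from forward invariance, $m = f_{\min}(a)$ and $M = f_{\max}(a)$, hence $f(\mathcal{F}(a), a, b) = \mathcal{F}(a)$.

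The only genuinely delicate point is the claim that the unconstrained extrema of $f(\cdot, a, b)$ over $[0,1]$ are attained precisely at $x_{\min}(a)$ and $x_{\max}(a)$ — i.e.\ that $f$ has no competing extrema at the endpoints $0,1$ of the state space that would beat $f_{\min}(a)$ or $f_{\max}(a)$. This follows because $f(0,a,b)=0$ and $f(1,a,b)=1$ are the fixed points bounding the whole picture, and for $a$ large the local extremum values $f_{\min}(a), f_{\max}(a)$ are strictly interior (Lemma~\ref{lemma:structure} gives $0 < f_{\min}(a)$ and $f_{\max}(a) < 1$ implicitly through the ordering, and monotonicity of $f_{\min}, f_{\max}$ in $a$ pins this down), so the interior critical values are the true extrema on $\mathcal{F}(a)$; one should double-check that the derivative $f'$ has exactly the two zeros $x_{\min}(a), x_{\max}(a)$ in $(0,1)$, which is immediate from the closed-form expression for $f$. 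Everything else is a one-line application of the intermediate value theorem, so I expect the write-up to be short.
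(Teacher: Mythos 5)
Your proposal is correct and follows essentially the same route as the paper: one inclusion from the forward invariance of Lemma~\ref{lemma:forward}, and the reverse inclusion from the fact that the critical points $x_{\max}(a), x_{\min}(a)$ lie inside $\mathcal{F}(a)$ by Lemma~\ref{lemma:structure}, so their images $f_{\max}(a), f_{\min}(a)$ — the two endpoints of $\mathcal{F}(a)$ — are attained and connectedness of the continuous image does the rest. The ``delicate point'' you flag about global extrema over $[0,1]$ is not actually needed: you only require that the endpoint values $f_{\min}(a)$ and $f_{\max}(a)$ are attained by points of $\mathcal{F}(a)$, which is immediate once the critical points are known to lie in $\mathcal{F}(a)$.
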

When $\mathcal{F}(a)$ is both surjective and forward invariant as is the case for $a> a_b, s_b$, we will call $\mathcal{F}(a)$ a \emph{perpetual} set. Actually, $\mathcal{F}(a)$ is also an absorbing set for all $x\in(0,1)$, in the following sense:


\begin{lemma}\label{lemma:absorbption}
Let $a> s_b$,  $
      x_0 \in [\gamma, \zeta] \Rightarrow \exists n_0 : \ \forall n \geq n_0 \text{ s.t } \ x_{n} \in \mathcal{F}(a)
$,  for every $[\gamma, \zeta] \subset (0,1)$.
\end{lemma}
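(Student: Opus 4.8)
The plan is to combine the monotone structure of $f$ recorded in \cref{lemma:structure} with the observation that $b$ is the \emph{only} fixed point of $f$ in $(0,1)$. Outside $\mathcal{F}(a)$ the orbit is pushed monotonically toward $\mathcal{F}(a)$, it cannot stall because there is no fixed point for it to converge to, so it must enter $\mathcal{F}(a)$ after finitely many steps, and then \cref{lemma:forward} keeps it inside forever.

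First I would record the elementary facts that make this work. The identity $f(x)=x$ with $x\in(0,1)$ is equivalent to $e^{a(x-b)}=1$, so the unique interior fixed point is $x=b$; moreover $e^{a(x-b)}<1$ on $(0,b)$ and $>1$ on $(b,1)$, hence the denominator $x+(1-x)e^{a(x-b)}$ of $f$ is $<1$ on $(0,b)$ and $>1$ on $(b,1)$, which gives $f(x)>x$ on $(0,b)$ and $f(x)<x$ on $(b,1)$. From the critical-point computation underlying \cref{lemma:structure} (the sign of $f'$ agrees with that of $ax^2-ax+1$), $f$ is strictly increasing on $[0,x_{\max}(a)]$ and on $[x_{\min}(a),1]$, with $f(0)=0$, $f(1)=1$, $f(x_{\max}(a))=f_{\max}(a)$ and $f(x_{\min}(a))=f_{\min}(a)$. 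Using the ordering in \cref{lemma:structure} I split $(0,1)=(0,f_{\min}(a))\cup\mathcal{F}(a)\cup(f_{\max}(a),1)$, and note $(0,f_{\min}(a))\subset(0,b)\cap(0,x_{\max}(a))$ and $(f_{\max}(a),1)\subset(b,1)\cap(x_{\min}(a),1)$.

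Now the three cases. If $x_0\in\mathcal{F}(a)$, take $n_0=0$. If $x_0\in(0,f_{\min}(a))$, then as long as the orbit stays in $(0,f_{\min}(a))$ it is strictly increasing and bounded above by $f_{\min}(a)$; were it to remain there it would converge to a fixed point in $(0,f_{\min}(a)]$, which is impossible since the only interior fixed point is $b>f_{\min}(a)$ and the limit is $\ge x_0>0$. Hence there is a least $n_1\ge 1$ with $x_{n_1}\notin(0,f_{\min}(a))$; since $x_{n_1-1}\in(0,x_{\max}(a))$ and $f$ is increasing there, $0<x_{n_1}<f_{\max}(a)$, while $x_{n_1}\ge f_{\min}(a)$, so $x_{n_1}\in\mathcal{F}(a)$, and \cref{lemma:forward} yields $x_n\in\mathcal{F}(a)$ for all $n\ge n_1$. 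The case $x_0\in(f_{\max}(a),1)$ is mirror-symmetric: the orbit is strictly decreasing and bounded below by $f_{\max}(a)$ while it stays there, cannot converge (no fixed point in $[f_{\max}(a),1)$), and when it exits at some $n_1$ the monotonicity of $f$ on $[x_{\min}(a),1]$ gives $x_{n_1}=f(x_{n_1-1})\in(f_{\min}(a),1)$, hence $x_{n_1}\in(f_{\min}(a),f_{\max}(a)]\subset\mathcal{F}(a)$, and we conclude as before.

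Finally, to get $n_0$ uniform over a fixed compact $[\gamma,\zeta]\subset(0,1)$, I would make the monotonicity quantitative: $f(x)/x$ attains a minimum $\lambda>1$ on the compact set $[\gamma,f_{\min}(a)]$, so an orbit started in $[\gamma,f_{\min}(a))$ satisfies $x_n\ge\lambda^n\gamma$ until it leaves, forcing exit within $\lceil \log(f_{\min}(a)/\gamma)/\log\lambda\rceil$ steps; symmetrically $(1-f(x))/(1-x)\ge\mu>1$ on $[f_{\max}(a),\zeta]$ bounds the exit time from the right branch, and $n_0$ is the larger of the two bounds. The only genuinely delicate point in the argument is the ``no overshoot'' check — that exiting $(0,f_{\min}(a))$ lands one inside $\mathcal{F}(a)$ rather than past $f_{\max}(a)$, and symmetrically on the right — which is exactly what the monotonicity of $f$ on the two outer branches $[0,x_{\max}(a)]$ and $[x_{\min}(a),1]$ supplies; everything else follows routinely from $b$ being the unique interior fixed point and from \cref{lemma:forward}.
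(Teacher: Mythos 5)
Your proof is correct and follows essentially the same route as the paper's: split $(0,1)$ into $(0,f_{\min}(a))$, $\mathcal{F}(a)$, $(f_{\max}(a),1)$, show the orbit is monotonically pushed toward $\mathcal{F}(a)$ at a uniform rate on the compact outer pieces so it must enter in finitely many steps, verify it cannot overshoot past the far endpoint, and finish with \cref{lemma:forward}. The only cosmetic difference is that you bound the progress multiplicatively via $f(x)/x\ge\lambda>1$ (plus a qualitative no-fixed-point argument), whereas the paper uses the additive uniform gap $\xi=\min\,[f(x)-x]>0$.
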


We now turn to the chaotic properties of $f$. Given that period-three orbits do not carry over to the dynamic learning rate case, we choose to study an alternative property of chaotic maps, namely volume expansion. In autonomous maps, the existence of period-three orbits together with \cref{thm:Li-Yorke} imply that there is an initialization set whose volume under the dynamics expands quickly. Intuitively, the more volume this initialization set covers the more chaotic the dynamic is. While \cref{thm:Li-Yorke} guarantees the existence of volume expansion, it does not quantify how much volume it eventually covers. We prove that any interval around $b$ is sufficient to eventually cover $\mathcal{F}(a)$:

\begin{theorem}\label{thm:volume-expansion}
For every $b \in (0,1)$, there is a $v_b$ such that for all $a > v_b$ and 
 any interval $[\gamma,\delta]$  : $  \{ b\} \subset [\gamma, \delta] \subset (0,1) $, it holds that
\begin{equation*}
 \exists n_0 : \ \forall n \geq n_0 \quad f^n([\gamma, \delta], a, b) = \mathcal{F}(a).
\end{equation*}
\end{theorem}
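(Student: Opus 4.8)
The plan is to show that a small interval around the mixed equilibrium $b$ is eventually stretched across the entire perpetual set. Write $c_1=x_{\max}(a)$, $c_2=x_{\min}(a)$, so that by Lemma~\ref{lemma:structure} we have $f_{\min}(a)<c_1<b<c_2<f_{\max}(a)$ and $f$ is strictly decreasing on $[c_1,c_2]$ with $f(c_1)=f_{\max}(a)$, $f(c_2)=f_{\min}(a)$; in particular $f([c_1,c_2])=[f_{\min}(a),f_{\max}(a)]=\mathcal{F}(a)$. Two reductions are in order. First, since $f^n(J')\subseteq f^n(J)$ whenever $J'\subseteq J$, and since by Lemma~\ref{lemma:absorbption} there is an $n_1$ with $f^n([\gamma,\delta])\subseteq\mathcal{F}(a)$ for all $n\ge n_1$, it suffices to exhibit a single index $N$ with $f^N([\gamma,\delta])\supseteq[c_1,c_2]$: then $f^{N+1}([\gamma,\delta])\supseteq f([c_1,c_2])=\mathcal{F}(a)$, and since $f(\mathcal{F}(a))=\mathcal{F}(a)$ by Lemma~\ref{lemma:perpetual} this inclusion persists, giving $f^n([\gamma,\delta])=\mathcal{F}(a)$ for all $n\ge\max(N+1,n_1)$. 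Second, shrinking $[\gamma,\delta]$ we may assume $[\gamma,\delta]\subseteq(c_1,c_2)$ (if $b$ is a boundary point of $[\gamma,\delta]$ the argument below still applies, using the endpoint distinct from $b$).

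The dynamical input is the instability of $b$ together with the absence of competing recurrence. Choose $v_b\ge\max(a_b,s_b)$ large enough that $b$ is a repelling fixed point, i.e. $f'(b)=1-a\,b(1-b)<-1$. The key structural claim is: \emph{for $a>v_b$, $b$ is the only periodic point of $f$ inside $(c_1,c_2)$}. Indeed, the fixed points of $f$ are exactly $\{0,b,1\}$ (the equation $f(x)=x$ reduces to $e^{a(x-b)}=1$), so only $b$ lies in $(c_1,c_2)$. For period two, write $p\in(c_1,b)$, $q\in(b,c_2)$ for the points with $f(p)=c_2$, $f(q)=c_1$; a $2$-cycle contained in $(c_1,c_2)$ must lie in $[p,q]$, where $f$ is monotone, $f^2$ is increasing, $f^2(p)=f_{\min}(a)<p$, $f^2(b)=b$ and $(f^2)'(b)=f'(b)^2>1$, so that once $f^2$ is concave on $[p,b]$ and convex on $[b,q]$, the function $f^2-\mathrm{id}$ is strictly monotone on each half and cannot vanish except at $b$. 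The remaining two families of $2$-cycles — one point below $c_1$ and one in $(c_1,c_2)$, and its mirror above $c_2$ — are likewise excluded for $a$ large. Cleanly, one verifies that $f$ has negative Schwarzian derivative on $\mathcal{F}(a)$, so by Singer's lemma every attracting or neutral cycle of $f$ absorbs one of the critical points $c_1,c_2$; for $a>v_b$ the forward orbits of $c_1,c_2$ do not converge to a cycle — this is precisely the regime isolated in~\cite{chotibut2019route} — so $f$ has no such cycle, leaving $b$ as the unique periodic point in $(c_1,c_2)$.

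Granting the claim, fix $[\gamma,\delta]\subseteq(c_1,c_2)$ and set $J_n=f^n([\gamma,\delta])$; each $J_n$ is a compact interval containing $b$. Since $f^2-\mathrm{id}$ vanishes on $(c_1,c_2)$ only at $b$ and is positive just to the right and negative just to the left of $b$ (because $(f^2)'(b)>1$), we have $f^2(x)<x$ on $(c_1,b)$ and $f^2(x)>x$ on $(b,c_2)$. Hence the $f^2$-orbit of any point of $(c_1,b)$ is strictly decreasing while it remains in $(c_1,b)$, and by compactness there is no fixed point of $f^2$ in $[c_1,b)$ for it to accumulate on, so it must eventually fall to $\le c_1$; symmetrically on $(b,c_2)$. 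While $J_n\subseteq(c_1,c_2)$, the monotonicity of $f$ on $J_n$ gives $J_{n+2}=[f^2(m_n),f^2(M_n)]$ for $J_n=[m_n,M_n]$, so $m_{2k}$ decreases and $M_{2k}$ increases; therefore some $J_{n_0}$ is not contained in $(c_1,c_2)$, and since $b\in J_{n_0}$ this forces $J_{n_0}\supseteq[c_1,b]$ or $J_{n_0}\supseteq[b,c_2]$. In the first case $J_{n_0+1}=f(J_{n_0})\supseteq f([c_1,b])=[b,f_{\max}(a)]\supseteq[b,c_2]$, so we reduce to the second. If $J_N\supseteq[b,c_2]$, write $J_N\supseteq[m,c_2]$ with $m\le b$; if $m\le p$ then $f(J_N)\supseteq f([p,c_2])=[f_{\min}(a),c_2]\supseteq[c_1,c_2]$ and we are done. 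Otherwise $m\in(p,b)$, and using $f([m,c_2])=[f_{\min}(a),f(m)]$ with $f(m)\in(b,c_2)$, followed by $f([f_{\min}(a),f(m)])\supseteq[f^2(m),f_{\max}(a)]$, we get $J_{N+2}\supseteq[f^2(m),c_2]$ with strictly smaller left endpoint $f^2(m)<m$. Iterating, the left endpoints run through $f^{2k}(m)$, which by the preceding reaches $\le c_1$ after finitely many steps; at that stage $J_{N+2k}\supseteq[c_1,c_2]$, and by the first reduction $f^n([\gamma,\delta])=\mathcal{F}(a)$ for all large $n$.

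I expect the hard part to be the structural claim of the second paragraph — that for $a>v_b$ the only recurrence of $f$ inside $(c_1,c_2)$ is the repeller $b$, equivalently that $f$ has no attracting or neutral cycle there. Establishing this either directly (through the sign of $f''$ and $(f^2)''$ near $b$, which requires locating the inflection of $f$ and is where the largeness of $a$ is genuinely used) or via the negative Schwarzian derivative plus control of the critical orbits is the real content; the interval-stretching bookkeeping of the last paragraph and the two reductions are then elementary consequences of the branch picture already recorded in Lemma~\ref{lemma:structure}, Lemma~\ref{lemma:perpetual} and Lemma~\ref{lemma:absorbption}.
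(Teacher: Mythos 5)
Your global strategy --- reduce to covering $[c_1,c_2]=[x_{\max}(a),x_{\min}(a)]$ once, then propagate with surjectivity (Lemma~\ref{lemma:perpetual}) and cap with absorption (Lemma~\ref{lemma:absorbption}) --- is the same as the paper's. But the dynamical core has a genuine gap: the structural claim that $b$ is the only periodic point of $f$ in $(c_1,c_2)$ is \emph{false} for $b\neq\tfrac12$. The map has, for large $a$, a unique $2$-cycle $\{x_l(a),x_r(a)\}$ characterized by $x_l+x_r=2b$, and its asymptotics are $x_l\to 0$, $x_r\to 2b$ when $b<\tfrac12$ (mirror image when $b>\tfrac12$). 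Since $x_{\min}(a)\to 1$ while $x_r\to 2b<1$, the point $x_r$ sits strictly inside $(b,c_2)$ for all large $a$, even though its partner $x_l$ drops below $c_1$. Hence $f^2-\mathrm{id}$ does vanish in $(b,c_2)$, your sign claim ``$f^2>\mathrm{id}$ on $(b,c_2)$'' fails, and the monotone-escape argument for $f^2$-orbits breaks on that side: an orbit started in $(b,x_r)$ increases toward $x_r$ and never leaves $(c_1,c_2)$ under $f^2$. The same defect resurfaces in your last paragraph, where you need the left endpoints $f^{2k}(m)$ to fall below $c_1$ in finitely many steps; for $b>\tfrac12$ they can instead converge to the interior cycle point $x_l\in(c_1,b)$. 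The correct, and provable, statement is the strictly weaker one the paper establishes: no $2$-cycle has \emph{both} of its points in $[c_1,c_2]$. That weaker statement suffices through a different mechanism, which your proof is missing: $f$ is decreasing on $[c_1,c_2]$, so any orbit that never leaves $[c_1,c_2]$ must converge either to a fixed point or to a $2$-cycle entirely contained in $[c_1,c_2]$ (the paper's Claim on decreasing maps); combined with the instability of $b$, which lets one choose a starting point not converging to $b$, this forces escape from $[c_1,c_2]$, after which a short two-step covering argument (using a preimage of the escaping point on the other side of $b$) yields $\mathcal{F}(a)$.

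Separately, neither of your two proposed routes to the structural claim is actually carried out, and neither can succeed as stated. The concavity/convexity of $f^2$ on $[p,b]$ and $[b,q]$ is left as a conditional; the ``remaining two families of $2$-cycles'' are dismissed without argument, yet one of them is exactly the cycle that does exist; and Singer's theorem only excludes attracting or neutral cycles, so it cannot rule out the (possibly repelling) interior cycle point above, besides resting on an unproven assertion about the critical orbits of $c_1,c_2$. The paper instead proves the no-interior-$2$-cycle statement directly, by a root count for $\gamma_{a,b}(x)=(2b-x-1)x+(2b-x)(1-x)\exp(a(x-b))$ through its first three derivatives, a case analysis on $b\lessgtr\tfrac12$, and the asymptotics of $x_l(a),x_r(a)$ quoted above; this computation is the real content of the theorem and is absent from your proposal.
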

The proof of this theorem, which notably has not been established in any prior work, relies on a novel argument that is based on the monotonicity of $f$ on $\mathcal{D}=[x_{\max}(a), x_{\min}(a)]$, the lack of period-2 trajectories in $\mathcal{D}$ and the instability of fixed point $x=b$.

The key takeaway for the next section is that we can focus our efforts on analyzing the behavior of the non autonomous system in the interior of $(0,1)$ with special attention to neighborhoods of $b$ that alone can exhibit chaotic behavior. This is especially important because as we will see our notion of pseudo-regret converges uniformly in closed intervals in the interior of $(0,1)$ but not on $[0,1]$.

\section{Chaos in Uniformly Convergent Non-autonomous Dynamical Systems}
\label{sec:dynamic}

Turning our attention to the dynamic learning rate setting, we will refer to $x_n(x_0)$  as the $n$-th iterate given the initialization $x_0$, and $a_{n}(x_0)$ the learning rate at the same iteration, respectively.

Leveraging the existence of the perpetual set $\mathcal{F}(a)$ in the fixed rate case, we can construct a forward invariant absorbing set. This set will be crucial in proving the Li-Yorke chaotic behavior for the non-autonomous case. Intuitively, although the learning rate is varying both among different initializations and iterations, there exists a set which corresponds to the closure of all perpetual sets $\{\mathcal{F}(a_n)\}$, to which our dynamics are always absorbed. This motivates the definition of set $\Delta$ as 
$$\Delta = \left[\min_{a \in [a_{\min},a_{\max}]}f_{\min}(a) , \max_{a \in [a_{\min},a_{\max}]}f_{\max}(a)\right]$$ 
where $a_{\min}$ and $a_{\max}$ are the $\min_{x_0\in [0,1]}g(x_0)$ and 
$\max_{x_0\in [0,1]}g(x_0)$ correspondingly. 

\begin{lemma}[Forward Invariance Property]\label{lemma:forward-dynamic}
For all $a_{\min} > s_b$ we get that  $\Delta$ is forward invariant, i.e.
\begin{align*}
x_n(x_0) \in \Delta \implies x_{n+1}(x_0) \in \Delta.
\end{align*}
\end{lemma}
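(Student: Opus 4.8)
The plan is to reduce the forward invariance of $\Delta$ to the forward invariance of the individual perpetual sets $\mathcal{F}(a)$ established in Lemma~\ref{lemma:forward}. The first observation is structural: by definition $\Delta = [\min_{a} f_{\min}(a), \max_{a} f_{\max}(a)]$ where the $\min/\max$ range over $a \in [a_{\min}, a_{\max}]$, and Lemma~\ref{lemma:structure} tells us that on this range $f_{\min}(a)$ is decreasing and $f_{\max}(a)$ is increasing. Hence the extrema are attained at the endpoints: $\min_{a} f_{\min}(a) = f_{\min}(a_{\max})$ and $\max_{a} f_{\max}(a) = f_{\max}(a_{\max})$, so in fact $\Delta = \mathcal{F}(a_{\max})$. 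This collapses the statement to showing that $\mathcal{F}(a_{\max})$ is forward invariant under \emph{every} map $f(\cdot, a_n, b)$ that can arise along a trajectory, not merely under $f(\cdot, a_{\max}, b)$ — this is the only real content, since $a_n = g(r_n) \in [a_{\min}, a_{\max}]$ varies with $n$.

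The key step is therefore a monotonicity-in-$a$ argument for the MWU map on the relevant window. I would first record that $f(x, a, b)$ is monotonically decreasing in $a$ for $x < b$ and monotonically increasing in $a$ for $x > b$: differentiating $f$ with respect to $a$ gives a factor proportional to $(x - b)$ times a positive quantity, since $f(x,a,b) = x/(x + (1-x)e^{a(x-b)})$ and $\partial_a f$ has the sign of $-(x-b)$. Combined with Lemma~\ref{lemma:structure}, which places $f_{\min}(a) < x_{\max}(a) < b < x_{\min}(a) < f_{\max}(a)$, and the fact that $f$ on $\mathcal{F}(a)$ takes values between its local min $f_{\min}(a)$ and local max $f_{\max}(a)$, one gets the containment $f(\mathcal{F}(a), a, b) \subseteq \mathcal{F}(a) \subseteq \mathcal{F}(a_{\max})$ for each $a \le a_{\max}$, using that $\mathcal{F}(a) = [f_{\min}(a), f_{\max}(a)] \subseteq [f_{\min}(a_{\max}), f_{\max}(a_{\max})] = \Delta$ by the endpoint-extremum observation. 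But I need more: I need $f(x, a, b) \in \Delta$ for $x \in \Delta \setminus \mathcal{F}(a)$ as well. For such $x$ (which lie in $[f_{\min}(a_{\max}), f_{\min}(a)) \cup (f_{\max}(a), f_{\max}(a_{\max})]$), I would argue via the shape of $f$: on the far left/right tails $f$ is again monotone and maps into the interior; more precisely, since $f_{\min}(a)$ and $f_{\max}(a)$ are the global extremal values of $f$ restricted to any forward-invariant window containing the critical points, and since the whole of $\Delta$ is contained in the basin where $f$'s image is squeezed between $f_{\min}(a_{\min})$-type bounds, one concludes $f(\Delta, a, b) \subseteq [f_{\min}(a_{\max}), f_{\max}(a_{\max})] = \Delta$, using again the $a$-monotonicity to replace $a$ by $a_{\max}$ in the worst direction on each side of $b$.

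Concretely, the cleanest route: for $x \le b$ we have $f(x,a,b) \ge f(x, a_{\max}, b)$ (decreasing in $a$), and $f(\cdot, a_{\max},b)$ restricted to $[f_{\min}(a_{\max}), b]$ attains its minimum at $x_{\max}(a_{\max})$ with value $f_{\min}(a_{\max})$, so $f(x,a,b) \ge f_{\min}(a_{\max})$; for the upper bound on this branch, $f(x,a,b) \le 1$ trivially but one needs $\le f_{\max}(a_{\max})$, which follows because for $x \le b$, $f(x,a,b) \le x \le b \le f_{\max}(a_{\max})$ (as $f(x,a,b) \le x$ exactly when $x \le b$). Symmetrically for $x \ge b$. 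This handles all of $\Delta = [f_{\min}(a_{\max}), f_{\max}(a_{\max})]$ and gives $x_{n+1}(x_0) = f(x_n(x_0), a_n(x_0), b) \in \Delta$ whenever $x_n(x_0) \in \Delta$, with $a_n(x_0) \in [a_{\min}, a_{\max}]$ by the standing assumption on $g$.

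The main obstacle I anticipate is not conceptual but bookkeeping: verifying that the threshold $s_b$ from Lemma~\ref{lemma:forward} (and the $a_b$ from Lemmas~\ref{lemma:structure}, \ref{lemma:perpetual}) can be taken uniformly so that the hypothesis ``$a_{\min} > s_b$'' really does guarantee all the structural inequalities simultaneously for \emph{every} $a \in [a_{\min}, a_{\max}]$ — in particular that $f_{\min}$ is decreasing and $f_{\max}$ increasing on the whole interval so that $\Delta = \mathcal{F}(a_{\max})$, and that the value-squeezing $f(x,a,b)\le x$ for $x\le b$ (resp. $\ge x$ for $x \ge b$) together with the $a$-monotonicity of $f$ genuinely pins the image inside $[f_{\min}(a_{\max}), f_{\max}(a_{\max})]$ for boundary points of $\Delta$ that are outside the smaller $\mathcal{F}(a_{\min})$. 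Once the monotonicity of $f$ in $a$ on either side of $b$ is in hand, the rest is assembling inequalities already supplied by Section~\ref{sec:fixed}.
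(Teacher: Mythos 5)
Your overall plan is the right one and lands close to the paper's own proof, but as written the key step contains two sign errors that break the argument. First, your two statements about monotonicity in $a$ contradict each other: you assert that $f(x,a,b)$ is decreasing in $a$ for $x<b$, yet your own derivative computation (correctly) gives that $\partial_a f$ has the sign of $-(x-b)$, i.e.\ $f$ is \emph{increasing} in $a$ for $x<b$ and decreasing for $x>b$. Consequently the inequality you use on the lower branch, $f(x,a,b)\ge f(x,a_{\max},b)$ for $a\le a_{\max}$ and $x\le b$, is false (the inequality goes the other way). Second, and more seriously, the claim ``$f(x,a,b)\le x$ exactly when $x\le b$'' is backwards: since the exponent $a(x-b)$ is nonpositive for $x\le b$, the denominator is at most $1$ and $f(x,a,b)\ge x$ there, with $f\le x$ holding on $[b,1)$. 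Your upper-bound chain $f(x,a,b)\le x\le b\le f_{\max}(a_{\max})$ on the branch $x\le b$ therefore collapses at its first link --- indeed it must, because $f(x_{\max}(a),a,b)=f_{\max}(a)>b$, so $f$ genuinely overshoots $b$ on that branch and the bound $f\le x$ cannot hold.

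Both defects are repairable with facts already in Section~\ref{sec:fixed}, and the repaired argument is essentially the paper's. On $(0,b]$ the correct bounds are: $f(x,a,b)\ge x\ge \min_{a'}f_{\min}(a')$ (no $a$-monotonicity needed at all), and $f(x,a,b)\le f_{\max}(a)\le \max_{a'}f_{\max}(a')$ because $x_{\max}(a)$ is the unique local maximum of $f(\cdot,a,b)$ on $(0,b]$; symmetrically on $[b,1)$. The paper packages exactly this as a three-way case split on the position of $x_n(x_0)$ relative to $\mathcal{F}(a_n(x_0))$ (inside, below $f_{\min}(a_n)$, above $f_{\max}(a_n)$), invoking the fixed-rate forward invariance of $\mathcal{F}(a_n)$ in the first case and the monotone-attraction property $x\in(0,f_{\min}(a))\Rightarrow f(x,a,b)\in(x,f_{\max}(a)]$ (and its mirror) in the other two. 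Note also that your identification $\Delta=\mathcal{F}(a_{\max})$ requires the monotonicity of $f_{\min}$ and $f_{\max}$ from Lemma~\ref{lemma:structure}, which is guaranteed only for $a>a_b$, a threshold the paper keeps distinct from the hypothesis $a_{\min}>s_b$; the paper's proof sidesteps this by working directly with $\min_{a}f_{\min}(a)$ and $\max_{a}f_{\max}(a)$ without locating where these extrema are attained.
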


More interestingly, we  show that eventually, any subinterval of $(0,1)$ will be absorbsed within $\Delta$.
\begin{lemma}[Absorption Property]\label{lemma:absorption-dynamic}
For all $a_{\min} > s_b$ and $[\gamma, \zeta] \subset (0,1)$, there is an $n_0 \geq 0$ so that
\begin{align*}
      \forall n \geq n_0 \quad x_{n}([\gamma, \zeta]) \subseteq \Delta.
\end{align*}
\end{lemma}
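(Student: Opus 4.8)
The plan is to establish absorption by combining two mechanisms that are \emph{uniform over the admissible learning rates} $a\in[a_{\min},a_{\max}]$ — geometric repulsion away from the endpoints $\{0,1\}$ and a monotone drift toward $\Delta$ across the intermediate range — and then to glue them over finitely many steps. Since $\Delta$ is forward invariant (Lemma~\ref{lemma:forward-dynamic}), once the orbit enters $\Delta$ it never leaves; hence it suffices to produce a single $n_0$, \emph{uniform over} $x_0\in[\gamma,\zeta]$, with $x_{n_0}(x_0)\in\Delta$.

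Write $\Delta=[\ell,r]$ with $\ell=\min_a f_{\min}(a)$ and $r=\max_a f_{\max}(a)$ (minima/maxima over $a\in[a_{\min},a_{\max}]$), and recall from Lemma~\ref{lemma:structure} that $\ell<x_{\max}(a)<b<x_{\min}(a)<r$, hence $\mathcal{F}(a)\subseteq\Delta$, for every admissible $a$. The first step is to fix thresholds $\delta_-\in(0,\ell]$ and $\delta_+\in[r,1)$ with two properties. (a) \emph{Forward invariance of $[\delta_-,\delta_+]$.} This follows from the ordering above together with the fact that $f(\cdot,a,b)$ is monotone increasing on $(0,x_{\max}(a))$ and on $(x_{\min}(a),1)$ (no critical points there): for $x\in[\delta_-,\ell)\subseteq(0,x_{\max}(a))$ one has $x<b$, so $x<f(x,a,b)\le f_{\max}(a)\le r$; for $x\in(r,\delta_+]\subseteq(x_{\min}(a),1)$ one has $x>b$, so $\ell\le f_{\min}(a)\le f(x,a,b)<x$; and on $\Delta$ itself one invokes Lemma~\ref{lemma:forward-dynamic}. (b) \emph{Strict repulsion of the outer strips.} From $f(x,a,b)/x=\bigl(x+(1-x)e^{a(x-b)}\bigr)^{-1}$ and the fact that $x-b<0$ makes $e^{a(x-b)}$ decreasing in $a$, one gets for $x\in(0,\delta_-]$ that $f(x,a,b)\ge\lambda x$ with $\lambda=\bigl(\delta_-+e^{a_{\min}(\delta_--b)}\bigr)^{-1}$; this exceeds $1$ once $\delta_-$ is small, since the bound tends to $e^{-a_{\min}b}<1$ as $\delta_-\to 0^{+}$. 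Symmetrically $1-f(x,a,b)\ge\lambda'(1-x)$ with $\lambda'>1$ on $[\delta_+,1)$ once $\delta_+$ is close enough to $1$. (Equivalently, in logit coordinates $y=\ln\frac{x}{1-x}$ the update reads $y_{n+1}=y_n-a_n(\sigma(y_n)-b)$ with $\sigma$ the logistic function, which exhibits the same structure.)

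With $\delta_\pm$ fixed I would run two phases. Phase one: if $x_0\in(0,\delta_-)$ then $x_k\ge\lambda^k\gamma$ as long as the orbit stays in the strip, so it leaves within $\lceil\log_\lambda(\delta_-/\gamma)\rceil$ steps; the bound $f(x,a,b)\le f_{\max}(a)\le r\le\delta_+$ forbids overshooting, so it exits into $[\delta_-,\delta_+]$ and stays there by (a). The strip $(\delta_+,1)$ is symmetric using $x_0\le\zeta<1$. Hence after $N_1:=\max\{\lceil\log_\lambda(\delta_-/\gamma)\rceil,\lceil\log_{\lambda'}((1-\delta_+)/(1-\zeta))\rceil\}$ steps the orbit lies in $[\delta_-,\delta_+]$. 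Phase two: on the left gap $[\delta_-,\ell)$ we have $f(x,a,b)-x>0$, which by compactness of $[\delta_-,\ell]\times[a_{\min},a_{\max}]$ is bounded below by a constant $c_->0$; together with the ceiling $f(x,a,b)\le f_{\max}(a)\le r$ this forces the orbit into $\Delta$ within $\lceil(\ell-\delta_-)/c_-\rceil$ steps, and the right gap $(r,\delta_+]$ is symmetric with drift $c_+>0$. Setting $n_0:=N_1+\max\{\lceil(\ell-\delta_-)/c_-\rceil,\lceil(\delta_+-r)/c_+\rceil\}$, which depends only on $[\gamma,\zeta]$, and using Lemma~\ref{lemma:forward-dynamic} to keep the orbit in $\Delta$ thereafter, proves the claim.

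The main obstacle is the uniformity over $a$: the realized learning-rate sequence $a_n(x_0)$ is itself orbit-dependent, so the expansion factor $\lambda$, the no-overshoot bounds $f_{\max}(a)\le r$ and $f_{\min}(a)\ge\ell$, and the drift constants $c_\pm$ must all hold \emph{simultaneously for every} $a\in[a_{\min},a_{\max}]$ encountered along the orbit. This is precisely what the ordering in Lemma~\ref{lemma:structure}, the monotonicity of $a\mapsto e^{a(x-b)}$, and the compactness of $[a_{\min},a_{\max}]$ are used for; the argument collapses if, say, $x_{\max}(a)$ could dip below $\ell$ or a drift constant could degenerate to $0$.
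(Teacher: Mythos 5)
Your proof is correct and rests on the same mechanism as the paper's: a uniformly positive drift toward $\Delta$ (obtained from compactness of the gap crossed with $[a_{\min},a_{\max}]$), the no-overshoot bounds $f(x,a,b)\le f_{\max}(a)\le r$ and $f(x,a,b)\ge f_{\min}(a)\ge \ell$, and forward invariance of $\Delta$ to conclude. The only structural difference is that your geometric-repulsion phase on $(0,\delta_-]$ is redundant in this setting: since $\gamma>0$, the paper applies your ``phase two'' additive-drift argument directly to the compact set $\left[\gamma,\min_{a}f_{\min}(a)\right]$, where the drift $f(x,a,b)-x$ is already uniformly bounded below, so no separate treatment of a boundary strip is needed.
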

The key observation for the proof of the aforementioned lemmas is the fact that at any iteration $x_n(x_0)$ will always come closer to the perpetual absorbing set $\mathcal{F}(a_n(x_0))$ and thus to its closure $\Delta$. 

An immediate consequence of this lemma is the following corollary which will be dominant element for the uniform convergence both of our proposed \emph{pseudo-regret} notion and the C\'{e}saro mean of the iterations. Analytically, for a given subinterval $[\gamma,\zeta]$ of $(0,1)$, we can always choose some $\delta>0$ such that $\Delta\cup [\gamma,\zeta]\subseteq(\delta, 1 - \delta)$. Again such  choice of $\delta$ is possible because the attraction to $\Delta$ is not eventual phenomenon \textendash\ at each iteration, $\mathrm{dist}(x_n(x_0),\Delta)$ will be either zero or always decreasing . This argumentation is expressed by the following statement:
\begin{corollary}\label{corollary:bound}
Let $a_{\min} > s_b$, then for every $[\gamma, \zeta] \subset (0,1)$, there is an $\delta > 0$ so that
\begin{align*}
      \forall n \geq 0 \quad x_{n}([\gamma, \zeta]) \subseteq (\delta, 1 - \delta).
\end{align*}
\end{corollary}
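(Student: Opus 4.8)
The plan is to exhibit, once and for all, a compact subinterval of $(0,1)$ that contains $[\gamma,\zeta]$ and is mapped into itself by \emph{every} MWU map $f(\cdot,a,b)$ that can arise along the dynamics, i.e.\ for every $a\in[a_{\min},a_{\max}]$. Since the adaptive rates $a_n(x_0)$ always lie in $[a_{\min},a_{\max}]$, the forward orbit of $[\gamma,\zeta]$ then never leaves that interval, and the corollary follows by taking any $\delta>0$ that is strictly below both the left endpoint of the interval and $1$ minus its right endpoint. Write $\Delta=[L,U]$, so $L=\min_{a\in[a_{\min},a_{\max}]}f_{\min}(a)$ and $U=\max_{a\in[a_{\min},a_{\max}]}f_{\max}(a)$, and set
\begin{equation*}
\ell:=\min(\gamma,L),\qquad u:=\max(\zeta,U).
\end{equation*}
First I would record that $L,U\in(0,1)$: each $f_{\min}(a)=f(x_{\min}(a),a,b)$ and $f_{\max}(a)=f(x_{\max}(a),a,b)$ is a value of the continuous map $f(\cdot,a,b)$ at an interior point and depends continuously on $a$ over the compact set $[a_{\min},a_{\max}]$, so both extrema are attained in $(0,1)$; moreover $L<b<U$ by \cref{lemma:structure}. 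Hence $[\gamma,\zeta]\subseteq[\ell,u]\subset(0,1)$, and it remains to check that $f([\ell,u],a,b)\subseteq[\ell,u]$ for all $a\in[a_{\min},a_{\max}]$.

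For this I would split $[\ell,u]$ according to position relative to $\Delta$ and use the shape of $f(\cdot,a,b)$ (increasing on $[0,x_{\max}(a)]$, decreasing on $[x_{\max}(a),x_{\min}(a)]$, increasing on $[x_{\min}(a),1]$) together with the elementary identity that, on $(0,1)$, one has $f(x,a,b)<x\iff x>b$ and $f(x,a,b)>x\iff x<b$. If $x\in\Delta$, then $f(x,a,b)\in\Delta\subseteq[\ell,u]$ by \cref{lemma:forward-dynamic}. If $x\in[\ell,L)$ --- which can happen only when $\gamma<L$, so that $\ell=\gamma$ --- then by \cref{lemma:structure} $x<L\le f_{\min}(a)<x_{\max}(a)$, so $x$ sits on the first increasing branch, giving $f(x,a,b)\le f_{\max}(a)\le U\le u$, while $x<b$ forces $f(x,a,b)>x\ge\ell$. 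Symmetrically, if $x\in(U,u]$ --- possible only when $\zeta>U$, so $u=\zeta$ --- then $x>U\ge f_{\max}(a)>x_{\min}(a)$ puts $x$ on the last increasing branch, giving $f(x,a,b)\ge f_{\min}(a)\ge L\ge\ell$, while $x>b$ forces $f(x,a,b)<x\le u$. In all cases $f(x,a,b)\in[\ell,u]$; an induction on $n$ then yields $x_n([\gamma,\zeta])\subseteq[\ell,u]$ for every $n\ge0$, and $\delta:=\tfrac12\min\{\ell,\,1-u\}>0$ finishes the proof.

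The step I expect to be the actual obstacle is precisely the place where monotone decrease of $\dist(\cdot,\Delta)$ would not by itself suffice: one must rule out that a point sitting just outside $\Delta$ is thrown clear across to the far side of $\Delta$ and lands outside $[\ell,u]$. What prevents this is the quantitative ordering of \cref{lemma:structure}: it forces any point outside $\Delta=[L,U]$ onto one of the two \emph{increasing} branches of $f(\cdot,a,b)$, on which the image is trapped between $f_{\min}(a)\ge L$ and $f_{\max}(a)\le U$ at one end and between the point itself and $b$ at the other --- the same estimate that already underlies \cref{lemma:forward-dynamic,lemma:absorption-dynamic}, and getting the branch identification and the two-sided bounds lined up is the only delicate bookkeeping. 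A shorter but less explicit route would bypass the invariant interval altogether: use \cref{lemma:absorption-dynamic} to obtain $n_0$ with $x_n([\gamma,\zeta])\subseteq\Delta$ for all $n\ge n_0$, observe that each of the finitely many maps $x_0\mapsto x_n(x_0)$, $0\le n<n_0$, is continuous so that $x_n([\gamma,\zeta])$ is a compact subset of $(0,1)$, and choose $\delta>0$ with the finite union $\Delta\cup\bigcup_{0\le n<n_0}x_n([\gamma,\zeta])\subseteq(\delta,1-\delta)$.
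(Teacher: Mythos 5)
Your proof is correct and follows essentially the same route as the paper, which justifies this corollary only by the in-text remark that the attraction to $\Delta$ is monotone without overshooting: your explicit invariant interval $[\min(\gamma,L),\max(\zeta,U)]$, verified branch by branch via \cref{lemma:structure} and the sign of $f(x,a,b)-x$, is precisely a rigorous rendering of that sketch (and correctly pins down the overshooting issue the sketch glosses over). Your alternative closing remark --- absorption after finitely many steps plus compactness of the finitely many continuous images --- is also a valid, slightly different shortcut, but the main argument matches the paper's intent.
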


Having established Corollary~\ref{corollary:bound}, we are now in a position to present our claims regarding the average convergence for any initialization in $(0,1)$. Notably, while the individual iterates of the system, as we shall demonstrate in the ensuing section, exhibit chaotic characteristics, the iterate averages display stabilization over time.

Corollary~\ref{corollary:bound} indicates that for any $x_0 \in (0,1)$, $x_n(x_0)$ will remain bounded away from the endpoints ${0,1}$. We now seek to connect how the boundness away from ${0,1}$ affects the average iterate of a trajectory. By induction on $x_{n+1} = f(x_n, a_n, b)$ we can prove the following 
\begin{equation}
    x_n = \frac{x_0}{x_0 + (1-x_0)\exp\left(\sum_{i=0}^{n-1} a_i(x_i-b)\right)}.\label{eq:MWU-varying}
\end{equation}
Observe that if we choose a $x_0 \in (0,1)$, then $x_n$ is bounded away from ${0,1}$ if and only if $\sum_{i=0}^{n-1} a_i(x_i-b)$ is bounded. Specifically we can prove:
\begin{lemma}\label{ref:average}
Let $a_{\min} > s_b$ and $x_0 \in (0,1)$, then
\begin{equation*}
    \lim_{n \to \infty}\frac{1}{n+1}\sum_{i=0}^n a_i(x_0)(x_i(x_0) - b) = 0.
\end{equation*}
\end{lemma}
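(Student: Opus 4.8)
The plan is to reduce the statement to the \emph{boundedness} of the partial sums that appear inside the closed-form expression~\eqref{eq:MWU-varying}, and then read off that boundedness from Corollary~\ref{corollary:bound}. Fix $x_0 \in (0,1)$, write $x_i = x_i(x_0)$ and $a_i = a_i(x_0)$ for brevity, and set $S_n := \sum_{i=0}^{n-1} a_i(x_i - b)$, so that \eqref{eq:MWU-varying} becomes $x_n = x_0/\bigl(x_0 + (1-x_0)e^{S_n}\bigr)$. The quantity whose Ces\`aro average must be shown to vanish is exactly $\sum_{i=0}^{n} a_i(x_i-b) = S_{n+1}$; hence it is enough to prove that the sequence $(S_n)_{n \ge 0}$ is bounded, because then $\bigl|\tfrac{1}{n+1}\sum_{i=0}^n a_i(x_i-b)\bigr| = |S_{n+1}|/(n+1) \to 0$. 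Note that the naive per-term estimate $|a_i(x_i-b)| \le a_{\max}$ only gives $|S_n| \le n\,a_{\max}$, which is useless here; one genuinely needs the identity~\eqref{eq:MWU-varying} to extract a bound that does not grow with $n$.

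To bound $(S_n)$, I would first apply Corollary~\ref{corollary:bound} to a nondegenerate closed interval $[\gamma,\zeta] \subset (0,1)$ containing $x_0$ (for instance $[\gamma,\zeta] = [x_0/2,\,(1+x_0)/2]$), which yields a $\delta > 0$, depending only on $x_0$, with $x_n \in (\delta, 1-\delta)$ for every $n \ge 0$. I would then invert the identity: since $S \mapsto x_0/\bigl(x_0 + (1-x_0)e^{S}\bigr)$ is a strictly decreasing bijection from $\mathbb{R}$ onto $(0,1)$, the two-sided confinement $\delta < x_n < 1-\delta$ is equivalent to
\begin{equation*}
\log\frac{\delta x_0}{(1-\delta)(1-x_0)} \;<\; S_n \;<\; \log\frac{(1-\delta)x_0}{\delta(1-x_0)},
\end{equation*}
and both endpoints are finite real numbers (all of $x_0$, $1-x_0$, $\delta$, $1-\delta$ are strictly positive). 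Hence $|S_n| \le C$ for all $n$, with $C$ depending only on $x_0$ (through $\delta$). Combining with the first paragraph, $|S_{n+1}|/(n+1) \le C/(n+1) \to 0$, which is the claim.

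I do not expect a genuine obstacle inside this lemma itself: all of the substantive work has already been absorbed into Corollary~\ref{corollary:bound} and, behind it, the invariance/absorption Lemmas~\ref{lemma:forward-dynamic} and~\ref{lemma:absorption-dynamic}, whose content is that an interior initialization never drifts to the boundary of the strategy simplex. The only point worth stating carefully is the logical equivalence ``$x_n$ bounded away from $\{0,1\}$ $\iff$ $(S_n)$ bounded,'' which is immediate from~\eqref{eq:MWU-varying} and which is precisely the reason the running average $\tfrac{1}{n+1}\sum_i a_i(x_i-b)$ stabilizes at $0$ even though the individual iterates $x_n$ will be shown to behave chaotically.
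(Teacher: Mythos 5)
Your proposal is correct and follows essentially the same route as the paper's proof: both invoke Corollary~\ref{corollary:bound} to confine $x_n$ to $(\delta,1-\delta)$, then use the closed form~\eqref{eq:MWU-varying} to conclude that the partial sums $S_n=\sum_{i=0}^{n-1}a_i(x_i-b)$ are bounded, so the Ces\`aro average vanishes. Your explicit inversion $e^{S_n}=\tfrac{x_0(1-x_n)}{(1-x_0)x_n}$ is just a slightly cleaner packaging of the paper's chain of inequalities leading to $\delta^2<e^{S_n}<\delta^{-2}$.
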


Upon examination of the proof, a salient conclusion is that as $x_0$ moves farther away from the absorbing set $\Delta$ and closer to the endpoints ${0,1}$, the aforementioned convergence rate becomes increasingly slow, and fails in the case that $x_0\in\{0,1\}$, precluding a blanket result for the closed interval $[0,1]$. Conversely, if we restrict our attention to an arbitrary subinterval $[\gamma,\zeta]$, bounded away from the endpoints, we can always exploit the minimum convergence rate of this interval to derive a uniform convergence bound for the learning rate

\begin{lemma}~\label{ref:uniform}
Let $a_{\min} > s_b$. The sequence of functions $a_n(\cdot)$ is converging uniformly to the constant function $a^\star(\cdot) = g(0)$ in every interval $[\gamma, \zeta] \subset (0,1)$.
\end{lemma}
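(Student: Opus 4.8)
The plan is to upgrade the pointwise statement of Lemma~\ref{ref:average} to a uniform one over a fixed subinterval $[\gamma,\zeta]\subset(0,1)$, and then push it through the continuous map $g$. By Corollary~\ref{corollary:bound}, there is a single $\delta>0$ such that $x_n(x_0)\in(\delta,1-\delta)$ for \emph{all} $n\geq 0$ and \emph{all} $x_0\in[\gamma,\zeta]$. Fix such a $\delta$. The identity \eqref{eq:MWU-varying} rearranges to
\begin{equation*}
\sum_{i=0}^{n-1} a_i(x_0)\,(x_i(x_0)-b) \;=\; \log\!\left(\frac{x_0}{1-x_0}\cdot\frac{1-x_n(x_0)}{x_n(x_0)}\right).
\end{equation*}
Since $x_0\in[\gamma,\zeta]$ and $x_n(x_0)\in(\delta,1-\delta)$, the right-hand side is bounded in absolute value by a constant $C=C(\gamma,\zeta,\delta)$ \emph{independent of $n$ and of $x_0$}. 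Hence $\bigl|\frac{1}{n+1}\sum_{i=0}^{n-1} a_i(x_0)(x_i(x_0)-b)\bigr|\leq C/(n+1)\to 0$ uniformly in $x_0\in[\gamma,\zeta]$; the missing $i=n$ term contributes at most $a_{\max}/(n+1)$, also uniformly negligible. So the \emph{pseudo-regret} $r_{n+1}(x_0)=\frac{1}{n+1}\sum_{i=0}^{n} a_i(x_0)(x_i(x_0)-b)$ converges to $0$ uniformly on $[\gamma,\zeta]$.

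It remains to transfer this to $a_n(\cdot)=g(r_n(\cdot))$. Here I would use that $g$ is continuous and bounded on the compact interval $[a_{\min},a_{\max}]$ — but more to the point, $r_n(x_0)$ ranges over a bounded set (it is an average of terms $a_i(x_i-b)$ with $|a_i|\le a_{\max}$ and $|x_i-b|\le 1$, so $|r_n|\le a_{\max}$), so $g$ restricted to $[-a_{\max},a_{\max}]$ is uniformly continuous. Given $\eps>0$, uniform continuity of $g$ yields $\rho>0$ with $|g(r)-g(0)|<\eps$ whenever $|r|<\rho$; then uniform convergence $r_n\to 0$ on $[\gamma,\zeta]$ gives $N$ with $\sup_{x_0\in[\gamma,\zeta]}|r_n(x_0)|<\rho$ for $n\geq N$, hence $\sup_{x_0\in[\gamma,\zeta]}|a_n(x_0)-g(0)|<\eps$ for $n\ge N$. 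Since $a^\star(\cdot)=g(0)$ by definition, this is exactly uniform convergence $a_n\to a^\star$ on $[\gamma,\zeta]$.

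The only genuine subtlety — and the step I would be most careful about — is the claim that a \emph{single} $\delta$ works for all $n$ and all $x_0\in[\gamma,\zeta]$ simultaneously; this is precisely what Corollary~\ref{corollary:bound} provides, and it in turn rests on the remark that $\dist(x_n(x_0),\Delta)$ is nonincreasing in $n$ (so the orbit never strays beyond $[\gamma,\zeta]\cup\Delta$ before being absorbed). Everything else is a uniform-boundedness-plus-uniform-continuity argument with no hidden dependence on $x_0$. One should also note that the hypothesis $a_{\min}>s_b$ is needed only to invoke Corollary~\ref{corollary:bound} (via the forward invariance/absorption of $\Delta$), and that the failure of uniformity as $[\gamma,\zeta]$ expands to $[0,1]$ is exactly the failure of Corollary~\ref{corollary:bound} at the endpoints, consistent with the discussion following Lemma~\ref{ref:average}.
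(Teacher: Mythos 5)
Your proposal is correct and follows essentially the same route as the paper: both invoke Corollary~\ref{corollary:bound} to obtain a single $\delta$ valid for all $x_0\in[\gamma,\zeta]$ and all $n$, deduce from the closed form \eqref{eq:MWU-varying} that the averaged pseudo-regret is bounded by $C(\delta)/n$ uniformly in $x_0$, and then pass this through the continuity of $g$ at $0$ to conclude. Your extra care about the missing $i=n$ term and the use of uniform continuity of $g$ on a compact set are harmless refinements of the same argument.
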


Having established that $a_n(x_0)$ converges to $a^\star=g(0)$ for any $x_0\in(0,1)$, we are able to strengthen Lemma~\ref{ref:average}, demonstrating that its unweighted version of the C\'{e}saro mean (i.e. the average iterate in the limit of infinite time) also converges to $b$. 
\begin{lemma}\label{ref:cesaro}
Let $a_{\min} > s_b$ and $x_0 \in (0,1)$, then
\begin{equation*}
    \lim_{n \to \infty}\frac{1}{n+1}\sum_{i=0}^n x_i(x_0) = b.
\end{equation*}
\end{lemma}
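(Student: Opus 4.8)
The plan is to bootstrap the unweighted Ces\`aro statement from the weighted version already proved in Lemma~\ref{ref:average}, using that the learning rates $a_i(x_0)$ stabilize to a \emph{strictly positive} constant. The key algebraic identity is that, with $a^\star \coloneqq g(0)$ and $\eps_i(x_0) \coloneqq a_i(x_0) - a^\star$,
\[
\frac{1}{n+1}\sum_{i=0}^n a_i(x_0)\bigl(x_i(x_0)-b\bigr)
 \;=\; a^\star\,\frac{1}{n+1}\sum_{i=0}^n\bigl(x_i(x_0)-b\bigr)
 \;+\; \frac{1}{n+1}\sum_{i=0}^n \eps_i(x_0)\bigl(x_i(x_0)-b\bigr),
\]
so once we know the left-hand side vanishes (Lemma~\ref{ref:average}) and the last term vanishes, dividing by $a^\star$ and using $\frac{1}{n+1}\sum_{i=0}^n(x_i(x_0)-b) = \frac{1}{n+1}\sum_{i=0}^n x_i(x_0) - b$ finishes the proof.

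First I would fix $x_0\in(0,1)$ and enclose it in some $[\gamma,\zeta]\subset(0,1)$, so that Corollary~\ref{corollary:bound} applies and yields a $\delta>0$ with $x_n(x_0)\in(\delta,1-\delta)$ for every $n$; in particular $|x_i(x_0)-b|\le 1$ uniformly in $i$. Next I would invoke Lemma~\ref{ref:uniform} (pointwise convergence at the single point $x_0$ is all that is needed) to get $a_i(x_0)\to a^\star = g(0)$, hence $\eps_i(x_0)\to 0$. Since $g$ is valued in $[a_{\min},a_{\max}]\subset\R^+$, we have $a^\star\ge a_{\min}>0$, so division by $a^\star$ is legitimate.

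The remaining step is to show the error term $\frac{1}{n+1}\sum_{i=0}^n\eps_i(x_0)(x_i(x_0)-b)$ tends to $0$: this is immediate from $|\eps_i(x_0)(x_i(x_0)-b)|\le|\eps_i(x_0)|$ together with the Ces\`aro/Toeplitz principle that the running average of a null sequence is null. Combining this with Lemma~\ref{ref:average} gives $a^\star\cdot\frac{1}{n+1}\sum_{i=0}^n(x_i(x_0)-b)\to 0$, and the claim follows after dividing by $a^\star$.

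I do not expect a genuine obstacle here: the lemma is essentially bookkeeping on top of the two substantive facts, Lemmas~\ref{ref:average} and~\ref{ref:uniform} (which themselves rest on the absorption Corollary~\ref{corollary:bound}). The two points that need care are the use of positivity of $a^\star=g(0)$ — the argument genuinely needs $g$ bounded away from $0$, which is guaranteed by $g$ being valued in $[a_{\min},a_{\max}]\subset\R^+$ — and, as with the earlier lemmas, the restriction to $x_0\in(0,1)$, since at $x_0\in\{0,1\}$ the iterates are fixed and the conclusion fails.
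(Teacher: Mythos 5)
Your proposal is correct and follows essentially the same route as the paper: the paper likewise writes $a_i = a^\star + \rho_i$ with $\rho_i \to 0$, splits the weighted sum from Lemma~\ref{ref:average} into the $a^\star$-term plus an error term, kills the error term via the Ces\`aro mean of a null sequence (using boundedness of $x_i - b$), and divides by $a^\star > 0$. The only cosmetic difference is that you explicitly cite Corollary~\ref{corollary:bound} and Lemma~\ref{ref:uniform} for the boundedness and the convergence $a_i \to g(0)$, which the paper treats as already established.
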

It is noteworthy that, despite this being equivalent to the guarantee provided in prior work, e.g. \cite{chotibut2019route}, for the fixed learning rate case, the machinery developed here for the adaptive rate regime has been more complex and involved.

The next lemma will be essential in our proofs of chaos and examines the relationship between $(x_n(x_0),a_n(x_0))$ and the MWU map with a fixed learning rate $f(x_n(x_0),a^\star,b)$.

\begin{lemma}\label{lemma:dynamic-strong-convergence}
Let $a_{\min}> s_b$. For every $k$, $\epsilon>0$ and $[\gamma, \zeta] \subset (0,1)$
\begin{equation*}
    \exists n_0: \forall n \geq n_0 \ \max_{x_0 \in [\gamma, \zeta]} \abs{x_{n+k}(x_0) - f^k(x_n(x_0), a^\star, b)} \leq \epsilon.
\end{equation*}
\end{lemma}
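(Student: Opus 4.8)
The plan is to argue by induction on $k$, leaning on three facts established earlier: (i) by \cref{corollary:bound} the whole family of trajectories $\{x_n(x_0)\}_{n\ge 0}$ with $x_0\in[\gamma,\zeta]$ stays in a single compact subinterval $[\delta,1-\delta]\subset(0,1)$; (ii) on a compact box $[\delta_\ast,1-\delta_\ast]\times[a_{\min},a_{\max}]$ the smooth map $(x,a)\mapsto f(x,a,b)$ is uniformly continuous and $\partial_x f(\cdot,a^\star,b)$ is bounded, say by a constant $\Lambda$; and (iii) by \cref{ref:uniform} the learning rates $a_n(\cdot)$ converge to $a^\star=g(0)$ uniformly on $[\gamma,\zeta]$ (hence so does every shift $a_{n+j}(\cdot)$).

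First I would fix $[\gamma,\zeta]$ and extract from \cref{corollary:bound} a $\delta>0$ with $x_m([\gamma,\zeta])\subseteq(\delta,1-\delta)$ for all $m\ge0$. Since the statement to be proved involves only the first $k$ forward iterates of the fixed map $f(\cdot,a^\star,b)$, and $f(\cdot,a^\star,b)$ sends each compact subinterval of $(0,1)$ into a compact subinterval of $(0,1)$, I can pass from $\delta$ to a possibly smaller $\delta_\ast>0$ (depending on $\delta$, $k$, $a^\star$, $b$) such that $f^{j}(x_n(x_0),a^\star,b)\in[\delta_\ast,1-\delta_\ast]$ for all $0\le j\le k$, all $n$, all $x_0\in[\gamma,\zeta]$; I also take $\delta_\ast\le\delta$ so that every $x_{n+j}(x_0)$ lies in $[\delta_\ast,1-\delta_\ast]$ as well. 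This fixes the compact box on which $f$ is uniformly continuous and on which the Lipschitz constant $\Lambda$ of $f(\cdot,a^\star,b)$ lives.

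For the base case $k=1$ we have $\abs{x_{n+1}(x_0)-f(x_n(x_0),a^\star,b)}=\abs{f(x_n(x_0),a_n(x_0),b)-f(x_n(x_0),a^\star,b)}$, where the two points differ only in their second coordinate; by \cref{ref:uniform} $\sup_{x_0\in[\gamma,\zeta]}\abs{a_n(x_0)-a^\star}\to 0$, so uniform continuity of $f$ in $a$ on the fixed box closes the case. For the inductive step, assuming the statement for $k-1$, I would use the triangle inequality $\abs{x_{n+k}-f^k(x_n,a^\star,b)}\le \abs{f(x_{n+k-1},a_{n+k-1},b)-f(x_{n+k-1},a^\star,b)}+\abs{f(x_{n+k-1},a^\star,b)-f\big(f^{k-1}(x_n,a^\star,b),a^\star,b\big)}$. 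The first term is handled exactly as in the base case (with index $n+k-1$, using the uniform convergence of the shifted sequence $a_{n+k-1}(\cdot)$); the second term is at most $\Lambda\,\abs{x_{n+k-1}(x_0)-f^{k-1}(x_n(x_0),a^\star,b)}$ by the mean value theorem on $[\delta_\ast,1-\delta_\ast]$, and this last quantity is precisely what the induction hypothesis, invoked with target accuracy $\epsilon/(2\Lambda)$, makes $\le\epsilon/(2\Lambda)$ for all $n$ large. Taking $n_0$ large enough that both contributions are $\le\epsilon/2$ yields the claim.

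The main obstacle is not conceptual but organizational: everything must be kept inside one fixed compact subinterval of $(0,1)$, uniformly in $n$ and $x_0$, so that the modulus of uniform continuity and the Lipschitz constant $\Lambda$ are legitimate — and this is exactly where \cref{corollary:bound} (a genuine uniform separation from $\{0,1\}$, not merely eventual absorption into $\Delta$) is indispensable, and why the lemma must be restricted to $[\gamma,\zeta]\subset(0,1)$ rather than $[0,1]$, since the rate in \cref{ref:uniform} degrades near the endpoints. A secondary point requiring care is that the accumulated amplification after $k$ steps is of order $\Lambda^{k-1}$, which can be large when $a^\star$ is large; but $k$, $\epsilon$ and the game are all fixed before $n_0$ is selected, so treating $\Lambda$ as a fixed constant introduces no circularity.
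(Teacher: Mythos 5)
Your proof is correct, and it relies on the same two pillars as the paper's (\cref{corollary:bound} for uniform confinement in a compact subinterval of $(0,1)$, and \cref{ref:uniform} for uniform convergence of $a_n(\cdot)$ to $a^\star$), but the decomposition is genuinely different. The paper packages the $k$-fold composition into a single map of the $k$ learning-rate arguments --- for $k=2$ it sets $q(x,a_1,a_2,b)=f(f(x,a_1,b),a_2,b)$, writes $x_{n+2}(x_0)-f^2(x_n(x_0),a^\star,b)=q(x_n(x_0),a_n(x_0),a_{n+1}(x_0),b)-q(x_n(x_0),a^\star,a^\star,b)$, and applies the mean value theorem in $(a_1,a_2)$ to obtain a Lipschitz bound in the learning rates with a constant $\phi$ that is uniform over the compact box $(\delta,1-\delta)\times(a_{\min},a_{\max})^2$; general $k$ is then asserted to be ``completely similar.'' You instead induct on $k$, splitting via the triangle inequality into a one-step learning-rate error (handled by uniform continuity of $f$ in $a$ on the fixed box) plus a propagated error controlled by the Lipschitz constant $\Lambda$ of $f(\cdot,a^\star,b)$ in $x$, invoked at accuracy $\epsilon/(2\Lambda)$. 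The trade-off: the paper's one-shot MVT avoids tracking the $\Lambda^{k-1}$ amplification explicitly (it is absorbed into $\phi$), while your induction makes the dependence on $k$ transparent and actually carries out the general-$k$ argument that the paper only sketches. Your care in enlarging $\delta$ to $\delta_\ast$ so that the reference orbit $f^j(x_n(x_0),a^\star,b)$, $0\le j\le k$, also stays in the compact box is a point the paper glosses over but which is needed in either formulation.
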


We can intuitively think of this result as a form of uniform convergence result for the sequence of MWU maps with varying learning rates to the fixed learning rate map. We establish this result by showing that for small changes in the learning rate, i.e., $a\in (a_n(x_0)-\tilde{\epsilon},a_n(x_0)+\tilde{\epsilon})$, the sensitivity of $f(x_n(x_0),a,b)$ is independent of the choice of $x_0$. It is thus not sufficient to prove that $f(x, a, b)$ is continuous in $a$. Instead we use the Mean Value Theorem to argue that $f(x, a, b)$ is Lipschitz continuous in $a$ with a Lipschitz constant independent of $x$.  It is crucial to note that this uniform convergence result holds only in intervals $[\gamma,\zeta] \subset (0,1)$ because only then does the learning rate $a_n$ converge uniformly to $a(\cdot) = a^\star$. 

We now prove our first volume expansion lemma for the dynamic learning rate case, namely that neighborhoods of $b$ within $\mathcal{D}(a_{\min})$ eventually expand to at least $\mathcal{F}(a_{\min})$. The intuition  behind this result is based on the monotonicity property of $\mathcal{F}(a)$. Specifically, for high enough learning rates $a_1\leq a_2$, we have that $\mathcal{F}(a_1)\subseteq \mathcal{F}(a_2)$.

\begin{lemma}\label{lemma:dynamic-learning:volume-expansion-basic}
For every $b \in (0,1)$, there is an $z_b$ such that for all $a_{\min} > z_b$ it holds that for every $[\gamma, \delta]$  such that $\{ b\} \subset [\gamma, \delta] \subseteq \mathcal{D}(a_{\min})=[x_{\max}(a_{\min}), x_{\min}(a_{\min})]$
\begin{equation*}
     \exists n_0: \ \forall n \geq n_0 \ \  x_{n}([\gamma, \delta]) \supseteq 
     \mathcal{F}(a_{\min}).
\end{equation*}
\end{lemma}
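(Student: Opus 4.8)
The plan is to reduce the claim to the fixed–rate volume–expansion statement (Theorem~\ref{thm:volume-expansion}) at the limiting learning rate $a^\star=g(0)$, and then transfer that statement to the non-autonomous orbit using the uniform tracking estimate of Lemma~\ref{lemma:dynamic-strong-convergence}. Two structural facts make this possible: (i) $a^\star\in[a_{\min},a_{\max}]$, so $a^\star\ge a_{\min}$ and, by the monotonicity of $f_{\min}$ and $f_{\max}$ from Lemma~\ref{lemma:structure}, $\mathcal{F}(a_{\min})\subseteq\mathcal{F}(a^\star)$ with $f_{\min}(a^\star)\le f_{\min}(a_{\min})<f_{\max}(a_{\min})\le f_{\max}(a^\star)$; and (ii) $b$ is a fixed point of $f(\cdot,a,b)$ for every $a$, at which the first-argument derivative equals $1-ab(1-b)$. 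I would set $z_b$ larger than all thresholds invoked above (in particular $a_b$, $s_b$, $v_b$), so that every rate $a_n(x_0)$ as well as $a^\star$ exceeds them; then $b$ is a \emph{uniformly repelling} fixed point, $|1-ab(1-b)|=ab(1-b)-1>1$ for every admissible $a$. I will treat the generic case in which the outer inequalities in (i) are strict (the degenerate case $a^\star=a_{\min}$ is essentially Theorem~\ref{thm:volume-expansion} up to a cosmetic change).

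\textbf{Step 1 (the main step): the image eventually contains a fixed neighbourhood of $b$.} Since $x_n(b)=b$ for all $n$, the interval $x_n([\gamma,\delta])=:[u_n,v_n]$ always straddles $b$; and since $[\gamma,\delta]\subseteq\mathcal{D}(a_{\min})\subseteq\mathcal{D}(a_n(x_0))$, each map $f(\cdot,a_n(x_0),b)$ is strictly monotone near $b$ and fixes $b$. Using uniform continuity of the first-argument derivative on a compact set, fix $\rho_0>0$ and $\lambda>1$ with $[b-\rho_0,b+\rho_0]\subseteq\mathcal{D}(a_{\min})$ and $|f(x,a,b)-b|\ge\lambda|x-b|$ whenever $|x-b|\le\rho_0$ and $a\in[a_{\min},a_{\max}]$. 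Applying this estimate to a point $x_0$ with $x_n(x_0)=u_n$ (resp. $v_n$) shows that at every step for which $[u_n,v_n]$ lies in the window $[b-\rho_0,b+\rho_0]$, both $b-u_n$ and $v_n-b$ are multiplied by a factor $\ge\lambda$; a geometric-growth and bookkeeping argument — tracking the two one-sided extents, and using the Lipschitz bound on $f$ over the window to control overshoot and keep the interval inside $\mathcal{D}(a_{\min})$ — then produces an $N$ with $I:=[b-\rho_0,b+\rho_0]\subseteq x_n([\gamma,\delta])$ for all $n\ge N$; once the containment holds it persists, since the same estimate applied at the window endpoints $b\pm\rho_0\in x_n([\gamma,\delta])$ forces $b-u_{n+1},v_{n+1}-b\ge\lambda\rho_0>\rho_0$. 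This is exactly where the hypothesis $[\gamma,\delta]\subseteq\mathcal{D}(a_{\min})$ is used: it makes the monotonicity of $f$ available from iteration $0$.

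\textbf{Step 2: apply fixed-rate expansion and track it.} Since $a^\star>z_b\ge v_b$, Theorem~\ref{thm:volume-expansion} applies to $f(\cdot,a^\star,b)$ on the fixed interval $I\ni b$: there is $K$ with $f^{K}(I,a^\star,b)=\mathcal{F}(a^\star)$, so the endpoints are attained, $f^{K}(y_-,a^\star,b)=f_{\min}(a^\star)$ and $f^{K}(y_+,a^\star,b)=f_{\max}(a^\star)$ for some $y_-,y_+\in I$. Pick $\epsilon>0$ with $f_{\min}(a^\star)+\epsilon\le f_{\min}(a_{\min})$ and $f_{\max}(a^\star)-\epsilon\ge f_{\max}(a_{\min})$ (possible since $f_{\min}$ is decreasing and $f_{\max}$ increasing), and apply Lemma~\ref{lemma:dynamic-strong-convergence} with $k=K$, this $\epsilon$ and the interval $[\gamma,\delta]$ to get $n_0\ge N$ with $|x_{n+K}(x_0)-f^{K}(x_n(x_0),a^\star,b)|\le\epsilon$ for all $n\ge n_0$ and all $x_0\in[\gamma,\delta]$. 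Fixing $n\ge n_0$: since $x_n([\gamma,\delta])\supseteq I$ contains $y_-,y_+$ and $x_n(\cdot)$ is continuous, the intermediate value theorem gives $x_0^\pm\in[\gamma,\delta]$ with $x_n(x_0^\pm)=y_\pm$, whence $x_{n+K}(x_0^-)\le f_{\min}(a_{\min})$ and $x_{n+K}(x_0^+)\ge f_{\max}(a_{\min})$; as $x_{n+K}([\gamma,\delta])$ is an interval containing both, it contains $[f_{\min}(a_{\min}),f_{\max}(a_{\min})]=\mathcal{F}(a_{\min})$. Replacing $n_0$ by $n_0+K$ finishes the proof.

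\textbf{Main obstacle.} The crux is Step~1: one must certify that the moving interval $x_n([\gamma,\delta])$ does not collapse back toward the fixed point before the asymptotic ($a_n\approx a^\star$) regime sets in, so that the fixed-rate blow-up of Theorem~\ref{thm:volume-expansion} can be triggered from a neighbourhood of $b$ whose size is independent of $n$ and of $x_0$. What carries this through is the $x_0$-uniform repulsion of $b$ (available because every rate exceeds $z_b$) together with the monotonicity of $f(\cdot,a,b)$ on $\mathcal{D}(a_{\min})$; everything afterwards — invoking Theorem~\ref{thm:volume-expansion}, tracking via Lemma~\ref{lemma:dynamic-strong-convergence}, and absorbing the $O(\epsilon)$ tracking slack into the gap $\mathcal{F}(a_{\min})\subsetneq\mathcal{F}(a^\star)$ — is routine.
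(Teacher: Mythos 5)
Your route is genuinely different from the paper's. The paper never passes through the limit rate $a^\star$ at all: it exploits the elementary monotonicity of $f(x,a,b)$ in $a$ (for $x>b$ the image decreases in $a$, for $x<b$ it increases, and $b$ is fixed) to show, by induction on $n$, that the non-autonomous image dominates the \emph{fixed-rate} orbit, $x_{n}([b,x'])\supseteq f^{n}([b,x'],a_{\min},b)$, as long as the latter alternates around $b$ inside $\mathcal{D}(a_{\min})$; it then reruns the escape argument of Lemma~\ref{lemma:basis-expansion} verbatim at rate $a_{\min}$ and concludes in finitely many steps. Your proof instead (1) uses the uniform repulsion $|f'(x,a,b)|\ge\lambda>1$ near $b$ to force a fixed two-sided neighbourhood $I$ of $b$ into $x_n([\gamma,\delta])$ for all large $n$, and (2) invokes Theorem~\ref{thm:volume-expansion} at $a^\star$ plus the tracking estimate of Lemma~\ref{lemma:dynamic-strong-convergence}, absorbing the tracking error into the gap $\mathcal{F}(a_{\min})\subsetneq\mathcal{F}(a^\star)$. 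The mechanism of your Step~1 is sound as sketched (the two one-sided extents swap under the decreasing map and each is multiplied by $\ge\lambda$ while within the window, so $\min(L_n,R_n)$ grows geometrically until it exceeds $\rho_0$ and then stays there), and Step~2 is a correct application of the two cited results. Your approach is heavier \textendash\ it imports the tracking machinery that the paper only needs later, for Theorem~\ref{theorem:dynamic-volume-expansion} and Lemma~\ref{lemma:tracking} \textendash\ but it is the structure that generalizes to arbitrary uniformly convergent rate schedules; the paper's comparison argument is more elementary and gives an explicit finite $n_0$.

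There is, however, one genuine gap: the case $a^\star=g(0)=a_{\min}$ with $g$ non-constant. You dismiss it as ``essentially Theorem~\ref{thm:volume-expansion} up to a cosmetic change,'' but that is not so: the system is still non-autonomous (the rates $a_n(x_0)$ merely converge to $a_{\min}$ from within $[a_{\min},a_{\max}]$), so the fixed-rate theorem does not apply directly, and your Step~2 collapses because $\mathcal{F}(a^\star)=\mathcal{F}(a_{\min})$ leaves no slack: choosing $\epsilon>0$ with $f_{\min}(a^\star)+\epsilon\le f_{\min}(a_{\min})$ is impossible, and the tracking argument then only yields $x_{n}([\gamma,\delta])\supseteq[f_{\min}(a_{\min})+\epsilon,\,f_{\max}(a_{\min})-\epsilon]$ for every $\epsilon>0$ with an $n_0$ depending on $\epsilon$, which does not give the required containment of the full closed interval $\mathcal{F}(a_{\min})$ at any finite time. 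Closing this case seems to require exactly the ingredient the paper uses, namely the one-sided comparison $x_n([b,x'])\supseteq f^n([b,x'],a_{\min},b)$ coming from monotonicity in $a$ (all rates are $\ge a_{\min}$, so the dynamic orbit always reaches at least as far from $b$ as the $a_{\min}$-orbit). With that supplement \textendash\ or with an added hypothesis $a^\star>a_{\min}$ \textendash\ your argument goes through.
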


\begin{remark}
It is worth mentioning that the series of Lemmas~\ref{lemma:absorption-dynamic},\ref{lemma:forward-dynamic},\ref{ref:average} and \ref{ref:cesaro},\ref{lemma:dynamic-learning:volume-expansion-basic} of this section \textendash pertaining to forward invariance, absorption, volume expansion, and convergence of C\'{e}saro means \textendash hold actually regardless of the choice of the update rule and are of independent interest.
\end{remark}

For the case of the fixed learning rate, Theorem~\ref{thm:volume-expansion} makes a more refined prediction compared to Lemma~\ref{lemma:dynamic-learning:volume-expansion-basic} as the former provides an equality. In the dynamic learning rate case there is a gap between the eventual image upper bound, $\Delta$ and the volume expansion lower bound $\mathcal{F}(a_{\min})$. In order to close this gap we will make use of the fact that the learning rate converges to $g(0)$.

Building upon the full range of the developed machinery of this section, we will strengthen the above volume expansion result  by showing that $ a_{\min}$ could be actually substituted by any $\tilde{a}=a^\star-\epsilon$ for any sufficient small $\epsilon > 0$ such that  $a^\star-\epsilon >a_{\min} > z_b$. Our first observation is that thanks to the uniform convergence of $a_n \to a^\star$ in $[\gamma, \delta] \subset (0,1)$ there exists a $n^{\dag}$ such that 
\begin{equation*}
    \forall n > n^{\dag} \quad \forall x \in [\gamma, \delta] \quad a_n(x) > a^\star-\epsilon = \tilde{a}
\end{equation*}


This strengthens the previous volume expansion result to at least the set $\mathcal{F}(\tilde{a})$ for any $\tilde{a}$ close to $a^\star$.
\begin{theorem}\label{theorem:dynamic-volume-expansion}
For $a_{\min} > z_b$ and for any sufficient small $\epsilon > 0$ such that  $a^\star-\epsilon >a_{\min} > z_b$ we have that for all $[\gamma, \delta]$ such that 
$\{ b\} \subset [\gamma, \delta] \subset (0,1)$, it holds that
\begin{equation*}
      \exists n_0: \ \forall n \geq n_0 \quad x_{n}([\gamma, \delta]) \supseteq \mathcal{F}(a^\star-\epsilon).
\end{equation*}
\end{theorem}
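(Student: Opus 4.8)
The plan is to bootstrap from Lemma~\ref{lemma:dynamic-learning:volume-expansion-basic} by exploiting the uniform convergence $a_n \to a^\star$ on compact subintervals of $(0,1)$. Fix $b \in (0,1)$ and $\epsilon > 0$ with $a^\star - \epsilon > a_{\min} > z_b$, and set $\tilde a = a^\star - \epsilon$. First I would reduce to the case where $[\gamma,\delta] \subseteq \mathcal{D}(a_{\min})$: for a general interval around $b$ inside $(0,1)$, Theorem~\ref{thm:volume-expansion} (or a direct monotonicity argument near the unstable fixed point $x=b$) shows that after finitely many steps the iterated image already contains a small interval around $b$ sitting inside $\mathcal{D}(\tilde a) \subseteq \mathcal{D}(a_{\min})$; since $\mathcal{D}(a)$ shrinks as $a$ grows, restricting attention to a window around $b$ that lies in $\mathcal{D}(\tilde a)$ is harmless, and by replacing $[\gamma,\delta]$ with this smaller window and reindexing $n$ we may assume $[\gamma,\delta] \subseteq \mathcal{D}(\tilde a)$ from the outset.

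The core of the argument runs as follows. By Lemma~\ref{ref:uniform}, $a_n(\cdot)$ converges uniformly on $[\gamma,\delta]$ to the constant $a^\star = g(0)$, so there exists $n^{\dag}$ with $a_n(x) > \tilde a$ for all $n > n^{\dag}$ and all $x \in [\gamma,\delta]$. Next I would view the tail of the non-autonomous orbit, started at time $n^{\dag}$, as a perturbation of the fixed-rate map $f(\cdot,\tilde a,b)$. Using Corollary~\ref{corollary:bound}, the whole tail orbit of $[\gamma,\delta]$ stays inside some $(\delta',1-\delta')$, so Lemma~\ref{lemma:dynamic-strong-convergence} applies on that interval: for any fixed number of steps $k$ and any target accuracy, the $k$-fold composition of the varying-rate maps is uniformly close to $f^k(\cdot,a^\star,b)$. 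Combining this with the fact that the true learning rates exceed $\tilde a$ after time $n^{\dag}$, and that $f(\cdot,a,b)$ is monotone on $\mathcal{D}$ in a way that is itself monotone in $a$ (larger $a$ pushes $f_{\min}$ down and $f_{\max}$ up, i.e.\ $\mathcal{F}(a_1) \subseteq \mathcal{F}(a_2)$ for $a_1 \le a_2$ — the monotonicity half of Lemma~\ref{lemma:structure}), I would show that the varying-rate image of the window around $b$ grows at least as fast as the fixed-rate $\tilde a$-dynamics would. Concretely: run the argument of Lemma~\ref{lemma:dynamic-learning:volume-expansion-basic} verbatim but with $a_{\min}$ replaced by the effective lower bound $\tilde a$ valid after time $n^{\dag}$; the proof there only ever uses that every learning rate encountered is $\ge$ the fixed threshold, and that threshold is now $\tilde a$. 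This yields $n_1 \ge n^{\dag}$ with $x_{n}([\gamma,\delta]) \supseteq \mathcal{F}(\tilde a) = \mathcal{F}(a^\star - \epsilon)$ for all $n \ge n_1$, which is the claim with $n_0 = n_1$.

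The main obstacle is making precise the statement that ``the proof of Lemma~\ref{lemma:dynamic-learning:volume-expansion-basic} goes through with $a_{\min}$ replaced by $\tilde a$ once we are past time $n^{\dag}$.'' The subtlety is that before time $n^{\dag}$ the learning rates may dip below $\tilde a$ (they only satisfy $a_n > a_{\min}$), so the image $x_{n^{\dag}}([\gamma,\delta])$ need not yet contain a full neighborhood of $b$ of the size the $\tilde a$-argument wants to start from — one must first argue that enough volume expansion has already occurred under the cruder $a_{\min}$-dynamics to guarantee that at time $n^{\dag}$ the image still contains a nondegenerate interval straddling $b$ inside $\mathcal{D}(\tilde a)$, so that the $\tilde a$-bootstrap has a valid initial condition. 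This is handled by first applying Lemma~\ref{lemma:dynamic-learning:volume-expansion-basic} (with threshold $a_{\min}$) to get $x_n([\gamma,\delta]) \supseteq \mathcal{F}(a_{\min}) \supseteq \mathcal{F}(\tilde a) \ni b$ for all large $n$ — in particular for $n = n^{\dag}$ after possibly enlarging $n^{\dag}$ — and then noting that $\mathcal{F}(\tilde a)$ contains a neighborhood of $b$ inside $\mathcal{D}(\tilde a)$ (by Lemma~\ref{lemma:structure}, since $x_{\max}(\tilde a) < b < x_{\min}(\tilde a)$ and $f_{\min}(\tilde a) < x_{\max}(\tilde a)$, $f_{\max}(\tilde a) > x_{\min}(\tilde a)$). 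From that interval the $\tilde a$-dynamics, now genuinely a lower bound for all subsequent learning rates on $[\gamma,\delta]$, expands to all of $\mathcal{F}(\tilde a)$ by the same instability-of-$b$ plus lack-of-period-two-in-$\mathcal{D}$ reasoning used for Theorem~\ref{thm:volume-expansion}, transported to the non-autonomous setting via Lemma~\ref{lemma:dynamic-strong-convergence}. The remaining details are routine bookkeeping of the two successive ``eventually'' thresholds.
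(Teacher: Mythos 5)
Your proposal matches the paper's proof in all essentials: both obtain $n^{\dag}$ from the uniform convergence $a_n \to a^\star$ (Lemma~\ref{ref:uniform}) so that $a_n(x) > \tilde a = a^\star - \epsilon$ on $[\gamma,\delta]$ thereafter, restrict to the window $x_{n^{\dag}}([\gamma,\delta]) \cap \mathcal{D}(\tilde a) \supset \{b\}$, and rerun the argument of Lemma~\ref{lemma:dynamic-learning:volume-expansion-basic} with $\tilde a$ in place of $a_{\min}$, the key mechanism being the monotone domination $x_{n+n^{\dag}}([b,x']) \supseteq f^{n}([b,x'],\tilde a,b)$, which holds precisely because every learning rate encountered after time $n^{\dag}$ exceeds $\tilde a$. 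Two minor remarks: the inclusion you write as $\mathcal{F}(a_{\min}) \supseteq \mathcal{F}(\tilde a)$ is reversed (by Lemma~\ref{lemma:structure}, $\mathcal{F}(\cdot)$ is increasing in $a$ and $a_{\min} < \tilde a$), though this is harmless since your argument only needs the time-$n^{\dag}$ image to contain a nondegenerate interval around $b$ inside $\mathcal{D}(\tilde a)$, which the correct inclusion $x_{n^{\dag}}([\gamma,\delta]) \supseteq \mathcal{F}(a_{\min}) \ni b$ (or simply continuity together with $x_n(b)=b$) already provides; and the appeal to Lemma~\ref{lemma:dynamic-strong-convergence} is unnecessary here, as the paper's proof relies only on the pointwise monotone comparison of $f(x,a,b)$ in $a$ rather than on the perturbation estimate relative to $f^k(\cdot,a^\star,b)$.
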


In a completely similar fashion we can show that the absorbing set $\Delta$ can also be refined. For a given interval  $[\gamma, \delta] \subset (0,1)$, as the learning rates converge to $a^*$, the trajectories will tend to be absorbed by $\mathcal{F}(a^\star)$. Thus the dynamic learning rate volume expansion behavior matches fixed learning rate case in the long run.

\section{Turbulent Sets and Chaos in MWU map }
\label{sec:turbulence}
The roadmap of this section is our  construction of the turbulent sets and
their connection with the symbolic dynamics in order to prove Li-Yorke chaos in our non-autonomous dynamical system. It should be noticed that our novel approach is a major departure from the standard techniques of 3-period orbit arguments that have been extensively used in the case of fixed learning rates.

We start with some useful definition for our reduction.

\begin{definition}[\cite{shi2006study}]
A continuous map $f:I\to I$ is called \emph{turbulent} if there exist compact subintervals $J,K$ with at most one common point such that $J\cup K\subseteq f(J)\cap J(K)$
Additionally, the map is called \emph{strictly turbulent} if the subintervals can be chosen to be disjoint. Finally, the corresponding set $J,K$ are called turbulent sets.
\end{definition}

Delving into the proof of \cite{liyorke}, it becomes clear that the chaotic map $f(x,a,b)$ has a periodic orbit of period 3 that exists within the interior of $\mathcal{F}(a)$. We take advantage of this property to demonstrate that $f^2(\cdot,a,b)$ is in fact a strictly turbulent map.


\begin{lemma}\label{lemma:turbulence}
For every $b \in (0,1)$, there is a $u_b$ such that for all $a> u_b$,  there exist closed and disjoint intervals $K_a$ and $J_a$ in the interior of $\mathcal{F}(a)$ such that
$f^2(K_a, a, b)$ and $f^2(J_a, a, b)$ are neighborhoods of $K_a \cup J_a$.
\end{lemma}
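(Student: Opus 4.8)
The plan is to produce a horseshoe for $f^{2}(\cdot,a,b)$ out of a period‑three orbit of $f$ lying strictly inside the perpetual set, and to use the fact that $f$ is defined and surjective on $\mathcal F(a)$, which is strictly larger than the span of that orbit, to upgrade the resulting turbulence to \emph{strict} turbulence.

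\textbf{Step 1: a period‑three orbit in $\intr\mathcal F(a)$.} By the Li–Yorke/Sharkovskii analysis underlying \cite{liyorke} (see the discussion preceding the statement), there is $u_b$ so that for all $a>u_b$ the map $f=f(\cdot,a,b)$ has a periodic orbit $\{q_1<q_2<q_3\}\subset\intr\mathcal F(a)$ of exact period three, and (enlarging $u_b$ if needed) $f'(b)<-1$. Up to replacing $(f,b)$ by its conjugate under $x\mapsto 1-x$ — which swaps the two possible cyclic orders on the orbit and sends $b$ to $1-b$ — assume $f(q_1)=q_2$, $f(q_2)=q_3$, $f(q_3)=q_1$. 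Since $f(q_2)=q_3>q_2$ and $f(q_3)=q_1<q_3$, the map $x\mapsto f(x)-x$ changes sign on $R:=[q_2,q_3]$, so $f$ has a fixed point there; as $0,b,1$ are the only fixed points of $f$ and $R\subset(0,1)$, this forces $q_2<b<q_3$, and recall $x_{\max}<b<x_{\min}$ by Lemma~\ref{lemma:structure}.

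\textbf{Step 2: covering relations for $f^{2}$.} With $L:=[q_1,q_2]$ and $R:=[q_2,q_3]$, the intermediate value theorem gives $f(L)\supseteq R$ and $f(R)\supseteq[q_1,q_3]=L\cup R$; composing, $f^{2}(L)\supseteq[q_1,q_3]$ and $f^{2}(R)\supseteq[q_1,q_3]$. Thus $f^{2}$ is already \emph{turbulent} with turbulent intervals $L,R$, which however share the point $q_2$. The remaining task is to separate them (and sharpen the inclusions to neighborhoods).

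\textbf{Step 3 (the crux): from turbulence to strict turbulence with the neighborhood property.} The obstruction is genuine: a bare period‑three orbit forces only turbulence — for a map whose restriction to $[q_1,q_3]$ is unimodal, near $q_2$ the graph of $f^{2}$ is a ``$\mathrm V$'' bottoming out exactly at the endpoint $q_1$, so any subinterval of $L$ not containing $q_2$ has $f^{2}$‑image pulling away from $q_1$, and no neighborhood of the union can be covered. What saves us is that $[q_1,q_3]\subsetneq\mathcal F(a)$ with $q_1,q_3\in\intr\mathcal F(a)$, that $f(\mathcal F(a))=\mathcal F(a)$ (Lemma~\ref{lemma:perpetual}), that the turning points satisfy $f_{\min}<x_{\max}<b<x_{\min}<f_{\max}$ with $x_{\max},x_{\min}\in\intr\mathcal F(a)$, and that $b$ is an unstable fixed point inside $\mathcal D=[x_{\max},x_{\min}]$ — precisely the ingredients of Theorem~\ref{thm:volume-expansion}. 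Using these, I would show that for $a$ large the restriction of $f^{2}$ to $L$ and to $R$ each \emph{overshoots} the orbit span on both sides, i.e. $\min_{L}f^{2}<q_1<q_3<\max_{L}f^{2}$ and likewise on $R$: tracking the chains $L\xrightarrow{f}\cdot\xrightarrow{f}\cdot$ and $R\xrightarrow{f}\cdot\xrightarrow{f}\cdot$ through the turning point that each piece contains (so $f$ of that piece attains $f_{\max}$ or $f_{\min}$) and then through the monotone branch on $\mathcal D$, one checks the resulting image strictly contains $[q_1,q_3]$; this is where one rules out the degenerate ``tent‑like'' configuration, and I expect it to be the main technical burden, dispatched by the same monotonicity‑on‑$\mathcal D$, no‑period‑two‑in‑$\mathcal D$, and instability‑of‑$b$ arguments that drive Theorem~\ref{thm:volume-expansion}, together with Lemma~\ref{lemma:structure}. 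Granting the overshoot, $f^{2}$ attains both $q_1$ and $q_3$ at \emph{interior} points of $L$ and of $R$, so the standard ``minimal covering subinterval'' lemma yields closed intervals $J_a\subset\intr L$ and $K_a\subset\intr R$ with $f^{2}(J_a)=f^{2}(K_a)=[q_1,q_3]$. These $J_a,K_a$ are disjoint (they lie in $(q_1,q_2)$ and $(q_2,q_3)$), compactly contained in $\intr\mathcal F(a)$, and $J_a\cup K_a\subseteq(q_1,q_3)=\intr f^{2}(J_a)=\intr f^{2}(K_a)$; hence $f^{2}(J_a)$ and $f^{2}(K_a)$ are neighborhoods of $J_a\cup K_a$, which is exactly the assertion.
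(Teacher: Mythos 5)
Your Steps 1 and 2 are fine, but they only deliver the easy half of the statement: non-strict turbulence of $f^2$ with the intervals $L=[q_1,q_2]$ and $R=[q_2,q_3]$ touching at $q_2$. The entire content of the lemma is the separation into \emph{disjoint} closed intervals whose $f^2$-images are neighborhoods of their union, and that is precisely the step you do not prove. Step 3 asserts an ``overshoot'' property ($\min_{L}f^{2}<q_1<q_3<\max_{L}f^{2}$, and likewise on $R$) and defers its verification to ``the same arguments that drive Theorem~\ref{thm:volume-expansion}''; this is a statement of intent, not an argument. Worse, the overshoot requires localizing the period-three orbit relative to the critical points $x_{\max}(a),x_{\min}(a)$, and the orbit is obtained non-constructively from Li--Yorke's theorem, so no such localization is available. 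There are admissible configurations in which your claim fails for your choice of $L$: if, say, $f$ is increasing on $[q_1,q_2]$ and unimodal on $[q_2,q_3]$ with interior maximum, then $f^2$ attains the value $q_1$ on $L$ only at the endpoint $q_2$, so $\min_L f^2=q_1$ is not overshot, no closed subinterval of the interior of $L$ $f^2$-covers $[q_1,q_3]$, and your ``minimal covering subinterval'' step has nothing to act on. Handling or excluding such configurations is exactly the missing work.

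The paper's proof shows that none of this extra machinery is needed, and in fact your framing of the obstruction (``a bare period-three orbit forces only turbulence'') is incorrect: period three does yield strict turbulence of $f^2$, with the neighborhood property, by a purely topological construction. Starting from a period-three point $x$ with $f^{2}(x)<x<f(x)$, the paper chooses four auxiliary points $d,z,q,c$ using only the intermediate value theorem and continuity (first $d$ with $f(d)=x$, hence $f^{2}(d)=f(x)>d$, then $z,q,c$ sufficiently close to the appropriate reference points so that $f^2(z)>d$, $f^2(q)<z$, $f^2(c)<z$), sets $J_a=[z,q]$ and $K_a=[c,d]$, and observes that both $f^2$-images strictly contain $[z,d]\supseteq J_a\cup K_a$, giving disjointness and the neighborhood property simultaneously. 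This classical Block--Coppel-type argument uses neither the monotonicity of $f$ on $[x_{\max}(a),x_{\min}(a)]$, nor the absence of period-two orbits there, nor the instability of $b$, nor any overshoot beyond the orbit span. In short, you have both misdiagnosed the difficulty and left your proposed resolution of it unproven.
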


We begin by examining the properties of a fixed learning rate MWU map and its ability to create exponential decaying volume (length) turbulent sets. Our analysis centers around the key observation that since $f^2(K_a)$ covers $K_a\cup J_a$, then there exist necessarily, by continuity of $f^2$, at least two distinct subintervals, $Z_1,Z_2$, within $K_a$ whose $f^2$-image is precisely $K_a$ and $J_a$, respectively. By repeatedly applying this principle, we demonstrate that the $f^4$-image of these subintervals cover again $K_a\cup J_a$, and through induction, we can extend this property to higher compositions of $f$. 

\begin{lemma}\label{lemma:turbulence-advanced}
For every $a > u_b$, there exist closed intervals $V_a^k \subseteq K_a$ and $U_a^k \subseteq J_a$ such that
\begin{align*}
    \lim_{k \to \infty} \textrm{diam}(V_a^k) = \lim_{k \to \infty} \textrm{diam}(U_a^k) = 0
\end{align*}
and for every $k \geq 0$ it holds that $f^{2k + 2}(V_a^k, a, b)$ and $f^{2k + 2}(U_a^k, a, b)$ are neighborhoods of $K_a \cup J_a$.
\end{lemma}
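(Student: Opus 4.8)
The plan is to build $V_a^k$ and $U_a^k$ as iterated pull-backs of $K_a$ (respectively $J_a$) under $g:=f^{2}(\cdot,a,b)$, arranged so that the relevant power of $g$ always lands back on $g(K_a)$ (respectively $g(J_a)$), and then to force the diameters down by a uniform expansion estimate for $g$ on the turbulent sets. Throughout I fix $a>u_b$ and the disjoint closed intervals $K_a,J_a\subset\operatorname{int}\mathcal F(a)$ supplied by \cref{lemma:turbulence}, so that $g(K_a)$ and $g(J_a)$ are neighborhoods of $K_a\cup J_a$; in particular $g(K_a)\supseteq K_a$ and $g(J_a)\supseteq J_a$. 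The first tool I need is the elementary pull-back principle: if $h$ is continuous on a compact interval $A$ with $h(A)\supseteq B$ for a nondegenerate compact interval $B$, then some compact $A'\subseteq A$ satisfies $h(A')=B$; and if in addition $|h'|\ge\lambda>0$ on $A$, then (by Darboux's theorem $h'$ cannot change sign, so) $h$ is strictly monotone on $A$ and $|A'|\le|B|/\lambda$. The subinterval $A'$ is obtained by picking $s,t\in A$ whose $h$-images are the two endpoints of $B$ and trimming $[s,t]$ to its maximal subinterval still mapped onto $B$; the length bound is the mean value inequality.

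Next I would carry out the nested construction. Put $V_a^0:=K_a$ and proceed by induction: suppose compact intervals $V_a^0\supseteq V_a^1\supseteq\cdots\supseteq V_a^k$ have been found with $V_a^j\subseteq K_a$ and $g(V_a^j)=V_a^{j-1}$ for $1\le j\le k$, so that $g^{j}(V_a^j)=K_a$ for all $j\le k$. Since $g(V_a^k)=V_a^{k-1}\supseteq V_a^k$ when $k\ge1$, and $g(V_a^0)=g(K_a)\supseteq K_a=V_a^0$ when $k=0$, the pull-back principle applied with $h=g$ and $A=B=V_a^k$ yields a compact $V_a^{k+1}\subseteq V_a^k$ with $g(V_a^{k+1})=V_a^k$, whence $g^{k+1}(V_a^{k+1})=K_a$ and the invariant is preserved. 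For each $k$ we then get $f^{2k+2}(V_a^k,a,b)=g^{k+1}(V_a^k)=g\bigl(g^{k}(V_a^k)\bigr)=g(K_a)$, a neighborhood of $K_a\cup J_a$, exactly as the statement demands. The intervals $U_a^k\subseteq J_a$ are produced identically, interchanging the roles of $K_a$ and $J_a$ and using $g(J_a)\supseteq J_a$, and satisfy $f^{2k+2}(U_a^k,a,b)=g(J_a)$.

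It then remains to show $\operatorname{diam}(V_a^k)\to0$ and $\operatorname{diam}(U_a^k)\to0$, which is the crux of the lemma. My plan here is to show that, after enlarging $u_b$ if needed, there is a uniform $\lambda>1$ with $|(f^{2})'|\ge\lambda$ on $K_a\cup J_a$ — this is where the explicit form of $f$ enters, since for large $a$ the underlying period-three orbit, and therefore $K_a$ and $J_a$, are concentrated where $f$ is steep and stay uniformly bounded away from the only critical points $x_{\min}(a),x_{\max}(a)$ of $f$, making both $|f'(x)|$ and $|f'(f(x))|$ bounded below there. Granting this, the whole orbit segment $V_a^k,\,g(V_a^k)=V_a^{k-1},\,\dots,\,g^{k}(V_a^k)=K_a$ stays inside $K_a$, on which $g$ is strictly monotone with $|g'|\ge\lambda$; applying the length bound $k$ times along this segment gives $\operatorname{diam}(K_a)=\operatorname{diam}\bigl(g^{k}(V_a^k)\bigr)\ge\lambda^{k}\operatorname{diam}(V_a^k)$, hence $\operatorname{diam}(V_a^k)\le\lambda^{-k}\operatorname{diam}(K_a)\to0$, and symmetrically $\operatorname{diam}(U_a^k)\le\lambda^{-k}\operatorname{diam}(J_a)\to0$. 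If one only wants decay to zero rather than the exponential rate, the derivative estimate can be bypassed: $E:=\bigcap_k V_a^k$ is a closed interval with $f^{2}(E)=E$, and a nondegenerate such $E\subseteq K_a$ is ruled out by the volume-expansion behaviour of \cref{thm:volume-expansion} (which prevents a proper subinterval of $\mathcal F(a)$ from being carried back onto itself), forcing $E$ to be a point. The single non-routine ingredient, and the step I expect to fight with, is precisely this expansion / ``no invariant subinterval'' estimate for $f^{2}$ on the turbulent sets; Steps~1–2 are pure bookkeeping with the covering relations of \cref{lemma:turbulence}.
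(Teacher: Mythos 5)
Your nested pull-back construction and the covering identity $f^{2k+2}(V_a^k,a,b)=g^{k+1}(V_a^k)=g(K_a)$ (with $g=f^2(\cdot,a,b)$) are sound, but the diameter decay --- which you correctly identify as the crux --- is not established by either of your two routes, and this is exactly where you part ways with the paper. Route (a) needs a uniform bound $\lvert (f^{2})'\rvert\ge\lambda>1$ on $K_a\cup J_a$. Since $(f^{2})'(x)=f'(f(x))\,f'(x)$, this requires not only that $K_a$ and $J_a$ stay away from the critical points $x_{\min}(a),x_{\max}(a)$ but also that their $f$-images do; the construction in \cref{lemma:turbulence} chooses the four endpoints purely by continuity near the period-three orbit and guarantees neither, and you offer no argument. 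The covering relation $g(K_a)\supseteq K_a\cup J_a$ gives, via the mean value theorem, only that $\lvert g'\rvert$ exceeds $1$ at \emph{some} point of $K_a$, not uniformly. Route (b) fails for a different reason: $E=\bigcap_k V_a^k$ is indeed a compact $g$-invariant interval inside $K_a$, but \cref{thm:volume-expansion} only concerns intervals containing the equilibrium $b$, whereas $K_a$ sits around a point of the period-three orbit and need not contain $b$; so that theorem does not rule out a nondegenerate $E$. Neither route closes the gap.

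The paper avoids all analytic estimates with a halving trick that exploits the disjointness of $K_a$ and $J_a$: since $f^{2k+2}(V_a^k,a,b)$ is a neighborhood of the two disjoint closed intervals $K_a$ and $J_a$, one can extract two \emph{disjoint} closed subintervals $Z_1,Z_2\subseteq V_a^k$ with $f^{2k+2}(Z_1,a,b)$ a neighborhood of $K_a$ and $f^{2k+2}(Z_2,a,b)$ a neighborhood of $J_a$; applying $f^2$ once more shows both satisfy the covering property at stage $k+1$, and disjointness forces $\min\{\textrm{diam}(Z_1),\textrm{diam}(Z_2)\}\le\textrm{diam}(V_a^k)/2$. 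Taking the smaller one as $V_a^{k+1}$ yields $\textrm{diam}(V_a^k)\le 2^{-k}\,\textrm{diam}(K_a)$ with no derivative bounds at all (note the paper's invariant is the covering property itself, not your exact-pullback relation $g(V_a^{k+1})=V_a^k$). If you replace your single pull-back per stage with this two-branch pull-back and keep the smaller branch, your argument goes through; as written, it has a genuine hole.
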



\begin{figure}
    \centering
    \scalebox{0.8}{\tikzset{every picture/.style={line width=0.75pt}} 

\begin{tikzpicture}[x=0.75pt,y=0.75pt,yscale=-1,xscale=1]

\draw    (400.41,120.19) -- (220.22,120.63) ;
\draw   (353.11,80.07) .. controls (368.03,80.03) and (380.13,92.09) .. (380.16,107) -- (380.21,133.48) .. controls (380.23,148.39) and (368.17,160.51) .. (353.25,160.54) -- (267.52,160.75) .. controls (252.61,160.79) and (240.5,148.73) .. (240.47,133.82) -- (240.43,107.34) .. controls (240.4,92.43) and (252.47,80.31) .. (267.38,80.28) -- cycle ;
\draw  [fill={rgb, 255:red, 245; green, 166; blue, 35 }  ,fill opacity=0.38 ] (350.15,93.6) .. controls (362.46,93.57) and (372.47,104.36) .. (372.49,117.7) .. controls (372.51,131.03) and (362.55,141.87) .. (350.23,141.9) .. controls (337.92,141.93) and (327.92,131.14) .. (327.89,117.8) .. controls (327.87,104.47) and (337.83,93.63) .. (350.15,93.6) -- cycle ;
\draw  [fill={rgb, 255:red, 80; green, 227; blue, 194 }  ,fill opacity=0.54 ] (282.54,92.74) .. controls (295.65,92.71) and (306.31,104.41) .. (306.33,118.88) .. controls (306.36,133.34) and (295.75,145.09) .. (282.63,145.12) .. controls (269.51,145.16) and (258.86,133.46) .. (258.83,118.99) .. controls (258.81,104.53) and (269.42,92.78) .. (282.54,92.74) -- cycle ;
\draw  [fill={rgb, 255:red, 245; green, 166; blue, 35 }  ,fill opacity=0.23 ] (296.78,115.49) -- (296.8,125.91) -- (282.26,125.94) -- (282.25,115.53) -- cycle ;
\draw  [fill={rgb, 255:red, 80; green, 227; blue, 194 }  ,fill opacity=0.48 ] (276.3,115.65) -- (276.32,125.96) -- (264.42,125.99) -- (264.4,115.68) -- cycle ;
\draw  [fill={rgb, 255:red, 245; green, 166; blue, 35 }  ,fill opacity=0.45 ] (365.91,115.33) -- (365.93,125.05) -- (355.41,125.08) -- (355.39,115.35) -- cycle ;
\draw  [fill={rgb, 255:red, 80; green, 227; blue, 194 }  ,fill opacity=0.26 ] (347.73,115.48) -- (347.75,124.52) -- (333.55,124.55) -- (333.54,115.52) -- cycle ;
\draw    (323.42,214.36) .. controls (267.74,180.77) and (328.08,172.6) .. (341.46,124.52) ;
\draw    (389.38,214.36) .. controls (428.78,175.68) and (380.81,182.8) .. (357.81,125.38) ;
\draw    (347.41,141.69) -- (347.41,198.11) ;
\draw [shift={(347.41,200.11)}, rotate = 270] [color={rgb, 255:red, 0; green, 0; blue, 0 }  ][line width=0.75]    (10.93,-3.29) .. controls (6.95,-1.4) and (3.31,-0.3) .. (0,0) .. controls (3.31,0.3) and (6.95,1.4) .. (10.93,3.29)   ;
\draw  [fill={rgb, 255:red, 245; green, 166; blue, 35 }  ,fill opacity=0.38 ] (380.38,214.42) .. controls (392.89,214.39) and (403.05,219.15) .. (403.06,225.05) .. controls (403.07,230.95) and (392.94,235.75) .. (380.43,235.78) .. controls (367.91,235.8) and (357.76,231.04) .. (357.74,225.15) .. controls (357.73,219.25) and (367.86,214.45) .. (380.38,214.42) -- cycle ;
\draw  [fill={rgb, 255:red, 80; green, 227; blue, 194 }  ,fill opacity=0.54 ] (320.97,214.87) .. controls (333.49,214.85) and (343.64,219.61) .. (343.66,225.51) .. controls (343.67,231.4) and (333.54,236.21) .. (321.02,236.23) .. controls (308.51,236.26) and (298.35,231.5) .. (298.34,225.6) .. controls (298.33,219.7) and (308.46,214.9) .. (320.97,214.87) -- cycle ;
\draw   (383.74,199.91) .. controls (398.65,199.88) and (410.77,210.79) .. (410.8,224.29) .. controls (410.83,237.78) and (398.76,248.74) .. (383.85,248.77) -- (317.87,248.91) .. controls (302.96,248.94) and (290.85,238.03) .. (290.82,224.54) .. controls (290.79,211.05) and (302.85,200.08) .. (317.76,200.05) -- cycle ;
\draw    (389.24,236.75) -- (413.36,300.02) ;
\draw [shift={(414.08,301.89)}, rotate = 249.13] [color={rgb, 255:red, 0; green, 0; blue, 0 }  ][line width=0.75]    (10.93,-3.29) .. controls (6.95,-1.4) and (3.31,-0.3) .. (0,0) .. controls (3.31,0.3) and (6.95,1.4) .. (10.93,3.29)   ;
\draw    (311.14,234.75) -- (272.74,301.21) ;
\draw [shift={(271.74,302.95)}, rotate = 300.02] [color={rgb, 255:red, 0; green, 0; blue, 0 }  ][line width=0.75]    (10.93,-3.29) .. controls (6.95,-1.4) and (3.31,-0.3) .. (0,0) .. controls (3.31,0.3) and (6.95,1.4) .. (10.93,3.29)   ;
\draw  [fill={rgb, 255:red, 245; green, 166; blue, 35 }  ,fill opacity=0.38 ] (297.06,308.71) .. controls (305.88,308.69) and (313.04,312.32) .. (313.05,316.82) .. controls (313.06,321.32) and (305.91,324.98) .. (297.09,325) .. controls (288.27,325.02) and (281.11,321.39) .. (281.1,316.89) .. controls (281.09,312.4) and (288.23,308.73) .. (297.06,308.71) -- cycle ;
\draw  [fill={rgb, 255:red, 80; green, 227; blue, 194 }  ,fill opacity=0.54 ] (247.14,308.82) .. controls (255.96,308.8) and (263.12,312.43) .. (263.13,316.93) .. controls (263.14,321.43) and (256,325.09) .. (247.17,325.11) .. controls (238.35,325.13) and (231.19,321.5) .. (231.18,317) .. controls (231.17,312.5) and (238.32,308.84) .. (247.14,308.82) -- cycle ;
\draw   (295.79,301.88) .. controls (310.7,301.84) and (322.8,308.2) .. (322.82,316.07) .. controls (322.84,323.94) and (310.76,330.34) .. (295.85,330.38) -- (252.44,330.47) .. controls (237.53,330.51) and (225.43,324.16) .. (225.41,316.29) .. controls (225.4,308.42) and (237.47,302.01) .. (252.38,301.98) -- cycle ;
\draw  [fill={rgb, 255:red, 245; green, 166; blue, 35 }  ,fill opacity=0.38 ] (437.68,307.66) .. controls (446.5,307.64) and (453.66,311.27) .. (453.67,315.77) .. controls (453.68,320.27) and (446.54,323.93) .. (437.72,323.95) .. controls (428.89,323.97) and (421.74,320.34) .. (421.73,315.84) .. controls (421.72,311.34) and (428.86,307.68) .. (437.68,307.66) -- cycle ;
\draw  [fill={rgb, 255:red, 80; green, 227; blue, 194 }  ,fill opacity=0.54 ] (387.77,307.77) .. controls (396.59,307.75) and (403.75,311.38) .. (403.76,315.87) .. controls (403.77,320.37) and (396.62,324.04) .. (387.8,324.06) .. controls (378.98,324.08) and (371.82,320.44) .. (371.81,315.95) .. controls (371.8,311.45) and (378.94,307.79) .. (387.77,307.77) -- cycle ;
\draw   (436.42,300.82) .. controls (451.33,300.79) and (463.43,307.14) .. (463.45,315.01) .. controls (463.46,322.88) and (451.39,329.29) .. (436.48,329.32) -- (393.07,329.42) .. controls (378.16,329.45) and (366.06,323.1) .. (366.04,315.23) .. controls (366.02,307.36) and (378.1,300.96) .. (393.01,300.92) -- cycle ;

\draw (380.4,225.1) node  [font=\footnotesize]  {$K_{a}$};
\draw (321,225.55) node  [font=\footnotesize]  {$J_{a}$};
\draw (297.05,258.69) node [anchor=north west][inner sep=0.75pt]  [font=\footnotesize]  {$f^{2}\left( V_{a}^{0} =K_{a} ,a,b\right)$};
\draw (370.6,334.16) node [anchor=north west][inner sep=0.75pt]  [font=\footnotesize]  {$f^{4}\left( V_{a}^{1} =Z_{2} ,a,b\right)$};
\draw (241.45,335.3) node [anchor=north west][inner sep=0.75pt]  [font=\footnotesize]  {$f^{4}( Z_{1} ,a,b)$};
\draw (267.29,191.53) node [anchor=north west][inner sep=0.75pt]  [font=\LARGE,rotate=-90.31]  {$\ddots $};
\draw (240.32,241.11) node [anchor=north west][inner sep=0.75pt]  [font=\LARGE,rotate=-89.16]  {$\ddots $};
\draw (218.72,156.67) node [anchor=north west][inner sep=0.75pt]  [font=\small]  {$\mathcal{F}( a)$};
\draw (378.15,125.92) node [anchor=north west][inner sep=0.75pt]  [font=\small]  {$f_{\max}( a)$};
\draw (191.83,125.31) node [anchor=north west][inner sep=0.75pt]  [font=\small]  {$f_{\min}( a)$};
\draw (309.25,122.11) node [anchor=north west][inner sep=0.75pt]  [rotate=-358.71]  {$K_{a}$};
\draw (245.21,122.1) node [anchor=north west][inner sep=0.75pt]  [rotate=-358.71]  {$J_{a}$};
\draw (263.22,99.87) node [anchor=north west][inner sep=0.75pt]  [font=\footnotesize]  {$\Xi _{1}$};
\draw (281.84,99.87) node [anchor=north west][inner sep=0.75pt]  [font=\footnotesize]  {$\Xi _{2}$};
\draw (333.21,99.87) node [anchor=north west][inner sep=0.75pt]  [font=\footnotesize]  {$Z_{1}$};
\draw (350.71,99.87) node [anchor=north west][inner sep=0.75pt]  [font=\footnotesize]  {$Z_{2}$};
\draw (290.25,311.44) node [anchor=north west][inner sep=0.75pt]  [font=\scriptsize]  {$K_{a}$};
\draw (242.07,311.63) node [anchor=north west][inner sep=0.75pt]  [font=\scriptsize]  {$J_{a}$};
\draw (431.74,311.4) node [anchor=north west][inner sep=0.75pt]  [font=\scriptsize]  {$K_{a}$};
\draw (383.12,311.08) node [anchor=north west][inner sep=0.75pt]  [font=\scriptsize]  {$J_{a}$};

\end{tikzpicture}}
    \caption{Illustration of the inductive construction of $V_a^k$ sets. In order to construct an exponential decaying sequence, we choose as $V_a^1=\arg\min_{Z\in\{Z_1,Z_2 \}}\textrm{diam}(Z)\leq \textrm{diam}(V_a^0=K_a) $. }
\end{figure}
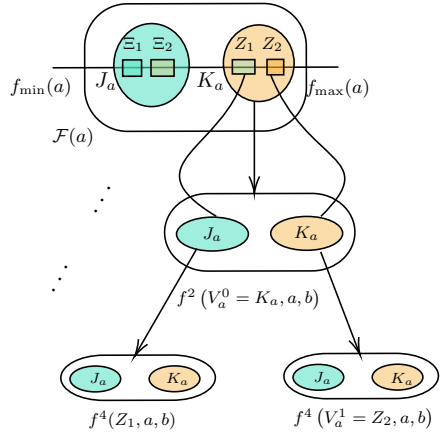

The following lemma plays  an essential role in our symbolic dynamic proof of the Li-Yorke chaotic behavior. It allows us to construct a scrambled set of initial conditions through a set of abstract symbolic orbits.

On a technical level, we utilize a range of machinery developed in this paper to prove this lemma. Specifically, we use Theorem~\ref{theorem:dynamic-volume-expansion} to firstly describe turbulent sets $J_{a^\star}\cup K_{a^\star}$ that lie within the interior of $\mathcal{F}(a^\star)$ and, for small enough $\epsilon$, within $\mathcal{F}(a^\star-\epsilon)$, and secondly to ensure that the non-autonomous dynamical system covers $J_{a^\star}\cup K_{a^\star}$.
Furthermore, to extend the implication of Lemma~\ref{lemma:turbulence-advanced} for the turbulent map $f^2(\cdot,a^\star,b)$ to the non-autonomous system $x_n$ map for the sets $J_{a^\star},K_{a^\star}$, we employ the uniform convergence guarantee provided by Lemma~\ref{lemma:dynamic-strong-convergence}, which controls the discrepancy between $x_{n+2}(S)$ and $f^2(x_n(S),a^\star,b)$ for any subinterval $S$ of $J_{a^\star},K_{a^\star}$.

\begin{lemma}[Tracking Lemma]\label{lemma:tracking}
If $b \in (0, 1) \setminus \{\frac{1}{2} \}$, there exists a $d_b$ such that if $a_{\min} > d_b$, we can construct an increasing sequence $n_i$ with the following properties. For every sequence of intervals $A_i$ with $A_i = V_{a^*}^i$ or $A_i = U_{a^*}^i$, there exists a $x_0 \in [0,1]$ such that for all $i \geq 0$ it holds that $x_{n_i}(x_0) \in A_i$ . 
\end{lemma}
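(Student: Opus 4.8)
\emph{Overall strategy.} I would realise each prescribed itinerary by a nested--cylinder construction, using the fixed--rate map $G:=f^{2}(\cdot,a^\star,b)$ as a template and transporting its turbulent structure onto the non--autonomous orbit via \cref{lemma:dynamic-strong-convergence}. First fix $a_{\min}>d_{b}$ large enough that \cref{lemma:turbulence,lemma:turbulence-advanced,theorem:dynamic-volume-expansion} all apply and that $a_{\min}<a^\star-\epsilon$ for some $\epsilon>0$; the hypothesis $b\ne\tfrac12$ is precisely what makes these turbulence and volume--expansion statements available at the limit rate $a^\star=g(0)$ and throughout a window of rates just below it. Writing $K:=K_{a^\star}$, $J:=J_{a^\star}$, \cref{lemma:turbulence} makes them disjoint compact subintervals of $\operatorname{int}\mathcal F(a^\star)$ with $G(K),G(J)$ neighbourhoods of $K\cup J$; fix a margin $\rho_{0}>0$ with $G(K),G(J)\supseteq(K\cup J)^{+\rho_{0}}$, which iterates to the ``once covering, stays covering'' fact $G^{m}(S)\supseteq(K\cup J)^{+\rho_{0}}$ for $S\in\{K,J\}$, $m\ge1$, and, together with \cref{lemma:turbulence-advanced}, to $G^{m}(V^{k}_{a^\star}),G^{m}(U^{k}_{a^\star})\supseteq(K\cup J)^{+\rho_{0}}$ for all $m\ge k+2$, where $\operatorname{diam}V^{k}_{a^\star},\operatorname{diam}U^{k}_{a^\star}\to0$. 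Shrinking $\epsilon$ we may also assume $\mathcal F(a^\star-\epsilon)\supseteq(K\cup J)^{+\rho_{0}}$.

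\emph{The construction.} Fix $[\gamma,\delta]\subset(0,1)$ with $b\in(\gamma,\delta)$. By \cref{corollary:bound} all orbits from $[\gamma,\delta]$ stay in a fixed compact subinterval of $(0,1)$, so \cref{ref:uniform,lemma:dynamic-strong-convergence} hold uniformly there: for each $\eta>0$ there is $N(\eta)$ with $\max_{x_{0}\in[\gamma,\delta]}\lvert x_{n+2}(x_{0})-G(x_{n}(x_{0}))\rvert\le\eta$ for $n\ge N(\eta)$; and by \cref{theorem:dynamic-volume-expansion} there is $n^{\dagger}$ with $x_{n}([\gamma,\delta])\supseteq\mathcal F(a^\star-\epsilon)\supseteq(K\cup J)^{+\rho_{0}}$ for $n\ge n^{\dagger}$. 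I would then build $n_{0}<n_{1}<\cdots$ and nested closed intervals $[\gamma,\delta]\supseteq I_{0}\supseteq I_{1}\supseteq\cdots$ with $x_{n_{i}}(I_{i})=A_{i}$: take $n_{0}\ge n^{\dagger}$ and, as $x_{n_{0}}([\gamma,\delta])\supseteq A_{0}$, choose $I_{0}$ with $x_{n_{0}}(I_{0})=A_{0}$ (a continuous image of an interval that contains a closed interval contains a closed subinterval mapping exactly onto it). Given $I_{i},n_{i}$, pick an even gap $g_{i}\ge 2i+4$, so that for the fixed map $f^{g_{i}}(A_{i},a^\star,b)=G^{g_{i}/2}(A_{i})\supseteq(K\cup J)^{+\rho_{0}}\supseteq A_{i+1}$; composing the two--step estimate $g_{i}/2$ times (each composition costing a factor $L_{G}$, the Lipschitz constant of $G$ on $\Delta$, times the current $\eta$) shows $x_{n_{i}+g_{i}}(I_{i})$ lies within Hausdorff distance $<\rho_{0}$ of $f^{g_{i}}(A_{i},a^\star,b)$, hence $\supseteq A_{i+1}$, once $n_{i}$ is large enough; set $n_{i+1}:=n_{i}+g_{i}$ and choose $I_{i+1}\subseteq I_{i}$ with $x_{n_{i+1}}(I_{i+1})=A_{i+1}$. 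Compactness gives $\bigcap_{i}I_{i}\ne\emptyset$, and any $x_{0}$ there satisfies $x_{n_{i}}(x_{0})\in A_{i}$ for all $i$.

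\emph{The main obstacle.} Because $(n_{i})$ must be declared before the itinerary is revealed, everything hinges on one delicate point: the gap $g_{i}$ needed to re--inflate the depth-$i$ cylinder onto $K\cup J$ grows linearly in $i$, while over that many steps the non--autonomous/fixed--map discrepancy is amplified by $L_{G}^{\,g_{i}/2}$, a greater-than-one factor, so it is far from automatic that the discrepancy can be held below the fixed margin $\rho_{0}$ at every level. Calibrating $(n_{i})$ is therefore the crux, and the ingredients that carry it through are: (i) the two--step discrepancy of \cref{lemma:dynamic-strong-convergence} tends to $0$ as $n\to\infty$, so each level's error shrinks as $n_{i}$ is pushed out; (ii) the monotone contraction $\operatorname{dist}(x_{n}(x_{0}),\Delta)\!\downarrow\!0$ from \cref{corollary:bound}, which keeps all relevant orbits uniformly inside $(0,1)$ and hence makes the Lipschitz and uniform--convergence constants global; and (iii) using the volume--expansion machinery not merely at initialisation but to recover $K\cup J$ from a cylinder without precise long--horizon tracking, which is what prevents the per--level errors from compounding uncontrollably. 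The residual symbolic bookkeeping and the passage to the limit $\bigcap_{i}I_{i}$ are routine.
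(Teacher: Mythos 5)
Your overall architecture is the same as the paper's: seed the construction with \cref{theorem:dynamic-volume-expansion}, extract nested cylinders mapping exactly onto the prescribed $V^i_{a^\star}/U^i_{a^\star}$, use \cref{lemma:turbulence,lemma:turbulence-advanced} for the covering structure and \cref{lemma:dynamic-strong-convergence} to transport it to the non-autonomous orbit, and finish with Cantor's intersection theorem. Several of your intermediate observations (the uniform margin $\rho_0$ with $G^m(V^k_{a^\star})\supseteq (K\cup J)^{+\rho_0}$ for all $m\ge k+2$, the exact-preimage extraction) are correct and in fact slightly cleaner than the paper's per-level margins $\epsilon_i$. However, the step you yourself flag as ``the crux'' is not closed, and as written it cannot be. You control the whole gap $[n_i,n_{i+1}]$ by composing the two-step estimate $g_i/2$ times, incurring a factor of order $L_G^{g_i/2}$, and you ask that $n_i$ be ``large enough'' to beat it. But $n_i$ is already fixed (it is $n_{i-1}+g_{i-1}$), and the threshold it must exceed depends on $g_i$, while $g_i$ must in turn be large enough that $n_{i+1}=n_i+g_i$ meets the level-$(i+1)$ threshold, which depends on $g_{i+1}$, and so on: the requirements reference future gaps without a base case. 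One cannot escape by fixing $g_i=2i+4$ either, because the two-step discrepancy of \cref{lemma:dynamic-strong-convergence} decays at no quantified rate (it is inherited from the uniform convergence $a_n\to a^\star$, whose speed depends on the modulus of continuity of $g$), so the thresholds $N(\rho_0 L_G^{-(i+2)})$ can outrun $n_0+\sum_{j\le i}(2j+4)$ for every choice of $n_0$. Your ingredient (iii) gestures at the right escape but does not supply it: \cref{theorem:dynamic-volume-expansion} cannot be applied to the cylinders $I_i$, since they do not contain $b$.

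The paper breaks this circularity by splitting each gap into two phases. First, a \emph{single} application of \cref{lemma:dynamic-strong-convergence} with $k=2i+2$ and accuracy $\epsilon_i$ (the covering margin of $f^{2i+2}(V^i_{a^\star},a^\star,b)$) carries the exact set $V^i_{a^\star}$ or $U^i_{a^\star}$ onto a cover of $K_{a^\star}\cup J_{a^\star}$; the associated threshold $m_i$ depends only on $i$, so the sequence $n_i=\min\{n: n\ge n_{i-1}+2i,\ n\equiv n_{i-1}\ (\mathrm{mod}\ 2),\ n\ge m_i\}$ is well defined in advance and independent of the itinerary. Second, coverage of $K_{a^\star}\cup J_{a^\star}$ is \emph{propagated} over the remaining (arbitrarily many) steps two at a time: at each even time one re-extracts a subinterval mapping exactly onto $K_{a^\star}=V^0_{a^\star}$ and applies the two-step estimate with the \emph{fixed} accuracy $\epsilon_0$, so the error never compounds and no threshold ever references a future gap. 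This jump-then-re-anchor decomposition is the missing idea; with it, your construction goes through, and without it the calibration of $(n_i)$ fails.
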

It is important to note that Lemma~\ref{lemma:tracking} ensures the ability to construct the same sequence of $n_i$ for any distinct sequence of $A_i$. 
Our approach to demonstrate the Li-Yorke chaotic behavior through symbolic dynamics is summarized in the following high-level steps: 
\begin{itemize}
\item 
Assume that we can construct an uncountable set $\mathcal{S}$ of scrambled infinite length binary sequences, i.e., 
 for every pair of sequences $\sigma,\tau \in \mathcal{S}$ there exist $(i)$ an infinite length subsequence $(k_i)_{i\in \mathbb{N}}$ where the two sequences differ, i.e., $\sigma_{k_i}\neq\tau_{k_i}$ and $(ii)$ an infinite length subsequence  $(\ell_i)_{i\in \mathbb{N}}$ where the two sequences are equal, i.e., $\sigma_{\ell_i}=\tau_{\ell_i}$.
 \item For each element of $\sigma\in \mathcal{S}$ we construct a sequence of sets $\mathcal{A}_\sigma=(A_i)_{i\in \mathbb{N}}$ as follows: If the $k$-th place element is $0$ we use $V_{a^*}^k$, whereas if it is $1$ we pick $U_{a^*}^k$. We now apply Lemma~\ref{lemma:tracking} for each of the sequence of sets to get a corresponding initialization $x_0^{\sigma}$. We call this set of initializations $Q$.
 \item Since every pair of strings $\sigma,\tau$ is scrambled, we know that we can construct two infinite subsequences $(\mu_i)_{i\in\mathbb{N}},(\nu_i)_{i\in\mathbb{N}}$ such that $\{x_{\mu_i}(x_0^{\sigma})\}$, $\{x_{\mu_i}(x_0^{\tau})\}$ belong to the same turbulent sets and $\{x_{\nu_i}(x_0^{\sigma})\}$, $\{x_{\nu_i}(x_0^{\tau})\}$ belong to the disjoint turbulent ones. Therefore, we can show that for every pair $\sigma,\tau\in Q$: $
 (i) \liminf\|\{x_{n}(x_0^{\sigma})-x_{n}(x_0^{\tau})\}\|=0
 \quad (ii) \limsup\|\{x_{n}(x_0^{\sigma})-x_{n}(x_0^{\tau})\}\|>0
$
\end{itemize}
Formalizing the outlined proof sketch, our final result follows:
\begin{theorem}\label{theorem:dynamic-li-yorke}
If $b \in (0, 1) \setminus \{\frac{1}{2} \}$, there exists a $d_b$ such that if $a_{\min} > d_b$, the dynamics of Equation~\ref{eq:old-update} are Li-Yorke chaotic. 
\end{theorem}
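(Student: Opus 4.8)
The plan is to carry out the symbolic‑dynamics reduction outlined just above in three stages, using the Tracking Lemma (Lemma~\ref{lemma:tracking}) as the bridge between abstract binary strings and genuine initial conditions of the non‑autonomous system. \textbf{Stage 1: a symbolic scrambled family.} I would first fix, once and for all, an uncountable family $\mathcal{S}\subseteq\{0,1\}^{\mathbb{N}}$ such that any two distinct $\sigma,\tau\in\mathcal{S}$ agree on infinitely many coordinates and disagree on infinitely many coordinates. Such families are classical; one concrete choice is to take an uncountable almost‑disjoint family $\mathcal{A}$ of infinite subsets of the even integers and set $\sigma^{A}_{n}=1$ if $n\in A$ and $\sigma^{A}_{n}=0$ otherwise: for distinct $A,B\in\mathcal{A}$ the set $A\setminus B$ is infinite, forcing disagreement on infinitely many even coordinates, while every odd coordinate equals $0$, forcing agreement on infinitely many coordinates, and $A\mapsto\sigma^{A}$ is injective.

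\textbf{Stage 2: tracking.} Fix $b\in(0,1)\setminus\{1/2\}$ and $a_{\min}>d_{b}$, so that Lemma~\ref{lemma:tracking} produces a single increasing time sequence $(n_{i})_{i\ge 0}$ serving \emph{all} symbolic sequences simultaneously. For each $\sigma\in\mathcal{S}$ form the interval sequence $\mathcal{A}_{\sigma}=(A_{i})_{i}$ with $A_{i}=V_{a^\star}^{i}$ if $\sigma_{i}=0$ and $A_{i}=U_{a^\star}^{i}$ if $\sigma_{i}=1$, and let $x_{0}^{\sigma}\in[0,1]$ be the initialization returned by Lemma~\ref{lemma:tracking}, so that $x_{n_{i}}(x_{0}^{\sigma})\in A_{i}$ for all $i$; set $Q=\{x_{0}^{\sigma}:\sigma\in\mathcal{S}\}$. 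Since $V_{a^\star}^{i}\subseteq K_{a^\star}$ and $U_{a^\star}^{i}\subseteq J_{a^\star}$ with $K_{a^\star},J_{a^\star}$ disjoint compact intervals (Lemma~\ref{lemma:turbulence}), write $\rho_{0}=\dist(K_{a^\star},J_{a^\star})>0$; then $\sigma_{i}\ne\tau_{i}$ forces $x_{n_{i}}(x_{0}^{\sigma})$ and $x_{n_{i}}(x_{0}^{\tau})$ into distinct intervals, hence $\|x_{n_{i}}(x_{0}^{\sigma})-x_{n_{i}}(x_{0}^{\tau})\|\ge\rho_{0}$. In particular the map $\sigma\mapsto x_{0}^{\sigma}$ is injective (distinct $\sigma,\tau$ disagree at some index $i$), so $Q$ is uncountable; moreover each $x_{0}^{\sigma}\in(0,1)$, since $0$ and $1$ are fixed points of $f$ and no orbit starting from them can reach the turbulent sets, which lie inside $\mathcal{F}(a^\star)\subset(0,1)$.

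\textbf{Stage 3: scrambling.} Given distinct $\sigma,\tau\in\mathcal{S}$, choose an infinite index set $\{k_{i}\}$ with $\sigma_{k_{i}}\ne\tau_{k_{i}}$ and an infinite index set $\{\ell_{i}\}$ with $\sigma_{\ell_{i}}=\tau_{\ell_{i}}$. Along the first, $\|x_{n_{k_{i}}}(x_{0}^{\sigma})-x_{n_{k_{i}}}(x_{0}^{\tau})\|\ge\rho_{0}$, whence $\limsup_{n}\|x_{n}(x_{0}^{\sigma})-x_{n}(x_{0}^{\tau})\|\ge\rho_{0}>0$. Along the second, both $x_{n_{\ell_{i}}}(x_{0}^{\sigma})$ and $x_{n_{\ell_{i}}}(x_{0}^{\tau})$ lie in a common interval, either $V_{a^\star}^{\ell_{i}}$ or $U_{a^\star}^{\ell_{i}}$, whose diameter tends to $0$ as $\ell_{i}\to\infty$ by Lemma~\ref{lemma:turbulence-advanced}, so $\liminf_{n}\|x_{n}(x_{0}^{\sigma})-x_{n}(x_{0}^{\tau})\|=0$. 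Hence every pair in $Q$ is scrambled, $Q$ is an uncountable scrambled set for the dynamics of Equation~\ref{eq:old-update}, and Li‑Yorke chaos follows.

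\textbf{Expected main obstacle.} With the earlier machinery in place the theorem reduces to an assembly argument, and the genuinely delicate point is the coordination between the symbolic index $i$ and the physical time $n_{i}$: one and the same sequence $(n_{i})$ must route every $x_{0}^{\sigma}$ through its prescribed interval $A_{i}$, since otherwise the separation estimate at a disagreement index and the closeness estimate at an agreement index could not be combined along a single pair of orbits. This is precisely the nontrivial content imported from Lemma~\ref{lemma:tracking} (and the remark following it), which in turn rests on the volume‑expansion covering of Theorem~\ref{theorem:dynamic-volume-expansion} and the uniform tracking of the varying‑rate maps from Lemma~\ref{lemma:dynamic-strong-convergence}; everything else here (the $\limsup$/$\liminf$ bookkeeping, injectivity of $\sigma\mapsto x_{0}^{\sigma}$, and the construction of $\mathcal{S}$) is routine.
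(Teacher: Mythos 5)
Your proposal follows essentially the same route as the paper: an uncountable family of binary sequences with infinitely many agreements and disagreements, converted via the Tracking Lemma (Lemma~\ref{lemma:tracking}) into initializations, with separation at disagreement indices coming from the disjointness of $K_{a^\star}$ and $J_{a^\star}$ and closeness at agreement indices coming from the decaying diameters of $V_{a^\star}^{i},U_{a^\star}^{i}$ (Lemma~\ref{lemma:turbulence-advanced}). Your Stage~1 construction (an almost-disjoint family supported on the even coordinates, with odd coordinates identically $0$) is a valid alternative to the paper's construction (representatives of the ``differ in finitely many places'' equivalence classes, interleaved with a constant sequence on the even coordinates); both deliver exactly the required agree/disagree structure, so this is a cosmetic difference. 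The one point you omit is that the dynamics of Equation~\ref{eq:old-update} form a \emph{two}-dimensional system in the state $(x_{n},r_{n})$, so the scrambling conditions must be verified for the full state, not just the $x$-coordinate: the $\limsup$ bound is unaffected, but for $\liminf_{n}\norm{(x_{n},r_{n})(p)-(x_{n},r_{n})(q)}=0$ you also need the $r$-components to come together, which holds because $r_{n}\to 0$ for every initialization in $(0,1)$ (Lemma~\ref{ref:average}), and one should fix a common $r_{0}$ (the paper takes $r_{0}=0$, i.e.\ $a_{0}=g(0)$) so that $Q$ embeds as a genuine set of initial states. This is a small, easily repaired bookkeeping step rather than a conceptual gap, but it is needed for the theorem as stated.
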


In this work we have focused on the dynamics of \cref{eq:old-update}. But our proof strategy can be readily generalized to any rule $a_n(\cdot)$ as long as it uniformly converges to a sufficiently high constant rate.
\begin{corollary}
Let $a_n(\cdot)$ be a sequence of maps uniformly converging in $[0,1]$ to $a^*> d_b$. Then the resulting dynamic learning rate system is Li-Yorke chaotic.
\end{corollary}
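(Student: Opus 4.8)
The plan is to rerun the proof of \cref{theorem:dynamic-li-yorke} almost verbatim, after isolating exactly which features of the dynamics that argument uses. Inspecting the proof and its ingredients, the concrete regret-based rule $a_{n+1}=g\big(\tfrac{1}{n+1}\sum_{i=0}^{n}a_i(x_i-b)\big)$ enters only through two of its consequences: first, that the learning rates $a_n(\cdot)$ converge uniformly on every compact subinterval of $(0,1)$ to a constant $a^{\star}$ (the role played by \cref{ref:uniform}); and second, that $a^{\star}>d_b$, which activates the whole fixed-rate apparatus for $f(\cdot,a^{\star},b)$: the order relations of \cref{lemma:structure}, the perpetual/absorbing set $\Delta$, the period-three orbit of $f(\cdot,a^{\star},b)$ and hence the strictly turbulent map $f^{2}(\cdot,a^{\star},b)$ with its nested turbulent intervals $V_{a^{\star}}^{k},U_{a^{\star}}^{k}$ (\cref{lemma:turbulence,lemma:turbulence-advanced}), the uniform tracking estimate \cref{lemma:dynamic-strong-convergence}, and the volume expansion \cref{theorem:dynamic-volume-expansion}. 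The remark following \cref{lemma:dynamic-learning:volume-expansion-basic} already records that forward invariance, absorption, convergence of C\'{e}saro means and basic volume expansion are update-rule agnostic; the proof of \cref{lemma:dynamic-strong-convergence} is likewise agnostic, resting only on the fact that $f(x,a,b)$ is Lipschitz in $a$ with an $x$-independent constant together with $a_n\to a^{\star}$; and \cref{lemma:tracking} and the symbolic-dynamics construction invoke nothing beyond these. Under the hypotheses of the corollary both required properties hold outright, since uniform convergence on $[0,1]$ yields uniform convergence on every $[\gamma,\zeta]\subset(0,1)$, and $a^{\star}>d_b$ is assumed.

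The only genuine wrinkle is that the corollary supplies merely an \emph{eventual} lower bound on the learning rates, whereas the lemmas of \cref{sec:dynamic,sec:turbulence} are phrased under a standing bound $a_{\min}>s_b$ (resp.\ $z_b$, $d_b$). Writing $[\underline{a},\overline{a}]\subset\mathbb{R}_{>0}$ for the fixed range that contains all the $a_n$ (the model's standing assumption), uniform convergence lets us fix $N$ with $a_n(x)>\tfrac{1}{2}(a^{\star}+d_b)>d_b$ for every $n\ge N$ and $x\in[0,1]$; form the set $\Delta$ from the pair $\big(\tfrac{1}{2}(a^{\star}+d_b),\,\overline{a}\big)$. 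For any $[\gamma,\zeta]\subset(0,1)$ the image $x_N([\gamma,\zeta])$ lies in a compact subinterval of $(0,1)$: each successive one-step image $\{f(x,a,b):x\in[\gamma_j,\zeta_j],\,a\in[\underline{a},\overline{a}]\}$ is again a compact subinterval of $(0,1)$ because $(x,a)\mapsto f(x,a,b)$ is continuous and $(0,1)$-valued on the compact product, and one iterates this $N$ times. Consequently the \emph{proofs} of the absorption, strong-convergence, volume-expansion and tracking statements, when rerun starting from iteration $N$ using only the bounds on $a_n$ for $n\ge N$ and the uniform convergence, reproduce each of them with its ``$\exists n_0$'' upgraded to ``$\exists n_0\ge N$'', and \cref{corollary:bound} still holds for all $n$ (for $n<N$ by compactness of the finitely many sets $x_n([\gamma,\zeta])$, for $n\ge N$ by the tail bound). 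In particular the early, possibly ill-behaved, iterations cannot collapse the dynamics onto a fixed point or the boundary.

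With $\Delta$, the turbulent sets, and every ``eventual'' threshold taken past $N$, the proof of \cref{theorem:dynamic-li-yorke} now proceeds unchanged with $a^{\star}$ in the role of $g(0)$: one builds an uncountable family $\mathcal{S}$ of pairwise-scrambled binary sequences, applies \cref{lemma:tracking} (with all $n_i\ge N$) to realize each $\sigma\in\mathcal{S}$ by an initialization $x_0^{\sigma}\in(0,1)$ whose orbit lies in $V_{a^{\star}}^{i}$ when $\sigma_i=0$ and in $U_{a^{\star}}^{i}$ when $\sigma_i=1$, and uses the disjointness of $V_{a^{\star}}^{i},U_{a^{\star}}^{i}$ together with \cref{lemma:dynamic-strong-convergence} to conclude that $Q=\{x_0^{\sigma}:\sigma\in\mathcal{S}\}$ is an uncountable scrambled set for the $x$-component; since this coordinate is part of the full state of \cref{eq:old-update}, the system is Li-Yorke chaotic. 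As in \cref{theorem:dynamic-li-yorke}, the value $b=\tfrac{1}{2}$ must be excluded, since it is what guarantees the period-three orbit of $f(\cdot,a^{\star},b)$ and hence the turbulent intervals, so the statement inherits the restriction $b\in(0,1)\setminus\{\tfrac{1}{2}\}$. I expect no genuinely new analytic difficulty: the real work is the bookkeeping of confirming that no step of the original proof secretly exploits the shape of $g$, and of absorbing the finitely many transient iterations into the single cutoff $N$.
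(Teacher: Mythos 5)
Your proposal coincides with the paper's own treatment: the paper gives no separate proof of this corollary, only the remark that the argument for Theorem~\ref{theorem:dynamic-li-yorke} uses the update rule exclusively through the uniform convergence of $a_n(\cdot)$ to a constant $a^\star>d_b$ on compact subsets of $(0,1)$ (plus the $b\neq\tfrac{1}{2}$ restriction inherited from the period-three/turbulence construction), which is exactly the reduction you carry out. If anything you go beyond the paper by explicitly absorbing the finitely many transient iterations, where $a_n$ may fall below the thresholds $s_b,z_b,d_b$, into a single cutoff $N$ and rerunning the absorption, strong-convergence, volume-expansion and tracking lemmas from that point onward --- a bookkeeping step the paper leaves entirely implicit.
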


\section{Conclusion}
\label{sec:discussion}


We have formally analyzed and established chaotic behavior for a class of multi-agent learning systems with a heuristically updated, variable learning rate. 
At the technical crux of all prior formal analysis of Li-Yorke chaos in games (e.g., \cite{bielawski2021follow,chotibut2019route,leonardos2021dynamical,cheung2021learning,piliouras2022multi})  lied the 
celebrated methodology based on period three orbits~\cite{liyorke}, which is only applicable in autonomous, i.e., time-invariant systems. In contrast, we had to delve deeper into the geometry and structural properties of these dynamics, which itself evolve with time showing that formal analysis of chaos is still possible. This opens the possibility of extending prior results to more realistic time-varying models. 



\section*{Acknowledgments}
\begingroup
%
%

Emmanouil V. Vlatakis-Gkaragkounis is grateful for financial support by the  Post-Doctoral FODSI-Simons Fellowship, Pancretan Association of America and Simons Collaboration on Algorithms and Geometry and Onassis Doctoral Fellowship. 
This research/project is also supported in part by the National Research Foundation, Singapore and DSO National Laboratories under its AI Singapore Program (AISG Award No: AISG2-RP-2020-016), NRF 2018 Fellowship NRF-NRFF2018-
07, NRF2019-NRF-ANR095 ALIAS grant, grant PIESGP-AI-2020-01, AME Programmatic Fund (Grant No.A20H6b0151) from the Agency for Science, Technology and Research (A*STAR) and Provost’s
Chair Professorship grant RGEPPV2101\ackperiod

%
\endgroup

\bibliographystyle{icml}
\bibliography{Bibliography/Bibliography-MV,Bibliography/Bibliography-PM,Bibliography/IEEEabrv,Bibliography/refer,Bibliography/refs}

\onecolumn
\appendix
\numberwithin{equation}{section}		
\numberwithin{lemma}{section}		
\numberwithin{fact}{section}		
\numberwithin{proposition}{section}		
\numberwithin{theorem}{section}		
\numberwithin{corollary}{section}		

\appendix


\section{Omitted Proofs of Section \ref{sec:fixed}}
\label{app:fixed}
In this section we will focus on the case where the dynamical system has a fixed learning rate.
\begin{equation*}
    x_{n+1} = f(x_n, a, b) = \frac{x_n}{x_{n} + (1-x_{n}) \exp(a (x_{n}-b))}
\end{equation*}
The derivative of $f$ for a fixed $a$ and $b$ is
\begin{equation*}
    f'(x, a, b) = \frac{(ax^2 - ax +1) \exp(a(x-b))}{(x+ (1-x) \exp(a(x-b)))^2}
\end{equation*}
Critical/stationary points of $f$ are solutions of $ax^2 - ax +1 = 0$. By taking the determinant, we get than for $0<a \leq 4$, there is no solution so $f$ is increasing and no chaos can exist, instead the system converges to equilibrium. We will thus require $a>4$ to enable chaotic behaviour in the system. Let us study the local minima and maxima of $f$:
\begin{equation*}
    x_{\max}(a) = \frac{1}{2} - \sqrt{\frac{1}{4} - \frac{1}{a}} \quad x_{\min}(a) = \frac{1}{2} + \sqrt{\frac{1}{4} - \frac{1}{a}}
\end{equation*}
\subsection{The order of  $\{ f_{min}(a), x_{\max}(a), b, x_{\min}(a), f_{max}(a)\}$ in the fixed learning rate regime.}
The following facts establish the preliminary necessary observation to prove  structural Lemma~\ref{lemma:structure}. More precisely, we can show the following straightforward facts:
\begin{fact}
For every $b \in (0,1)$, there is a $s_b > 4$ such that for all $a > s_b$
\begin{equation*}
    x_{\max}(a) < b < x_{\min}(a).
\end{equation*}
\end{fact}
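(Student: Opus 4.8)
The plan is to reduce the two-sided inequality to a single threshold on $a$ by isolating the square root and squaring, treating the two halves separately. For the left inequality, since $x_{\max}(a) = \tfrac12 - \sqrt{\tfrac14 - \tfrac1a}$, the claim $x_{\max}(a) < b$ is equivalent to $\tfrac12 - b < \sqrt{\tfrac14 - \tfrac1a}$. If $b \ge \tfrac12$ the left-hand side is nonpositive while the right-hand side is strictly positive for every $a > 4$, so the inequality holds automatically. If $b < \tfrac12$, both sides are positive, so squaring is reversible and yields $(\tfrac12 - b)^2 < \tfrac14 - \tfrac1a$, i.e. $\tfrac1a < \tfrac14 - (\tfrac12 - b)^2 = b(1-b)$, i.e. $a > \tfrac1{b(1-b)}$.

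For the right inequality, the claim $b < x_{\min}(a) = \tfrac12 + \sqrt{\tfrac14 - \tfrac1a}$ rewrites as $b - \tfrac12 < \sqrt{\tfrac14 - \tfrac1a}$. This is automatic when $b \le \tfrac12$, and when $b > \tfrac12$ it squares (reversibly, both sides positive) to exactly the same condition $a > \tfrac1{b(1-b)}$. Combining the two halves, it suffices to pick any $s_b$ with $s_b \ge \tfrac1{b(1-b)}$ and $s_b > 4$; since $b(1-b) \le \tfrac14$ on $(0,1)$ we have $\tfrac1{b(1-b)} \ge 4$, so e.g. $s_b = \tfrac1{b(1-b)} + 1$ works, the $+1$ serving only to force the strict bound $s_b > 4$ in the borderline case $b = \tfrac12$ (where $\tfrac1{b(1-b)} = 4$).

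There is no genuine obstacle here; the statement is essentially the observation that $x_{\max}(a) \to 0$ and $x_{\min}(a) \to 1$ as $a \to \infty$. The only points deserving a word of care are the case split on whether $b$ lies above or below $\tfrac12$ (needed so that squaring preserves the inequality direction) and the elementary estimate $\tfrac1{b(1-b)} \ge 4$, which guarantees that the requirement $s_b > 4$ can be met simultaneously with $s_b \ge \tfrac1{b(1-b)}$.
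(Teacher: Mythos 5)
Your proof is correct. It takes a slightly different route from the paper's: the paper simply observes that $x_{\max}(a)$ is decreasing with $\lim_{a\to\infty}x_{\max}(a)=0$ and $x_{\min}(a)$ is increasing with limit $1$, and concludes the existence of $s_b$ from these asymptotics without exhibiting it \textemdash\ which is exactly the observation you flag in your closing remark as the heart of the matter. You instead solve the inequalities directly, and the payoff is an explicit, sharp threshold $s_b = \tfrac{1}{b(1-b)}+1$ (indeed the condition $a > \tfrac{1}{b(1-b)}$ is exactly equivalent to the two-sided inequality when $b\neq\tfrac12$), together with the pleasant bookkeeping that $\tfrac{1}{b(1-b)}\ge 4$ automatically makes the constraint $s_b>4$ compatible. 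Your case split on the sign of $\tfrac12-b$ before squaring is handled correctly, and the implicit requirement $a>4$ (so that the square root is real) is guaranteed since $a>\tfrac{1}{b(1-b)}\ge 4$. The explicit constant is not needed anywhere downstream in the paper, but it is a strictly more informative conclusion at essentially no extra cost.
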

\begin{proof}
We have $x_{\max}(a)$ is decreasing and $\lim_{a \to \infty} x_{\max}(a) = 0$. Symmetrically we have $x_{\min}(a)$ is increasing and $\lim_{a \to \infty} x_{\min}(a) = 1$. The fact follows immediately.
\end{proof}
\begin{fact}
If $a>4$, then $f(x_{\min}(a), a, b) = f_{\min}(a) < f(x_{\max}(a), a, b) = f_{\max}(a)$.
\end{fact}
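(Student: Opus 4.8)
The plan is to read off the inequality directly from the sign of $f'$ that was just computed. Since the denominator $(x+(1-x)\exp(a(x-b)))^2$ of $f'(x,a,b)$ is strictly positive on $[0,1]$, the sign of $f'(x,a,b)$ equals the sign of the quadratic $q(x)\defeq ax^2-ax+1$, and in particular does not depend on $b$.

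First I would observe that for $a>4$ the discriminant $a^2-4a=a(a-4)$ is strictly positive, so $q$ has two distinct real roots, which are exactly $x_{\max}(a)=\tfrac12-\sqrt{\tfrac14-\tfrac1a}$ and $x_{\min}(a)=\tfrac12+\sqrt{\tfrac14-\tfrac1a}$. Since $0<\sqrt{\tfrac14-\tfrac1a}<\tfrac12$, both roots lie in $(0,1)$ and $x_{\max}(a)<x_{\min}(a)$; in particular $f$ is smooth at both of them. Because the leading coefficient $a$ of $q$ is positive, $q(x)<0$ strictly for every $x\in(x_{\max}(a),x_{\min}(a))$, hence $f'(\cdot,a,b)<0$ on that open interval.

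Consequently $f(\cdot,a,b)$ is continuous on $[x_{\max}(a),x_{\min}(a)]$ and strictly decreasing there, so evaluating at the two endpoints yields $f(x_{\min}(a),a,b)<f(x_{\max}(a),a,b)$, i.e.\ $f_{\min}(a)<f_{\max}(a)$, which is the claim. (The same increasing--decreasing--increasing pattern of $f$ on $[0,1]$ also justifies the names: $x_{\max}(a)$ is a local maximizer and $x_{\min}(a)$ a local minimizer of $f$, although this is not needed for the inequality.)

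There is essentially no substantive obstacle here; the only points to keep straight are $(i)$ that the two critical points genuinely lie in the interior $(0,1)$, so that $f$ is defined and differentiable at them, and $(ii)$ that the strict hypothesis $a>4$ is precisely what makes the two roots distinct and $q$ strictly negative between them, which in turn gives the \emph{strict} inequality in the conclusion.
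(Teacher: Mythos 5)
Your proof is correct and follows essentially the same route as the paper's: both arguments rest on $f$ being strictly decreasing on $[x_{\max}(a),x_{\min}(a)]$ because $f'$ has the sign of $ax^2-ax+1$, which is negative between its two roots when $a>4$. The paper states this in one sentence; you simply supply the explicit justification via the discriminant and the location of the roots in $(0,1)$.
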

\begin{proof}
Since the function is decreasing after the local maximum and there is no stationary point until the local minimum, the local minimum has smaller value.
\end{proof}
\begin{fact}
If $a > s_b$, then $f_{\min}(a) < b < f_{\max}(a)$.
\end{fact}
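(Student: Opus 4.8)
The plan is to exploit the fact that $b$ is itself a fixed point of $f(\cdot,a,b)$ that, for $a>s_b$, lies strictly inside the decreasing branch of the map; the claim is then just monotonicity applied to three points. First I would record the identity $f(b,a,b)=b$: substituting $x=b$ into \eqref{eq:transform} makes the exponential $\exp(a(b-b))=1$, so $f(b,a,b)=b/(b+(1-b))=b$.

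Next I would reuse the sign analysis of the derivative that is already available above, namely
$f'(x,a,b)=(ax^2-ax+1)\exp(a(x-b))/\bigl(x+(1-x)\exp(a(x-b))\bigr)^2$.
The denominator and the exponential factor are strictly positive, so $f'(x,a,b)$ has the same sign as $ax^2-ax+1$. Since $x_{\max}(a)$ and $x_{\min}(a)$ are precisely the two roots of $ax^2-ax+1=0$ (recall $a>4$), and the parabola opens upward, we get $ax^2-ax+1<0$ for every $x\in(x_{\max}(a),x_{\min}(a))$. As $f'$ is continuous, this shows $f(\cdot,a,b)$ is strictly decreasing on the closed interval $[x_{\max}(a),x_{\min}(a)]$.

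Now I would invoke the previous fact: for $a>s_b$ one has $x_{\max}(a)<b<x_{\min}(a)$, so $b$ lies strictly between the two critical points, i.e.\ strictly inside the decreasing branch. Applying strict monotonicity of $f(\cdot,a,b)$ on $[x_{\max}(a),x_{\min}(a)]$ to the chain of inequalities $x_{\max}(a)<b<x_{\min}(a)$ yields $f(x_{\max}(a),a,b)>f(b,a,b)>f(x_{\min}(a),a,b)$. By the definitions $f_{\max}(a)=f(x_{\max}(a),a,b)$, $f_{\min}(a)=f(x_{\min}(a),a,b)$ and the identity $f(b,a,b)=b$, this is exactly $f_{\min}(a)<b<f_{\max}(a)$.

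There is no real obstacle here; the only points requiring minor care are (i) ensuring that $f$ is decreasing on the \emph{closed} interval $[x_{\max}(a),x_{\min}(a)]$, so that the comparison is legitimate at the endpoints — this is immediate from continuity of $f'$ together with $f'<0$ on the open interval — and (ii) checking that the earlier fact genuinely places $b$ strictly inside this interval for all $a>s_b$, which it does since $x_{\max}(a)\downarrow 0$ and $x_{\min}(a)\uparrow 1$ as $a\to\infty$.
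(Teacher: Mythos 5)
Your proof is correct and follows essentially the same route as the paper's: both use the fixed-point identity $f(b,a,b)=b$ together with the strict monotonic decrease of $f(\cdot,a,b)$ on $[x_{\max}(a),x_{\min}(a)]$ (where $ax^2-ax+1<0$) and the previously established ordering $x_{\max}(a)<b<x_{\min}(a)$. You merely spell out the derivative-sign justification that the paper leaves implicit; no substantive difference.
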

\begin{proof}
For $a > s_b$ we know that $f_{\min}(a) < f(b, a, b) = b$ since  $f$ is decreasing in $[b,x_{\min}(a)]$. Symmetrically, $f_{\max}(a) > f(b, a, b) = b$ since again $f$ is decreasing in $[x_{\max}(a), b]$. 
\end{proof}

Having presented the aforementioned intuitive facts, we are ready to prove
Lemma~\ref{lemma:structure}
\begin{lemma}\label{app:lemma:structure}[Restate Lemma~\ref{lemma:structure}]
For every $b \in (0,1)$, there is a $a_b$ such that for all $a> a_b$
\begin{equation*}
    f_{min}(a) < x_{\max}(a)< b < x_{\min}(a) < f_{max}(a)
\end{equation*}
and $f_{\min}(a)$ is decreasing and  $f_{\max}(a)$ is increasing.
\end{lemma}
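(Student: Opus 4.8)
The plan is to leverage the three preliminary Facts established just above, which already supply $x_{\max}(a)<b<x_{\min}(a)$, $f_{\min}(a)<f_{\max}(a)$, and $f_{\min}(a)<b<f_{\max}(a)$ for all $a>s_b$. What remains is to prove, for $a$ large enough, the two \emph{outer} inequalities $f_{\min}(a)<x_{\max}(a)$ and $x_{\min}(a)<f_{\max}(a)$, together with the monotonicity of $f_{\min}$ and $f_{\max}$ in $a$; the threshold $a_b$ is then taken to be the maximum of $s_b$ with the finitely many $b$-dependent thresholds produced below.

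The main device is a change of variable. Set $t=x_{\max}(a)\in(0,1/2)$, so that $a\mapsto t$ is a strictly decreasing bijection of $(4,\infty)$ onto $(0,1/2)$, with $a=\tfrac{1}{t(1-t)}$ and $x_{\min}(a)=1-t$ (both critical points solve $ax(1-x)=1$ and have sum $1$). Substituting into $f(x,a,b)=\tfrac{x}{x+(1-x)\exp(a(x-b))}$ yields the closed forms $f_{\max}(a)=\bigl(1+\tfrac{1-t}{t}\exp(\tfrac{t-b}{t(1-t)})\bigr)^{-1}$ and $f_{\min}(a)=\bigl(1+\tfrac{t}{1-t}\exp(\tfrac{1-t-b}{t(1-t)})\bigr)^{-1}$. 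Clearing denominators and taking logarithms, $x_{\min}(a)<f_{\max}(a)$ becomes $2\ln\tfrac{t}{1-t}>\tfrac{t-b}{t(1-t)}$ and $f_{\min}(a)<x_{\max}(a)$ becomes $2\ln\tfrac{1-t}{t}<\tfrac{1-t-b}{t(1-t)}$. In each case, as $t\to0^{+}$ the left side blows up only logarithmically (like $\pm 2\ln t$) while the right side blows up like $-b/t$ respectively $+(1-b)/t$; since $1/t$ dominates $|\ln t|$ and $0<b<1$, both inequalities hold once $t$ is small enough, i.e.\ once $a$ exceeds a threshold depending on $b$.

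For the monotonicity I would work with $g_{+}(t)=\tfrac{1-t}{t}\exp(\tfrac{t-b}{t(1-t)})$ and $g_{-}(t)=\tfrac{t}{1-t}\exp(\tfrac{1-t-b}{t(1-t)})$, so that $f_{\max}=(1+g_{+})^{-1}$ and $f_{\min}=(1+g_{-})^{-1}$. Because $t$ is decreasing in $a$, ``$f_{\max}$ increasing in $a$'' is equivalent to ``$g_{+}$ increasing in $t$'' and ``$f_{\min}$ decreasing in $a$'' to ``$g_{-}$ decreasing in $t$''. Differentiating the logarithms gives $(\ln g_{+})'(t)=-\tfrac{1}{t}-\tfrac{1}{1-t}+\tfrac{t^{2}-2bt+b}{t^{2}(1-t)^{2}}$ and $(\ln g_{-})'(t)=\tfrac{1}{t}+\tfrac{1}{1-t}+\tfrac{-t^{2}+2(1-b)t-(1-b)}{t^{2}(1-t)^{2}}$; as $t\to0^{+}$ the quadratic-over-quartic terms behave like $+b/t^{2}$ and $-(1-b)/t^{2}$, which dominate the $\mp 1/t$ contributions, so $(\ln g_{+})'>0$ and $(\ln g_{-})'<0$ for $t$ small. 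Since $g_{\pm}>0$, this forces $g_{+}$ strictly increasing and $g_{-}$ strictly decreasing near $t=0$, hence the two monotonicity statements for $a$ beyond another $b$-dependent threshold.

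The only genuine subtlety is asymptotic bookkeeping: one must check that the $1/t$- and $1/t^{2}$-scale terms dominate the logarithmic and $1/t$-scale corrections uniformly on a whole interval $(0,t_b)$, and then translate $t_b$ back into a lower bound on $a$ via $a=1/(t(1-t))$. No new idea is needed — the lemma only asserts these properties for $a>a_b$, consistent with the fact that $f_{\min},f_{\max}$ need not be monotone for moderate $a$ — so the work is purely in keeping the several thresholds (from the two inequalities, the two monotonicity claims, and $s_b$) straight and taking their maximum to define $a_b$. An alternative to the $\ln$-derivative computation would be to differentiate $f_{\max}(a)$ and $f_{\min}(a)$ directly in $a$, using the critical-point relation $ax_{\max}^{2}-ax_{\max}+1=0$ (and its $x_{\min}$ analogue) to simplify the derivative; the parametrization by $t$ appears cleaner.
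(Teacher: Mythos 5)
Your proposal is correct, and it reaches the same conclusions by a partly different route. For the two outer inequalities $f_{\min}(a)<x_{\max}(a)$ and $x_{\min}(a)<f_{\max}(a)$, your argument is essentially the paper's in disguise: the paper clears denominators directly and observes that the term $x_{\max}^2(a)\exp\bigl(a(x_{\min}(a)-b)\bigr)$ tends to $+\infty$ while the remaining terms stay bounded; your substitution $t=x_{\max}(a)$, $a=1/(t(1-t))$, $x_{\min}(a)=1-t$ turns the same comparison into ``$1/t$ beats $\lvert\ln t\rvert$,'' which is the identical domination phenomenon in cleaner coordinates. The genuine divergence is in the monotonicity of $f_{\min}$ and $f_{\max}$. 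The paper uses an envelope-type argument: writing $\frac{d}{da}f_{\min}(a)=\partial_{x}f\cdot x_{\min}'(a)+\partial_{a}f$ and noting that $\partial_{x}f$ vanishes at the critical point $x_{\min}(a)$, only the explicit $a$-dependence survives, and $\partial_{a}f(x_{\min}(a),a,b)<0$ follows immediately from $x_{\min}(a)>b$ (the exponential in the denominator increases in $a$). This gives monotonicity with no asymptotic bookkeeping, valid as soon as $x_{\min}(a)>b$. Your route instead computes $(\ln g_{\pm})'(t)$ explicitly and checks that the $\pm 1/t^{2}$-scale terms (with coefficients $b$ and $-(1-b)$, both verified correct) dominate near $t=0^{+}$; this is more computational but self-contained and makes the threshold in principle explicit. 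Both suffice for the lemma as stated, since it only asserts the properties for $a>a_b$; if you want the shorter argument, the first-order condition $\partial_{x}f(x_{\min}(a),a,b)=0$ is the one idea your write-up does not exploit.
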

\begin{proof}
$f_{min}(a) < x_{\max}(a)$ is equivalent to
\begin{equation*}
     x_{\min}(a) - x_{\min}(a) x_{\max}(a) - x_{\max}^2(a)\exp\left[\alpha (x_{\min}(a) -b)\right] < 0 
\end{equation*}
Observe that the first two terms are bounded and that for the third term we have
\begin{equation*}
    \lim_{a \to \infty} x_{\max}^2(a)\exp\left[\alpha (x_{\min}(a) -b)\right] = \infty
\end{equation*}
So we can pick an $a_b$ large enough so that $a> a_b$ implies that the inequality holds. The case of $x_{\min}(a) < f_{max}(a)$ is symmetric. Moving on to the monotonicity of $f_{\min}(a)$, to avoid overloading notation let us call $x_1$ the $x$ argument of $f$ and $x_2$ its $a$ argument
\begin{equation*}
    \frac{\partial f_{\min}(a)}{\partial a} = \frac{\partial f ( x_{\min}(a), a, b)}{\partial x_1} \frac{\partial x_{\min} (a)}{\partial a} + \frac{\partial f ( x_{\min}(a), a, b)}{\partial x_2}
\end{equation*}
Observe that since $x_{\min}(a)$ is a local minimum, we have that 
\begin{equation*}
    \frac{\partial f ( x_{\min}(a), a, b)}{\partial x_1} = 0
\end{equation*}
Additionally we can pick $a_b$ large enough so that $a> a_b$ implies $x_{\min}(a) > b$. Let us write $f ( x_{\min}(a), a, b)$ 
\begin{equation*}
    f ( x_{\min}(a), a, b) = \frac{x_{\min}(a)}{x_{\min}(a) + (1-x_{\min}(a)) \exp(a(x_{\min}(a)-b))}.
\end{equation*}
Treating $x_{\min}(a)$ as a constant independent of $a$, the exponential in the denominator is an increasing function of $a$ so $f ( x_{\min}(a), a, b)$ is decreasing with respect to $a$. This makes 
\begin{equation*}
    \forall a > a_b \quad \frac{\partial f ( x_{\min}(a), a, b)}{\partial x_2} < 0
\end{equation*}
As a result we have that 
\begin{equation}
 \forall a > a_b  \quad \frac{\partial f_{\min}(a)}{\partial a} < 0   
\end{equation}
and  $f_{\min}(a)$ is decreasing. The case of $f_{\max}(a)$ is symmetric. We take the maximum of all the required $a_b$ to get the result.
\end{proof}

\subsection{Forward invariant, Perpetual \& Absorbing sets for fixed learning rate.}
Having settled the order among $ \{ f_{min}(a), x_{\max}(a), b, x_{\min}(a), f_{max}(a)\} $ for high enough learning rates, we are ready to prove the  $(i)$ forward invariant, $(ii)$ perpetual and $(iii)$ absorbing property of $\mathcal{F}(a)=[f_{\min}(a), f_{\max}(a)]$.

For the sake of readability, we recall first the formal definitions of these properties:
\begin{enumerate}
\item 
\textbf{A forward invariant set}: a set of states such that if the system starts in any state in the set, it will remain in the set for all future time.
\item 
\textbf{A perpetual set}: a special case of a forward invariant set, whose image consists itself.
\item 
\textbf{An (global/local) absorbing set}: a forward invariant set that also includes (globally/locally) all possible future states of the system.
\end{enumerate}

\begin{lemma}\label{app:lemma:forward}[Restated Lemma~\ref{lemma:forward}]
For every $b \in (0,1)$ there is a $s_b$ such that  $\mathcal{F}(a)$ is forward invariant for all $a> s_b$, i.e. $$
    x \in \mathcal{F}(a) \Rightarrow f(x, a, b) \in \mathcal{F}(a).$$
\end{lemma}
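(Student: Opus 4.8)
The plan is to exploit the precise shape of the one-dimensional map $f(\cdot,a,b)$ on $[0,1]$ together with the ordering supplied by Lemma~\ref{lemma:structure}, and to reduce forward invariance to checking the images of the two endpoints of $\mathcal{F}(a)=[f_{\min}(a),f_{\max}(a)]$. From the sign of $f'(x,a,b)$ (whose numerator is the upward parabola $ax^{2}-ax+1$), for $a>4$ the map $f$ is strictly increasing on $[0,x_{\max}(a)]$, strictly decreasing on $[x_{\max}(a),x_{\min}(a)]$, and strictly increasing on $[x_{\min}(a),1]$, so $x_{\max}(a)$ is a local maximum with value $f_{\max}(a)$ and $x_{\min}(a)$ a local minimum with value $f_{\min}(a)$, and $f$ maps $(0,1)$ into $(0,1)$. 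Taking $s_b$ to be the constant $a_b$ of Lemma~\ref{lemma:structure}, the ordering $f_{\min}(a)<x_{\max}(a)<b<x_{\min}(a)<f_{\max}(a)$ shows that $\mathcal{F}(a)$ contains both critical points in its interior and that $f_{\min}(a),f_{\max}(a)\in(0,1)$. Since $f$ restricted to $\mathcal{F}(a)$ is continuous with this up--down--up profile, $\max_{x\in\mathcal{F}(a)}f(x)=\max\{f_{\max}(a),\,f(f_{\max}(a))\}$ and $\min_{x\in\mathcal{F}(a)}f(x)=\min\{f_{\min}(a),\,f(f_{\min}(a))\}$; hence it suffices to prove $f(f_{\max}(a))\le f_{\max}(a)$ and $f(f_{\min}(a))\ge f_{\min}(a)$.

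For this I would first note that the only fixed point of $f(\cdot,a,b)$ in $(0,1)$ is $x=b$: solving $f(x,a,b)=x$ for $x\in(0,1)$ reduces to $(1-x)\bigl(e^{a(x-b)}-1\bigr)=0$, i.e.\ $x=b$. Consider $\phi(x)=f(x,a,b)-x$ on the right branch $[x_{\min}(a),1]$. Since $f_{\min}(a)<b<x_{\min}(a)$ we have $\phi(x_{\min}(a))=f_{\min}(a)-x_{\min}(a)<0$, and $\phi$ has no zero in $(x_{\min}(a),1)$ because $b\notin(x_{\min}(a),1)$; by the intermediate value theorem $\phi<0$ throughout $[x_{\min}(a),1)$. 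As $f_{\max}(a)\in(x_{\min}(a),1)$, this gives $f(f_{\max}(a))<f_{\max}(a)$. Symmetrically, on the left branch $[0,x_{\max}(a)]$ we have $\phi(x_{\max}(a))=f_{\max}(a)-x_{\max}(a)>0$ while $\phi$ has no zero in $(0,x_{\max}(a))$ (again $b$ lies outside), so $\phi>0$ on $(0,x_{\max}(a)]$; since $f_{\min}(a)\in(0,x_{\max}(a))$ we obtain $f(f_{\min}(a))>f_{\min}(a)$.

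Combining the two bounds yields $\max_{\mathcal{F}(a)}f=f_{\max}(a)$ and $\min_{\mathcal{F}(a)}f=f_{\min}(a)$, hence $f(\mathcal{F}(a))\subseteq[f_{\min}(a),f_{\max}(a)]=\mathcal{F}(a)$, which is the claim. Every computation here (the derivative, the fixed-point equation, the elementary fact that $f$ maps $(0,1)$ into $(0,1)$) is routine; the only load-bearing input is Lemma~\ref{lemma:structure}, which places $b$ strictly between the two critical points and both critical \emph{values} strictly outside $[x_{\max}(a),x_{\min}(a)]$, so that the endpoints $f_{\min}(a),f_{\max}(a)$ land in exactly the outer monotone branches where $f$ sits on the correct side of the diagonal. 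Accordingly the main obstacle is merely organizing the case analysis over the three monotone branches of $f$ and invoking the "unique interior fixed point $b$" fact on each outer branch — no estimate beyond Lemma~\ref{lemma:structure} is required.
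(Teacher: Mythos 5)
Your proof is correct and follows essentially the same route as the paper's: reduce forward invariance to the images of the endpoints and the two critical points via the up--down--up monotonicity profile, then use the fact that $f$ lies above the diagonal on $(0,b)$ and below it on $(b,1)$ (which you derive explicitly from the unique interior fixed point $x=b$, where the paper simply asserts it). The only cosmetic difference is that you spell out the threshold as $s_b=a_b$ and justify the sign of $f(x)-x$ by an IVT argument, making explicit a step the paper leaves implicit.
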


\begin{proof}
By continuity of $f$, we only need to consider four points to determine the image of $[f_{\min}(a), f_{\max}(a)]$: $f_{\min}(a)$, $f_{\max}(a)$ as well as $x_{\max}(a)$, $x_{\min}(a)$. Since $f(x, a, b) \geq x$ in $(0, b)$ and $f_{\max}(a)$ is the maximum in this interval, we know that
\begin{equation*}
    f_{\max}(a) \geq f(f_{\min}(a), a, b) \geq f_{\min}(a)
\end{equation*}
Since $f(x, a, b) \leq x$ in $(b, 1)$ and $f_{\min}(a)$ is the minimum in this interval, we know that
\begin{equation*}
    f_{\max}(a) \geq f(f_{\max}(a), a, b) \geq f_{\min}(a)
\end{equation*}
Of course the images of the local optima $x_{\max}(a)$, $x_{\min}(a)$ trivially belong to $[f_{\min}(a), f_{\max}(a)]$.
\end{proof}
But even points that are outside of this interval are monotonically attracted to it without overshooting. We have the following lemma that shows this monotic attracting property
\begin{lemma}
Let $a> s_b$, then
\begin{align*}
    x \in (0, f_{\min}(a)) &\implies f(x, a, b) \in (x, f_{\max}(a)] \\
    x \in (f_{\max}(a), 1) &\implies f(x, a, b) \in [f_{\min}(a), x).
\end{align*}
\end{lemma}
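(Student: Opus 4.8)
The plan is to reduce the claim to two facts already in hand and to avoid the decreasing middle branch of $f$ entirely. The first ingredient is the sign of $f(x,a,b)-x$: the direct computation implicit in the proof of \cref{lemma:forward} gives $f(x,a,b)-x=\frac{x(1-x)\left(1-e^{a(x-b)}\right)}{x+(1-x)e^{a(x-b)}}$, whence $f(x,a,b)>x$ on $(0,b)$ and $f(x,a,b)<x$ on $(b,1)$. The second ingredient is that, for $a>s_b$, the Facts above supply $x_{\max}(a)<b<x_{\min}(a)$ and $f_{\min}(a)<b<f_{\max}(a)$; I would use these to conclude that $f(\cdot,a,b)$ rises on $[0,x_{\max}(a)]$ and falls on $[x_{\max}(a),b]$, so its maximum over $[0,b]$ is $f_{\max}(a)$, and symmetrically that its minimum over $[b,1]$ is $f_{\min}(a)$ (attained at $x_{\min}(a)$), since $x_{\max}(a)$ and $x_{\min}(a)$ are the only critical points of $f$.

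With these in place the two implications follow immediately. For $x\in(0,f_{\min}(a))$: since $f_{\min}(a)<b$, the point lies in $(0,b)$, so $f(x,a,b)>x$; and since it also lies in $[0,b]$, the monotonicity structure gives $f(x,a,b)\le f_{\max}(a)$ — hence $f(x,a,b)\in(x,f_{\max}(a)]$. The case $x\in(f_{\max}(a),1)$ is the mirror image: $f_{\max}(a)>b$ places $x$ in $(b,1)$, giving $f(x,a,b)<x$, while $x\in[b,1]$ gives $f(x,a,b)\ge f_{\min}(a)$, so $f(x,a,b)\in[f_{\min}(a),x)$.

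I do not expect a genuine obstacle here; the only point deserving care is why the bound toward $\mathcal{F}(a)$ should be written with a closed bracket. Equality $f(x,a,b)=f_{\max}(a)$ forces $x=x_{\max}(a)$ (and $f(x,a,b)=f_{\min}(a)$ forces $x=x_{\min}(a)$), and under the weaker hypothesis $a>s_b$ one cannot exclude $x_{\max}(a)\le f_{\min}(a)$ — that ordering only becomes available once $a>a_b$ via \cref{lemma:structure} — so the extreme value may genuinely be attained; closing the bracket keeps the statement valid throughout the regime $a>s_b$. For $a>a_b$ one in fact gets strictness on that side too, since then $x<f_{\min}(a)<x_{\max}(a)$, respectively $x>f_{\max}(a)>x_{\min}(a)$, sits strictly inside a branch of strict monotonicity of $f$.
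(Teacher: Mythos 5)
Your proposal is correct and follows essentially the same route as the paper: the paper's own proof simply notes that $x < f_{\min}(a) < b$ forces $f(x,a,b) > x$ (sign of $f-x$ on $(0,b)$) and $f(x,a,b) \le f_{\max}(a)$ ($f_{\max}(a)$ being the maximum of $f$ on that interval), with the second implication handled by symmetry. Your additional remark about why the bracket toward $\mathcal{F}(a)$ must be closed in the regime $a > s_b$ \textendash\ since the ordering $f_{\min}(a) < x_{\max}(a)$ is only guaranteed for $a > a_b$ \textendash\ is a correct and slightly finer observation than the paper makes, but it does not change the argument.
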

\begin{proof}
We will prove the first one, the second one is entirely symmetric. If $x < f_{\min}(a)$ then $x<b$ and thus $f(x, a, b) > x$ and $f(x,a, b) \leq f_{\max}(a)$. The first implication follows immediately. 
\end{proof}

Consequently, we have the following lemma:
\begin{lemma}[Restated Lemma~\ref{lemma:perpetual}]\label{app:lemma:perpetual}
For $a > a_b$, $\mathcal{F}(a)$ is surjective, i.e. 
$
    f(\mathcal{F}(a), a, b) = \mathcal{F}(a).
$
\end{lemma}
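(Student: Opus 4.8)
The plan is to read ``$f$ is surjective on $\mathcal{F}(a)$'' as the set identity $f(\mathcal{F}(a),a,b)=\mathcal{F}(a)$ and to obtain the two inclusions separately. The inclusion $f(\mathcal{F}(a),a,b)\subseteq\mathcal{F}(a)$ is already in hand: it is exactly the forward-invariance of $\mathcal{F}(a)$ established in Lemma~\ref{lemma:forward}. (Before anything else I would, without loss of generality, enlarge the threshold $a_b$ of Lemma~\ref{lemma:structure} so that $a_b\ge s_b$, so that Lemma~\ref{lemma:forward} also applies for every $a>a_b$; this is harmless since all the thresholds in play are ``for all sufficiently large $a$'' statements and we may just take their maximum.) So the only thing left to prove is the reverse inclusion $\mathcal{F}(a)\subseteq f(\mathcal{F}(a),a,b)$.

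For the reverse inclusion I would use an intermediate-value argument powered by the ordering in Lemma~\ref{lemma:structure}. Since $f(\cdot,a,b)$ is continuous and $\mathcal{F}(a)=[f_{\min}(a),f_{\max}(a)]$ is a compact interval, the image $f(\mathcal{F}(a),a,b)$ is again a compact interval, call it $\mathcal{J}$. By Lemma~\ref{lemma:structure}, for $a>a_b$ we have $f_{\min}(a)<x_{\max}(a)<b<x_{\min}(a)<f_{\max}(a)$, so both critical points $x_{\max}(a)$ and $x_{\min}(a)$ lie inside $\mathcal{F}(a)$. By the very definition $f_{\max}(a)=f(x_{\max}(a),a,b)$ and $f_{\min}(a)=f(x_{\min}(a),a,b)$, hence both endpoints $f_{\min}(a)$ and $f_{\max}(a)$ belong to $\mathcal{J}$; being an interval, $\mathcal{J}$ must then contain the whole segment $[f_{\min}(a),f_{\max}(a)]=\mathcal{F}(a)$. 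Combining $\mathcal{F}(a)\subseteq f(\mathcal{F}(a),a,b)$ with the forward-invariance inclusion yields $f(\mathcal{F}(a),a,b)=\mathcal{F}(a)$, which is the claim.

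I do not expect a genuine obstacle here: the substantive work is entirely carried by Lemmas~\ref{lemma:structure} and~\ref{lemma:forward}, and what remains is one application of connectedness (the intermediate value theorem). The only point deserving a line of care is the bookkeeping of the constants --- making sure the $a_b$ quoted in the statement dominates the $s_b$ of Lemma~\ref{lemma:forward} --- which, as noted above, costs nothing.
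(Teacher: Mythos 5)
Your proposal is correct and follows essentially the same route as the paper: the inclusion $f(\mathcal{F}(a),a,b)\subseteq\mathcal{F}(a)$ comes from the forward-invariance Lemma~\ref{lemma:forward}, and the reverse inclusion follows because Lemma~\ref{lemma:structure} places the critical points $x_{\max}(a),x_{\min}(a)$ inside $\mathcal{F}(a)$, so their images $f_{\max}(a),f_{\min}(a)$ lie in the (connected) image and hence so does all of $\mathcal{F}(a)$. Your remark about enlarging $a_b$ to dominate $s_b$ is a point the paper's own proof silently elides, so it is a welcome clarification rather than a deviation.
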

\begin{proof}
We already know that the right hand side is a super set of the left hand side by \Cref{app:lemma:forward}. But by \Cref{app:lemma:structure} we have that $x_{\max}(a)$ and $x_{\min}(a)$ belong to $[f_{\min}(a), f_{\max}(a)]$. So the left hand is a super set of the left hand side. Thus the result follows.
\end{proof}

More generally we can prove the following absorbing condition:
\begin{lemma}[Restated Lemma~\ref{lemma:absorbption}]\label{app:lemma:absorbption}
Let $a> s_b$,  $
      x_0 \in [\gamma, \zeta] \Rightarrow \exists n_0 : \ \forall n \geq n_0 \text{ s.t } \ x_{n} \in \mathcal{F}(a)
$,  for every $[\gamma, \zeta] \subset (0,1)$.
\end{lemma}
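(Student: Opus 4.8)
The plan is to show that any point in $(0,1)$ is eventually pulled into $\mathcal{F}(a)$ and never escapes, using the monotone-attraction facts already established. First I would split $(0,1)$ into the three regions $(0, f_{\min}(a))$, $\mathcal{F}(a) = [f_{\min}(a), f_{\max}(a)]$, and $(f_{\max}(a), 1)$. By Lemma~\ref{app:lemma:forward} (forward invariance), once an iterate lands in $\mathcal{F}(a)$ it stays there forever, so it suffices to argue that every trajectory starting in $[\gamma,\zeta]$ enters $\mathcal{F}(a)$ after finitely many steps, with a bound $n_0$ uniform over the compact interval $[\gamma,\zeta]$.

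For a single starting point $x_0 \in (0, f_{\min}(a))$, the preceding lemma on monotone attraction gives $f(x,a,b) \in (x, f_{\max}(a)]$, so the iterates $x_n$ form a strictly increasing sequence as long as they remain below $f_{\min}(a)$; being bounded above, the sequence converges to some limit $x^\star \le f_{\min}(a)$. If the trajectory never entered $\mathcal{F}(a)$, then $x^\star$ would be a fixed point of $f$ in $(0, f_{\min}(a)]$ with $f(x,a,b) > x$ for all $x$ below it — but $f$ has no fixed point in $(0,b)$ other than by the strict inequality $f(x,a,b)>x$ there (the only interior fixed point is $x=b > f_{\min}(a)$), a contradiction; hence the trajectory enters $[f_{\min}(a), f_{\max}(a)]$ at some finite step. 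The region $(f_{\max}(a),1)$ is handled symmetrically with a strictly decreasing sequence. This establishes the pointwise absorption.

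To upgrade to a uniform $n_0$ over $[\gamma,\zeta]$, I would invoke a compactness argument: the ``entry time'' $\tau(x_0) = \min\{n : x_n(x_0) \in \mathcal{F}(a)\}$ is finite for every $x_0 \in [\gamma,\zeta]$, and for each fixed $n$ the set $\{x_0 : x_n(x_0) \in \mathcal{F}(a)\}$ is open in $(0,1)$ (preimage of the interval $\mathcal{F}(a)$'s interior under the continuous map $f^n$, plus boundary handling — alternatively note $\tau$ is upper semicontinuous), so these sets form an open cover of the compact set $[\gamma,\zeta]$; a finite subcover yields $n_0 = \max$ of the finitely many indices, and forward invariance ensures $x_n \in \mathcal{F}(a)$ for all $n \ge n_0$.

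The main obstacle is the uniformity of $n_0$: a naive pointwise argument does not immediately give a bound valid across the whole interval, and one must be slightly careful near the endpoints $\gamma, \zeta$, since the convergence rate of the monotone sequences degrades as $x_0 \to 0$ or $x_0 \to 1$. The compactness/open-cover route circumvents this cleanly, but one should double-check that the entry-time sets are genuinely open (or that $\tau$ is upper semicontinuous), which follows from continuity of each iterate map $f^n(\cdot,a,b)$ together with the fact that $\mathcal{F}(a)$ is forward invariant so landing inside at step $n$ guarantees staying inside thereafter.
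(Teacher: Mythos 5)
Your proposal is correct, but it reaches the conclusion by a genuinely different route than the paper. For the key step — showing a trajectory starting in $(0,f_{\min}(a))$ must leave that region in finite time — the paper does not use a monotone-limit/fixed-point contradiction followed by compactness. Instead it observes that $f(x,a,b)-x$ is continuous and strictly positive on the closure of $\Delta_0=[\gamma,f_{\min}(a))$, hence bounded below by some $\xi>0$ depending only on $\gamma$ (and $a,b$); each iterate therefore gains at least $\xi$ while it remains in $\Delta_0$, so after at most $(f_{\min}(a)-\gamma)/\xi$ steps the trajectory must have entered $\mathcal{F}(a)$, and forward invariance (Lemma~\ref{app:lemma:forward}) finishes the argument. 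This quantitative bound is automatically uniform over $[\gamma,\zeta]$, so the paper never needs your open-cover step. Your softer argument works too, but the part you flagged is the one real wrinkle: $\mathcal{F}(a)$ is closed, so $\{x_0: x_n(x_0)\in\mathcal{F}(a)\}$ is closed rather than open, and $\tau$ as you defined it is not obviously upper semicontinuous. The fix is to cover $[\gamma,\zeta]$ by the open sets $(f^n)^{-1}(\operatorname{int}\mathcal{F}(a))$ and check that every trajectory entering $\mathcal{F}(a)$ reaches its \emph{interior} within one further step: if $x_n=f_{\min}(a)$ then, since $f_{\min}(a)<x_{\max}(a)<b$ and $f$ is increasing on $(0,x_{\max}(a))$, one gets $f_{\min}(a)<x_{n+1}<f_{\max}(a)$, and symmetrically for $f_{\max}(a)$; this uses the ordering from Lemma~\ref{app:lemma:structure}. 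With that observation your compactness argument closes cleanly, but the paper's uniform-increment bound $\xi$ is the more economical path and is also the template reused verbatim in the dynamic-rate analogue (Lemma~\ref{app:lemma:absorption-dynamic}).
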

\begin{proof}
If $x_0 \in [f_{\min}(a), f_{\max}(a)]$ then the results follows trivially for $n_0 = 0$. Let us take the case
\begin{equation*}
    x_0 \in \left[\gamma, f_{\min}(a)\right) = \Delta_0
\end{equation*}
We know that in $\Delta_0$ we have that $f(x, a, b) > x$. We can define a uniform bound on their difference
\begin{equation*}
    \min_{x \in \Delta_0} \left[ f(x, a, b) -x \right] = \xi > 0
\end{equation*}
We now know that
\begin{equation*}
    x_1 \in \left[\gamma + \xi, f_{\max}(a)\right)
\end{equation*}
If $x_1 \in \Delta$ then the result follows trivially for $n_0 = 1$. Otherwise we have that
\begin{equation*}
    x_1 \in \left[\gamma + \xi, f_{\min}(a)\right) \subset \Delta_0
\end{equation*}
Applying recursively, either there is a $n_0$ such that $x_{n_0} \in \Delta$ and the theorem follows trivially or for all $n$ we have that $x_n$ stays in a subset of $\Delta_0$
\begin{equation*}
    x_n \in \left[\gamma + n\xi, f_{\min}(a)\right) \subset \Delta_0
\end{equation*}
But this is impossible since there is a $n_0>0$ such that 
\begin{equation*}
 \forall n \geq n_0 : \gamma + n\xi >   f_{\min}(a) 
\end{equation*}
We are now left with the symmetric case of 
\begin{equation*}
    x_0 \in \left(f_{\max}(a), \zeta\right]
\end{equation*}
which we can handle just like above.
\end{proof}

\subsection{Two period trajectories in MWU maps.}
We start with a fundamental observation from Calculus of continuous injective function
\begin{claim}\label{app:claim:period-two}
Let $g$ be a continuous decreasing function on a closed interval $I$ and that there exists a $x_0 \in I$ such that $\forall n: g^n(x_0) \in I$. Then 
either $g^n(x_0)$ converges to a fixed point $x^\star$ or to a 2-period trajectory.
\end{claim}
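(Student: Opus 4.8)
The plan is to exploit the classical fact that a continuous decreasing function $g$ on a closed interval $I$ has the property that $g^2$ is continuous and \emph{increasing} on $I$. Indeed, for $u < v$ in $I$ we have $g(u) > g(v)$, hence $g(g(u)) < g(g(v))$, so $g^2$ is monotone increasing wherever both compositions land in $I$; here we are told $g^n(x_0) \in I$ for all $n$, so the whole orbit stays in $I$ and all these compositions make sense. The strategy is then to analyze the orbit of $x_0$ under $g^2$ (a one-dimensional increasing map on a closed interval), and separately track how $g$ interleaves the even- and odd-indexed subsequences.

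First I would record the monotone-convergence step. Fix $x_0$ and set $y_k = g^{2k}(x_0)$. Since $g^2$ is increasing, the sequence $(y_k)$ is monotone: comparing $y_0 = x_0$ with $y_1 = g^2(x_0)$, if $x_0 \le g^2(x_0)$ then applying the increasing map $g^2$ repeatedly gives $y_k \le y_{k+1}$ for all $k$, and similarly if $x_0 \ge g^2(x_0)$ the sequence is nonincreasing. A bounded monotone sequence in the closed interval $I$ converges to some limit $p \in I$, and by continuity of $g^2$ we get $g^2(p) = p$. The same argument applied to $x_0' = g(x_0)$ (whose $g^2$-orbit is the odd-indexed subsequence $g^{2k+1}(x_0)$, also entirely in $I$) shows that $g^{2k+1}(x_0) \to q$ for some $q \in I$ with $g^2(q) = q$; and continuity of $g$ forces $g(p) = q$ and $g(q) = p$.

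It remains to separate the two cases. If $p = q$, then both subsequences converge to the same point $x^\star := p = q$, so $g^n(x_0) \to x^\star$, and $g(x^\star) = q = x^\star$, i.e., $x^\star$ is a fixed point of $g$. If $p \neq q$, then $\{p, q\}$ is exactly a $2$-periodic orbit of $g$ (it is not a fixed point since $g(p) = q \neq p$), and the even iterates converge to $p$ while the odd ones converge to $q$, which is precisely the statement that $g^n(x_0)$ converges to a $2$-period trajectory. This dichotomy is exactly the claim.

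The main obstacle — really the only subtlety — is being careful about what ``converges to a 2-period trajectory'' means and ensuring the monotonicity argument for $g^2$ is correctly set up despite $g$ itself being non-monotone along the orbit; once one commits to passing to $g^2$ and to the even/odd splitting, everything reduces to the textbook monotone-convergence theorem plus continuity. One should also note that injectivity of $g$ (implied by ``decreasing'') is what guarantees $g(p) = q$ and $g(q) = p$ are consistent and that $p \ne q$ genuinely yields a nondegenerate $2$-cycle rather than a hidden fixed point; this is where the hypothesis that $g$ is strictly decreasing (not merely nonincreasing) is used.
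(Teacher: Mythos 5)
Your proof is correct, but it follows a genuinely different route from the paper's. You pass to the second iterate $g^2$, observe that the composition of a decreasing map with itself is increasing, and then invoke the monotone convergence theorem on the even- and odd-indexed subsequences $g^{2k}(x_0)$ and $g^{2k+1}(x_0)$; continuity of $g$ then links the two limits $p,q$ via $g(p)=q$, $g(q)=p$, and the dichotomy $p=q$ versus $p\neq q$ gives the fixed point or the $2$-cycle. The paper instead works with the full orbit and argues via $\liminf$ and $\limsup$: taking a subsequence converging to the $\liminf$ and pushing it forward one step gives $\limsup \geq g(\liminf)$, while taking a subsequence converging to the $\limsup$ and pulling it back one step (using that a strictly decreasing continuous $g$ is invertible on $I$) gives $\limsup \leq g(\liminf)$; the symmetric argument yields $\liminf = g(\limsup)$, and the pair $\{\liminf,\limsup\}$ is then either a fixed point or a $2$-cycle. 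Your version buys a slightly sharper conclusion essentially for free --- genuine convergence of the even and odd subsequences to $p$ and $q$ respectively, which is really what ``converges to a $2$-period trajectory'' ought to mean --- whereas the paper's version only identifies the two limit points of the orbit and avoids having to set up the monotonicity of the interleaved subsequences. Both arguments use strict decrease of $g$ in an essential way (you for injectivity and the order reversal, the paper for invertibility), and both are elementary; yours is the more standard textbook path.
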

\begin{proof}
Let us take $g^{n_j}(x_0)$ a subsequence that converges to $\liminf_{n \to \infty} g^n(x_0)$.
\begin{equation*}
    \limsup_{n \to \infty} g^n(x_0) \geq \lim_{j \to \infty} g^{n_j+1}(x_0)  =  g(\liminf_{n \to \infty} g^{n}(x_0))  
\end{equation*}
Now let us take $g^{n_k}(x_0)$ a subsequence that converges to  $\limsup_{n \to \infty} g^n(x_0)$. Since $g$ is decreasing and thus invertible in $I$ we know that $g^{n_k -1}(x_0)$ also converges
\begin{equation*}
    \limsup_{n \to \infty} g^n(x_0) = g(\lim_{k \to \infty} g^{n_k-1}(x_0)) \leq   g(\liminf_{n \to \infty} g^{n}(x_0))
\end{equation*}
The two steps clearly imply that
\begin{equation*}
    \limsup_{n \to \infty} g^n(x_0) = g(\liminf_{n \to \infty} g^{n}(x_0))
\end{equation*}
Symmetrically, with the same arguments we have that
\begin{equation*}
    \liminf_{n \to \infty} g^n(x_0) = g(\limsup_{n \to \infty} g^{n}(x_0))
\end{equation*}
Thus, if we denote $x_1=g(\limsup_{n \to \infty} g^n(x_0)) = \liminf_{n \to \infty} g^n(x_0)$ and $x_2=g(\liminf_{n \to \infty} g^n(x_0)) = \limsup_{n \to \infty} g^n(x_0)$ then either $x_1=x_2=x^\star$, which consists a fixed point for map $g$ \textendash $g(x^\star)=x^\star$,  or $x_1\neq x_2$ consist a 2-period trajectory $\{x_1=g(x_2),x_2=g(x_1)\}$.  
\end{proof}

Interestingly, if we restrict our attention to the interval $\mathcal{D}=[x_{\max}(a), x_{\min}(a)]$, we can show that there is no 2-period trajectory:
\begin{lemma}\label{app:lemma:period-two-trajectory}
For every $b \in (0,1)$, there is a $\ell_b$ such that for all $a > \ell_b$, there is no period two trajectory for which both endpoints belong to $[x_{\max}(a), x_{\min}(a)]$.
\end{lemma}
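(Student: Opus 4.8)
\emph{Proof plan.} The key is to reduce the statement ``$\{p,q\}$ is a genuine $2$-cycle of $f(\cdot,a,b)$'' to a one-variable equation by exploiting the log-linear form of the map: for $x\in(0,1)$ one has $\tfrac{f(x,a,b)}{1-f(x,a,b)}=\tfrac{x}{1-x}\exp\!\big(a(b-x)\big)$. Hence, if $f(p,a,b)=q$ and $f(q,a,b)=p$ with $p,q\in(0,1)$, multiplying the two identities yields $1=\exp\!\big(a(2b-p-q)\big)$, so necessarily $p+q=2b$. Writing $p=b-t$ and $q=b+t$ with $t>0$ (a genuine $2$-cycle has $p\neq q$), either identity reduces to $e^{at}=R(t)$, where
\begin{equation*}
R(t):=\frac{(b+t)(1-b+t)}{(b-t)(1-b-t)},\qquad t\in\bigl(0,\min\{b,1-b\}\bigr).
\end{equation*}
Since $x_{\min}(a)=1-x_{\max}(a)$, and using \cref{lemma:structure} so that $x_{\max}(a)<\min\{b,1-b\}$ for $a$ large, the requirement $p,q\in\mathcal{D}=[x_{\max}(a),x_{\min}(a)]$ is equivalent to $0<t\le t_0(a):=\min\{b,1-b\}-x_{\max}(a)$.

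Accordingly I would set $\psi_a(t):=at-\ln R(t)$ on $[0,\min\{b,1-b\})$; a $2$-cycle inside $\mathcal{D}$ is exactly a zero of $\psi_a$ in $(0,t_0(a)]$, and I claim there is none once $a$ is large. A direct computation gives
\begin{equation*}
(\ln R)''(t)=\Bigl[\tfrac{1}{(b-t)^2}+\tfrac{1}{(1-b-t)^2}\Bigr]-\Bigl[\tfrac{1}{(b+t)^2}+\tfrac{1}{(1-b+t)^2}\Bigr]>0
\end{equation*}
on $(0,\min\{b,1-b\})$, so $\ln R$ is strictly convex, hence $\psi_a$ is strictly concave, with $\psi_a(0)=0$. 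Consequently it is enough to show $\psi_a(t_0(a))>0$: concavity then gives $\psi_a(t)\ge \tfrac{t}{t_0(a)}\,\psi_a(t_0(a))>0$ for every $t\in(0,t_0(a)]$, so $\psi_a$ has no zero there.

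It remains to estimate $\psi_a(t_0(a))=a\,t_0(a)-\ln R(t_0(a))$ using the elementary bounds for $x_{\max}(a)=\tfrac{2}{a(1+\sqrt{1-4/a})}$, namely $1/a<x_{\max}(a)<2/a$, i.e.\ $1<a\,x_{\max}(a)<2$. At $t=t_0(a)$ the numerator of $R$ is $\le 1$, while the denominator $(b-t_0)(1-b-t_0)$ has one factor equal to $x_{\max}(a)$ and the other equal to $|2b-1|+x_{\max}(a)\ge x_{\max}(a)$; hence $R(t_0(a))\le x_{\max}(a)^{-2}$ and $\ln R(t_0(a))\le -2\ln x_{\max}(a)<2\ln a$. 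Also $a\,t_0(a)=a\min\{b,1-b\}-a\,x_{\max}(a)>a\min\{b,1-b\}-2$. Therefore $\psi_a(t_0(a))>a\min\{b,1-b\}-2-2\ln a$, which is positive for all $a$ beyond a threshold $\ell_b$ depending only on $b$ (taken large enough also for \cref{lemma:structure} to apply and for $t_0(a)>0$). This would establish the lemma.

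\emph{Where the work is.} Essentially everything becomes routine once one observes $p+q=2b$, which collapses the two-dimensional fixed-point problem for $f^2$ to the scalar comparison of $e^{at}$ with $R(t)$; the remaining ingredients are the log-convexity of $R$ and the crude estimate $x_{\max}(a)\asymp 1/a$. The case $b=\tfrac12$ is not special here, since the bound $R(t_0)\le x_{\max}(a)^{-2}$ is uniform in $b$, although $\ell_b\to\infty$ as $b\to 0$ or $b\to 1$ because $\min\{b,1-b\}\to 0$.
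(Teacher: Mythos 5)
Your proposal is correct, and it takes a genuinely different and substantially more economical route than the paper's. Both arguments begin from the same observation that a $2$-cycle $\{p,q\}\subset(0,1)$ must satisfy $p+q=2b$ (the paper gets this from $f^2(x,a,b)=x$, you get it by multiplying the two logit identities $\tfrac{f(x)}{1-f(x)}=\tfrac{x}{1-x}e^{a(b-x)}$), but they diverge immediately afterwards. The paper keeps $x$ as the variable and studies the function $\gamma_{a,b}(x)=(2b-x-1)x+(2b-x)(1-x)e^{a(x-b)}$ through its first, second and third derivatives, splits into the cases $b<\tfrac12$, $b=\tfrac12$, $b>\tfrac12$, proves that $\gamma_{a,b}$ has a \emph{unique} root in $[0,b)$, identifies the limit points of the resulting $2$-cycle $(x_l(a),x_r(a))$ as $a\to\infty$, and finally estimates the convergence rate to show the cycle exits $[x_{\max}(a),x_{\min}(a)]$. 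You instead substitute $p=b-t$, $q=b+t$, reducing the problem to the scalar equation $e^{at}=R(t)$; the single computation $(\ln R)''>0$ makes $\psi_a(t)=at-\ln R(t)$ concave with $\psi_a(0)=0$, so that one endpoint estimate $\psi_a(t_0(a))>a\min\{b,1-b\}-2-2\ln a>0$ (using $1<a\,x_{\max}(a)<2$) rules out \emph{all} zeros in $(0,t_0(a)]$ by the chord inequality. What your approach buys: no case analysis in $b$ (the value $b=\tfrac12$ is not special), no need to establish existence or uniqueness of the $2$-cycle or its asymptotics, and a bound that is explicit and uniform. What the paper's approach buys in exchange is finer structural information — uniqueness of the $2$-cycle in $(0,1)$ and the precise location of its limit points — which is not needed for this lemma. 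All the individual steps of your argument check out: the identity $p+q=2b$, the equivalence of $p,q\in\mathcal{D}$ with $t\le t_0(a)=\min\{b,1-b\}-x_{\max}(a)$ (using $x_{\min}(a)=1-x_{\max}(a)$), the sign of $(\ln R)''$, the bounds on numerator and denominator of $R(t_0)$, and the growth comparison of $a\min\{b,1-b\}$ against $2\ln a$.
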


\begin{proof}
Endpoints of period two trajectories satisfy the equation $f^2(x, a, b) = x$
\begin{equation*}
    f^2(x, a, b) = \frac{x}{x + (1-x) \exp(a ( x + f(x, a, b) -2b))} = x
\end{equation*}
Ignoring $x=0$ and $x=1$, which are merely fixed points, this is equivalent to
\begin{equation*}
    x + f(x, a, b) = 2b
\end{equation*}
After some manipulation, the formula above is equivalent to
\begin{equation*}
    \gamma_{a,b}(x) = (2b-x-1)x + (2b-x)(1-x)\exp(a(x-b)) = 0
\end{equation*}
We take the first and second and third derivative of this function
\begin{align*}
    \gamma_{a,b}'(x) = 2b -2x -1 + \exp(a(x-b)) \left[ (2x-2b-1) + a(2b-x)(1-x))\right] \\
    \gamma_{a,b}''(x) = -2 + \exp(a(x-b)) \left[ a^2(2b-x)(1-x) + 2a(2x-2b-1) + 2\right]\\
    \gamma_{a,b}'''(x) = a \exp(a(x-b)) \left[ a^2(2b-x)(1-x) + 3a(2x-2b -1) + 6\right]
\end{align*}
Let us define the following finite quantity
\begin{equation*}
\mu_{b} = \frac{\max_{x \in [0, b]}3(2b+1-2x)}{\min_{x \in [0, b]}(2b-x)(1-x)}    
\end{equation*}
For $a > \mu_b$, we have that $\gamma_{a,b}'''(x) > 0$ in $[0, b]$. Moving on to $\gamma_{a,b}''(x)$, it is increasing in $[0,b]$ and
\begin{equation*}
    \gamma_{a,b}''(0) = -2 + \exp(-ab)\left[2b a^2 -2a(2b-1) + 2\right] \quad \gamma_{a,b}''(b) = a^2b(1-b) -2a
\end{equation*}
Clearly we have
\begin{equation*}
    \lim_{a \to \infty} \gamma_{a,b}''(0) = - 2 \quad \lim_{a \to \infty} \gamma_{a,b}''(b) =  \infty
\end{equation*}
We can thus pick an $\kappa_b > \mu_b$ such that for all $a > \kappa_b$
\begin{equation*}
    \gamma_{a,b}''(0)  < 0 \quad \gamma_{a,b}''(b) > 0.
\end{equation*}
Thus $\gamma_{a,b}''(x)$ has exactly one root in $[0, b]$ given its monotonicity. Moving on to $\gamma_{a,b}'(x)$, it starts of as decreasing and moves to increasing in $[0,b]$ with
\begin{align*}
    \gamma_{a,b}'(0) &= 2b -1 + \exp(-ab) \left[ (-2b-1) + 2ab\right]\\
    \gamma_{a,b}'(b) &= -2 + ab(1-b)
\end{align*}
To continue our analysis we will study the following cases: $b< \frac{1}{2}$, $b = \frac{1}{2}$, $b > \frac{1}{2}$. 

\paragraph{Case: $b< \frac{1}{2}$} For the first case
\begin{equation*}
    \lim_{a \to \infty} \gamma_{a,b}'(0) = 2b -1 < 0 \quad  \lim_{a \to \infty} \gamma_{a,b}'(b) = \infty
\end{equation*}
Thus we can pick a $\nu_{b} > \kappa_b$ such that for $a> \nu_{b}$, $\gamma_{a,b}'(0) < 0$ and $\gamma_{a,b}'(b)> 0$. Since $\gamma_{a,b}'(x)$ starts decreasing and moves to increasing, it has exactly one root. Moving on to $\gamma_{a,b}(x)$ we have
\begin{equation*}
    \gamma_{a,b}(0) = 2b \exp(-ab) > 0 \quad \gamma_{a,b}(b) = 0
\end{equation*}
Given that $\gamma_{a,b}(x)$ starts decreasing and moves to increasing in $[0,b]$, it can have up to two roots in $[0,b]$, one of which is $b$.  We can observe that
\begin{equation*}
    \lim_{a\to \infty} \gamma_{a,b}\left(\frac{b}{2}\right) =  \left(\frac{3b}{2} - 1\right) \left(\frac{b}{2}\right)< 0.
\end{equation*}
We can pick a $\xi_{b} > \nu_{b}$ such that for all $a> \xi_b$ it holds that $\gamma_{a,b}\left(\frac{b}{2}\right) < 0$. For $a> \xi_b$ we have that $\gamma_{a,b}(x)$ has exactly one root in $[0, b)$ that is located in $[0,\frac{b}{2}]$.

\paragraph{Case: $b = \frac{1}{2}$} For the second case we have that
\begin{equation*}
    \gamma_{a,b}'(0) = \exp\left(-\frac{a}{2}\right)(a- 2) \quad \lim_{a\to \infty} \gamma_{a,b}'\left(\frac{b}{2}\right) = b -1 < 0   \quad \lim_{a \to \infty} \gamma_{a,b}'(b) = \infty
\end{equation*}
So we can pick a $\nu_{b} > \kappa_b$ such that for $a> \nu_{b}$, $\gamma_{a,b}'(0) > 0$ and $\gamma_{a,b}'\left(\frac{b}{2}\right) < 0$ and $\gamma_{a,b}'(b)> 0$. Since $\gamma_{a,b}'(x)$ starts decreasing and moves to increasing, it has exactly two roots. Moving on to $\gamma_{a,b}(x)$ we have
\begin{equation*}
    \gamma_{a,b}(0) = 2b \exp(-ab) > 0 \quad \gamma_{a,b}(b) = 0
\end{equation*}
In $[0, b]$, we have that $\gamma_{a,b}(x)$ starts increasing and positive, then switches to decreasing and then to increasing. In the first section it cannot have any root. In the second section it can have at most one root and in the third section it has exactly one root $b$. Just like above we can observe that
\begin{equation*}
    \lim_{a\to \infty} \gamma_{a,b}\left(\frac{b}{2}\right) =  \left(\frac{3}{4} - 1\right) \left(\frac{1}{4}\right)< 0.
\end{equation*}
Following the same steps as above, we can pick a $\xi_b$ such that for $a> \xi_b$ we have that $\gamma_{a,b}(x)$ has exactly one root in $[0, b)$ that is located in $[0,\frac{b}{2}]$.

\paragraph{Case: $b > \frac{1}{2}$} For the last case we have that
\begin{equation*}
    \lim_{a \to \infty} \gamma_{a,b}'(0) = 2b -1 > 0  \quad \lim_{a\to \infty} \gamma_{a,b}'\left(\frac{b}{2}\right) = b -1 < 0  \quad \lim_{a \to \infty} \gamma_{a,b}'(b) = \infty
\end{equation*}
Just like before we can pick a $\nu_{b} > \kappa_b$ such that for $a> \nu_{b}$, $\gamma_{a,b}'(0) > 0$ and  $\gamma_{a,b}'\left(\frac{b}{2}\right) < 0$ and $\gamma_{a,b}'(b)> 0$. Since $\gamma_{a,b}'(x)$ starts decreasing and moves to increasing, it has exactly two roots. Moving on to $\gamma_{a,b}(x)$ 
\begin{equation*}
    \gamma_{a,b}(0) = 2b \exp(-ab) > 0 \quad \gamma_{a,b}(b) = 0
\end{equation*}
In $[0, b]$, we have that $\gamma_{a,b}(x)$ starts increasing and positive, then switches to decreasing and then to increasing. In the first section it cannot have any root. In the second section it can have at most one root and in the third section it has exactly one root $b$. We can observe that
\begin{equation*}
    \gamma_{a,b}\left(\frac{3b-1}{2}\right) =  \left(\frac{b-1}{2}\right) \left(\frac{3b-1}{2}\right)< 0.
\end{equation*}
Following the same steps as above, we can pick a $\xi_b$ such that for $a> \xi_b$ we have that $\gamma_{a,b}(x)$ has exactly one root in $[0, b)$ that is located in $[0,\frac{3b-1}{2}]$.

\paragraph{Solution pairs} 
In all cases, we have identified an $\xi_{b}$ such that for $a> \xi_b$ we have that  $\gamma_{a,b}(x)$ has exactly one root in $[0,b)$. If $\gamma_{a,b}(x) = 0$, then we know that  $\gamma_{a,b}(f(x, a, b)) = 0$. Since $x + f(x, a, b) = 2b$, we know that $f(x, a, b)$ needs to be in $(b, 2b]$. Symmetrically, any root of  $\gamma_{a,b}(x) = 0$ with $x> b$ can only form a periodic trajectory with an $f(x, a, b) = 2b-x < b$ that is also satisfies $\gamma_{a,b}(f(x, a, b)) = 0$. But since there is only one root in $[0,b)$ and points cannot participate in multiple periodic trajectories, we have a unique solution for $\gamma_{a,b}(x) = 0$ in $(b, 1)$. Let us define $x_l(a) < b < x_r(a)$ the points of the unique two-periodic trajectory as functions of $a$. These functions are bounded and thus they need to have at least one limit point. We can use the following equations
\begin{equation*}
    \lim_{a \to \infty} \gamma_{a,b}(x_l(a)) = \lim_{a \to \infty} \gamma_{a,b}(x_r(a)) = 0 
\end{equation*}
to derive the properties of these limit points. In all three cases above, we proved that the solution $x_l(a)$ is bounded away from $b$ for $a> \xi_b$. As a result all limit points must satisfy 
\begin{equation*}
    (2b-x-1)x = 0 \implies x = 0 \textrm{ or } x = 2b-1.
\end{equation*}
Similarly, since $x_l(a) + x_r(a) = 2b$  and $x_l(a)$ is bounded away from $b$ , it must be the case that $x_r(a)$ is bounded away from $b$. As such all limit points of $x_r(a)$ must satisfy
\begin{equation*}
    (2b-x)(1-x) \implies x = 2b \textrm{ or } x = 1.
\end{equation*}
Observe that the limit points of $x_l(a)$ and $x_r(a)$ must come in pairs that sum to $2b$ as $x_l(a) + x_r(a) = 2b$. We will once again do a case by case study. For $b < \frac{1}{2}$, the only viable pair is $(0,2b)$ since $2b-1 < 0$. For $b = \frac{1}{2}$, there is only one pair $(0,1)$. For the case of $b > \frac{1}{2}$, the only viable pair is $(2b-1, 1)$ since $2b > 1$. In all of the cases, the limit points of $x_r(a)$ and $x_l(a)$ are unique and thus $x_r(a)$ and $x_l(a)$ converge.

\paragraph{Convergence rate} We are now ready to argue why at least one of $x_r(a)$ and $x_l(a)$ do not belong in $[x_{\max}(a), x_{\min}(a)]$ for sufficiently large $a$. Once again we will do a case by case analysis on $b$. For $b < \frac{1}{2}$, we will argue that there is a $\ell_b$ such that for $a > \ell_b$ we have that $x_l(a) < x_{\max}(a)$. We know that $x_l(a) = f(x_r(a), a, b)$. Thus $x_l(a) < x_{\max}(a)$ is equivalent to
\begin{equation*}
    x_r(a) - x_{\max}(a) x_r(a) - x_{\max}(a)(1- x_r(a))\exp\left[\alpha (x_r(a) -b)\right] < 0
\end{equation*}
Observe that the first two terms are bounded but for the third term we have
\begin{equation*}
    \lim_{a \to \infty} x_{\max}(a)(1- x_r(a))\exp\left[\alpha (x_r(a) -b)\right] = \infty
\end{equation*}
given that $\lim_{a \to \infty} x_r(a) = 2b$ and the exponential goes to $\infty$ much faster than $x_{\max}(a)$ goes to 0. Thus we can choose a $\ell_b$ such that for $a > \ell_b$ we have $x_l(a) < x_{\max}(a)$. For the case of $b>\frac{1}{2}$, we use the same arguments to prove that there is an is an $\ell_b$ such that for $a> \ell_b$ we have $x_r(a) > x_{\min}(a)$. For the case of $b=\frac{1}{2}$, we will argue that there is a $\ell_b$ such that for $a > \ell_b$ we have that $x_l(a) < x_{\max}(a)$. We need study the convergence rate of $x_r(a)$ to $1$. We have that 
\begin{equation*}
    \gamma_{a, \frac{1}{2}}(x_r(a)) = 0 \implies x_r(a) = (1-x_r(a)) \exp\left[\frac{\alpha}{2} \left(x_r(a) -\frac{1}{2}\right)\right]
\end{equation*}
We can apply the same argument as in the $b < \frac{1}{2}$ case and use the equation above to prove that
\begin{equation*}
    \lim_{a \to \infty} x_{\max}(a)(1- x_r(a))\exp\left[\alpha \left(x_r(a) -\frac{1}{2}\right)\right] = \lim_{a \to \infty} x_{\max}(a)x_r(a)\exp\left[\frac{\alpha}{2} \left(x_r(a) -\frac{1}{2}\right)\right] = \infty
\end{equation*}
because $\lim_{a \to \infty} x_r(a) = 1$ and the exponential goes to $\infty$ much faster than $x_{\max}(a)$ goes to 0. The resulting $\ell_b$ in all cases satisfy the requirements of the theorem. 
\end{proof}

\subsection{Volume Expansion \& Instability of mixed equilibrium $x^\star=b$}
\begin{lemma}\label{app:lemma:instability_of_b}
For every $b \in (0,1)$, there is a $k_b$ such that for $a> k_b$ it holds that
there exists some neighborhood $ \mathcal{N}_{\delta}=(b-\delta,b+\delta)$, such that almost all initializations from $\mathcal{N}_{\delta}$ do not converge to $x^\star=b$.
\end{lemma}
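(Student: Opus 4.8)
The plan is to show that for $a$ large the mixed equilibrium $x^\star = b$ is a \emph{hyperbolic repelling} fixed point of $f(\cdot,a,b)$, and then to argue that the set of initializations whose orbit converges to a hyperbolic repeller is Lebesgue-null because such orbits must in fact land exactly on the fixed point in finitely many steps. First I would evaluate the derivative formula for $f$ recalled above at $x = b$: the exponential factor is $1$, the denominator is $(b + (1-b))^2 = 1$, and the numerator $ab^2 - ab + 1$ equals $1 - ab(1-b)$, so $f'(b,a,b) = 1 - ab(1-b)$. Setting $k_b \defeq 2/\bigl(b(1-b)\bigr)$ (note $k_b \ge 8 > 4$, consistent with the standing assumption), for every $a > k_b$ we have $ab(1-b) > 2$ and hence $f'(b,a,b) < -1$, so $\abs{f'(b,a,b)} > 1$.

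Next, using continuity of $x \mapsto f'(x,a,b)$ on $(0,1)$, I would choose $\delta > 0$ small enough that $\mathcal{N}_\delta \defeq (b-\delta, b+\delta) \subset (0,1)$ and a constant $\lambda > 1$ with $\abs{f'(x,a,b)} \ge \lambda$ for all $x \in \mathcal{N}_\delta$. The central claim is then: if $x_0 \in \mathcal{N}_\delta$ and the orbit $x_n = f^n(x_0,a,b)$ converges to $b$, then $x_N = b$ for some $N \ge 0$. Indeed, convergence gives an $N$ with $x_n \in \mathcal{N}_\delta$ for all $n \ge N$; since $\mathcal{N}_\delta$ is an interval containing both $x_n$ and $b$, the mean value theorem yields $\abs{x_{n+1} - b} = \abs{f(x_n,a,b) - f(b,a,b)} = \abs{f'(\xi,a,b)}\,\abs{x_n - b} \ge \lambda \abs{x_n - b}$ for some $\xi \in \mathcal{N}_\delta$, so $\abs{x_n - b} \ge \lambda^{\,n-N}\abs{x_N - b}$ for all $n \ge N$, which is incompatible with $x_n \to b$ unless $\abs{x_N - b} = 0$.

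It follows that the set of $x_0 \in \mathcal{N}_\delta$ with $x_n \to b$ is contained in $\bigcup_{N \ge 0} \bigl(f^N(\cdot,a,b)\bigr)^{-1}(\{b\})$. Since the denominator $x + (1-x)\exp(a(x-b))$ is strictly positive on $(0,1)$, the map $f(\cdot,a,b)$ is real-analytic and non-constant on every subinterval of $(0,1)$, hence so is each iterate $f^N(\cdot,a,b)$; therefore each level set $\bigl(f^N\bigr)^{-1}(\{b\})$ has no accumulation point in $(0,1)$ and is at most countable, so the whole union is countable and thus Lebesgue-null. Consequently almost every $x_0 \in \mathcal{N}_\delta$ fails to converge to $b$, which is the claim. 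The only genuinely delicate point is the last one — ruling out a priori that the \emph{global} stable set of the repeller could be large — and this is exactly what the "orbits converging to $b$ must hit $b$" claim of the previous paragraph settles; the remaining ingredients (the derivative computation and the expansion estimate) are routine.
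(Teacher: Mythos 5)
Your proposal is correct, and it shares the paper's first half — the computation $f'(b,a,b) = ab^2 - ab + 1 = 1 - ab(1-b)$, the observation that this is $< -1$ for large $a$, and the choice by continuity of a neighborhood $\mathcal{N}_\delta$ on which $\abs{f'} \ge \lambda > 1$ — but it diverges in how it reaches the ``almost all'' conclusion. The paper stops at this point and invokes the Unstable Manifold Theorem (citing Shub) to conclude immediately. You instead supply an explicit, elementary argument: the mean value theorem shows that any orbit that eventually remains in $\mathcal{N}_\delta$ has $\abs{x_n - b}$ growing geometrically unless it lands exactly on $b$ in finitely many steps, so the stable set of $b$ inside $\mathcal{N}_\delta$ is contained in $\bigcup_{N}\bigl(f^N\bigr)^{-1}(\{b\})$; since each iterate $f^N(\cdot,a,b)$ is a nonconstant real-analytic map on $(0,1)$ (the denominator of $f$ is strictly positive there), each level set is discrete, hence countable, hence the union is Lebesgue-null. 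This buys you two things the paper's one-line citation glosses over: first, the stable manifold theorem by itself only identifies the \emph{local} stable set as $\{b\}$, and passing from that to a measure statement about the \emph{global} stable set genuinely requires your ``orbits converging to $b$ must hit $b$'' step plus the countability of the preimage set, so your version is actually the more complete argument; second, you obtain the stronger conclusion that the exceptional set is countable rather than merely null. The individual steps (MVT expansion estimate, identity theorem for analytic functions, discreteness implies countability on a second-countable space) are all sound, so I see no gap.
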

\begin{proof}
We can write down $f'(b, a, b)$ as
\begin{equation*}
    f'(b, a, b) = ab^2 -ab +1
\end{equation*}
Since $\lim_{a \to \infty}f'(b, a, b) = -\infty$, thus 
there is a $k_b$ such that for $a> k_b$ it holds that
there exists some neighborhood $ \mathcal{N}_{\delta}=(b-\delta,b+\delta)$ where $|f'(x, a, b)|>1 \ \forall \ x\in\mathcal{N}_{\delta}$. Leveraging Unstable Manifold Theorem (See \cite{Shub}), the result immediately follows.
\end{proof}

We are now ready to prove that any neighborhood of $x^\star=b$ lemma
\begin{lemma}\label{lemma:basis-expansion}
For every $b \in (0,1)$, there is a $z_b$ such that for all $a > z_b$.  
\begin{equation*}
    \{ b\} \subset [\gamma, \delta] \subseteq [x_{max}(a),x_{\min}(a)] \implies \exists n_0 : \quad \forall n \geq n_0 \quad f^n([\gamma, \delta], a, b) = [f_{\min}(a), f_{\max}(a)]
\end{equation*}
\end{lemma}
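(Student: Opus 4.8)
Write $J_n := f^n([\gamma,\delta],a,b)$ and $\mathcal D := [x_{\max}(a),x_{\min}(a)]$, and choose $z_b$ larger than all of the thresholds $a_b,s_b,\ell_b,k_b$ coming from \cref{lemma:structure}, \cref{lemma:forward}, \cref{app:lemma:period-two-trajectory} and \cref{app:lemma:instability_of_b}. Since $[\gamma,\delta]\subseteq\mathcal D\subseteq\mathcal F(a)$ by \cref{lemma:structure}, forward invariance (\cref{lemma:forward}) gives $J_n\subseteq\mathcal F(a)$ for every $n$; moreover each $J_n$ is a closed interval (continuous image of an interval) and $b=f(b)\in J_n$. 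The plan is to produce a single index $n$ with $J_n\supseteq\mathcal D$: indeed, since $f$ is strictly decreasing on $\mathcal D$ with $f(x_{\min}(a))=f_{\min}(a)$, $f(x_{\max}(a))=f_{\max}(a)$, this yields $J_{n+1}\supseteq f(\mathcal D)=[f_{\min}(a),f_{\max}(a)]=\mathcal F(a)$, hence $J_{n+1}=\mathcal F(a)$, and then perpetuity $f(\mathcal F(a))=\mathcal F(a)$ (\cref{lemma:perpetual}) propagates the equality to all larger indices, so $n_0=n+1$ works.

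\textbf{Step 1: the orbit of the interval leaves $\mathcal D$.} Suppose $J_n\subseteq\mathcal D$ for all $n$. Then every $y\in[\gamma,\delta]$ has its whole forward orbit inside $\mathcal D$, and since $f$ is continuous and strictly decreasing on the closed interval $\mathcal D$, \cref{app:claim:period-two} forces each such orbit to converge either to a fixed point of $f$ in $\mathcal D$ or to a $2$-periodic trajectory with both endpoints in $\mathcal D$. The only fixed point of $f$ in $(0,1)$ is $b$, and by \cref{app:lemma:period-two-trajectory} (using $a>\ell_b$) there is no $2$-periodic trajectory inside $\mathcal D$; hence \emph{every} point of $[\gamma,\delta]$ would converge to $b$. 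This contradicts \cref{app:lemma:instability_of_b}: for $a>k_b$ there is a neighbourhood of $b$ almost none of whose points converge to $b$, and $[\gamma,\delta]$ meets this neighbourhood in a set of positive measure. So there is a first index $n_1$ with $J_{n_1}\not\subseteq\mathcal D$.

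\textbf{Step 2: from escaping $\mathcal D$ to engulfing $\mathcal D$.} Once $J_{n_1}$ pokes out of $\mathcal D$ — say on the left, the right case being symmetric — it is an interval containing $b$ and a point $<x_{\max}(a)$, so $J_{n_1}\supseteq[x_{\max}(a),b]$. Applying $f$ and using its monotonicity on $\mathcal D$ together with the fact that $f_{\max}(a)=\max_{\mathcal F(a)}f$ is attained at $x_{\max}(a)$ and $f_{\min}(a)=\min_{\mathcal F(a)}f$ at $x_{\min}(a)$ (because $f(x)<x$ on $(b,1)$ and $f(x)>x$ on $(0,b)$), one checks that the successive images alternately contain $f([x_{\max}(a),b])=[b,f_{\max}(a)]\supseteq[b,x_{\min}(a)]$ and $f([b,x_{\min}(a)])=[f_{\min}(a),b]\supseteq[x_{\max}(a),b]$; in particular at each step one endpoint of $J_n$ is pinned to the global extremum $f_{\min}(a)$ or $f_{\max}(a)$ (which lie strictly outside $\mathcal D$ by \cref{lemma:structure}), while the other, ``surviving'' endpoint $c_n$ evolves under a map built from $f$ — essentially $f^2$ on alternate steps. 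The remaining task is to show $c_n$ itself cannot remain trapped in $\mathcal D$: if it did, its $f^2$-orbit would, by \cref{app:claim:period-two} applied to the increasing map $f^2$, converge to $b$ or land on a $2$-cycle, both impossible (the former because $(f^2)'(b)=f'(b)^2>1$ makes $b$ repelling, the latter by \cref{app:lemma:period-two-trajectory}). At the first step where $c_n$ exits $\mathcal D$, the interval $J_n$ overshoots $\mathcal D$ on the $c_n$-side while its pinned endpoint is already beyond $\mathcal D$ on the other side, so $J_n\supseteq\mathcal D$, completing the reduction.

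\textbf{Main obstacle.} The crux is Step 2. Unlike the fixed-rate analyses that invoke period-three orbits, here we must argue directly that the expanding interval $J_n$ does not relax into a $2$-periodic ``half-covering'' pattern — a genuine danger, since such period-two interval orbits do exist (for instance $[f_{\min}(a),b]\leftrightarrow[b,f_{\max}(a)]$ when $b=\tfrac12$). Excluding this is precisely where the absence of $2$-periodic trajectories in $\mathcal D$ and the repulsiveness of $b$ must be combined, and some care with the endpoint bookkeeping is needed — keeping track of which endpoint is pinned to $f_{\min}(a)$ or $f_{\max}(a)$, and noting that the bad periodic intervals have $b$ as an endpoint, so that after finitely many steps one reaches a $J_n$ in which $b$ is strictly interior and stays interior thereafter.
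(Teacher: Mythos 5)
Your reduction and Step~1 are sound and coincide with the paper's opening move: if the whole orbit stayed in $\mathcal D=[x_{\max}(a),x_{\min}(a)]$, \cref{app:claim:period-two} together with \cref{app:lemma:period-two-trajectory} would force convergence to $b$, contradicting \cref{app:lemma:instability_of_b} on a positive-measure subinterval. The genuine gap is in Step~2, which is where the content of the lemma lies. The mechanism you propose there --- that the ``surviving'' endpoint $c_n$ evolves under $f^2$ and that \cref{app:claim:period-two} applied to ``the increasing map $f^2$'' forces it out of $\mathcal D$ --- does not hold up. First, $f^2$ is not monotone on the relevant domain: $f$ is decreasing on $\mathcal D$ but maps it onto $\mathcal F(a)\supsetneq\mathcal D$, where $f$ changes monotonicity at $x_{\max}(a)$ and $x_{\min}(a)$; so neither \cref{app:claim:period-two} (which requires a decreasing map) nor the elementary fact about increasing maps applies. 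Second, \cref{app:lemma:period-two-trajectory} only excludes $2$-cycles with \emph{both} endpoints in $\mathcal D$; its proof shows the unique $2$-cycle $\{x_l(a),x_r(a)\}$ has one endpoint outside $\mathcal D$, but the other endpoint generally lies inside $\mathcal D$ (for $b<1/2$ one has $x_r(a)\to 2b$, which is interior to $\mathcal D$ for large $a$), and that point is a fixed point of $f^2$ in $\mathcal D$ distinct from $b$ that your argument does not rule out. Third, ``$b$ is repelling'' does not prevent one specific, dynamically determined point from converging to $b$; the almost-everywhere statement of \cref{app:lemma:instability_of_b} is only usable when you are free to choose the point from a positive-measure set. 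Finally, the surviving endpoint is not literally an $f^2$-orbit: after each application of $f$ it is a $\min$/$\max$ of $f(c_n)$ with the fixed values $f(f_{\min}(a),a,b)$ and $f(f_{\max}(a),a,b)$, so even the bookkeeping you defer is nontrivial.

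The paper closes exactly this gap with a preimage/parity device that your write-up lacks. It chooses $x_0$ in $R=[b,\min\{f(\gamma,a,b),\delta\}]$, so that $x_0=f(x_1,a,b)$ for some $x_1\in[\gamma,b]$ on the \emph{other} side of $b$. At the first escape time $n^*$ of $x_0$ (say $f^{n^*}(x_0,a,b)>x_{\min}(a)$), the image of $[b,x_0]$ covers $[b,x_{\min}(a)]$, so the next iterate covers $[f_{\min}(a),b]$; simultaneously $f^{n^*+1}(x_1,a,b)=f^{n^*}(x_0,a,b)>x_{\min}(a)$, so the connected image at time $n^*+1$ contains both $[f_{\min}(a),b]$ and a point above $x_{\min}(a)$, hence all of $[f_{\min}(a),x_{\min}(a)]\supseteq\mathcal D$, and one further iterate gives $\mathcal F(a)$. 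This is precisely what defeats the period-two ``half-covering'' pattern that you correctly flag as the main obstacle; without this (or an equivalent argument) your proof is incomplete at its crucial step.
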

\begin{proof}
Let us take the following interval
\begin{align*}
    R &= f([\gamma,b], a, b) \cap [b, \delta] \\
      &= [f(b, a, b), f(\gamma, a, b)] \cap [b, \delta] \\
      &= [b, f(\gamma, a, b)] \cap [b,\delta] \\
      &= [b, \min \{f(\gamma, a, b) , \delta\}]
\end{align*}
Let us pick an $x_0 \in R$. Let us assume that
\begin{equation*}
    x_{\max}(a) \leq f^n(x_0, a, b) \leq x_{\min}(a)
\end{equation*}
By Lemma~\ref{app:lemma:instability_of_b} and picking $a> k_b$ we can always choose a $x_0 \in R$ such that $f^n(x_0,a,b)$ does not converge to $b$. Since $b$ is the unique fixed point of $f(x,a,b)$ in $[x_{\max}(a), x_{\min}(a)]$, we have that $f^n(x_0, a, b)$ does not converge at all and as a result
\begin{equation*}
    \liminf_{n \to \infty} f^n(x_0, a, b) \neq \limsup_{n \to \infty} f^n(x_0, a, b)
\end{equation*}
By Claim~\ref{app:claim:period-two} on $f$, which is decreasing on $[x_{\max}(a), x_{\min}(a)]$,  we have that the points above form a two-period trajectory of $f(x, a, b)$. By Lemma~\ref{app:lemma:period-two-trajectory}, if we pick $\ell_b$ such that for $a > \ell_b$ no period two trajectory can exist inside $[x_{\max}(a), x_{\min}(a)]$ yielding a contradiction. As a result
\begin{equation*}
    \exists n^* : \quad f^{n^*}(x_0, a, b) \notin [x_{\max}(a), x_{\min}(a)]
\end{equation*}
We will first study the case of $f^{n^*}(x_0, a, b) > x_{\min}(a)$. For this case, we have
\begin{equation*}
    f^{n^*}([\gamma, \delta], a, b) \supseteq f^{n^*}([b, x_0], a, b) \supseteq [b, x_{\min}(a)].
\end{equation*}
In the next iteration
\begin{equation*}
    f^{n^*+1}([\gamma, \delta], a, b) \supseteq [f_{\min}(a), b].
\end{equation*}
Observe though that because $x_0 \in R$, we can pick a $x_1 \in [\gamma, b]$ such that $f(x_1, a, b) = x_0$. We then have
\begin{equation*}
    f^{n^*+1}(x_1, a, b) = f^{n^*}(x_0, a, b) > x_{\min}(a)
\end{equation*}
This yields
\begin{equation*}
    f^{n^*+1}([\gamma, \delta], a, b) \supseteq [f_{\min}(a), x_{\min}(a)]
\end{equation*}
For $a > a_b$ by Lemma~\ref{lemma:structure} $f_{\min}(a) < x_{\max}(a)$ and by Lemma~\ref{lemma:forward} we have that $[f_{\min}(a), f_{\max}(a)]$ is forward invariant. As a result we have the following
\begin{equation*}
    [f_{\min}(a), f_{\max}(a)] \supseteq f^{n^*+2}([\gamma, \delta], a, b) \supseteq [f_{\min}(a), f_{\max}(a)]
\end{equation*}
By Lemma~\ref{app:lemma:perpetual} we know that $[f_{\min}(a), f_{\max}(a)]$ is perpetual. So we can choose $n_0 = n^* + 2$ to fullfil the requirements of the theorem. The case of $f^{n^*}(x_0, a, b) < x_{\max}(a)$ is symmetric.
\end{proof}

We are ready now to prove the main volume expansion claims of the section.
\begin{theorem}[Restated Theorem~\ref{thm:volume-expansion}]\label{app:thm:volume-expansion}
For every $b \in (0,1)$, there is a $v_b$ such that for all $a > v_b$ and 
 any interval $[\gamma,\delta]$  : $  \{ b\} \subset [\gamma, \delta] \subset (0,1) $, it holds that
\begin{equation*}
 \exists n_0 : \ \forall n \geq n_0 \quad f^n([\gamma, \delta], a, b) = \mathcal{F}(a).
\end{equation*}
\end{theorem}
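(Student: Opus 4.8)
The plan is to reduce the statement to the case already established in Lemma~\ref{lemma:basis-expansion}, where the starting set is a subinterval of $\mathcal{D}(a)=[x_{\max}(a),x_{\min}(a)]$ around $b$, and then to promote the resulting lower bound to an equality via the absorbing property of $\mathcal{F}(a)$. Let $v_b$ be larger than all thresholds required by the auxiliary results cited below (in particular $a_b$ of Lemma~\ref{lemma:structure}, $s_b$ of Lemmas~\ref{lemma:forward} and~\ref{lemma:absorbption}, and $z_b$ of Lemma~\ref{lemma:basis-expansion}). Fix $a>v_b$ and an interval $[\gamma,\delta]\subset(0,1)$ with $b$ in its interior.

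\textbf{Lower bound.} Since $a>v_b$ forces $x_{\max}(a)<b<x_{\min}(a)$, choose $\epsilon>0$ small enough that $[b-\epsilon,b+\epsilon]\subseteq[\gamma,\delta]\cap\mathcal{D}(a)$ (possible because $b$ lies strictly inside both $[\gamma,\delta]$ and $\mathcal{D}(a)$). Applying Lemma~\ref{lemma:basis-expansion} to the interval $[b-\epsilon,b+\epsilon]$ yields an $n_0'$ with $f^n([b-\epsilon,b+\epsilon],a,b)=\mathcal{F}(a)$ for all $n\geq n_0'$. As $[b-\epsilon,b+\epsilon]\subseteq[\gamma,\delta]$ and forward images are monotone with respect to inclusion, $f^n([\gamma,\delta],a,b)\supseteq\mathcal{F}(a)$ for all $n\geq n_0'$.

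\textbf{Upper bound.} Here we use the absorbing property of Lemma~\ref{lemma:absorbption}, but in the uniform form hidden in its proof. The points of $[\gamma,\delta]$ that already lie in $\mathcal{F}(a)$ stay there forever, by forward invariance (Lemma~\ref{lemma:forward}). For the points lying to the left of $\mathcal{F}(a)$, \ie in $[\gamma,f_{\min}(a))$, the bound $f(x,a,b)-x\ge\xi:=\min_{x\in[\gamma,f_{\min}(a)]}\bigl(f(x,a,b)-x\bigr)>0$ (the minimum is positive because the compact interval $[\gamma,f_{\min}(a)]\subset(0,b)$ contains no fixed point of $f$) shows that such a point moves strictly right by at least $\xi$ per step until, without overshooting, it enters $\mathcal{F}(a)$; symmetrically, a point in $(f_{\max}(a),\delta]$ enters $\mathcal{F}(a)$ within a uniformly bounded number of steps. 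Hence there is a single index $n_1$ with $f^n([\gamma,\delta],a,b)\subseteq\mathcal{F}(a)$ for all $n\geq n_1$.

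\textbf{Conclusion and main difficulty.} For $n\geq n_0:=\max\{n_0',n_1\}$ both inclusions hold, giving $f^n([\gamma,\delta],a,b)=\mathcal{F}(a)$. At the level of this theorem the one subtle point is the uniformity of $n_1$ over the whole interval $[\gamma,\delta]$ — it relies on the ``monotone attraction without overshooting'' of points outside $\mathcal{F}(a)$ together with forward invariance of $\mathcal{F}(a)$ — whereas all the genuinely dynamical content (instability of the fixed point $b$, absence of a period-two orbit inside $\mathcal{D}(a)$, and monotonicity of $f$ on $\mathcal{D}(a)$) has already been carried out inside Lemma~\ref{lemma:basis-expansion}.
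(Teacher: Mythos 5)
Your proposal is correct and follows essentially the same route as the paper: the paper also obtains the lower bound by applying Lemma~\ref{lemma:basis-expansion} to a subinterval of $[\gamma,\delta]\cap[x_{\max}(a),x_{\min}(a)]$ containing $b$ and using monotonicity of images under inclusion, and the upper bound by invoking the absorption property of Lemma~\ref{lemma:absorbption}. Your explicit remark that the uniformity of $n_1$ over $[\gamma,\delta]$ comes from the uniform gap $\xi>0$ in the proof of Lemma~\ref{lemma:absorbption} (rather than from its pointwise statement) is a careful touch that the paper leaves implicit.
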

\begin{proof}
We can apply Lemma~\ref{lemma:basis-expansion} on $s = [\gamma, \delta] \cap [x_{\max}(a), x_{\min}(a)] \supset \{b\}$. Then there is a $n^*_1$ such that
\begin{equation*}
    \forall n \geq n^*_1 \quad f^n([\gamma, \delta], a, b) \supseteq f^n(s, a, b) = [f_{\min}(a), f_{\max}(a)]
\end{equation*}
But by Lemma~\ref{lemma:absorbption}, for $a > r_b$ there is a $n^*_2$ such that
\begin{equation*}
    \forall n \geq n^*_2 \quad [f_{\min}(a), f_{\max}(a)] \supseteq f^n([\gamma, \delta], a, b) 
\end{equation*}
Taking $v_b = \max \{ z_b, r_b\}$ and $n_0 = \max \{n^*_1, n^*_2 \}$ satisfies the theorem requirements.
\end{proof}

\clearpage
\section{Omitted Proofs of Section \ref{sec:dynamic}}
\label{app:dynamic}
\subsection{Forward \& Absorbing Set in Dynamic Learning Rates.}
In this first part of this section we prove the forward invariance and absorption property of set $$
    \Delta = \left[\displaystyle\min_{a \in [a_{\min},a_{\max}]}f_{\min}(a) , \displaystyle\max_{a \in [a_{\min},a_{\max}]}f_{\max}(a)\right].$$
\begin{lemma}[Restated Lemma~\ref{lemma:forward-dynamic}]
\label{app:lemma:forward-dynamic}
For all $a_{\min} > s_b$, then for every $n$  it holds that $\Delta$ is forward invariant, i.e.
\begin{align*}
x_n(x_0) \in \Delta \implies x_{n+1}(x_0) \in \Delta.
\end{align*}
\end{lemma}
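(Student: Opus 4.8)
The plan is to push the question down to the fixed–rate maps, exploiting that the employed learning rate $a_n(x_0)$ always lies in the compact window $[a_{\min},a_{\max}]$ by construction of $g$. Since $x_{n+1}(x_0)=f\big(x_n(x_0),a_n(x_0),b\big)$, it suffices to show that for every $a\in[a_{\min},a_{\max}]$ and every $x\in\Delta$ one has $f(x,a,b)\in\Delta$. First I would pin down the shape of $\Delta$: by Lemma~\ref{lemma:structure} the map $a\mapsto f_{\min}(a)$ is decreasing and $a\mapsto f_{\max}(a)$ is increasing for $a>s_b$ (taken, following the paper's convention, large enough that Lemmas~\ref{lemma:structure} and~\ref{lemma:forward} apply on $[a_{\min},\infty)$), so the minimum and maximum in the definition of $\Delta$ are both attained at $a=a_{\max}$; hence $\Delta=\mathcal F(a_{\max})$, and moreover $\mathcal F(a)=[f_{\min}(a),f_{\max}(a)]\subseteq\Delta$ for every $a\in[a_{\min},a_{\max}]$, because $f_{\min}(a)\ge f_{\min}(a_{\max})$ and $f_{\max}(a)\le f_{\max}(a_{\max})$.

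Next I would case–split on where $x$ sits relative to the current perpetual set $\mathcal F(a)$ with $a=a_n(x_0)$. If $x\in\mathcal F(a)$, then forward invariance of $\mathcal F(a)$ for $a>s_b$ (Lemma~\ref{lemma:forward}) gives $f(x,a,b)\in\mathcal F(a)\subseteq\Delta$. If $x<f_{\min}(a)$, then $x\in(0,f_{\min}(a))$, while $x\in\Delta$ forces $x\ge f_{\min}(a_{\max})$; the monotone non-overshooting lemma of Section~\ref{app:fixed} (``$x\in(0,f_{\min}(a))\Rightarrow f(x,a,b)\in(x,f_{\max}(a)]$'') then yields $f_{\min}(a_{\max})\le x< f(x,a,b)\le f_{\max}(a)\le f_{\max}(a_{\max})$, i.e.\ $f(x,a,b)\in\Delta$. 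The case $x>f_{\max}(a)$ is symmetric, using $x\le f_{\max}(a_{\max})$ together with the companion implication $x\in(f_{\max}(a),1)\Rightarrow f(x,a,b)\in[f_{\min}(a),x)$. Since these three cases exhaust $\Delta$ and $a_n(x_0)\in[a_{\min},a_{\max}]$ always, the implication $x_n(x_0)\in\Delta\Rightarrow x_{n+1}(x_0)\in\Delta$ follows for every $n$ and every $x_0$.

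I expect the only delicate point to be the regime $x\in\Delta\setminus\mathcal F(a_n(x_0))$, which genuinely occurs: when the current rate $a_n(x_0)$ is well below $a_{\max}$, the set $\mathcal F(a_n(x_0))$ is a \emph{proper} subinterval of $\Delta$, so forward invariance of $\mathcal F(a_n(x_0))$ alone says nothing about such an $x$. What rescues the argument is precisely that points outside $\mathcal F(a)$ are drawn toward $\mathcal F(a)$ monotonically and without overshooting past $f_{\min}(a)$ or $f_{\max}(a)$, hence can never escape the outer endpoints $f_{\min}(a_{\max})$ and $f_{\max}(a_{\max})$ of $\Delta$. I would flag this non-overshooting as the conceptual crux; everything else is an immediate consequence of the monotonicity facts already recorded in Section~\ref{app:fixed}, so no new estimates are required.
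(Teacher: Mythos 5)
Your proposal is correct and follows essentially the same route as the paper: the same three-way case split on the position of $x_n(x_0)$ relative to $\mathcal{F}(a_n(x_0))$, handling the inner case by forward invariance of the perpetual set and the two outer cases by the monotone non-overshooting implications $x\in(0,f_{\min}(a))\Rightarrow f(x,a,b)\in(x,f_{\max}(a)]$ and its mirror. The only (harmless) addition is your explicit identification $\Delta=\mathcal F(a_{\max})$ via the monotonicity of $f_{\min}$ and $f_{\max}$, which the paper leaves implicit.
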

\begin{proof}
We have three cases to consider. The first one 
\begin{equation*}
 x_n(x_0) \in [f_{\min}(a_n(x_0)), f_{\max}(a_n(x_0))] \implies x_{n+1}(x_0) \in [f_{\min}(a_n(x_0)), f_{\max}(a_n(x_0))]   
\end{equation*}
and the result follows trivially because of the perpetual set $\mathcal{F}(a_n(x_0))$. For the second case we get
\begin{equation*}
    x_n(x_0) \in \left[\min_{a \in [a_{\min},a_{\max}]}f_{\min}(a), f_{\min}(a_n(x_0))\right) \implies x_{n+1}(x_0) \in \left[\min_{a \in [a_{\min},a_{\max}]}f_{\min}(a), f_{\max}(a_n(x_0))\right]
\end{equation*}
and thus the result follows for this one as well. The last case is
\begin{equation*}
    x_n(x_0) \in \left(f_{\max}(a_n(x_0)), \max_{a \in [a_{\min},a_{\max}]}f_{\max}(a)\right] \implies x_{n+1}(x_0) \in \left[f_{\min}(a_n(x_0)), \max_{a \in [a_{\min},a_{\max}]}f_{\max}(a)\right]
\end{equation*}
Clearly the result holds for all cases.
\end{proof}

\begin{lemma}[Restated Lemma~\ref{lemma:absorption-dynamic}]
\label{app:lemma:absorption-dynamic}
For all $a_{\min} > s_b$ and $[\gamma, \zeta] \subset (0,1)$, there is an $n_0 \geq 0$ so that
\begin{align*}
      \forall n \geq n_0 \quad x_{n}([\gamma, \zeta]) \subseteq \Delta.
\end{align*}
\end{lemma}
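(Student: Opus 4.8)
The plan is to adapt the argument of the fixed-rate absorption result (Lemma~\ref{lemma:absorbption}) to the time-varying setting, exploiting the uniform control afforded by the compactness of the learning-rate range $[a_{\min},a_{\max}]$ together with the forward invariance of $\Delta$ established in Lemma~\ref{lemma:forward-dynamic}. Write $\ell = \min_{a\in[a_{\min},a_{\max}]} f_{\min}(a)$ and $r = \max_{a\in[a_{\min},a_{\max}]} f_{\max}(a)$, so that $\Delta = [\ell,r]$, and note that since $a_{\min}>s_b$ the Fact $f_{\min}(a)<b<f_{\max}(a)$ applies for every $a\in[a_{\min},a_{\max}]$, whence $\ell<b<r$. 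Also $\mathcal{F}(a_n(x_0))=[f_{\min}(a_n(x_0)),f_{\max}(a_n(x_0))]\subseteq\Delta$ always.

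The core structural observation is a no-overshoot property: if at some iteration $x_n(x_0)<\ell$, then $x_n(x_0)<\ell\le f_{\min}(a_n(x_0))<b$, so $x_n(x_0)<b$ forces $f(x_n(x_0),a_n(x_0),b)>x_n(x_0)$, while $f(x,a,b)\le f_{\max}(a)\le r$ on $(0,b)$; hence $x_{n+1}(x_0)\in(x_n(x_0),r]$. Symmetrically, $x_n(x_0)>r$ gives $x_{n+1}(x_0)\in[\ell,x_n(x_0))$. In particular the orbit can never jump across $\Delta$ from one side to the other.

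Next I would quantify the approach from the left. Assuming $\gamma<\ell$ (otherwise there is nothing to do on this side), the map $(x,a)\mapsto f(x,a,b)-x$ is continuous and strictly positive on the compact set $[\gamma,\ell]\times[a_{\min},a_{\max}]$ — positivity because $[\gamma,\ell]\subset(0,b)$ — hence bounded below by some $\xi>0$ that is independent of the initialization and of the iteration index. Thus, as long as $x_n(x_0)\in[\gamma,\ell)$, which persists with $x_n(x_0)\ge x_0\ge\gamma$ by the monotone increase just noted, we have $x_{n+1}(x_0)\ge x_n(x_0)+\xi$; this forces $x_n(x_0)\ge\ell$ after at most $m_1:=\lceil(\ell-\gamma)/\xi\rceil$ steps, at which point the no-overshoot property places the iterate in $\Delta$. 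An entirely symmetric argument, with a uniform decrement $\xi'>0$ on $[r,\zeta]\times[a_{\min},a_{\max}]$, shows that any $x_0\in(r,\zeta]$ lands in $\Delta$ within $m_2:=\lceil(\zeta-r)/\xi'\rceil$ steps, while $x_0\in[\ell,r]$ is already in $\Delta$.

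Finally, once $x_n(x_0)\in\Delta$, Lemma~\ref{lemma:forward-dynamic} keeps it there forever, so taking $n_0:=\max\{m_1,m_2\}$ yields $x_n([\gamma,\zeta])\subseteq\Delta$ for all $n\ge n_0$; this $n_0$ is uniform over $[\gamma,\zeta]$ precisely because the bounds $m_1,m_2$ depend only on $\gamma,\zeta$ and on $\xi,\xi'$, not on the particular starting point nor on the realized sequence of learning rates. The only mildly delicate point is this uniformity: it is the compactness of $[a_{\min},a_{\max}]$ that converts the pointwise intuition ``the orbit always moves toward $\mathcal{F}(a_n(x_0))$'' into a single positive step size valid simultaneously for every initialization and iteration, and I expect pinning down this uniform increment (and checking the no-overshoot behavior at the endpoints $\ell,r$) to be the main thing to get right.
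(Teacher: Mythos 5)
Your proposal is correct and follows essentially the same route as the paper: both arguments split off the left piece $[\gamma,\ell)$ and the right piece $(r,\zeta]$, extract a uniform positive increment $\xi=\min f(x,a,b)-x$ over the compact product with $[a_{\min},a_{\max}]$, conclude that the orbit enters $\Delta$ in finitely many steps without overshooting past it (since $f(x,a,b)\le f_{\max}(a)\le r$ on $(0,b)$), and then invoke the forward invariance of $\Delta$ from Lemma~\ref{lemma:forward-dynamic}. The only cosmetic differences are that you take the minimum over the closed interval $[\gamma,\ell]$ (which makes attainment of the minimum immediate) and give an explicit step count $\lceil(\ell-\gamma)/\xi\rceil$ where the paper argues by contradiction.
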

\begin{proof}
Let us define the three following sets 
\begin{align*}
    U_0 = \left[\gamma, \min_{a \in [a_{\min},a_{\max}]}f_{\min}(a)\right) \quad 
    \Delta_0 = \Delta \cap [\gamma, \zeta] \quad 
    V_0 = \left(\max_{a \in [a_{\min},a_{\max}]}f_{\max}(a), \zeta\right]
\end{align*}
It suffices to prove the theorem for each of them (if they are non-empty) because we can pick \[n_0 = \max\{n_0(U_0), n_0(\Delta_0), n_0(V_0)\}\] (excluding any empty sets) to satisfy the theorem.
Based on Lemma~\ref{lemma:forward-dynamic}, it is clear that $n_0(\Delta_0) = 0$. Moving on to the case of $U_0$, we know that in $U_0$ we have that $f(x, a, b) > x$. We can define a uniform bound on their difference
\begin{equation*}
    \min_{x \in U_0} \min_{a \in [a_{\min},a_{\max}]} \left[ f(x, a, b) -x \right] = \xi > 0
\end{equation*}
We now know that
\begin{equation*}
    x_1(U_0) \in \left[\gamma + \xi, \max_{a \in [a_{\min},a_{\max}]}f_{\max}(a)\right)
\end{equation*}
If $x_1(U_0) \subset \Delta$ then the result follows trivially for $n_0 = 1$. Otherwise we have that
\begin{equation*}
    x_1(U_0) \in \left[\gamma + \xi, \min_{a \in [a_{\min},a_{\max}]}f_{\min}(a)\right) \subset U_0
\end{equation*}
Applying recursively, either there is a $n_0$ such that $x_{n_0}(U_0) \subset \Delta$ and the theorem holds for $U_0$ or for all $n$ we have that $x_n(U_0)$ stays in a subset of $U_0$
\begin{equation*}
    x_n(U_0) \in \left[\gamma + n\xi, \min_{a \in [a_{\min},a_{\max}]}f_{\min}(a)\right) \subset U_0
\end{equation*}
But this is impossible since there is a $n_0>0$ such that 
\begin{equation*}
 \forall n \geq n_0 : \gamma + n\xi >   \min_{a \in [a_{\min},a_{\max}]}f_{\min}(a) 
\end{equation*}
The case of $V_0$ can be handled symmetrically.
\end{proof}
\subsection{Uniform Convergence of C\'{e}saro means \& Learning Rate.}
In this section, we prove the uniform convergence of learning rate, pseudo-regret and the average iteration while $n\to\infty$. We start with the asymptotic behavior of the introduced notion of pseudo-regret
\begin{lemma}[Restated Lemma~\ref{ref:average}]\label{app:ref:average}
Let $a_{\min} > s_b$ and $x_0 \in (0,1)$, then
\begin{equation*}
    \lim_{n \to \infty}\frac{1}{n+1}\sum_{i=0}^n a_i(x_0)(x_i(x_0) - b) = 0.
\end{equation*}
\end{lemma}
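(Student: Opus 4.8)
The plan is to exploit the closed form \eqref{eq:MWU-varying} for the iterates together with the uniform boundedness away from the endpoints furnished by Corollary~\ref{corollary:bound}. Write $S_n = \sum_{i=0}^{n-1} a_i(x_0)\,(x_i(x_0) - b)$, so that \eqref{eq:MWU-varying} reads $x_n(x_0) = x_0 / \bigl(x_0 + (1-x_0)e^{S_n}\bigr)$, and observe that the quantity we must drive to zero is precisely $S_{n+1}/(n+1)$.

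First I would fix $x_0 \in (0,1)$ and apply Corollary~\ref{corollary:bound} to an arbitrarily small closed subinterval $[\gamma,\zeta]$ with $x_0 \in [\gamma,\zeta] \subset (0,1)$; this is legitimate because the attraction to $\Delta$ is not merely an eventual phenomenon, since $\mathrm{dist}(x_n(x_0),\Delta)$ is zero or strictly decreasing at every step. We thus obtain a single $\delta = \delta(x_0) \in (0,\tfrac12)$ with $x_n(x_0) \in (\delta, 1-\delta)$ for all $n \ge 0$.

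Next I would invert the closed form. The inequality $x_n(x_0) > \delta$ rearranges to $e^{S_n} < \tfrac{(1-\delta)x_0}{\delta(1-x_0)}$, and $x_n(x_0) < 1-\delta$ rearranges to $e^{S_n} > \tfrac{\delta x_0}{(1-\delta)(1-x_0)}$. Taking logarithms, there is a finite constant $C = C(x_0,\delta) = \log\max\!\bigl\{\tfrac{(1-\delta)x_0}{\delta(1-x_0)},\, \tfrac{(1-\delta)(1-x_0)}{\delta x_0}\bigr\}$ with $|S_n| \le C$ for every $n \ge 0$. Since $\sum_{i=0}^n a_i(x_0)\,(x_i(x_0)-b) = S_{n+1}$, the C\'{e}saro-type average in the statement is bounded in absolute value by $C/(n+1) \to 0$, which is the claim.

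As for the main obstacle: there is no genuinely hard step, the entire content being front-loaded into Corollary~\ref{corollary:bound}, i.e. into the fact that trajectories initialized in the open interval never drift towards $\{0,1\}$. The one point requiring care is that this bound must hold for \emph{all} $n$, not just eventually: an eventual bound on $x_n$ would only yield an eventual bound on $S_n$ relative to some $S_{n_0}$, which is not obviously enough, whereas the non-eventual version makes the $1/(n+1)$ decay immediate. One should also note in passing that $C(x_0,\delta)$ degrades as $x_0 \to \{0,1\}$, which is exactly why the statement is confined to $(0,1)$ and does not extend to $[0,1]$, as the remark following the lemma anticipates.
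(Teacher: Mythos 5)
Your proposal is correct and follows essentially the same route as the paper: both invoke Corollary~\ref{corollary:bound} to confine $x_n(x_0)$ to $(\delta,1-\delta)$ for all $n$, invert the closed form \eqref{eq:MWU-varying} to deduce a uniform bound on $\bigl|\sum_{i=0}^{n-1} a_i(x_0)(x_i(x_0)-b)\bigr|$, and conclude by dividing by $n+1$. The only cosmetic difference is that the paper loosens the bound to the $x_0$-free constants $\delta^2 < e^{S_n} < \delta^{-2}$, while you keep the sharper $x_0$-dependent constant; your closing remarks on the necessity of a non-eventual bound and the degradation as $x_0\to\{0,1\}$ match the paper's own discussion.
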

\begin{proof}
Observe that $x_0 \in (0,1)$, so by Lemma~\ref{corollary:bound} there is a $\delta > 0$ such that $\delta < x_n < 1-\delta$. Thus we have
\begin{equation*}
    \frac{x_0}{1-\delta} < x_0 + (1-x_0)\exp\left(\sum_{i=0}^{n-1} a_i(x_0) (x_i(x_0)-b)\right) < \frac{x_0}{\delta}
\end{equation*}
so we have that
\begin{equation*}
    \delta^2 < \frac{x_0\delta}{1-\delta} < (1-x_0)\exp\left(\sum_{i=0}^{n-1} a_i(x_0) (x_i(x_0)-b)\right) < \frac{x_0(1-\delta)}{\delta} < \frac{1}{\delta}
\end{equation*}
Clearly we have that
\begin{equation*}
    \delta^2 < \exp\left(\sum_{i=0}^{n-1} a_i(x_0) (x_i(x_0)-b)\right) < \frac{1}{\delta^2}
\end{equation*}
By talking the logarithm and diving by $n$ we have
\begin{equation*}
    \frac{\ln(\delta^2)}{n} < \frac{1}{n}\sum_{i=0}^{n-1} a_i(x_0) (x_i(x_0)-b) < - \frac{\ln(\delta^2)}{n}
\end{equation*}
By taking the limit, the result follows easily.
\end{proof}
We proceed now to the uniform convergence of the adaptively changing learning rate 

\begin{lemma}[Restated Lemma~\ref{ref:uniform}]\label{app:ref:uniform}
Let $a_{\min} > s_b$. The sequence of functions $a_n(\cdot)$ is converging uniformly to the constant function $a^\star(\cdot) = g(0)$ in every interval $[\gamma, \zeta] \subset (0,1)$.
\end{lemma}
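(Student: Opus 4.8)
The plan is to bootstrap the pointwise limit of \Cref{app:ref:average} into a uniform one on $[\gamma,\zeta]$ and then transport it through the continuous map $g$. Recall that for every $m\ge 1$ one has $a_m(x_0)=g(r_m(x_0))$ with $r_m(x_0)=\tfrac1m\sum_{i=0}^{m-1}a_i(x_0)\big(x_i(x_0)-b\big)$, while $a^\star=g(0)$; hence it is enough to prove that $r_m(\cdot)\to 0$ \emph{uniformly} on $[\gamma,\zeta]$ and to combine this with uniform continuity of $g$ near $0$.

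First I would invoke \Cref{corollary:bound}: since $a_{\min}>s_b$, there is a single $\delta>0$ — depending only on the interval $[\gamma,\zeta]$ and not on the point — such that $x_n(x_0)\in(\delta,1-\delta)$ for all $x_0\in[\gamma,\zeta]$ and all $n\ge 0$; in particular $[\gamma,\zeta]\subseteq(\delta,1-\delta)$. Then I would rerun the estimate in the proof of \Cref{app:ref:average} while tracking uniformity: by identity~\eqref{eq:MWU-varying}, the two-sided bound $\delta<x_n(x_0)<1-\delta$ together with $\delta<x_0<1-\delta$ forces
\[
\delta^{2}\;<\;\exp\!\Big(\sum_{i=0}^{n-1}a_i(x_0)\big(x_i(x_0)-b\big)\Big)\;<\;\delta^{-2},
\]
so that $\big|\sum_{i=0}^{n-1}a_i(x_0)(x_i(x_0)-b)\big|\le 2\ln(1/\delta)$, and therefore
\[
\sup_{x_0\in[\gamma,\zeta]}\big|r_n(x_0)\big|\;\le\;\frac{2\ln(1/\delta)}{n}.
\]
The crucial point is that the constant $2\ln(1/\delta)$ does not depend on $x_0$, so $r_n\to 0$ uniformly on $[\gamma,\zeta]$ at rate $O(1/n)$.

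Finally I would use that $g$ is continuous and hence uniformly continuous on the compact set $[-1,1]$, which contains $0$ in its interior: given $\epsilon>0$, pick $\rho\in(0,1)$ with $|t|\le\rho\Rightarrow|g(t)-g(0)|\le\epsilon$, and then $n_0$ with $2\ln(1/\delta)/n_0\le\rho$. For all $n\ge n_0$ and all $x_0\in[\gamma,\zeta]$ we obtain $|r_n(x_0)|\le\rho$, whence $|a_n(x_0)-a^\star|=|g(r_n(x_0))-g(0)|\le\epsilon$, which is precisely the asserted uniform convergence.

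I do not anticipate a genuine obstacle: the entire argument rests on the single observation that the $\delta$ supplied by \Cref{corollary:bound} is uniform over $[\gamma,\zeta]$, which makes the estimate behind \Cref{app:ref:average} uniform; the only mild care required is to keep the argument $r_n(x_0)$ of $g$ inside a fixed compact set so that plain continuity of $g$ can be upgraded to uniform continuity — and this is automatic once $r_n\to0$ uniformly. It is also worth noting why this fails on $[0,1]$: as $\gamma\to 0$ or $\zeta\to 1$ the constant $\delta$ degenerates to $0$, the bound $2\ln(1/\delta)/n$ blows up, and uniformity is lost — consistent with the breakdown of convergence at $x_0\in\{0,1\}$.
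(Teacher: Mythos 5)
Your proposal is correct and follows essentially the same route as the paper's proof: apply \Cref{corollary:bound} to get a uniform $\delta$, rerun the sandwich estimate from \Cref{app:ref:average} to obtain $\abs*{r_n(x_0)}\le 2\ln(1/\delta)/n$ uniformly in $x_0$, and push the uniform convergence $r_n\to 0$ through the continuity of $g$ at $0$. Your write-up is if anything slightly more explicit than the paper's about why the constant is independent of $x_0$ and about the continuity step for $g$, but the argument is the same.
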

\begin{proof}
By Lemma~\ref{corollary:bound} $x_0 \in [\gamma, \zeta]$ there is a $\delta>0$ such $\delta < x_n < 1-\delta$. Following the steps of Lemma~\ref{ref:average}
\begin{equation*}
    \frac{\ln(\delta^2)}{n} < \frac{1}{n}\sum_{i=0}^{n-1} a_i(x_0) (x_i(x_0)-b) < - \frac{\ln(\delta^2)}{n}
\end{equation*}
Clearly this implies that for any $n>0$
\begin{equation*}
    a_{n}(x_0) = g\left(\frac{1}{n}\sum_{i=0}^{n-1} a_i(x_0) (x_i(x_0)-b)\right) \in g\left(\left[\frac{\ln(\delta^2)}{n}, - \frac{\ln(\delta^2)}{n}\right]\right) 
\end{equation*}
Since $g$ is continuous, for every $\epsilon>0$ there is a $n_0$ such that $\forall n \geq n_0$ we have that
\begin{equation*}
 \abs{a_n(x_0) - g(0)} \leq \epsilon.   
\end{equation*}
Uniform convergence follows immediately.
\end{proof}
Next, we can prove  the following uniform convergence result for the C\'{e}saro mean of the iterations of our non-autonomous dynamical system
\begin{lemma}[Restated Lemma~\ref{ref:cesaro}]\label{app:ref:cesaro}
Let $a_{\min} > s_b$ and $x_0 \in (0,1)$, then
\begin{equation*}
    \lim_{n \to \infty}\frac{1}{n+1}\sum_{i=0}^n x_i(x_0) = b.
\end{equation*}
\end{lemma}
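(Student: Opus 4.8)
The plan is to reduce the statement to Lemma~\ref{ref:average} by replacing the time-varying learning rates $a_i(x_0)$ with their common limit $a^\star = g(0)$; this is legitimate precisely because $a^\star$ is bounded away from zero (indeed $a^\star = g(0) \geq a_{\min} > s_b > 4 > 0$, using that $g$ is $\mathbb{R}^+$-valued and $a_{\min}$ is its infimum).

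First I would fix $x_0 \in (0,1)$. By Lemma~\ref{ref:uniform} the learning rates converge, in particular pointwise at $x_0$, so $a_i(x_0) \to a^\star$; and by Corollary~\ref{corollary:bound} the whole orbit $\{x_i(x_0)\}$ stays in some $(\delta, 1-\delta)$, hence $\abs{x_i(x_0) - b} \leq 1$ for all $i$. Then I would use the decomposition
\begin{equation*}
\frac{1}{n+1}\sum_{i=0}^n a_i(x_0)\bigl(x_i(x_0)-b\bigr)
= a^\star \cdot \frac{1}{n+1}\sum_{i=0}^n \bigl(x_i(x_0)-b\bigr)
\;+\; \frac{1}{n+1}\sum_{i=0}^n \bigl(a_i(x_0)-a^\star\bigr)\bigl(x_i(x_0)-b\bigr).
\end{equation*}

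For the cross term, the summands $\bigl(a_i(x_0)-a^\star\bigr)\bigl(x_i(x_0)-b\bigr)$ form a null sequence --- a vanishing factor times a bounded one --- so their Ces\`{a}ro averages also tend to $0$ (the elementary fact that Ces\`{a}ro means of a convergent sequence share its limit). Meanwhile the left-hand side tends to $0$ by Lemma~\ref{ref:average}. Combining these, $a^\star \cdot \frac{1}{n+1}\sum_{i=0}^n (x_i(x_0)-b) \to 0$, and since $a^\star \neq 0$ we may divide by it to obtain $\frac{1}{n+1}\sum_{i=0}^n (x_i(x_0)-b) \to 0$, i.e. $\frac{1}{n+1}\sum_{i=0}^n x_i(x_0) \to b$.

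The only delicate point is ensuring the limiting learning rate does not degenerate, i.e.\ that $a^\star = g(0) > 0$, since the argument finishes by dividing by $a^\star$; this is exactly where the hypothesis $a_{\min} > s_b$ (which forces $a^\star \geq a_{\min} > 4$) and the standing assumption that $g$ maps into $\mathbb{R}^+$ enter. Everything else is routine: the splitting is purely algebraic, the ``bounded $\times$ null $\Rightarrow$ Ces\`{a}ro-null'' fact is elementary, and I note that only the pointwise convergence $a_i(x_0)\to a^\star$ is used here, so the full uniform-convergence strength of Lemma~\ref{ref:uniform} is not required for this particular statement.
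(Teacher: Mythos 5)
Your proposal is correct and follows essentially the same route as the paper's proof: the same decomposition of $\frac{1}{n+1}\sum_i a_i(x_0)(x_i(x_0)-b)$ into $a^\star$ times the unweighted average plus a cross term with $\rho_i = a_i(x_0)-a^\star$, the same appeal to Lemma~\ref{ref:average} and to the fact that Ces\`{a}ro means of a null (bounded-times-vanishing) sequence vanish, and the same final division by $a^\star > 0$. Your added remark that only pointwise convergence $a_i(x_0)\to a^\star$ is needed here is accurate but does not change the argument.
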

\begin{proof}
Let us define $\rho_n = a_n(x_0) - a^*$. Obviously $\lim_{t \to \infty} \rho_n = 0$. Then applying Lemma~\ref{ref:average} we have
\begin{equation*}
    0 = \lim_{n \to \infty}\frac{1}{n+1}\sum_{i=0}^n a_i(x_0)(x_i(x_0) - b) = \lim_{n \to \infty}\left[\frac{a^*}{n+1} \sum_{i=0}^n (x_i(x_0) - b)  + \frac{1}{n+1}\sum_{i=0}^n \rho_i(x_i(x_0) - b)\right]
\end{equation*}
It is easy to prove the following statement:
\begin{fact}
Let $\lim_{n \to \infty}\gamma_n \to \gamma$. Then $\lim_{n \to \infty}\frac{\sum_{i=0}^n \gamma_i}{n+1} = \gamma$.
\end{fact}
Clearly we have that $\lim_{n \to \infty} \rho_n (x_n(x_0) -b) = 0$ since $x_n(x_0)$ is bounded. And thus we obviously have
\begin{equation*}
    0 = \lim_{n \to \infty}\left[\frac{a^*}{n+1} \sum_{i=0}^n (x_i(x_0) - b) \right]
\end{equation*}
Given that $a^*$ is positive, the theorem follows.
\end{proof}

\subsection{Uniform Convergence of varying step-size $\text{MWU}$ map for uniform convergent update step rule. }
\begin{lemma}[Restated Lemma~\ref{lemma:dynamic-strong-convergence}]\label{app:lemma:dynamic-strong-convergence}
Let $a_{\min}> r_b$. For every $k$, $\epsilon>0$ and $[\gamma, \zeta] \subset (0,1)$
\begin{equation*}
    \exists n_0: \forall n \geq n_0 \quad \max_{x_0 \in [\gamma, \zeta]} \abs{x_{n+k}(x_0) - f^k(x_n(x_0), a^*, b)} \leq \epsilon.
\end{equation*}
\end{lemma}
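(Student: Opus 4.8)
The plan is to fix $k$ and $\epsilon>0$ and to bound, uniformly over $x_0\in[\gamma,\zeta]$, the discrepancy $e_j(x_0)\defeq\abs{x_{n+j}(x_0)-z_j}$ between the true orbit and the reference orbit $z_j\defeq f^j(x_n(x_0),a^\star,b)$, for $j=0,\dots,k$, via a finite telescoping estimate. Two ingredients drive this. First, by Corollary~\ref{corollary:bound} there is $\delta>0$ with $x_m([\gamma,\zeta])\subseteq(\delta,1-\delta)$ for all $m\ge0$; shrinking $\delta$ if necessary we may also assume $\Delta\subseteq(\delta,1-\delta)$. Since $a^\star=g(0)\in[a_{\min},a_{\max}]$, the set $\mathcal{F}(a^\star)$ is forward invariant for $f(\cdot,a^\star,b)$ (Lemma~\ref{lemma:forward}), and any point of $(0,1)\setminus\mathcal{F}(a^\star)$ is mapped by $f(\cdot,a^\star,b)$ strictly toward $\mathcal{F}(a^\star)$ without overshooting; hence the orbit of any point of $(\delta,1-\delta)$ under $f(\cdot,a^\star,b)$ stays in $(\delta,1-\delta)$. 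Consequently both $x_{n+j}(x_0)$ and $z_j$ lie in the compact set $K\defeq[\delta,1-\delta]$ for all $n,j$. Second, by Lemma~\ref{ref:uniform}, $M_n\defeq\max_{0\le i\le k-1}\sup_{x_0\in[\gamma,\zeta]}\abs{a_{n+i}(x_0)-a^\star}\to0$ as $n\to\infty$.

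Next I would record the Lipschitz estimates on $K$. Since $f(x,a,b)$ is $C^1$ on $K\times[a_{\min},a_{\max}]$ with denominator $x+(1-x)e^{a(x-b)}\ge\delta^2>0$, there are constants $L_x,L_a$ (depending only on $\delta,b,a_{\min},a_{\max}$) such that $\abs{f(x,a^\star,b)-f(x',a^\star,b)}\le L_x\abs{x-x'}$ and $\abs{f(x,a,b)-f(x,a',b)}\le L_a\abs{a-a'}$ for $x,x'\in K$, $a,a'\in[a_{\min},a_{\max}]$; the second bound comes from the Mean Value Theorem applied to $a\mapsto f(x,a,b)$ and is uniform in $x$ because $\partial_a f$ is bounded on $K\times[a_{\min},a_{\max}]$. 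Using $x_{n+j+1}(x_0)=f(x_{n+j}(x_0),a_{n+j}(x_0),b)$ and $z_{j+1}=f(z_j,a^\star,b)$, the triangle inequality gives
\begin{equation*}
e_{j+1}(x_0)\le\abs{f(x_{n+j}(x_0),a_{n+j}(x_0),b)-f(x_{n+j}(x_0),a^\star,b)}+\abs{f(x_{n+j}(x_0),a^\star,b)-f(z_j,a^\star,b)}\le L_a\abs{a_{n+j}(x_0)-a^\star}+L_x e_j(x_0).
\end{equation*}
With $e_0=0$, unrolling this recursion yields $e_k(x_0)\le L_aM_n\sum_{j=0}^{k-1}L_x^{j}=:C_kM_n$ for every $x_0\in[\gamma,\zeta]$, where $C_k$ depends only on $k$ and the fixed constants. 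Since $M_n\to0$, choosing $n_0$ with $C_kM_n\le\epsilon$ for all $n\ge n_0$ finishes the proof.

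The main obstacle — and the only place the uniformity over $[\gamma,\zeta]$ is genuinely needed — is the uniform-in-$x$ Lipschitz bound of $f$ in $a$ together with the confinement step keeping both orbits inside a compact $K$ bounded away from $\{0,1\}$: outside such a $K$ the denominator $x+(1-x)e^{a(x-b)}$ can be arbitrarily small and $\partial_a f$ is unbounded, so the estimate breaks down. This is precisely why the statement is restricted to $[\gamma,\zeta]\subset(0,1)$, where Corollary~\ref{corollary:bound} supplies the required $\delta$, and why $a_n$ must converge to $a^\star$ uniformly rather than merely pointwise; the remaining telescoping over the finitely many steps $j=0,\dots,k-1$ is routine.
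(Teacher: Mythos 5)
Your proof is correct and rests on the same two pillars as the paper's: confinement of the trajectory to a compact set $[\delta,1-\delta]\subset(0,1)$ via Corollary~\ref{corollary:bound} (so that the relevant derivatives of $f$ are uniformly bounded), and the uniform convergence $a_n\to a^\star$ on $[\gamma,\zeta]$ from Lemma~\ref{ref:uniform}. Where you differ is in the decomposition. The paper defines the composed map $q(x,a_1,\dots,a_k,b)=f(\cdots f(x,a_1,b)\cdots,a_k,b)$, writes the discrepancy as $q(x_n(x_0),a_n(x_0),\dots,a_{n+k-1}(x_0),b)-q(x_n(x_0),a^\star,\dots,a^\star,b)$, and applies the multivariate Mean Value Theorem once to the whole composition, bounding the norm of its gradient in the learning-rate variables by a single constant $\phi$ over the compact region (it writes this out only for $k=2$ and asserts the general case is analogous). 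You instead propagate the error one step at a time via the recursion $e_{j+1}\le L_a\abs*{a_{n+j}(x_0)-a^\star}+L_x e_j$ and unroll. Your route buys an explicit constant $C_k=L_a\sum_{j=0}^{k-1}L_x^{j}$ and a genuine proof for arbitrary $k$, at the cost of one extra verification the paper's version absorbs into the boundedness of $\grad q$: you must check that the reference orbit $z_j=f^j(x_n(x_0),a^\star,b)$ itself stays in $[\delta,1-\delta]$, which you correctly supply from the forward invariance of $\mathcal{F}(a^\star)$ (Lemma~\ref{lemma:forward}, using $a^\star\ge a_{\min}>s_b$) together with the monotone, non-overshooting attraction of points outside $\mathcal{F}(a^\star)$ toward it. Both arguments are sound; yours is arguably the cleaner one for general $k$.
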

\begin{proof}
For the sake of readability, we will present the case $k=2$ and with a
completely similar fashion we can prove the result for any iterate $k$.
    \begin{lemma}
Let $a_{\min}> s_b$. For every $\epsilon>0$ and $[\gamma, \zeta] \subset (0,1)$
\begin{equation*}
    \exists n_0: \forall n \geq n_0 \quad \max_{x_0 \in [\gamma, \zeta]} \abs{x_{n+2}(x_0) - f^2(x_n(x_0), a^*, b)} \leq \epsilon.
\end{equation*}
\end{lemma}
\begin{proof}
By Corollary~\ref{corollary:bound} there is a $\delta > 0$ such that for any $x_0 \in [\gamma, \zeta]$, we have that $x_n(x_0) \in (\delta, 1-\delta)$. Let us define the following function that takes an $x$ and applies $f$ with learning rates $a_1$ and $a_2$
\begin{equation*}
    q(x, a_1, a_2, b) = f(f(x, a_1, b), a_2, b)
\end{equation*}
With this definition in mind we have that
\begin{equation*}
    D(x_0) = x_{n+2}(x_0) - f^2(x_n(x_0), a^*, b) = q(x_n(x_0), a_{n}(x_0), a_{n+1}(x_0), b) - q(x_n(x_0), a^*, a^*, b)
\end{equation*}
By the mean value theorem we have that there exists an $c \in (0, 1)$ such that 
\begin{align*}
    (a_1, a_2) &= c(a_n(x_0), a_{n+1}(x_0)) + (1-c) (a^*, a^*)
\end{align*}
and also
\begin{align*}
    D(x_0)= \frac{\partial q(x_n(x_0), a_1, a_2, b)}{\partial a_1}(a_n(x_0) - a^*) + \frac{\partial q(x_n(x_0), a_1, a_2, b)}{\partial a_2}(a_{n+1}(x_0) - a^*)
\end{align*}
The following value is finite 
\begin{align*}
    \phi = \max_{\substack{x \in (\delta, 1-\delta), \\\ a_1, a_2 \in (a_{\min}, a_{\max})}} \sqrt{\left(\frac{\partial q(x, a_1, a_2, b)}{\partial a_1}\right)^2 + \left(\frac{\partial q(x, a_1, a_2, b)}{\partial a_2} \right)^2}.
\end{align*}
Then we have
\begin{equation*}
   \abs{x_{n+2}(x_0) - f^2(x_n(x_0), a^*, b)} \leq \phi \sqrt{(a_n(x_0) - a^*)^2 + (a_{n+1}(x_0) - a^*)^2}
\end{equation*}
By the uniform convergence of $a_n(x_0)$ and $a_{n+1}(x_0)$ to $a^*$, the result follows immediately.
\end{proof}
\end{proof}

\begin{lemma}[Restated Lemma~\ref{lemma:dynamic-learning:volume-expansion-basic}]\label{app:lemma:dynamic-learning:volume-expansion-basic}
For every $b \in (0,1)$, there is an $z_b$ such that for all $a_{\min} > z_b$ it holds that for every $[\gamma, \delta]$  such that $\{ b\} \subset [\gamma, \delta] \subseteq \mathcal{D}(a_{\min})=[x_{\max}(a_{\min}), x_{\min}(a_{\min})]$
\begin{equation*}
     \exists n_0: \ \forall n \geq n_0 \ \  x_{n}([\gamma, \delta]) \supseteq 
     \mathcal{F}(a_{\min}).
\end{equation*}
\end{lemma}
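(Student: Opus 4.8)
The plan is to bridge the fixed-rate expansion statement (Lemma~\ref{lemma:basis-expansion}) and the uniform tracking estimate (Lemma~\ref{lemma:dynamic-strong-convergence}) through the observation that the image intervals $I_n:=x_n([\gamma,\delta])$ never lose a neighbourhood of the Nash point $b$ of a \emph{fixed}, $n$-independent size, because $b$ is a fixed point of the whole non-autonomous system that is uniformly repelling. First I would fix $z_b$ larger than every threshold appearing in Section~\ref{sec:fixed} (in particular those of Lemmas~\ref{lemma:structure}, \ref{lemma:basis-expansion} and \ref{lemma:dynamic-strong-convergence}) and moreover large enough that $a_{\min}b(1-b)>2$, and write $a^\star:=g(0)=\lim_n a_n\in[a_{\min},a_{\max}]$. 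Since $g$ peaks at $0$ we have $a^\star=a_{\max}$; if $a_{\max}=a_{\min}$ the learning rate is constant and the statement is exactly Lemma~\ref{lemma:basis-expansion}, so I may assume $a^\star>a_{\min}$ (and $\gamma<b<\delta$).

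The first real step is to show that $b$ is \emph{uniformly} repelling for the non-autonomous flow. For every $a$ one has $f(b,a,b)=b$ and $\partial_a f(b,a,b)=0$, while $\partial_x f(b,a,b)=1-ab(1-b)<-1$ for all $a\geq a_{\min}$ by the choice of $z_b$; by continuity of $\partial_x f$ and compactness of $[a_{\min},a_{\max}]$ there are $\lambda>1$ and $\rho_\star>0$ with $\partial_x f(x,a,b)<-\lambda$ on $(b-\rho_\star,b+\rho_\star)\times[a_{\min},a_{\max}]$. Since $a_n(x_0)\in[a_{\min},a_{\max}]$ always, I would write $x_{n+1}(x_0)-b=f(x_n(x_0),a_n(x_0),b)-f(b,a_n(x_0),b)$ and apply the mean value theorem in the first argument at the common rate $a_n(x_0)$: whenever $x_n(x_0)\in(b-\rho_\star,b+\rho_\star)$, the point $x_{n+1}(x_0)$ lies on the opposite side of $b$ from $x_n(x_0)$, at distance $>\lambda\,|x_n(x_0)-b|$. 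Comparing against $f(b,a_n(x_0),b)$ \emph{at the same instantaneous rate} is the key device: it removes the otherwise hard-to-control term $\partial_a f\cdot\partial_{x_0}a_n$ and collapses the estimate onto the one-variable derivative near $b$.

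With this in hand I would set $\rho:=\min\{\rho_\star,\,b-\gamma,\,\delta-b\}>0$ and argue that $I_n\supseteq(b-\rho,b+\rho)$ for all $n\geq0$: $x_n(b)=b$ (the Nash point generates zero pseudo-regret and is fixed by every $f(\cdot,a,b)$), so $I_n$ is an interval containing $b$; $I_0=[\gamma,\delta]\supseteq(b-\rho,b+\rho)$; and if $I_n\supseteq(b-\rho,b+\rho)$ then running the previous estimate over all $x_0$ with $x_n(x_0)$ sweeping $(b-\rho,b+\rho)$ shows $I_{n+1}$ reaches past $b\pm\lambda\rho'$ on both sides for every $\rho'<\rho$, so, being connected and containing $b$, $I_{n+1}\supseteq(b-\lambda\rho,b+\lambda\rho)\supseteq(b-\rho,b+\rho)$. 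Next, since $(b-\rho,b+\rho)\subseteq(\gamma,\delta)\subseteq\mathcal{D}(a_{\min})\subseteq\mathcal{D}(a^\star)$ and $a^\star>z_b$, I would invoke Lemma~\ref{lemma:basis-expansion} for the closed subinterval $[b-\rho/2,b+\rho/2]$ at fixed rate $a^\star$ to obtain an $m$ with $f^m((b-\rho,b+\rho),a^\star,b)=\mathcal{F}(a^\star)$; and since $a^\star>a_{\min}$ with $f_{\min}$, $f_{\max}$ strictly monotone (Lemma~\ref{lemma:structure}), $\mathcal{F}(a_{\min})$ lies in the interior of $\mathcal{F}(a^\star)$, so some $\epsilon_0>0$ gives $[f_{\min}(a_{\min})-\epsilon_0,\,f_{\max}(a_{\min})+\epsilon_0]\subseteq f^m((b-\rho,b+\rho),a^\star,b)$.

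Finally I would transfer this to the non-autonomous system. Applying Lemma~\ref{lemma:dynamic-strong-convergence} with $k=m$ and accuracy $\epsilon_0$ on $[\gamma,\delta]$ (legitimate because Corollary~\ref{corollary:bound} confines the orbit to a compact subinterval of $(0,1)$) gives $n_2$ with $|x_{n+m}(x_0)-f^m(x_n(x_0),a^\star,b)|\leq\epsilon_0$ for all $n\geq n_2$ and $x_0\in[\gamma,\delta]$. For such $n$, $I_n\supseteq(b-\rho,b+\rho)$ yields $f^m(I_n,a^\star,b)\supseteq[f_{\min}(a_{\min})-\epsilon_0,\,f_{\max}(a_{\min})+\epsilon_0]$; picking $x_0^L,x_0^R\in[\gamma,\delta]$ whose $f^m(x_n(\cdot),a^\star,b)$-values are $f_{\min}(a_{\min})-\epsilon_0$ and $f_{\max}(a_{\min})+\epsilon_0$, the tracking bound forces $x_{n+m}(x_0^L)\leq f_{\min}(a_{\min})$ and $x_{n+m}(x_0^R)\geq f_{\max}(a_{\min})$, so the interval $x_{n+m}([\gamma,\delta])$ contains $\mathcal{F}(a_{\min})$; take $n_0:=n_2+m$. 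The step I expect to be the main obstacle is the middle one --- guaranteeing a neighbourhood of $b$ whose size does not shrink with $n$ while the rate $a_n(x_0)$ roams over $[a_{\min},a_{\max}]$ with an $x_0$-dependence that resists direct differentiation; the comparison-at-the-same-rate identity is what makes it tractable, and the uniform hyperbolicity of $\partial_x f$ near $b$ over $a\in[a_{\min},a_{\max}]$ that it then requires is exactly what $a_{\min}b(1-b)>2$ buys.
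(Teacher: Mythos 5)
Your proof is correct in its essentials, but it takes a genuinely different route from the paper's. The paper never invokes the limit rate $a^\star$, the uniform convergence of $a_n$, or the tracking estimate of Lemma~\ref{lemma:dynamic-strong-convergence}. Instead it runs a pure domination argument against the \emph{fixed} rate $a_{\min}$: using the monotonicity of $f(x,a,b)$ in $a$ on either side of $b$ (Equation~\ref{eq:simple:increase-alpha}), it shows inductively that $x_n([b,x'])\supseteq f^n([b,x'],a_{\min},b)$ for a suitable $x'$, so the non-autonomous image escapes $\mathcal{D}(a_{\min})$ no later than the fixed-$a_{\min}$ orbit does (whose escape is guaranteed by the no-period-two and instability arguments behind Lemma~\ref{lemma:basis-expansion}), and then covers $\mathcal{F}(a_{\min})$ within two further steps. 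That argument uses only $a_n(x)\geq a_{\min}$, which is why the paper's remark can claim the result ``regardless of the choice of the update rule.'' Your argument instead maintains a uniform neighbourhood of $b$ inside every image $I_n$ via the uniform repulsion $\partial_x f(b,a,b)=1-ab(1-b)<-1$ over $a\in[a_{\min},a_{\max}]$ (a clean and correct device --- comparing against $f(b,a_n(x_0),b)$ at the same instantaneous rate does kill the awkward $\partial_a$ term, and $x_n(b)=b$ holds since $b$ generates zero pseudo-regret), and then transfers the fixed-rate expansion at $a^\star$ through Lemma~\ref{lemma:dynamic-strong-convergence}. There is no circularity: that tracking lemma rests only on Corollary~\ref{corollary:bound} and Lemma~\ref{ref:uniform}, not on the present statement.

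The one caveat is your reduction to $a^\star>a_{\min}$. You need a strict margin $\epsilon_0>0$ between $\mathcal{F}(a_{\min})$ and $\mathcal{F}(a^\star)$ to absorb the tracking error, and you obtain $a^\star=a_{\max}$ from ``$g$ peaks at $0$.'' Formally the model only assumes $g$ continuous with values in $[a_{\min},a_{\max}]$, so the case $g(0)=a_{\min}<a_{\max}$ is not covered by your case split: there the tracking argument only yields $x_{n+m}([\gamma,\delta])\supseteq[f_{\min}(a_{\min})+\epsilon,\,f_{\max}(a_{\min})-\epsilon]$ for arbitrary $\epsilon>0$, which falls just short of the stated conclusion. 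The paper's comparison argument handles this case for free; if you want to keep your architecture you would have to graft the monotonicity-in-$a$ observation onto the endpoints to close that gap. Under the paper's intended reading of the model (peak at $0$), your proof is complete.
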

\begin{proof}
Our first observation is that for two $a_1 >  a_2 > 0$
\begin{equation}\label{eq:simple:increase-alpha}
\begin{aligned}
    1> x > b &\implies f(x, a_1, b) < f(x, a_2, b) \\
    0< x < b &\implies f(x, a_1, b) > f(x, a_2, b) \\
    x = b &\implies f(x, a_1, b) = f(x, a_2, b) = b
\end{aligned}
\end{equation}
Let us define the following subset of $[\gamma, \delta]$
\begin{equation*}
    [\gamma', \delta'] = [\gamma, \min \{f(\gamma, a_{\min}, b) , \delta\}]
\end{equation*}
Using the same arguments as in \Cref{lemma:basis-expansion} we know that there is a $x' \in (b, \delta']$ and a minimal $n^*$ such that
\begin{equation*}
    f^{n^*}(x', a_{\min}, b) \not\in [x_{\max}(a_{\min}), x_{\min}(a_{\min})]
\end{equation*}
We are going to assume $f^{n^*}(x', a_{\min}, b) > x_{\min}(a_{\min})$. The case of $f^{n^*}(x', a_{\min}, b) < x_{\max}(a_{\min})$ is entirely symmetric. Since $n^*$ is minimal we have that
\begin{equation*}
    \forall 0 \leq n < n^* \quad f^{n^*}(x', a_{\min}, b) \in [x_{\max}(a_{\min}), x_{\min}(a_{\min})]
\end{equation*}
As a result we have that
\begin{align*}
    \forall 0 \leq 2n < n^* \quad f^{2n}([b, x'], a_{\min}, b) =&  [b, f^{2n}(x', a_{\min}, b)] \\
    \forall 0 < 2n +1 < n^* \quad f^{2n +1}([b, x'], a_{\min}, b) =&  [f^{2n+1}(x', a_{\min}, b), b]
\end{align*}
Using the above equations as well as \Cref{eq:simple:increase-alpha} we can recursively prove that
\begin{equation*}
    \forall 0 \leq n < n^* \quad x_n([b, x']) \supseteq  f^n([b, x'], a_{\min}, b)
\end{equation*}
We can deduce that
\begin{equation*}
    x_{n^*}([b, x']) \supseteq [b, f^{n^*}(x', a_{\min}, b)] \supseteq [b, x_{\min}(a_{\min})]
\end{equation*}
With one more iteration we have
\begin{equation*}
    x_{n^*+1}([b, x']) \supseteq [f_{\min}(a_{\min}), b]
\end{equation*}
We can also pick a $x'' \in [\gamma', b]$ such that $f(x'', a_{\min}, b) = x'$. By construction, we know that $n^*+1$ is the first iteration such that
\begin{equation*}
    f^{n^*+1}(x'', a_{\min}, b) \not\in [x_{\max}(a_{\min}), x_{\min}(a_{\min})].
\end{equation*}
By similar arguments as above we can prove that
\begin{equation*}
    x_{n^*+1}([x'', b]) \supseteq [b, f^{n^*+1}(x'', a_{\min}, b)] = [b, f^{n^*}(x', a_{\min}, b)] \supseteq [b, x_{\min}(a_{\min})]
\end{equation*}
As a result we now have
\begin{equation*}
    x_{n^*+1}([x'', x']) \supseteq  [f_{\min}(a_{\min}), x_{\min}(a_{\min})] \supseteq [x_{\max}(a_{\min}), x_{\min}(a_{\min})]
\end{equation*}
It follows directly that
\begin{equation*}
    x_{n^*+2}([\gamma, \delta]) \supseteq x_{n^*+2}([x'', x']) \supseteq [f_{\min}(a_{\min}), f_{\max}(a_{\min})] \supseteq [x_{\max}(a_{\min}), x_{\min}(a_{\min})]
\end{equation*}
Applying the above step recursively we get
\begin{equation*}
    \forall n \geq n^*+2 \quad x_{n}([\gamma, \delta]) \supseteq [f_{\min}(a_{\min}), f_{\max}(a_{\min})]
\end{equation*}
so $n_0=n^*+2$ satisfies the requirements of the theorem.
\end{proof}

\begin{theorem}[Restated Theorem~\ref{theorem:dynamic-volume-expansion}]\label{app:theorem:dynamic-volume-expansion}
For $a_{\min} > z_b$ and for any sufficient small $\epsilon > 0$ such that  $a^\star-\epsilon >a_{\min} > z_b$ we have that for all $[\gamma, \delta]$ such that 
$\{ b\} \subset [\gamma, \delta] \subset (0,1)$, it holds that
\begin{equation*}
      \exists n_0: \ \forall n \geq n_0 \quad x_{n}([\gamma, \delta]) \supseteq \mathcal{F}(a^\star-\epsilon).
\end{equation*}
\end{theorem}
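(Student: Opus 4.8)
The plan is to combine the uniform convergence of the learning rates (Lemma~\ref{ref:uniform}) with a rerun of the volume-expansion mechanism of Lemma~\ref{lemma:dynamic-learning:volume-expansion-basic}, now against the larger rate floor $\tilde a:=a^\star-\epsilon$ in place of $a_{\min}$. First, since $[\gamma,\delta]\subset(0,1)$, Lemma~\ref{ref:uniform} furnishes an $n^\dagger$ with $a_n(x_0)>\tilde a$ for all $n>n^\dagger$ and all $x_0\in[\gamma,\delta]$, so past time $n^\dagger$ every learning rate encountered along every orbit emanating from $[\gamma,\delta]$ exceeds $\tilde a$. Second, I must arrange that, by the time the rates are boosted, the image of $[\gamma,\delta]$ already contains a closed neighborhood of $b$ lying inside $\mathcal D(\tilde a)$. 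Since $a_{\min}>z_b$, Lemma~\ref{lemma:structure} places $b$ strictly inside $\mathcal D(a_{\min})$, so $s:=[\gamma,\delta]\cap\mathcal D(a_{\min})$ is a nondegenerate closed interval with $b$ in its interior; applying Lemma~\ref{lemma:dynamic-learning:volume-expansion-basic} to $s$ yields an $n_1$ with $x_n([\gamma,\delta])\supseteq x_n(s)\supseteq\mathcal F(a_{\min})\supseteq\mathcal D(a_{\min})$ for every $n\ge n_1$. Because $a_{\min}<\tilde a$ forces $\mathcal D(a_{\min})\subseteq\mathcal D(\tilde a)$ (monotonicity of $x_{\min/\max}(\cdot)$ in $a$), at time $\bar n:=\max\{n_1,n^\dagger\}+1$ the set $x_{\bar n}([\gamma,\delta])$ contains the neighborhood $\mathcal D(a_{\min})$ of $b$, and this neighborhood sits inside $\mathcal D(\tilde a)$.

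Now restart the clock at $\bar n$. For each $w\in\mathcal D(a_{\min})$ choose $x_0\in[\gamma,\delta]$ with $x_{\bar n}(x_0)=w$; the tail $(x_{\bar n+k}(x_0))_{k\ge0}$ is a genuine orbit of the non-autonomous system whose learning rates $a_{\bar n+k}(x_0)$ all exceed $\tilde a$ (here we use $\bar n>n^\dagger$). Hence the shifted images $x_{\bar n+k}([\gamma,\delta])$ contain the $k$-step images of $\mathcal D(a_{\min})$ under a non-autonomous system whose rates are uniformly bounded below by $\tilde a$, which is exactly the setting of Lemma~\ref{lemma:dynamic-learning:volume-expansion-basic}. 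Replaying that proof with $\tilde a$ in the role of $a_{\min}$ and $[\gamma'',\delta'']:=\mathcal D(a_{\min})$ in the role of the starting interval: the hypothesis $\{b\}\subset[\gamma'',\delta'']\subseteq\mathcal D(\tilde a)$ holds; all thresholds bundled into $z_b$ (absence of a $2$-periodic orbit in $\mathcal D(\tilde a)$, instability of the fixed point $b$, the ordering of Lemma~\ref{lemma:structure}, forward invariance and perpetuity of $\mathcal F(\tilde a)$) are met since $\tilde a>a_{\min}>z_b$; and the varying rates enter only through the comparison inequality \eqref{eq:simple:increase-alpha} relating $f(\cdot,a,b)$ with $a\ge\tilde a$ to the fixed-rate map $f(\cdot,\tilde a,b)$. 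This produces an $m^\star$ with $x_{\bar n+k}([\gamma,\delta])\supseteq\mathcal F(\tilde a)$ for every $k\ge m^\star$, and the same recursive covering argument that concludes the proof of Lemma~\ref{lemma:dynamic-learning:volume-expansion-basic} propagates the containment to all subsequent iterates; taking $n_0:=\bar n+m^\star$ completes the proof.

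The main obstacle is recognizing that the monotone-comparison engine of Lemma~\ref{lemma:dynamic-learning:volume-expansion-basic} is genuinely \emph{modular} in the rate floor: its proof never uses that the varying rates equal, or converge to, $a_{\min}$, only that each one is at least $a_{\min}$ and that $f(\cdot,a_{\min},b)$ is itself chaotic enough in $\mathcal D(a_{\min})$, so the very same reasoning applies with $a_{\min}$ replaced by any $\tilde a>z_b$, provided one views the dynamics from a time after which all rates exceed $\tilde a$. The two remaining points are bookkeeping rather than conceptual: $[\gamma,\delta]$ need not lie inside $\mathcal D(a_{\min})$, which is why we first pass to $s=[\gamma,\delta]\cap\mathcal D(a_{\min})$ to bootstrap into a neighborhood of $b$; and the containment $x_n([\gamma,\delta])\supseteq\mathcal F(\tilde a)$ must be shown for \emph{all} $n\ge n_0$ and not merely along a subsequence, which is precisely what the recursion at the end of Lemma~\ref{lemma:dynamic-learning:volume-expansion-basic} delivers.
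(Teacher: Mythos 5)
Your proposal is correct and follows essentially the same route as the paper: use Lemma~\ref{ref:uniform} to find an $n^{\dag}$ after which every learning rate along every orbit from $[\gamma,\delta]$ exceeds $\tilde a=a^\star-\epsilon$, and then replay the monotone-comparison volume-expansion engine of Lemma~\ref{lemma:dynamic-learning:volume-expansion-basic} with $\tilde a$ in the role of the rate floor, starting the clock at that later time. The only (harmless) difference is in the bootstrapping step: you first invoke Lemma~\ref{lemma:dynamic-learning:volume-expansion-basic} as a black box to guarantee the image contains the nondegenerate neighborhood $\mathcal D(a_{\min})\subseteq\mathcal D(\tilde a)$ of $b$, whereas the paper obtains its seed interval directly as $x_{n^{\dag}}([\gamma,\delta])\cap\mathcal D(\tilde a)$.
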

\begin{proof}
We will imitate the proof strategy of the above lemma by showing that $ a_{\min}$ could be substitute by any $\tilde{a}=a^*-\epsilon$
for any sufficient small $\epsilon > 0$ such that  $a^*-\epsilon >a_{\min} > z_b$. Our first observation is that thanks to the uniform convergence of $a_n \to a^*$ in $[\gamma, \delta] \subset [0,1]$ there exists a $n^{\dag}$ such that 
\[
\forall n \geq n^{\dag} \quad a_{n}([\gamma, \delta]) \subseteq (a^*-\epsilon, a^*+\epsilon)
\]
which implies that  there exists a $n^{\dag}$ such that
\begin{equation}\label{eq:lower-bound-alpha}
    \forall n > n^{\dag} \quad \forall x \in [\gamma, \delta] \quad a_n(x) > a^*-\epsilon = \tilde{a}
\end{equation}

Again notice that  for any $a_n(x) >  \tilde{a} > 0$
\begin{equation}\label{eq:increase-alpha}
\begin{aligned}
    1> x > b &\implies f(x, a_n(x), b) < f(x,  \tilde{a}, b) \\
    0< x < b &\implies f(x, a_n(x), b) > f(x,  \tilde{a}, b) \\
    x = b &\implies f(x, a_n(x), b) = f(x,  \tilde{a}, b) = b
\end{aligned}
\end{equation}
Let us define the following intervals
\begin{align*}
    [r, q] &= x_{n^{\dag}}([\gamma, \delta]) \cap [x_{\max}(\tilde{a}), x_{\min}(\tilde{a})] \supset \{b\}\\
    [\gamma', \delta'] &= [r, \min \{f(r, \tilde{a}, b) , q\}]
\end{align*}
Using the same arguments as in \Cref{lemma:basis-expansion} we know that there is a $x' \in (b, \delta']$ and a minimal $n^*$ such that
\begin{equation*}
    f^{n^*}(x', \tilde{a}, b) \not\in [x_{\max}(\tilde{a}), x_{\min}(\tilde{a})]
\end{equation*}
We are going to assume $f^{n^*}(x', \tilde{a}, b) > x_{\min}(\tilde{a})$. The case of $f^{n^*}(x', \tilde{a}, b) < x_{\max}(\tilde{a})$ is entirely symmetric. Since $n^*$ is minimal we have that
\begin{equation*}
    \forall 0 \leq n < n^* \quad f^{n^*}(x', \tilde{a}, b) \in [x_{\max}(\tilde{a}), x_{\min}(\tilde{a})]
\end{equation*}
As a result we have that
\begin{align*}
    \forall 0 \leq 2n < n^* \quad f^{2n}([b, x'], \tilde{a}, b) =&  [b, f^{2n}(x', \tilde{a}, b)] \\
    \forall 0 < 2n +1 < n^* \quad f^{2n +1}([b, x'], \tilde{a}, b) =&  [f^{2n+1}(x', \tilde{a}, b), b]
\end{align*}
Using the above equations as well as \Cref{eq:lower-bound-alpha,eq:increase-alpha} we can recursively prove that
\begin{equation*}
    \forall 0 \leq n < n^* \quad x_{n+n^{\dag}}([b, x']) \supseteq  f^n([b, x'], \tilde{a}, b)
\end{equation*}
We can deduce that
\begin{equation*}
    x_{n^{\dag}+n^*}([b, x']) \supseteq [b, f^{n^*}(x', \tilde{a}, b)] \supseteq [b, x_{\min}(\tilde{a})]
\end{equation*}
With one more iteration we have
\begin{equation*}
    x_{n^{\dag}+n^*+1}([b, x']) \supseteq [f_{\min}(\tilde{a}), b]
\end{equation*}
We can also pick a $x'' \in [\gamma', b]$ such that $f(x'', \tilde{a}, b) = x'$. By construction, we know that $n^*+1$ is the first iteration such that
\begin{equation*}
    f^{n^{\dag}+n^*+1}(x'', \tilde{a}, b) \not\in [x_{\max}(\tilde{a}), x_{\min}(\tilde{a})].
\end{equation*}
By similar arguments as above we can prove that
\begin{equation*}
    x_{n^{\dag}+n^*+1}([x'', b]) \supseteq [b, f^{n^{\dag}+n^*+1}(x'', \tilde{a}, b)] = [b, f^{n^{\dag}+n^*}(x', \tilde{a}, b)] \supseteq [b, x_{\min}(\tilde{a})]
\end{equation*}
As a result we now have
\begin{equation*}
    x_{n^{\dag}+n^*+1}([x'', x']) \supseteq  [f_{\min}(\tilde{a}), x_{\min}(\tilde{a})] \supseteq [x_{\max}(\tilde{a}), x_{\min}(\tilde{a})]
\end{equation*}
It follows directly that
\begin{equation*}
    x_{n^{\dag}+n^*+2}([\gamma, \delta]) \supseteq x_{n^{\dag}+n^*+2}([x'', x']) \supseteq [f_{\min}(\tilde{a}), f_{\max}(\tilde{a})] \supseteq [x_{\max}(\tilde{a}), x_{\min}(\tilde{a})]
\end{equation*}
Applying the above step recursively we get
\begin{equation*}
    \forall n \geq n^{\dag}+n^*+2 \quad x_{n}([\gamma, \delta]) \supseteq [f_{\min}(\tilde{a}), f_{\max}(\tilde{a})]
\end{equation*}
so $n_0=n^{\dag}+n^*+2$ satisfies the requirements of the theorem.
\end{proof}
\clearpage
\section{Omitted Proofs of Section \ref{sec:turbulence}}
\label{app:turbulence}
We start this appendix by recalling the proof of 3-period focusing on the location the 3-period orbit and the $\mathcal{F}(a)$.

\begin{theorem}
 If $b \in (0, 1) \setminus \{\frac{1}{2}\}$, then there exists 
 $a_b$ such that for all $a > u_b$ it holds that the map $f(x,a,b)$ has periodic orbit of period $3$ in the interior of $\mathcal{F}(a)=[f_{\min}(a), f_{\max}(a)]$.
\end{theorem}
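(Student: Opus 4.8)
The plan is to produce, for $a$ large, a point whose first three iterates realise the Li--Yorke ``$A$-$B$-$C$'' configuration, with all the relevant points lying in the open interval $(f_{\min}(a),f_{\max}(a))$, and then to invoke the itinerary argument inside the proof of \cite{liyorke} to extract a genuine orbit of period exactly $3$ confined to that open interval. By the identity $f(1-x,a,1-b)=1-f(x,a,b)$ (immediate from the closed form of $f$), the map $f(\cdot,a,1-b)$ is conjugate to $f(\cdot,a,b)$ through $x\mapsto 1-x$, which swaps $x_{\min}(a)\leftrightarrow x_{\max}(a)$ and hence carries $\mathcal{F}(a)$ for parameter $1-b$ onto $\mathcal{F}(a)$ for parameter $b$, interior to interior; so it suffices to treat $b\in(0,\tfrac12)$ and to set $u_b:=u_{1-b}$ for $b>\tfrac12$.

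Fix $b\in(0,\tfrac12)$ and take $a$ above the threshold of \cref{lemma:structure}, so that $f_{\min}(a)<x_{\max}(a)<b<x_{\min}(a)<f_{\max}(a)$ and $f(\cdot,a,b)$ is increasing on $[0,x_{\max}(a)]$, decreasing on $[x_{\max}(a),x_{\min}(a)]$, increasing on $[x_{\min}(a),1]$, with fixed points $0,b,1$. First I would choose two auxiliary preimages. Since $b<x_{\min}(a)<f_{\max}(a)$ and $f$ is a decreasing homeomorphism of $[x_{\max}(a),b]$ onto $[b,f_{\max}(a)]$, there is a unique $p^\ast\in(x_{\max}(a),b)$ with $f(p^\ast)=x_{\min}(a)$; since $0<p^\ast<f_{\max}(a)$ and $f$ is an increasing homeomorphism of $[0,x_{\max}(a)]$ onto $[0,f_{\max}(a)]$, there is a unique $u\in(0,x_{\max}(a))$ with $f(u)=p^\ast$. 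Hence the first three iterates of $u$ are $u\mapsto p^\ast\mapsto x_{\min}(a)\mapsto f_{\min}(a)$, and the ordering of \cref{lemma:structure} already gives $u<p^\ast<x_{\min}(a)$; the one ingredient still missing for the configuration $f^3(u)\le u<f(u)<f^2(u)$ is the strict inequality $f_{\min}(a)<u$.

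Proving $f_{\min}(a)<u$ is the technical core, and the only spot where $b\neq\tfrac12$ enters. Since $f$ is order-preserving on $[0,x_{\max}(a)]$ and that interval contains both $f_{\min}(a)$ and $u$, it is enough to show $f\!\big(f_{\min}(a)\big)<x_{\max}(a)$ (then $f(f_{\min}(a))<x_{\max}(a)<p^\ast=f(u)$ forces $f_{\min}(a)<u$). Using the crude bound $f(x,a,b)\le\frac{x}{1-x}e^{-a(x-b)}$ together with $x_{\min}(a)\to 1$, one gets $f_{\min}(a)\le a\,e^{-a(x_{\min}(a)-b)}\le a\,e^{-a(1-b-o(1))}$; feeding this back through the same bound gives $f(f_{\min}(a))\le 2f_{\min}(a)e^{ab}\le 2a\,e^{-\tfrac34 a(1-2b)}$, which is $o(1/a)$ precisely because $1-2b>0$. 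As $x_{\max}(a)\asymp 1/a$, this yields $f(f_{\min}(a))<x_{\max}(a)$ for every $a$ past a threshold $u_b$. (At $b=\tfrac12$ the exponent degenerates to $0$ and this fails, which is why that value is excluded.)

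Finally I would run the covering argument. Put $K=[u,p^\ast]$ and $L=[p^\ast,x_{\min}(a)]$, nondegenerate closed intervals meeting only at $p^\ast$, with $K\cup L=[u,x_{\min}(a)]\subseteq(f_{\min}(a),f_{\max}(a))=\intr\mathcal{F}(a)$ by the previous step and $x_{\min}(a)<f_{\max}(a)$. From the monotonicity profile and the endpoint images, $f(K)=[p^\ast,f_{\max}(a)]\supseteq L$ and $f(L)=[f_{\min}(a),x_{\min}(a)]\supseteq K\cup L$, so $K\to L$, $L\to L$, $L\to K$ are all $f$-coverings. Applying the itinerary lemma behind \cite{liyorke} to the loop $K\to L\to L\to K$ produces $p\in K$ with $f(p)\in L$, $f^2(p)\in L$, $f^3(p)=p$. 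Since $3$ is prime, $p$ has minimal period $1$ or $3$; minimal period $1$ would force $p\in K\cap L=\{p^\ast\}$, contradicting $f(p^\ast)=x_{\min}(a)\neq p^\ast$, so $p$ is genuinely $3$-periodic, and its orbit $\{p,f(p),f^2(p)\}\subseteq K\cup L\subseteq\intr\mathcal{F}(a)$. The main obstacle is the estimate $f_{\min}(a)<u$ of the third paragraph: it must be carried through with the sharp asymptotics $x_{\min}(a),f_{\max}(a)\to 1$ and $x_{\max}(a),f_{\min}(a)\to 0$ rather than with loose bounds, and it is the one step that breaks when $b=\tfrac12$.
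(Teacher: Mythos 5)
Your proof is correct, and it takes a genuinely different (and in one respect more careful) route than the paper's. The paper fixes a single witness $x\in(\max\{0,3b-1\},b)$ \emph{independent of $a$}, uses $f(x,a,b)\to 1$ as $a\to\infty$ to get $x+f(x,a,b)+f^2(x,a,b)>3b$, concludes $f^3(x,a,b)<x<f(x,a,b)$, and then cites the main theorem of \cite{liyorke}, handling the interior claim by $f_{\min}(a)\to 0$, $f_{\max}(a)\to 1$. You instead anchor the witness to the critical structure of the map, taking the second preimage $u$ of $x_{\min}(a)$ so that the orbit segment is $u\mapsto p^\ast\mapsto x_{\min}(a)\mapsto f_{\min}(a)$, and the entire burden shifts to the single estimate $f(f_{\min}(a),a,b)<x_{\max}(a)$, which you prove by the correct exponential-versus-$1/a$ comparison and which is exactly where $b\neq\tfrac12$ enters. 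Your version buys two things. First, you verify the \emph{full} Li--Yorke ordering $f^3(u)\le u<f(u)<f^2(u)$, whereas the paper only checks $f^3(x)<x<f(x)$ and never verifies $f(x)<f^2(x)$; in fact, for the paper's witness one has $f(x,a,b)>b$ for large $a$ and hence $f^2(x,a,b)<f(x,a,b)$ (indeed $f^2(x,a,b)\to 0$), so the stated hypothesis of Theorem~1 of \cite{liyorke} is not literally met by that point and an argument like yours (or a different witness) is needed to close the step. Second, by running the covering loop $K\to L\to L\to K$ explicitly you confine the period-$3$ orbit to $K\cup L=[u,x_{\min}(a)]\subset\intr\mathcal{F}(a)$ by construction, while the paper asserts the interior containment of the Li--Yorke orbit without tracking where that orbit actually lives; your exclusion of minimal period $1$ via $K\cap L=\{p^\ast\}$ and $f(p^\ast)=x_{\min}(a)\neq p^\ast$ is clean. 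A last small point in your favor: the symmetry is correctly stated as the conjugacy $f(1-x,a,1-b)=1-f(x,a,b)$, rather than as an equality of the two maps.
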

\begin{proof} 
If $f(x,a,b) > x$ is equivalent to $x < b$ and $f^3(x,a,b) < x$ is equivalent to $x + f(a,b,x) + f^2(a,b,x) > 3b$. Assume that $0 < b < 1/2$. Then $3b - 1 < b$, so we can take $x > 0$ such that $3b - 1 < x < b$. Then $f(x,a,b) > x$. Moreover, $\exp(a(x - b))$ goes to $0$ as $a$ goes to infinity, so
\[
\lim_{a\to+\infty} f(a,b,x) = 
\lim_{a\to+\infty} \frac{x}{ x + (1 - x) \exp(a(x - b))} = 1
\]
Thus, since $3b - x < 1$, there exists $u_b$ such that for all $a > u_b$ then $f(x,a,b) > 3b - x$, so $x + f(x,a,b)  +f^2(x,a,b) > 3b$. Hence, if $a > u_b$ then $f^3(x,a,b) < x < f(x,a,b)$. Now,  from the main theorem in \cite{liyorke} it follows that if $a > u_b$ then $f$ has a periodic point of period 3.  Observe that the chosen $x$ depends only on $b$ and not on $a$. Since
\begin{equation*}
    \lim_{a \to \infty} f_{\min}(a) = 0 \quad \lim_{a \to \infty} f_{\max}(a) = 1  
\end{equation*}
we can pick $u_b$ large enough such that for $a > u_b$ $x \in [f_{\min}(a), f_{\max}(a)]$. Picking $u_b > a_b$ we also have that $f(x, a, b)$ and $f^2(x, a, b)$ belong to $[f_{\min}(a), f_{\max}(a)]$ as well for $a > u_b$. The period 3 constructed by \cite{liyorke} has thus all its points in the interior of $[f_{\min}(a), f_{\max}(a)]$. The case of  $1/2 < b < 1$ is symmetric because $f(x, a, b) = f(1-x, a, 1-b)$. 
\end{proof}

\subsection{Decaying Volume Turbulent sets in Fixed Learning Rate Regime}

\begin{lemma}[Restated Lemma~\ref{lemma:turbulence}]\label{app:lemma:turbulence}
For every $a > u_b$, there exist closed and disjoint intervals $K_a$ and $J_a$ in the interior of $[f_{\min}(a), f_{\max}(a)]$ such that
$f^2(K_a, a, b)$ and $f^2(J_a, a, b)$ are neighborhoods of $K_a \cup J_a$.
\end{lemma}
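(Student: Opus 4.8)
The plan is to derive (strict) turbulence of the second iterate $f^{2}(\cdot,a,b)$ from the period‑$3$ orbit produced just above, via the classical interval–covering argument, keeping everything inside $\operatorname{int}\mathcal{F}(a)$.

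First I would invoke the period‑$3$ theorem established above: for $a$ larger than its threshold, $f(\cdot,a,b)$ has a periodic orbit of period $3$, say $\{p_{1}<p_{2}<p_{3}\}$, lying in the \emph{open} interval $(f_{\min}(a),f_{\max}(a))=\operatorname{int}\mathcal{F}(a)$. Since $f$ permutes $\{p_{1},p_{2},p_{3}\}$ cyclically there are exactly two possibilities for the orbit type, and the conjugacy $1-f(x,a,b)=f(1-x,a,1-b)$ is orientation–reversing and interchanges them, so it suffices to treat one, say $f(p_{1})=p_{2}$, $f(p_{2})=p_{3}$, $f(p_{3})=p_{1}$ (the other being identical after $x\mapsto 1-x$, $b\mapsto 1-b$). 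I then set $I_{1}=[p_{1},p_{2}]$, $I_{2}=[p_{2},p_{3}]$ and record, from the intermediate value theorem, $f(I_{1})\supseteq[p_{2},p_{3}]=I_{2}$ and $f(I_{2})\supseteq[p_{1},p_{3}]=I_{1}\cup I_{2}$; composing these gives $f^{2}(I_{1})\supseteq f(I_{2})\supseteq[p_{1},p_{3}]$ and $f^{2}(I_{2})\supseteq f(I_{1})\cup f(I_{2})\supseteq[p_{1},p_{3}]$, while $f^{2}(p_{1})=p_{3}$ and $f^{2}(p_{2})=p_{1}$ show that $f^{2}$ attains both endpoints $p_{1},p_{3}$ on $I_{1}$, and likewise on $I_{2}$.

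Next comes the extraction of the actual turbulent sets. Using the standard minimal–subinterval device I would pass to a minimal closed subinterval $K_{a}\subseteq I_{1}$ whose $f^{2}$–image still contains $[p_{1},p_{3}]$ (such exists by compactness, and its endpoints are then mapped by $f^{2}$ into $\{p_{1},p_{3}\}$), and similarly a minimal $J_{a}\subseteq I_{2}$ with $f^{2}(J_{a})\supseteq[p_{1},p_{3}]$. Being subintervals of $I_{1}$ and of $I_{2}$, $K_{a}$ and $J_{a}$ can meet only in the single point $p_{2}$; using that $f$ is real‑analytic in $x$ on $(0,1)$ (a ratio of analytic functions with non‑vanishing denominator), hence nowhere locally constant, one shrinks $K_{a}$ and $J_{a}$ by an arbitrarily small amount so that $K_{a}\cap J_{a}=\emptyset$ and $K_{a}\cup J_{a}$ is a compact subset of the open interval $(p_{1},p_{3})$, while still $f^{2}(K_{a})\supseteq[p_{1},p_{3}]$ and $f^{2}(J_{a})\supseteq[p_{1},p_{3}]$. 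Then $K_{a}\cup J_{a}\subseteq(p_{1},p_{3})=\operatorname{int}[p_{1},p_{3}]\subseteq\operatorname{int}f^{2}(K_{a})\cap\operatorname{int}f^{2}(J_{a})$, so $f^{2}(K_{a})$ and $f^{2}(J_{a})$ are neighborhoods of $K_{a}\cup J_{a}$; and since $p_{1},p_{3}\in\operatorname{int}\mathcal{F}(a)$ we get $K_{a},J_{a}\subseteq(p_{1},p_{3})\subseteq\operatorname{int}\mathcal{F}(a)$, which is exactly the claim. Taking $u_{b}$ enlarged if needed so that all thresholds invoked apply finishes the argument.

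The step I expect to be the main obstacle is precisely this conversion of the ``soft'' covering relations $f^{2}(I_{j})\supseteq[p_{1},p_{3}]$ — which a priori hold only with closed, possibly non‑strict containment and with $I_{1}\cap I_{2}=\{p_{2}\}$ — into a genuinely \emph{strict} turbulent configuration whose images are honest neighborhoods of $K_{a}\cup J_{a}$. Two features make this work: the period‑$3$ orbit lies in the \emph{interior} of $\mathcal{F}(a)$, which leaves room near $p_{1},p_{3}$, and $f$ is non‑degenerate (analytic, not locally constant), so no subinterval is collapsed by $f^{2}$ and the covering relations survive the small shrinking. Everything else — existence of the minimal covering subintervals, persistence of the covering under a small perturbation of the chosen subinterval, and the elementary fact that an interval containing two disjoint closed intervals in its interior also contains the gap between them — is routine.
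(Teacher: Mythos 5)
Your overall strategy (extract strict turbulence of $f^{2}$ from the period-$3$ orbit) is the same as the paper's in spirit, but your execution of the one step you yourself flag as the crux does not go through. The classical covering relations $f(I_{1})\supseteq I_{2}$, $f(I_{2})\supseteq I_{1}\cup I_{2}$ with $I_{1}=[p_{1},p_{2}]$, $I_{2}=[p_{2},p_{3}]$ are fine, as is the existence of minimal closed subintervals $K_{a}\subseteq I_{1}$, $J_{a}\subseteq I_{2}$ with $f^{2}$-image equal to $[p_{1},p_{3}]$. The problem is the proposed repair of the possible common point $p_{2}$: a \emph{minimal} covering interval cannot be shrunk by any amount while preserving the covering --- minimality means precisely that every proper closed subinterval fails to cover $[p_{1},p_{3}]$, and this has nothing to do with $f$ being analytic or nowhere locally constant. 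Since $f^{2}(p_{2})=p_{1}$, it is entirely possible that both minimal intervals have $p_{2}$ as an endpoint, in which case your construction yields turbulence but not \emph{strict} turbulence, and no small perturbation rescues it. The same defect infects the neighborhood claim: the minimal intervals may also have $p_{1}$ (resp.\ $p_{3}$) as an endpoint (since $f^{2}(p_{1})=p_{3}$), and then $f^{2}(K_{a})=[p_{1},p_{3}]$ fails to be a neighborhood of $K_{a}\cup J_{a}$ at that endpoint. Closing this gap would require showing that the images strictly overshoot $[p_{1},p_{3}]$ on both sides, which you do not establish and which is not automatic for a continuous map with a period-$3$ orbit.

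The paper avoids this entirely by not using minimal covering intervals. Starting from a point $x$ with $f^{2}(x)<x<f(x)$, it hand-picks four auxiliary points $d,z,q,c$ by continuity (e.g.\ $d\in(x,f(x))$ with $f(d)=x$, then $z,q,c$ chosen close enough to $x$ that explicit strict inequalities such as $f^{2}(z)>d$ and $f^{2}(c)<z$ hold), and sets $J_{a}=[z,q]$, $K_{a}=[c,d]$. Disjointness and the strict containments $[z,d]\subset[f^{2}(q),f^{2}(z)]\subset f^{2}(J_{a})$, $[z,d]\subset[f^{2}(c),f^{2}(d)]\subset f^{2}(K_{a})$ are then built in by construction, so the images are genuine neighborhoods of $K_{a}\cup J_{a}$ with no endpoint coincidences to repair. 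If you want to keep your covering-interval framework, you would need to replace the shrinking step with an argument of this explicit, quantitative kind.
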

\begin{proof}
The orbit of period $3$ has the form $f^2(x, a, b)< x< f(x, a, b) $ or its mirror image (See Theorem 1 in \cite{liyorke}). Without loss of generality, assume it has the form above. Then we can choose,
\begin{enumerate}
\item $d$  between $x$ and $f(x, a, b)$, so that $f(d,a, b)=x$ and hence $d<f^2(d,a, b)$. 
\item $z$ between $f(x)$ and $x$, so close that $f^2(z, a, b) >d$.
\item $q$ between $z$ and $x$, so close to $x$ that $f^2(z, a, b) < z$.
\item $c$ between $x$ and $d$, so close to $x$ that $f^2(c, a, b) < z$.
\end{enumerate} 
 Then, $J_a = [z,q]$ and $K_a = [c,d]$ are disjoint and
    \[  [z,d] \subset [f^2(q, a, b),f^2(z, a, b)] \subset  f^2(J_a),
        [z,d] \subset [f^2(c, a, b),f^2(d, a, b)] \subset  f^2(K_a)\]
Observe that $f^2(K_a, a, b)$ and $f^2(J_a, a, b)$ are supersets of $K_a$ and $J_a$ that share no endpoints with $K_a$ and $J_a$ so they are neighborhoods of $K_a \cup J_a$. Also since $x$, $f(x, a, b)$ and $f^2(x, a, b)$ are in the interior of the interval $[f_{\min}(a), f_{\max}(a)]$, we have that $J_a$ and $K_a$ have this property as well.
\end{proof}
Moreover we can prove the following claim for the sets $K_a,J_a$
\begin{lemma}[Restated Lemma~\ref{lemma:turbulence-advanced}] \label{app:lemma:turbulence-advanced}
For every $a > s_b$, there exist closed intervals $V_a^k \subseteq K_a$ and $U_a^k \subseteq J_a$ such that
\begin{align*}
    \lim_{k \to \infty} \textrm{diam}(V_a^k) = \lim_{k \to \infty} \textrm{diam}(U_a^k) = 0
\end{align*}
and for every $k \geq 0$ it holds that $f^{2k + 2}(V_a^k, a, b)$ and $f^{2k + 2}(U_a^k, a, b)$ are neighborhoods of $K_a \cup J_a$.
\end{lemma}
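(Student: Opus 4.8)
The plan is to carry out an inductive pull\nobreakdash-back construction for the map $g:=f^{2}(\cdot,a,b)$, taking Lemma~\ref{lemma:turbulence} as the base case, a standard covering argument for the inductive step, and a disjointness observation to extract a geometric rate of shrinkage. Throughout I use the elementary \emph{covering lemma}: if $h$ is continuous on a closed interval $W$ and $h(W)\supseteq M$ for a closed interval $M$, then some closed subinterval $W^{\ast}\subseteq W$ satisfies $h(W^{\ast})=M$. Note $g^{k+1}=f^{2k+2}$, so the target property is that $g^{k+1}(V_{a}^{k})$ and $g^{k+1}(U_{a}^{k})$ are neighborhoods of $K_{a}\cup J_{a}$.

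For the base case I set $V_{a}^{0}=K_{a}$ and $U_{a}^{0}=J_{a}$; Lemma~\ref{lemma:turbulence} (which requires $a>u_{b}$) says exactly that $g(K_{a})$ and $g(J_{a})$ are neighborhoods of $K_{a}\cup J_{a}$, giving the $k=0$ statement. For the inductive step I maintain the stronger hypothesis that $V_{a}^{k}\subseteq K_{a}$ is a closed interval with $\mathrm{diam}(V_{a}^{k})\le 2^{-k}\,\mathrm{diam}(K_{a})$ and $g^{k}(V_{a}^{k})\supseteq K_{a}$ or $g^{k}(V_{a}^{k})\supseteq J_{a}$; assume the former (the other case is symmetric). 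Since $g(K_{a})\supseteq K_{a}\cup J_{a}$, applying $g$ yields $g^{k+1}(V_{a}^{k})\supseteq K_{a}\cup J_{a}$, which already proves the lemma's claim for index $k$. To continue, I invoke the key claim: \emph{whenever $h$ is continuous on a closed interval $W$ with $h(W)\supseteq K_{a}\cup J_{a}$, there are disjoint closed subintervals $W_{K},W_{J}\subseteq W$ with $h(W_{K})=K_{a}$ and $h(W_{J})=J_{a}$.} Applying this to $h=g^{k+1}|_{V_{a}^{k}}$, disjointness forces $\min\{\mathrm{diam}(W_{K}),\mathrm{diam}(W_{J})\}\le\tfrac12\mathrm{diam}(V_{a}^{k})\le 2^{-(k+1)}\mathrm{diam}(K_{a})$, so I let $V_{a}^{k+1}$ be the shorter of the two; then $V_{a}^{k+1}\subseteq V_{a}^{k}\subseteq K_{a}$, the diameter bound holds, and $g^{k+1}(V_{a}^{k+1})\supseteq K_{a}$ or $\supseteq J_{a}$, so one more application of $g$ gives $g^{k+2}(V_{a}^{k+1})\supseteq g(K_{a})$ or $g(J_{a})$, a neighborhood of $K_{a}\cup J_{a}$ by Lemma~\ref{lemma:turbulence}. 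The sequence $U_{a}^{k}\subseteq J_{a}$ is built identically, starting from $U_{a}^{0}=J_{a}$ and using $g(J_{a})\supseteq K_{a}\cup J_{a}$. Since $\mathrm{diam}(V_{a}^{k})\le 2^{-k}\mathrm{diam}(K_{a})\to0$ and likewise for $U_{a}^{k}$, the diameter conclusion follows.

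The one genuinely nonobvious point — and the place where disjointness of $K_{a}$ and $J_{a}$ (guaranteed by Lemma~\ref{lemma:turbulence}) is used in an essential way — is the disjointness claim; a bare pull\nobreakdash-back yields nested intervals but gives no control on lengths, so the factor $\tfrac12$ has to be earned. To prove it, assume without loss of generality that $J_{a}=[p,q]$ lies to the left of $K_{a}=[c,d]$, so $p<q<c<d$. Pick $t_{0}\in W$ with $h(t_{0})=d$ and $t_{1}\in W$ with $h(t_{1})=p$, and work on the closed interval $I$ with endpoints $t_{0},t_{1}$, on which $h$ varies between $p<c$ and $d>c$. Let $s$ be the point of $h^{-1}(c)\cap I$ closest to $t_{0}$; between $s$ and $t_{0}$ the function $h$ cannot dip below $c$ (it would have to cross $c$ again strictly nearer $t_{0}$), so the $h$\nobreakdash-image of that sub\nobreakdash-interval contains $[c,d]=K_{a}$, while the $h$\nobreakdash-image of the complementary sub\nobreakdash-interval from $t_{1}$ to $s$ contains $[p,c]\supseteq J_{a}$. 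Extract $W_{K}$ and $W_{J}$ from these two sub\nobreakdash-intervals by the covering lemma; they lie on opposite sides of $s$, and $s\notin W_{J}$ since $h(s)=c>q$, so $W_{K}\cap W_{J}=\varnothing$. This slightly delicate interval bookkeeping is the crux; the remainder is just tracking the iterates $g^{k}=f^{2k}$. I expect this disjointness step to be the main obstacle, primarily in stating it cleanly rather than in any deep difficulty.
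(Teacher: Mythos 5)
Your proposal is correct and follows essentially the same inductive pull-back argument as the paper: take $V_a^0=K_a$, $U_a^0=J_a$, pull $K_a$ and $J_a$ back under $f^{2k+2}$ to two disjoint closed subintervals, keep the shorter one to get the factor-$\tfrac12$ decay, and apply $f^2$ once more with Lemma~\ref{lemma:turbulence} to recover the neighborhood property. The only difference is that you supply an explicit proof of the disjoint-preimage claim (via the point $s\in h^{-1}(c)$ separating the two pull-backs), which the paper asserts without justification; note also that the hypothesis should read $a>u_b$ (as in the main-text statement and in Lemma~\ref{lemma:turbulence}) rather than $a>s_b$.
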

\begin{proof}
We will prove the lemma by induction. Choosing $V_a^0 = K_a$ we know by Lemma~\ref{lemma:turbulence} that $f^2(V_a^0, a, b)$ is a neighborhood of $K_a \cup J_a$. Now let us assume that we have a $V_a^k \subseteq K_a$ such that $f^{2k + 2}(V_a^k, a, b)$ is a neighborhood of $K_a \cup J_a$. Since $K_a$ and $J_a$ are disjoint closed intervals , there are two disjoint closed intervals $Z_1$ and $Z_2$ of $V_a^k$ such that $f^{2k + 2}(Z_1, a, b)$ is a neighborhood of $K_a$ and $f^{2k + 2}(Z_2, a, b)$ is a neighborhood of $J_a$. As a result $f^{2k+4}(Z_1, a, b)$ and $f^{2k+4}(Z_2, a, b)$ are neighborhoods of $K_a \cup J_a$. Given that $Z_1$ and $Z_2$ are disjoint intervals of $V_a^k$ it must be the case that
\begin{equation*}
    \min\{\textrm{diam}(Z_1), \textrm{diam}(Z_2)\} \leq \textrm{diam}(V_a^k)/2.
\end{equation*}
We pick the interval with the smallest diameter as $V_a^{k+1}$ and thus 
\begin{equation*}
    0 \leq \textrm{diam}(V_a^k) \leq 2^{-k} \textrm{diam}(K_a) \implies \lim_{k \to \infty} \textrm{diam}(V_a^k) = 0.
\end{equation*}
Choosing $U_a^0 = J_a$ we can follow the same arguments for the rest of the $U_a^k$.
\end{proof}

{ \subsection{Tracking properties, make chaos explicit via symbolic dynamics}
In this section, we will demonstrate how to construct a scrambled set of initial conditions via a scrambled set of abstract symbolic orbits.
Symbolic dynamics is a mathematical method used in dynamical systems theory to study the long-term behavior of a system. It involves representing the states of a system as a sequence of symbols, typically taken from a finite alphabet \textendash in our case $\{0,1\}^*$ \textendash. In our proof, these symbols are chosen based on position of the trajectory of  a point in strategy space. The resulting sequence of symbols is called a symbolic orbit, and the study of these orbits will provide the necessary insight to establish the long-term  chaotic behavior of MWU.

Below we present the tracking lemma that translates a binary sequence to the trajectory of an initial condition inside the decaying sequence of turbulent sets $V_{a^*}^i, U_{a^*}^i$:

\begin{lemma}[Restated Lemma~\ref{lemma:tracking}]\label{app:lemma:tracking}
If $b \in (0, 1) \setminus \{\frac{1}{2} \}$, there exists a $d_b$ such that if $a_{\min} > d_b$, we can construct an increasing sequence $n_i$ with the following properties. For every sequence of intervals $A_i$ with $A_i = V_{a^*}^i$ or $A_i = U_{a^*}^i$, there exists a $x_0 \in [0,1]$ such that for all $i \geq 0$ it holds that $x_{n_i}(x_0) \in A_i$ . 
\end{lemma}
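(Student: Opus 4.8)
\textit{Overall strategy.} The plan is to construct the sequence $(n_i)$ together with a nested chain of nonempty closed intervals $I_0\supseteq I_1\supseteq\cdots$ satisfying $x_{n_i}(I_i)=A_i$; then any point of $\bigcap_i I_i$ (nonempty by compactness) has the required orbit, and since the $I_i$ will lie in a reference interval $[\gamma,\delta]\subseteq[0,1]$, such a point is a valid $x_0$. Fix $a^\star=g(0)$, take $d_b$ larger than the thresholds appearing in Lemmas~\ref{lemma:turbulence} and~\ref{lemma:turbulence-advanced} and in Theorem~\ref{theorem:dynamic-volume-expansion}, and fix $[\gamma,\delta]$ with $\{b\}\subset[\gamma,\delta]\subset(0,1)$. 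By Corollary~\ref{corollary:bound} the forward orbit $x_n([\gamma,\delta])$ stays in a fixed $(\delta_0,1-\delta_0)\subset(0,1)$, so Lemma~\ref{lemma:dynamic-strong-convergence} applies on $[\gamma,\delta]$ for every window $k$: for each $k,\eta>0$ there is a threshold $N(k,\eta)$ with $\abs{x_{n+k}(x_0)-f^k(x_n(x_0),a^\star,b)}\le\eta$ for all $n\ge N(k,\eta)$ and $x_0\in[\gamma,\delta]$. From Lemma~\ref{lemma:turbulence} record a fixed $\rho_\star>0$ such that $f^2(K_{a^\star},a^\star,b)$ and $f^2(J_{a^\star},a^\star,b)$ each contain the closed $\rho_\star$-neighbourhood of $K_{a^\star}\cup J_{a^\star}$; from Lemma~\ref{lemma:turbulence-advanced} record, for each $i$, a number $\rho_i>0$ (depending only on $i,a^\star,b$) such that $f^{2i+2}(V_{a^\star}^i,a^\star,b)$ and $f^{2i+2}(U_{a^\star}^i,a^\star,b)$ each contain the closed $\rho_i$-neighbourhood of $K_{a^\star}\cup J_{a^\star}$.

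\textit{Base step and transition.} For the base step, apply Theorem~\ref{theorem:dynamic-volume-expansion} with $\epsilon>0$ small enough that $a^\star-\epsilon>a_{\min}$ and $K_{a^\star}\cup J_{a^\star}\subseteq\mathcal{F}(a^\star-\epsilon)$ (possible because $K_{a^\star}\cup J_{a^\star}$ lies in the interior of $\mathcal{F}(a^\star)$ and $\mathcal{F}(a^\star-\epsilon)$ increases to $\mathcal{F}(a^\star)$ as $\epsilon\downarrow0$): this gives $n_0^{\dag}$ with $x_n([\gamma,\delta])\supseteq K_{a^\star}\cup J_{a^\star}=V_{a^\star}^0\cup U_{a^\star}^0$ for $n\ge n_0^{\dag}$, so for $n_0\ge n_0^{\dag}$ continuity of the interval map $x_{n_0}$ yields a closed $I_0\subseteq[\gamma,\delta]$ with $x_{n_0}(I_0)=A_0$. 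For the transition $i\to i+1$, assume $x_{n_i}(I_i)=A_i$ (say $A_i=V_{a^\star}^i$; the other case is symmetric) and split $[n_i,n_{i+1}]$ into an \emph{unfolding} segment of length $2i+2$ and a \emph{waiting} segment of even length $2W_i\ge0$. Provided $n_i\ge N(2i+2,\rho_i/2)$, an intermediate-value argument combined with the tracking bound shows that $x_{n_i+2i+2}(I_i)$ is a closed interval containing the closed $(\rho_i/2)$-neighbourhood of $K_{a^\star}\cup J_{a^\star}$, hence containing $K_{a^\star}$; pass to a closed $I_i'\subseteq I_i$ whose $x_{n_i+2i+2}$-image equals that neighbourhood. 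The decisive observation is that the waiting segment is free: whenever $x_m(I)$ is a closed interval containing $K_{a^\star}$, then $f^2(x_m(I),a^\star,b)\supseteq f^2(K_{a^\star},a^\star,b)$ already contains the $\rho_\star$-neighbourhood of $K_{a^\star}\cup J_{a^\star}$, so by Lemma~\ref{lemma:dynamic-strong-convergence} with $k=2$ and $\eta<\rho_\star$ (ensured by taking $n_0$ large) $x_{m+2}(I)$ contains the closed $(\rho_\star-\eta)$-neighbourhood; passing each time to the subinterval realizing exactly that image, the covered margin is \emph{restored} to $\rho_\star-\eta$ at every step, so tracking errors never compound along the waiting part. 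After $W_i$ waiting steps the image still contains $K_{a^\star}\cup J_{a^\star}\supseteq A_{i+1}$ for either choice of $A_{i+1}$, and a final subinterval extraction gives $I_{i+1}\subseteq I_i$ with $x_{n_{i+1}}(I_{i+1})=A_{i+1}$.

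\textit{Choosing $(n_i)$ and concluding.} We fix $(n_i)$ once, \emph{independently of the sequence $(A_i)$}: choose $n_0$ above $n_0^{\dag}$, above $N(2,\rho_0/2)$, and above the threshold making the waiting error $<\rho_\star$; then recursively set $n_{i+1}:=n_i+2i+2+2W_i$ with $W_i\ge0$ chosen just large enough that $n_{i+1}\ge N(2i+4,\rho_{i+1}/2)$. This is legitimate because $\rho_i$, $\rho_\star$ and the thresholds $N(\cdot,\cdot)$ depend only on the fixed map $f(\cdot,a^\star,b)$, not on the $n$'s or on $(A_i)$, and padding by waiting steps lets $n_i$ outrun whatever (possibly fast) growth $N(2i+2,\rho_i/2)$ has. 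Running the base and transition steps for all $i$ produces the nested nonempty closed intervals, whence $\bigcap_i I_i\neq\emptyset$ and any $x_0$ there satisfies $x_{n_i}(x_0)\in A_i$ for every $i$; the same $(n_i)$ serves every binary sequence because at each stage the unfolded (resp. refreshed) image covers \emph{all} of $K_{a^\star}\cup J_{a^\star}$, insensitive to which of the two disjoint branches $A_i$, $A_{i+1}$ were picked.

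\textit{Main obstacle.} The heart of the argument is the tension between the growing window $2i+2$ of the unfolding segment --- over which the tracking error of Lemma~\ref{lemma:dynamic-strong-convergence} is amplified by the derivative of a long composition of $f$ --- and the fact that the margin $\rho_i$ one must stay below is only a fixed positive number for each $i$; naively $N(2i+2,\rho_i/2)$ can grow far faster than the times $n_i$ that the itinerary construction can afford, which would break the scheme. The resolution is exactly the interleaving of waiting segments described above: on them the orbit already covers a \emph{fixed} neighbourhood of $K_{a^\star}\cup J_{a^\star}$, so the turbulence of Lemma~\ref{lemma:turbulence} refreshes the covered margin at each step and errors do not accumulate, which frees us to push $n_i$ to infinity fast enough that each single unfolding error becomes negligible against $\rho_i$. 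A lesser but still essential point is the uniformity over binary sequences, which follows from the branch-insensitivity of the coverings noted above.
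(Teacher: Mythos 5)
Your proposal is correct and follows essentially the same route as the paper's proof: volume expansion (Theorem~\ref{theorem:dynamic-volume-expansion}) to cover $K_{a^\star}\cup J_{a^\star}$ at a base time, the uniform tracking bound of Lemma~\ref{lemma:dynamic-strong-convergence} against the $\epsilon_i$-neighbourhoods supplied by Lemma~\ref{lemma:turbulence-advanced}, even-length waiting segments whose two-step coverings prevent error accumulation, and a nested chain of closed intervals indexed by the binary itinerary whose intersection is nonempty by Cantor's theorem. Your write-up is, if anything, more explicit than the paper's about why the margin is restored at each waiting step and why the sequence $(n_i)$ can be chosen independently of $(A_i)$.
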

\begin{proof}
By continuity of $f_{\min}$ and $f_{\max}$ we know that
\begin{equation*}
    \lim_{\epsilon \to 0} f_{\min}(a^* - \epsilon) = f_{\min}(a^*) \quad \lim_{\epsilon \to 0} f_{\max}(a^* - \epsilon) = f_{\max}(a^*) 
\end{equation*}
By Lemma~\ref{lemma:turbulence} we know that $K_{a^*}$ and $J_{a^*}$ are in the interior of $[f_{\min}(a^*), f_{\max}(a^*)]$. As a result there is a sufficiently small $\epsilon > 0$ such that both of the following properties hold
\begin{gather*}
    a^* > a^* - \epsilon > a_{\min} \\
    [f_{\min}(a^* - \epsilon), f_{\max}(a^* - \epsilon)] \supseteq K_{a^*} \cup J_{a^*} 
\end{gather*}
Let us pick any interval $[\gamma, \delta]$ such that $\{b\} \subset [\gamma, \delta] \subset (0,1)$. By Theorem~\ref{app:theorem:dynamic-volume-expansion} and for the aforementioned $\epsilon$, we know that there is a $n^*$ such that
\begin{equation*}
    \forall n \geq n^* : \quad x_n([\gamma, \delta]) \supseteq  [f_{\min}(a^* - \epsilon), f_{\max}(a^* - \epsilon)] \supseteq K_{a^*} \cup J_{a^*}
\end{equation*}
We also know that for all $i \geq 0$, $f^{2i +2}(V_{a^*}^i, a^*, b)$ and $f^{2i +2}(U_{a^*}^i, a^*, b)$ are neighborhoods of $K_{a^*} \cup J_{a^*}$. Thus for each $i \geq 0$ there must be an $\epsilon_i > 0$ such that $f^{2i +2}(V_{a^*}^i, a^*, b)$ and $f^{2i +2}(U_{a^*}^i, a^*, b)$ are $\epsilon_i$ neighborhoods of $K_{a^*} \cup J_{a^*}$. By Lemma~\ref{lemma:dynamic-strong-convergence}, there is a sequence of $m_i$ such that for all $i \geq 0$
\begin{equation*}
    \forall n \geq m_i \quad \max_{x_0 \in [\gamma, \delta]} \abs{x_{n+2i+2}(x_0) - f^{2i+2}(x_n(x_0), a^*, b)} \leq \epsilon_i
\end{equation*}
We are now ready to construct the sequence $n_i$. We choose $n_0 = \max\{m_0, n^*\}$. For $n_i$ with $i \geq 1$, we choose the minimum number with the following properties
\begin{gather*}
    (n_{i} - n_{i-1} -2i) \mod 2 = 0 \\
    n_{i} \geq n_{i-1} +2i \quad  n_{i} \geq m_i
\end{gather*}
It now remains to construct the required $x_0 \in [0,1]$ for each potential sequence of $A_i$. Since $x_{i_0}([\gamma, \delta])$ is a superset of the disjoint closed intervals $K_{a^*}=V_{a^*}^0$ and $J_{a^*}=U_{a^*}^0$ we know that there are closed minimal intervals $I_0$ and $I_1$ such that
\begin{equation*}
    x_{n_0}(I_0) = V_{a^*}^0 \quad x_{n_0}(I_1) = U_{a^*}^0 
\end{equation*}
Since $n_0 \geq m_0$ we know that
\begin{equation*}
    \max_{x_0 \in I_0} \abs{x_{n_0+2}(x_0) - f^{2}(x_{n_0}(x_0), a^*, b)} \leq \epsilon_0
\end{equation*}
Given that $x_{n_0}(I_0) = V_{a^*}^0$ and $f^2(V_{a^*}^0, a^*, b)$ is an $\epsilon_0$ neighborhood of $K_{a^*} \cup J_{a^*}$, we can infer that
\begin{equation*}
    x_{n_0+2}(I_0) \supseteq K_{a^*} \cup J_{a^*}
\end{equation*}
With a similar analysis we can prove that
\begin{equation*}
    x_{n_0+2}(I_1) \supseteq K_{a^*} \cup J_{a^*}
\end{equation*}
Following the same steps repeatedly we can prove for any $k>0$ that
\begin{equation*}
    x_{n_0+2k}(I_0) \supseteq K_{a^*} \cup J_{a^*} \quad x_{n_0+2k}(I_1) \supseteq K_{a^*} \cup J_{a^*}
\end{equation*}
Since $n_1 = n_0 + 2k$ for some $k>0$ we can directly infer that
\begin{equation*}
    x_{n_1}(I_0) \supseteq K_{a^*} \cup J_{a^*} \quad x_{n_1}(I_1) \supseteq K_{a^*} \cup J_{a^*}
\end{equation*}
Hence we can construct minimal closed intervals $I_{00},I_{01} \subset I_0$ and $I_{10},I_{11} \subset I_1$ such that
\begin{equation*}
    x_{n_1}(I_{00}) = V_{a^*}^1 \quad x_{n_1}(I_{01}) = U_{a^*}^1 \quad x_{n_1}(I_{10}) = V_{a^*}^1 \quad x_{n_1}(I_{11}) = U_{a^*}^1
\end{equation*}
By induction, for any binary sequence $c$ of length $i$ we can construct two minimal closed intervals $I_{c_0 \cdots c_{i-1}0}$ and $I_{c_0 \cdots c_{i-1}1}$ subsets of $I_{c_0 \cdots c_{i-1}}$ such that
\begin{equation*}
    x_{n_i}(I_{c_0 \cdots c_{i-1} 0}) = V_{a^*}^i \quad x_{n_i}(I_{c_0 \cdots c_{i-1} 1}) = U_{a^*}^i
\end{equation*}
Now for any sequence $A_i$ such that $A_i = V_{a^*}^i$ or $A_i = U_{a^*}^i$ we can construct the corresponding sequence $c_i$ that has $c_i=0$ when $A_i= V_{a^*}^i$ and $c_i=1$ when $A_i= U_{a^*}^i$. Let us define
\begin{equation*}
    I_{c} = \cap_{i=1}^\infty I_{c_0 \cdots c_i}
\end{equation*}
By Cantor's intersection theorem, $I_{c}$ is non empty. Any $x_0 \in I_{c}$ satisfies the requirements of the lemma
\begin{equation*}
    \forall i \geq 0 \quad x_0 \in I_{c_1 \cdots c_i} \implies \forall i \geq 0 \quad x_{n_i}(x_0) \in A_i
\end{equation*}
\end{proof}}

\begin{theorem}[Restated Theorem~\ref{theorem:dynamic-li-yorke}]\label{app:theorem:dynamic-li-yorke}
If $b \in (0, 1) \setminus \{\frac{1}{2} \}$, there exists a $d_b$ such that if $a_{\min} > d_b$, the dynamics of Equation~\ref{eq:old-update} are Li-Yorke chaotic. 
\end{theorem}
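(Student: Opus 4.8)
The plan is to carry out the symbolic‑dynamics reduction sketched just before the statement, using the Tracking Lemma (Lemma~\ref{lemma:tracking}) as the bridge from abstract binary strings to genuine orbits of the non‑autonomous system. First I would take $d_b$ to be the constant produced by Lemma~\ref{lemma:tracking} (it already requires $b\in(0,1)\setminus\{\tfrac12\}$, which is how that hypothesis enters). Then for $a_{\min}>d_b$ all the structural ingredients are in hand: the disjoint closed turbulent sets $K_{a^\star},J_{a^\star}$ in the interior of $\mathcal{F}(a^\star)$, the nested decaying intervals $V_{a^\star}^i\subseteq K_{a^\star}$ and $U_{a^\star}^i\subseteq J_{a^\star}$ of Lemma~\ref{lemma:turbulence-advanced} with $\mathrm{diam}(V_{a^\star}^i),\mathrm{diam}(U_{a^\star}^i)\to 0$, and — crucially — one increasing index sequence $(n_i)$ that serves \emph{every} prescribed itinerary $(A_i)$ simultaneously.

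Next I would build an uncountable family $\mathcal{S}\subseteq\{0,1\}^{\mathbb{N}}$ that is pairwise ``symbolically scrambled'': for each $\alpha\in(0,1)$ with binary expansion $0.a_1a_2\cdots$ (chosen not to terminate in all $1$'s), let $\sigma^{\alpha}$ be the concatenation of the blocks $C_k=(a_1,\dots,a_k,0)$, $k\ge 1$. The map $\alpha\mapsto\sigma^{\alpha}$ is injective, so $\mathcal{S}=\{\sigma^{\alpha}\}$ is uncountable; and if $\alpha\neq\beta$ with $m$ the least index where $a_m\neq b_m$, then $\sigma^{\alpha}$ and $\sigma^{\beta}$ disagree at the $m$‑th coordinate of every block $C_k$ with $k\ge m$ (hence at infinitely many positions) while agreeing at the terminal $0$ of every block (hence at infinitely many positions). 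For each $\sigma\in\mathcal{S}$ set $A_i^{\sigma}=V_{a^\star}^i$ if $\sigma_i=0$ and $A_i^{\sigma}=U_{a^\star}^i$ if $\sigma_i=1$, apply Lemma~\ref{lemma:tracking} to obtain $x_0^{\sigma}$, which by the construction in that lemma lies in a fixed compact subinterval $[\gamma,\delta]\subset(0,1)$, with $x_{n_i}(x_0^{\sigma})\in A_i^{\sigma}$ for all $i$, and put $Q=\{x_0^{\sigma}:\sigma\in\mathcal{S}\}$. Since $V_{a^\star}^i\subseteq K_{a^\star}$ and $U_{a^\star}^i\subseteq J_{a^\star}$ lie in disjoint sets, any coordinate where $\sigma$ and $\tau$ disagree forces $x_{n_i}(x_0^{\sigma})$ and $x_{n_i}(x_0^{\tau})$ into different members of $\{K_{a^\star},J_{a^\star}\}$; in particular $\sigma\mapsto x_0^{\sigma}$ is injective and $Q$ is uncountable.

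Then I would verify that $Q$ is scrambled. Fix distinct $x_0^{\sigma},x_0^{\tau}\in Q$ and write $\rho=\mathrm{dist}(K_{a^\star},J_{a^\star})>0$ (positive since these are closed and disjoint). At the infinitely many $i$ with $\sigma_i\neq\tau_i$, the two iterates lie in opposite sets, so $\lvert x_{n_i}(x_0^{\sigma})-x_{n_i}(x_0^{\tau})\rvert\ge\rho$, giving $\limsup_n\lvert x_n(x_0^{\sigma})-x_n(x_0^{\tau})\rvert\ge\rho>0$. At the infinitely many $i$ with $\sigma_i=\tau_i$, both iterates lie in the same interval $A_i$ with $\mathrm{diam}(A_i)\to 0$, so the $x$‑distance tends to $0$ along this subsequence; moreover $x_0^{\sigma},x_0^{\tau}$ lie in a compact subinterval of $(0,1)$, so the companion $r$‑coordinates both tend to $0$ by Lemma~\ref{ref:average} (equivalently the normalized rates tend to $a^\star=g(0)$ by Lemma~\ref{ref:uniform}), and hence their difference tends to $0$. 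Therefore $\liminf_n\lVert(x_n,r_n)(x_0^{\sigma})-(x_n,r_n)(x_0^{\tau})\rVert=0$. Since the $x$‑coordinate dynamics of Equation~\ref{eq:old-update} coincide, under the normalization of Section~\ref{sec:dynamic_rate}, with the $x_n$‑dynamics analyzed throughout, $Q$ is an uncountable scrambled set for the system of Equation~\ref{eq:old-update}, which is thus Li‑Yorke chaotic.

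The conceptual content of this theorem is light given the machinery already assembled; the point that needs care is that Lemma~\ref{lemma:tracking} delivers the \emph{same} index sequence $(n_i)$ for all itineraries, so that a single pair of subsequences simultaneously witnesses the ``agreement $\Rightarrow$ small distance'' and ``disagreement $\Rightarrow$ distance $\ge\rho$'' dichotomy — this hinges on $\rho=\mathrm{dist}(K_{a^\star},J_{a^\star})$ being a fixed positive constant and on $\mathrm{diam}(A_i)\to 0$, both already established. The only other thing to confirm is that the tracked initializations all land in one fixed compact subinterval of $(0,1)$, which is exactly what lets Lemmas~\ref{ref:average}–\ref{ref:uniform} control the learning‑rate/regret coordinate and upgrade scrambling of the $x$‑component to scrambling of the full non‑autonomous state.
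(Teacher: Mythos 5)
Your proposal is correct and follows essentially the same route as the paper: reduce to symbolic dynamics via the Tracking Lemma, use disjointness of $K_{a^\star},J_{a^\star}$ for the $\limsup$ bound and the decaying diameters of $V_{a^\star}^i,U_{a^\star}^i$ for the $\liminf$, and control the $r$-coordinate via its convergence to $0$. The only genuine difference is the combinatorial construction of the uncountable pairwise-scrambled family of binary strings: the paper partitions $\{0,1\}^{\mathbb{N}}$ into equivalence classes modulo finite differences (each countable, hence uncountably many classes) and then interleaves zeros to force infinitely many agreements, whereas you concatenate blocks $C_k=(a_1,\dots,a_k,0)$ of binary expansions of reals; both constructions deliver exactly the properties needed, so this is a cosmetic rather than substantive deviation. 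One small point the paper makes explicit that you leave implicit: the scrambled set lives in the two-dimensional $(x,r)$ state space, so one should fix the initial $r$-coordinate (the paper takes $r=0$, i.e.\ $a_0=g(0)$, for every element of $Q$); your limit argument for the $r$-component is otherwise the same.
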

\begin{proof}
We first prove that there is an uncountable set $S$ of infinite length binary sequences with the following property: For every pair of sequences $i,j \in S$ there is an infinite length subsequence where the two sequences differ, i.e., one of $i,j$ is $0$ and the other is $1$.

We first define the equivalence relation $R$ over infinite length binary sequence such that two binary sequences are equivalent if and only if they differ in finitely many places. The relation is clearly reflexive, a sequence differs with itself in $0$ places, it is by definition symmetric and it is transitive, if a sequence $i$ differs in $k$ places with sequence $j$ and $j$ differs in $m$ places with a sequence $t$ then $i$ and $t$ differ in up to $m+k$ places which is also finite. Thus we can partition all binary sequences in equivalence classes where all pairs of all elements in the same class differ in finite places.

We now prove that each equivalence class has countably infinite binary sequences. Let us pick an element $\sigma$ of the equivalence class. For each member of the equivalence class $w$ we can construct a finite subset of $\mathbb{N}$ by picking the indices where $s_t \neq w_t$. Inversely, for each finite subset of $\mathbb{N}$ we can construct a member of the equivalence by flipping the corresponding indices of $\sigma$. Thus there is a bijection between the members of the equivalence class and the finite subsets of $\mathbb{N}$. Because the finite subsets of $\mathbb{N}$ are countably infinite, so is each equivalence class.   

This allows us to prove that the relation $R$ has uncountably infinite number of equivalence classes. We proceed by contradiction. If we had countably finite equivalence classes each having countably many elements then all of the binary sequences would be countable which is false.

To construct $S$ we only need to pick one element from each equivalence class. Since each pair belongs in a different equivalent class then they differ in infinite number of places which forms an infinite length subsequence. The set $S$ is uncountable because there are uncountably many equivalence classes.

We now prove that there is an uncountable set $S'$ of infinite length binary sequences that has the following two properties: First, for every pair of sequences $i,j \in S'$ there is an infinite length subsequence where the two sequences differ, i.e., one of $i,j$ is $0$ and the other is $1$. Second, for every pair of sequences $i,j \in S'$ there is an infinite length subsequence where the two sequences are equal, i.e., they are either both $0$ or $1$.

The construction works as follows: For each element of $i \in S$ we construct a new sequence. In its even places the sequence is $0$ and in its odd places we use the elements of $i$. Clearly by construction $S'$ remains uncountable and satisfies the first property as we can pick the subsequences from the odd places. The second property also holds because all sequences have the same elements in the even places.

For each element of $S'$ we construct a sequence of sets $T$ as follows: If the $k$th place element is $0$ we use $V_{a^*}^k$, whereas if it is $1$ we pick $U_{a^*}^k$. We now apply Lemma~\ref{lemma:tracking} for each of the sequence of sets to get a corresponding initialization $x_0$ and a subsequence of iteration indices $n_i$. We call this set of initializations $Q$. By the construction in Lemma~\ref{lemma:tracking} all initializations use the same subsequence $n_i$.

Let us pick two initializations $\mu, \nu \in Q$. By the construction of $T$ and $S'$ we have the following: There is an infinite subsequence of $n_i$, which we call $h_t$, where $x_{n_{h_t}}(\mu) \in V_{a^*}^{h_t}$ and $x_{n_{h_t}}(\nu) \in U_{a^*}^{h_t}$ or vice versa. Because $V_{a^*}^{h_t}$ and $U_{a^*}^{h_t}$ are disjoint we have that there is an infinite subsequence where the trajectories of $\mu$ and $\nu$ are bounded away from each other. In other words, we have that
\begin{equation*}
    \limsup_{n\to \infty} |x_n(\mu) - x_n(\nu)| > 0
\end{equation*}
Again by construction of $T$ and $S'$ we also have that there is an infinite subsequence of $n_i$, which we call $f_t$ such that either $x_{n_{f_t}}(\mu) \in V_{a^*}^{f_t}$ and $x_{n_{f_t}}(\nu) \in V_{a^*}^{f_t}$ or $x_{n_{f_t}}(\mu) \in U_{a^*}^{f_t}$ and $x_{n_{f_t}}(\nu) \in U_{a^*}^{f_t}$. Thus we have that
\begin{equation*}
    \forall t\geq 0 : |x_{n_{f_t}}(\mu) - x_{n_{f_t}}(\nu)| \leq \max\{\textrm{diam}(U_{a^*}^{f_t}), \textrm{diam}(V_{a^*}^{f_t})\}
\end{equation*}
By Lemma~\ref{lemma:turbulence-advanced}, we have that the right hand side converges to $0$. As a result
\begin{equation*}
    \lim_{t \to \infty} |x_{n_{f_t}}(\mu) - x_{n_{f_t}}(\nu)|  = 0
\end{equation*}
which directly implies that
\begin{equation*}
    \liminf_{n \to \infty} |x_n(\nu) - x_n(\nu)|  = 0.
\end{equation*}
Technically our dynamics are expressed in terms of two state variables $x_n$ and $r_n$ so distances in orbits need to account for both dimensions. Because $r_n \to 0$ for all initializations $x_0 \in (0,1)$, in all cases above we have $\lim_{n \to \infty} |r_n(\mu) - r_n(\nu)| = 0$ so the distances in the limit are not affected by the $r_n$ dimension.  

Picking $\{ (x, r) : x \in Q \text{ and } r= 0 \}$ as our scrambled set, which is equivalent to choosing $a_0 = g(0)$ for all $x \in Q$, we show that our dynamic learning rate dynamics are Li-Yorke chaotic. 

\begin{figure}[h!]
    \centering
    \includegraphics[width=\textwidth]{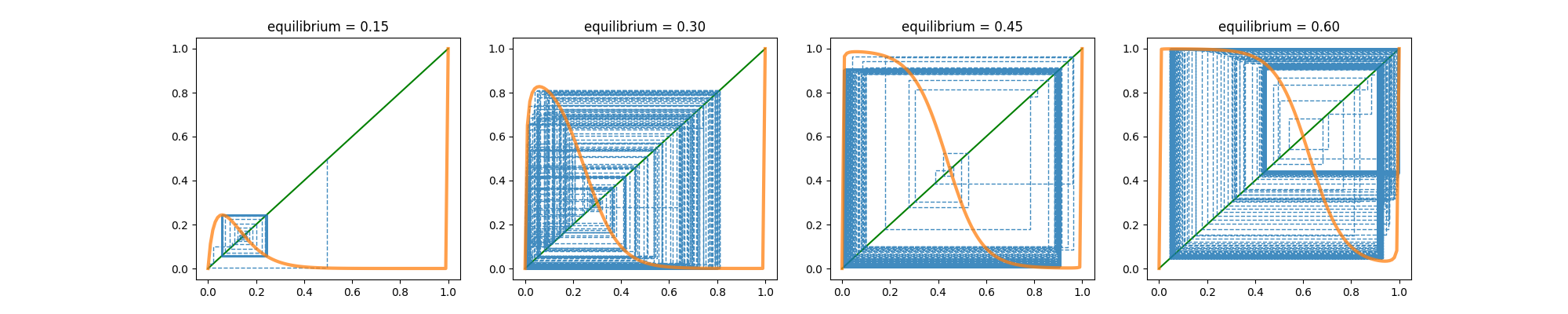}
    \caption{
    The cobweb diagram above displays the trajectory of iterates for the time-varying learning rate dynamics in \eqref{eq:old-update}, considering various equilibrium choices. The patterns formed by these paths reveal insights into the iterative behavior of the dynamics. Fixed points are identified by the intersection of the $y=x$ diagonal line and the limit function graph $\text{MWU}(x, \lim_{n \to \infty} a_n, \text{Equilibrium})$, with spirals converging towards these points. Period-2 orbits create rectangles, while higher-period cycles generate increasingly complex closed loops. Chaotic orbits, conversely, occupy an area, indicating an infinite sequence of non-repeating values. This phenomenon intensifies as we move further from the boundaries.
    }
    \label{fig:Cobweb}
\end{figure}

\end{proof}

\section{Bifurcation Plots}

In this section, we showcase a series of bifurcation diagrams illustrating the adaptive scheme's limit behavior in response to varying constraints on $a_{\infty}$. These diagrams display the emergence of periodic points as a function of the proportion of users favoring the first link in the equilibrium state of the system (parameter $b$).
A notable observation from these bifurcation plots is their symmetric nature. For instance, we anticipate analogous behavior when the equilibrium state accommodates 80\% of users favoring the first link in the congestion game and when it accommodates 20\%. This symmetry implies an inherent balance in the system's response to changes in user preferences. 

\begin{figure}[h!]
\centering
\includegraphics[width=0.3\textwidth]{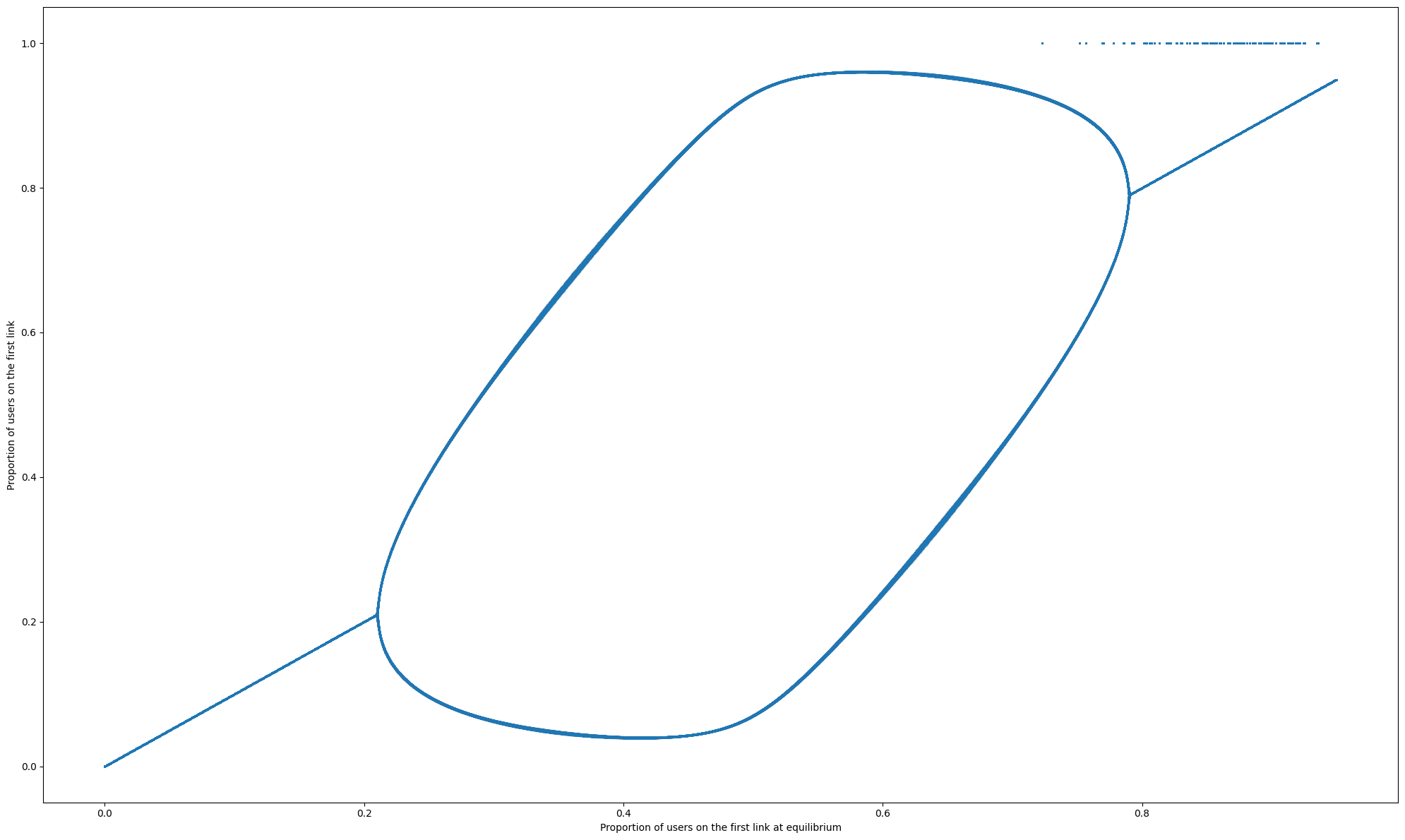} 
\caption{When the limit behavior $a_{\infty}$ is small the only system attractors are equilibria (roughly for $b<0.2$ and $b>0.8$) and period-two cycles. }
\end{figure}
\begin{figure}
    \centering
    \begin{subfigure}{0.45\textwidth}
\centering
\includegraphics[width=0.9\textwidth]{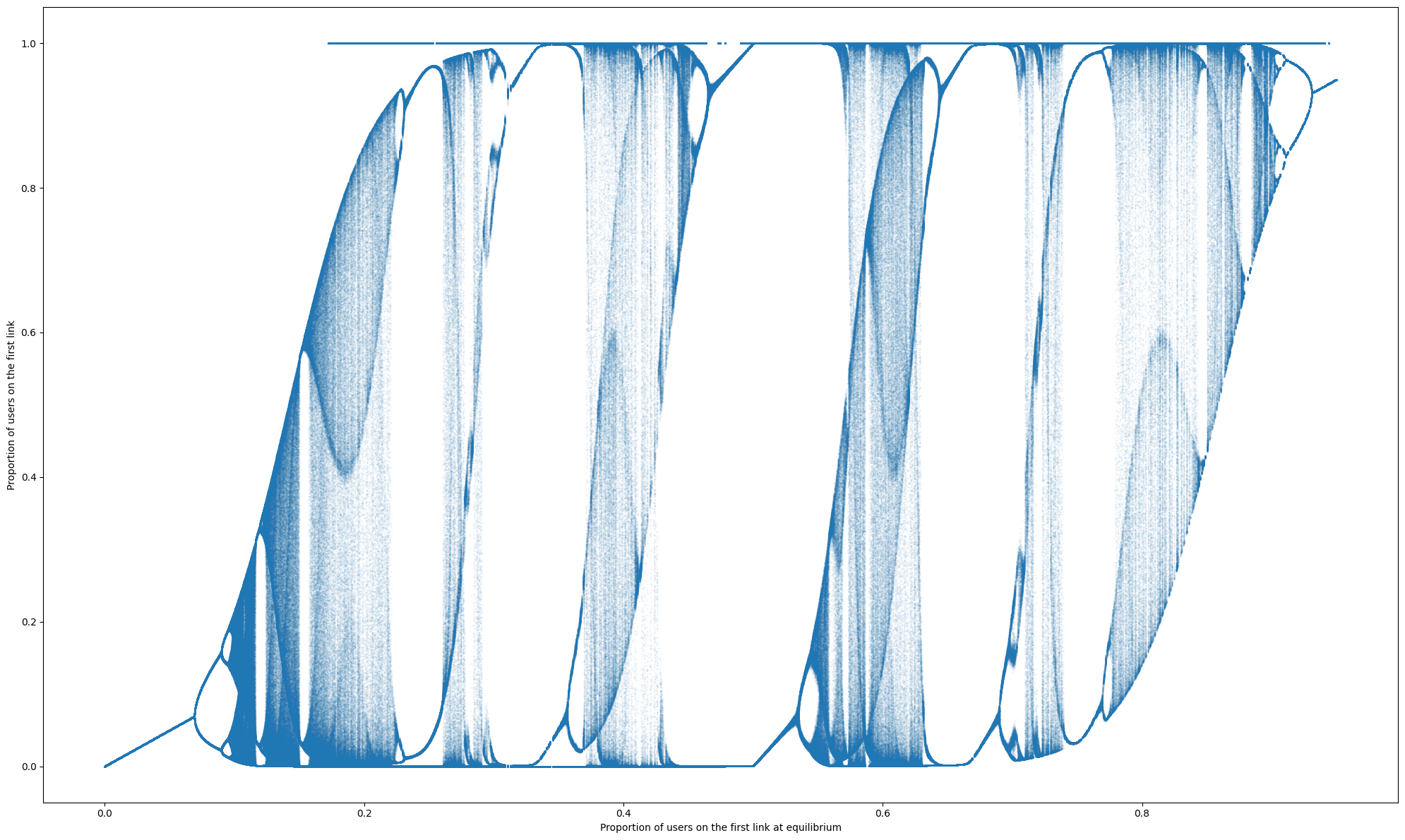} 
\caption{When the limit behavior $a_{\infty}$ is significantly larger, which signifies a more volatile, highly fluctuating system, we see the emergence of more complex attractors, i.e., periodic orbits of large period, chaos.}
\label{fig:step-mid}
    \end{subfigure}
    \hfill
    \begin{subfigure}{0.45\textwidth}
\centering
\includegraphics[width=0.9\textwidth]{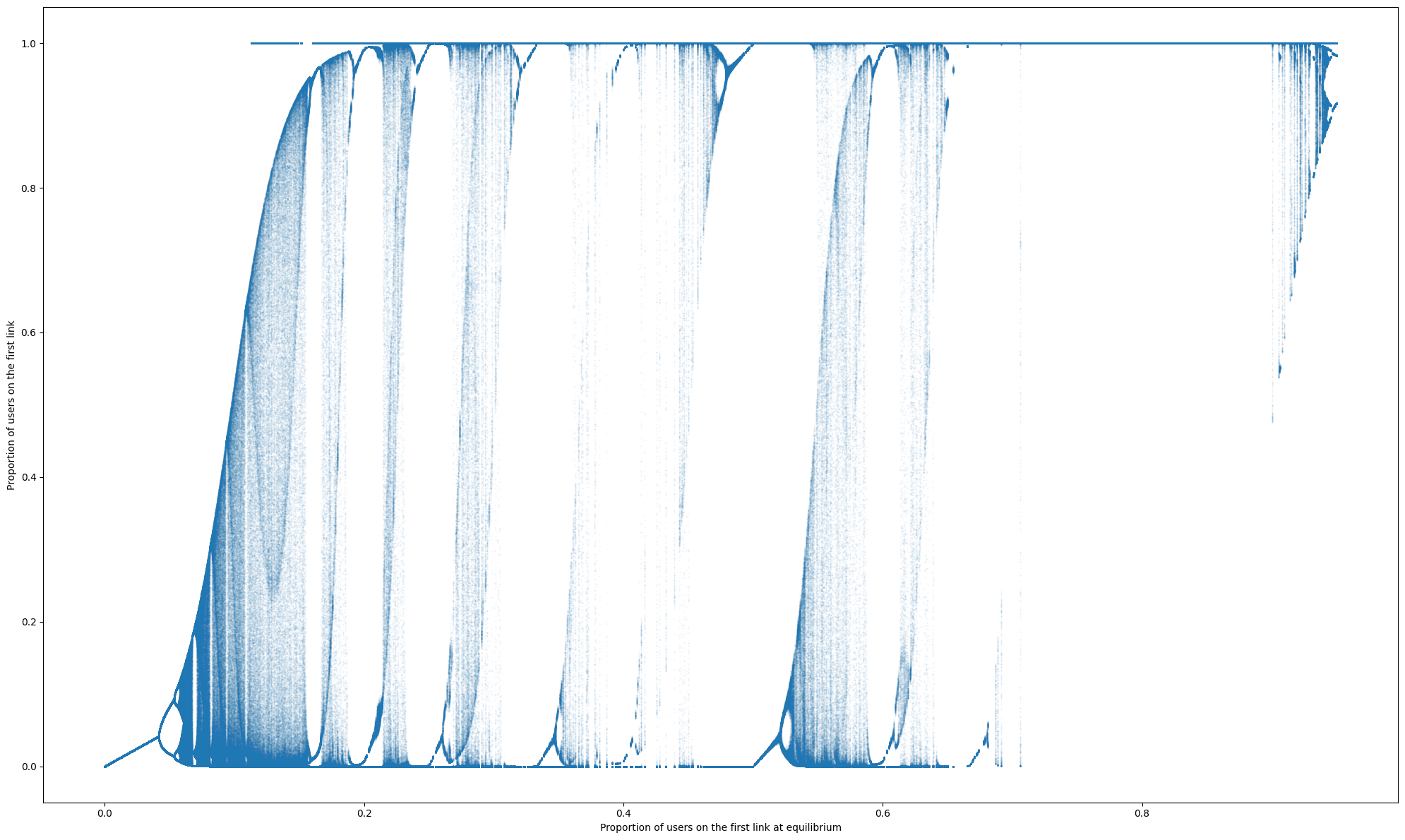} 
\caption{Even more aggressive schemes imply  the creation of complex attractors even when the equilibrium $b$ is much closer to the boundary i.e. to $0$ or $1$, in comparison with \cref{fig:step-mid}.}
\label{fig:step-high}
    \end{subfigure}
    \caption{Increased Limit behavior $a_{\infty}$}
\end{figure}

\end{document}